\documentclass[12pt]{report}

\usepackage[T1]{fontenc}
\usepackage[utf8]{inputenc}
\usepackage{amsmath}
\usepackage{amsfonts}
\usepackage{amssymb}
\usepackage{algorithm}
\usepackage{algorithmic}
\usepackage{comment}
\usepackage{subcaption}
\usepackage{longtable}
\usepackage{xcolor}

\definecolor{myblue}{RGB}{31,73,125}
\definecolor{myred}{RGB}{192,0,0}
\definecolor{mygreen}{RGB}{0,110,0}

\usepackage[backend=biber,style=authoryear,dashed=false,natbib=true,giveninits,%
            uniquename=false,uniquelist=false,maxcitenames=2]{biblatex}
\usepackage[autostyle]{csquotes}
\usepackage{tikz}
\usepackage{pgfplots}
\usepackage{pifont}
\usetikzlibrary{arrows.meta,positioning,shapes,calc,
                decorations.pathreplacing,fit,
                backgrounds,patterns}
\pgfplotsset{compat=1.18}
\usepackage[colorlinks=true,linkcolor=myblue,citecolor=myblue,urlcolor=myblue]{hyperref}
\usepackage{amsthm}

\theoremstyle{plain}
\newtheorem{theorem}{Theorem}[chapter]
\newtheorem{lemma}[theorem]{Lemma}
\newtheorem{corollary}[theorem]{Corollary}
\newtheorem{proposition}{Proposition}[chapter]
\theoremstyle{definition}
\newtheorem{definition}{Definition}[chapter]

\newcommand{\iid}{\overset{\mathrm{i.i.d.}}{\sim}}
\newcommand{\distas}{\overset{d}{=}}
\DeclareMathOperator{\Unif}{Unif}

\title{Statistically Valid Post-Training Hyperparameter Selection\\[0.5ex]%
       \large From Tuning to Guarantees}
\author{Amirmohammad Farzaneh \and Osvaldo Simeone\\[1ex]
        \normalsize Institute for Intelligent Networked Systems (INSI)\\
        \normalsize Northeastern University London\\
        \normalsize \texttt{\{a.farzaneh, o.simeone\}@nulondon.ac.uk}}
\date{}

\begin{document}

\maketitle
\tableofcontents
\clearpage

\begin{abstract}
Post-training hyperparameter selection is a critical step in the deployment of modern artificial intelligence systems, given the need to tune degrees of freedom of pre-trained models such as inference-time parameters, implementation-level settings, and thresholds driving decision rules. Despite its practical importance, hyperparameter selection is typically performed using best-effort empirical methods such as grid search or Bayesian optimization, which provide no formal statistical guarantees on reliability or safety.

This monograph, intended for an audience of signal processing and machine learning researchers, presents a unified statistical framework for reliable post-training hyperparameter selection, centered on the learn-then-test (LTT) paradigm. LTT formulates the hyperparameter selection problem as multiple hypothesis testing over a candidate set of hyperparameters. The framework enables the choice of hyperparameters that provably satisfy application-specific reliability requirements---such as bounds on average risk, quantile risk, or information-theoretic constraints---with explicit, finite-sample control of error probabilities. The supporting statistical machinery, namely p-values, e-values, and concentration inequalities, is developed from first principles.

Starting from the core LTT methodology, the monograph progressively broadens the framework along four directions. First, it extends LTT beyond average risk to general reliability functionals, including quantile risk control and information-bottleneck constraints. Second, it addresses multi-objective settings where several reliability constraints must be enforced simultaneously, developing Pareto testing and reliability graph-based Pareto testing. Third, it relaxes the batch-calibration assumption via adaptive and sequential methods that exploit e-processes and anytime-valid inference to reduce the labelling budget. Fourth, it considers settings with limited ground-truth labels, combining testing-by-betting with prediction-powered inference (PPI) to enable reliable hyperparameter selection from autoevaluated data.

Throughout, the framework is illustrated on practical applications including image classification, wireless packet scheduling, and large language model (LLM) prompt engineering, with empirical comparisons that demonstrate the gap between heuristic tuning and statistically guaranteed selection.
\end{abstract}

\chapter{Reliable Post-Training Hyperparameter Selection}
\label{chapter:intro}

\section{Post-Training Hyperparameter Selection in Modern AI Systems}

\subsection{Scope}

Hyperparameters determine how AI models are structured, trained, and deployed. Unlike model parameters, which are estimated automatically from training data, hyperparameters must be specified using additional sources of information, including prior knowledge and calibration or validation data.

Broadly speaking, hyperparameters govern the operation of a machine learning system at different levels:

\begin{itemize}

    \item At the \textbf{model level}, hyperparameters include architectural choices such as type of inter-token processing, e.g., self-attention versus state-space models, as well as network depth and width.

    \item At the \textbf{learning level}, hyperparameters include learning rates, regularization coefficients, batch sizes, and training-time temperature in mixture-of-expert architectures.

    \item At the \textbf{inference level}, hyperparameters determine how model outputs are translated into actions or decisions, for example through confidence thresholds, rejection rules, or decoding parameters such as the temperature of a language head.

    \item At the \textbf{deployment level}, hyperparameters encode aspects such as precision of the activations and of the weights of a model.

\end{itemize}

In conventional machine learning pipelines, optimizing model-based or learning-based hyperparameters typically requires running multiple training rounds using methods such as cross-validation (see, e.g., \citep{simeone2022machine}). However, for modern large AI models, this is not feasible, since even a single full training run constitutes a major computational investment \citep{kaplan2020scaling,hoffmann2022training}.

Practical solutions for model-level or training-level hyperparameters include partial retraining across multiple hyperparameter configurations via optimized resource allocation strategies such as successive halving \citep{jamieson2016non} and Hyperband \citep{li2018hyperband}. For extremely large models, one instead typically relies on scaling laws to translate optimal choices at lower scales to effective selection at higher scales \citep{kaplan2020scaling, hoffmann2022training}.

Unlike model-level and training-level hyperparameters, inference and deployment-based hyperparameters can be optimized via post-training strategies on pre-trained models. Such post-training methods support data-driven evaluations of the performance of different hyperparameters even for large AI models. In this monograph, we target inference and deployment-based hyperparameters, thus focusing on post-training hyperparameter selection methods.

This focus distinguishes the present monograph from algorithmic treatments of hyperparameter optimization (HPO). These include \citep{franceschi2025hpo_fnt, snoek2012practical, bergstra2012random, li2018hyperband}, which study how to efficiently search large hyperparameter spaces under computational budgets using methods such as Bayesian optimization, multi-fidelity methods, and gradient-based bilevel schemes. In contrast to these methods, this monograph treats the candidate set $\Lambda$ as given, using any HPO method, and concentrates instead on what can be said statistically about the final selected configuration under finite calibration data. The main goal of this monograph is to expose signal processing and machine learning researchers to methodologies that endow post-training hyperparameter selection with explicit, finite-sample guarantees of reliability.

\subsection{Examples}

Examples of applications and use cases include the following:

\begin{itemize}

\item In large language models, decoding hyperparameters such as temperature, nucleus sampling thresholds, and safety filters directly influence response diversity, factuality, and refusal behavior \citep{bommasani2021opportunities, holtzman2019curious} (see Fig. \ref{fig:llm_temperature_example} for an example).

\begin{figure}[t]
    \centering
    \includegraphics[width=\linewidth]{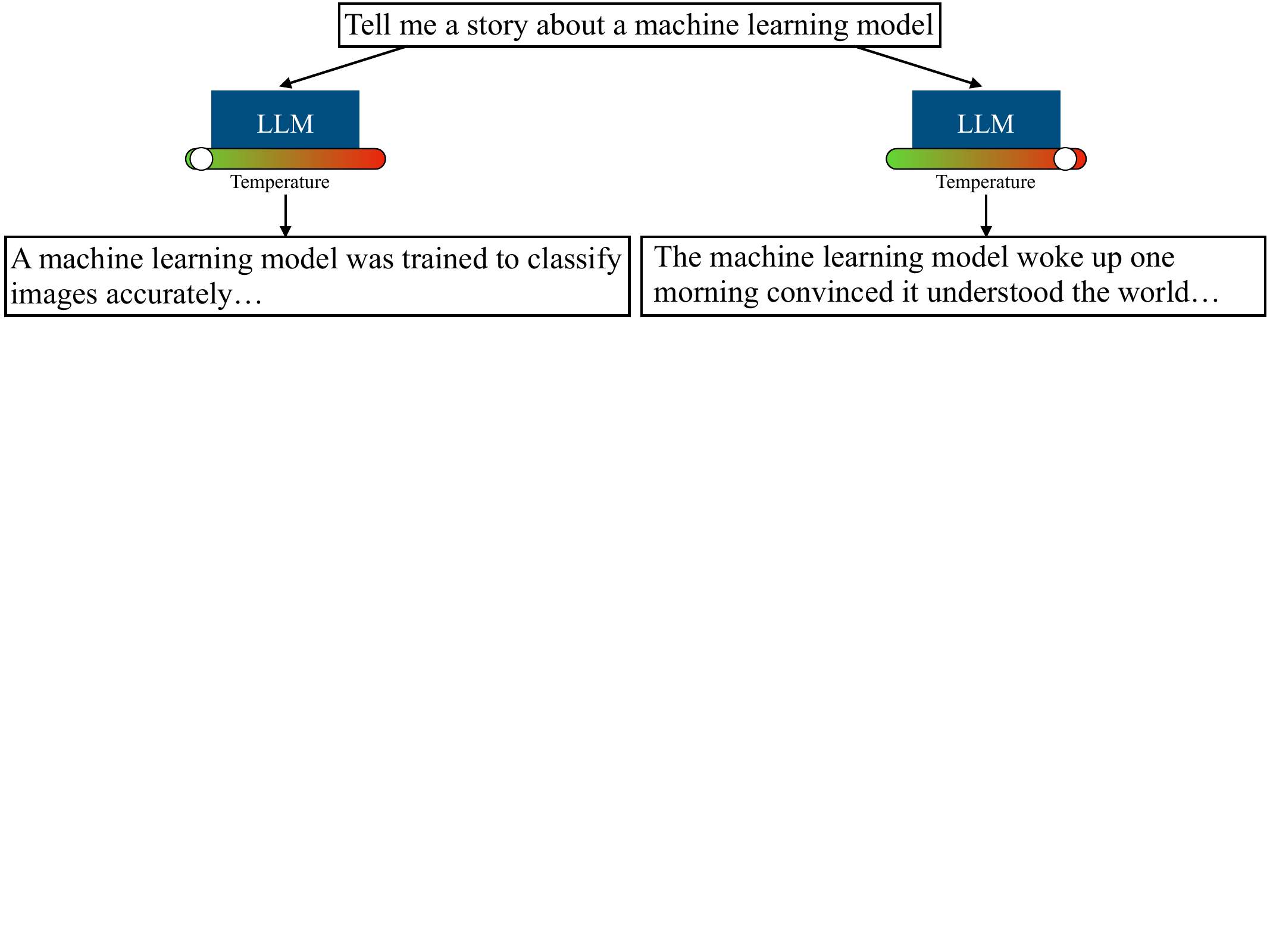}
    \caption{Illustration of the effect of the decoding temperature on the outputs of a pre-trained large language model. For the same prompt and identical decoding settings except for temperature, a low temperature yields conservative, literal, and highly repeatable responses, while a high temperature produces more diverse, creative, and narrative-driven outputs.}
    \label{fig:llm_temperature_example}
\end{figure}

\item In recommendation and ranking systems, regularization strengths and exploration parameters affect stability, long-term user engagement, and feedback loops \citep{covington2016deep}.

\item In computer vision pipelines, confidence thresholds determine whether predictions are accepted or deferred, shaping precision--recall trade-offs in detection and segmentation tasks \citep{he2017mask}.

\item In learning-enabled communication and control systems, hyperparameters regulate performance trade-offs among throughput, latency, reliability, and energy consumption \citep{shlezinger2021model, simeone2025conformal}.

\item In ground-truth in-the-loop systems \citep{geifman2017selective,raghu2019direct}, confidence thresholds and safety margins govern when models are permitted to act autonomously or escalate to fallback policies \citep{amodei2016concrete}.

\end{itemize}

\section{Limitations of Best-Effort Tuning Methods}

Hyperparameter selection has traditionally been approached as an empirical performance optimization problem. Given a calibration, or validation, dataset and a predefined evaluation protocol, the goal is to identify hyperparameter configurations that achieve strong average empirical performance on the dataset under computational constraints.

This optimization-centric perspective has given rise to a rich and mature literature on HPO. In the remainder of this section, we first briefly review HPO methods, and then discuss the gaps and limitations that motivate the reliability-oriented framework developed in this monograph.

\subsection{Optimization-Centric Hyperparameter Selection}
\label{sec:background}

The dominant paradigm in HPO frames the problem as one of black-box optimization. Hyperparameters are treated as decision variables, the training and evaluation pipeline defines a stochastic objective function, and the task is to efficiently search the hyperparameter space in order to maximize expected empirical performance.

Early approaches relied on exhaustive or randomized exploration strategies, such as grid search and random search, which remain widely used due to their simplicity and robustness \citep{bergstra2012random}. Subsequent work introduced model-based strategies, most notably Bayesian optimization, which constructs a surrogate model of the validation performance and uses acquisition functions to balance exploration and exploitation \citep{snoek2012practical,shahriari2016taking}.

To address the high computational cost of evaluating hyperparameter configurations, especially in modern deep learning, multi-fidelity and bandit-style approaches have been proposed. Methods such as successive halving and Hyperband allocate computational resources adaptively, terminating poorly performing configurations early while dedicating more resources to promising candidates \citep{jamieson2016non,li2018hyperband}. Related ideas appear in population-based training and evolutionary strategies, which maintain and evolve a population of configurations during training \citep{jaderberg2017population}.

Gradient-based HPO methods have also been developed, leveraging implicit differentiation and hypergradients to optimize continuous hyperparameters directly through the training process \citep{franceschi2018bilevel}. These methods are particularly effective when the training dynamics are differentiable and computationally tractable.

\subsection{Lack of Reliability and Post-Selection Guarantees}
\label{sec:Ch1_no_reliability}

From the perspective of this monograph, the methods reviewed above are best understood as \emph{best-effort tuning procedures}. While they are highly effective at improving average performance and reducing computational cost, they are not designed to provide explicit guarantees on the reliability of the selected hyperparameter once deployed. In fact, despite their algorithmic diversity, these methods share a common objective, namely, to \emph{optimize empirical performance}. Accordingly, standard optimization-based HPO approaches do not formally account for the uncertainty associated with empirical risk estimates.

As emphasized in~\citep{franceschi2025hpo_fnt}, the primary goals of HPO are sample efficiency, computational scalability, and asymptotic performance. Questions of post-selection validity, selective error control, or finite-sample guarantees on the selected configuration lie largely outside the scope of the optimization-centric framework.

Fig.~\ref{fig:conventional_fail} illustrates the mismatch between conventional HPO, which is based on the minimization of the empirical risk, and true risk optimization for a scalar hyperparameter $\lambda$. Panel (a) shows the true risk $R(\lambda)$ (blue dashed curve), together with a realization of the empirical risk $\widehat{R}(\lambda)$ (red curve).
The shaded region represents the support of the empirical risk across different realizations of the calibration data.

Given the realization of the empirical risk, and thus of the calibration data, depicted in panel (a) of Fig. \ref{fig:conventional_fail}, standard HPO selects
a hyperparameter configuration $\hat{\lambda}_{\mathrm{HPO}}$, as the minimizer of the empirical risk, which is significantly different from the true risk minimizer $\lambda^\star$.
This exemplifies the general fact that empirical risk minimization is inherently best-effort: it may favor hyperparameters that appear advantageous due to sampling variability rather than because they minimize the true risk $R(\lambda)$. This effect generally depends on the amount of available data and on the variability of the risk in the space of hyperparameters, and is particularly pronounced when data availability is limited.

Panel (b) of Fig.~\ref{fig:conventional_fail} provides a conceptual view of the key principle underlying statistically valid hyperparameter selection.
Instead of attempting to directly minimize the empirical risk, these methods fix a tolerated risk level and aim to identify hyperparameters whose true risk is not likely to exceed this target (dashed black line).
Formally, the goal is to control the probability that the true risk exceeds the tolerated threshold.
This naturally leads to a hypothesis testing formulation in which, for each hyperparameter $\lambda$, we test whether the true risk $R(\lambda)$ is above the specified level using statistical tools such as p-values or e-values \citep{lehmann2005testing, angelopoulos2023conformal}.

As highlighted by the simplified illustration in Fig. \ref{fig:conventional_fail}(b), reliable hyperparameter selection must account for the inherent uncertainty on the true risk that is associated with the empirical estimate (shaded red area). Intuitively, only the hyperparameters that, even under the worst-case true risk within the range of plausible values (upper boundary of the red area), yield a true risk below the target value (dashed black line) can be considered as reliable given the available information. This way, the set of selected hyperparameters (denoted as $\hat{\Lambda}$ in the figure) forms a certified subset of hyperparameters for which the true risk lies below the target level. Thus, while the procedure may be conservative and not recover all safe hyperparameters, it guarantees that every selected configuration satisfies the prescribed risk requirement with high probability.

\begin{figure}[t]
    \centering

    \begin{subfigure}[t]{0.48\linewidth}
        \centering
        \includegraphics[width=\linewidth]{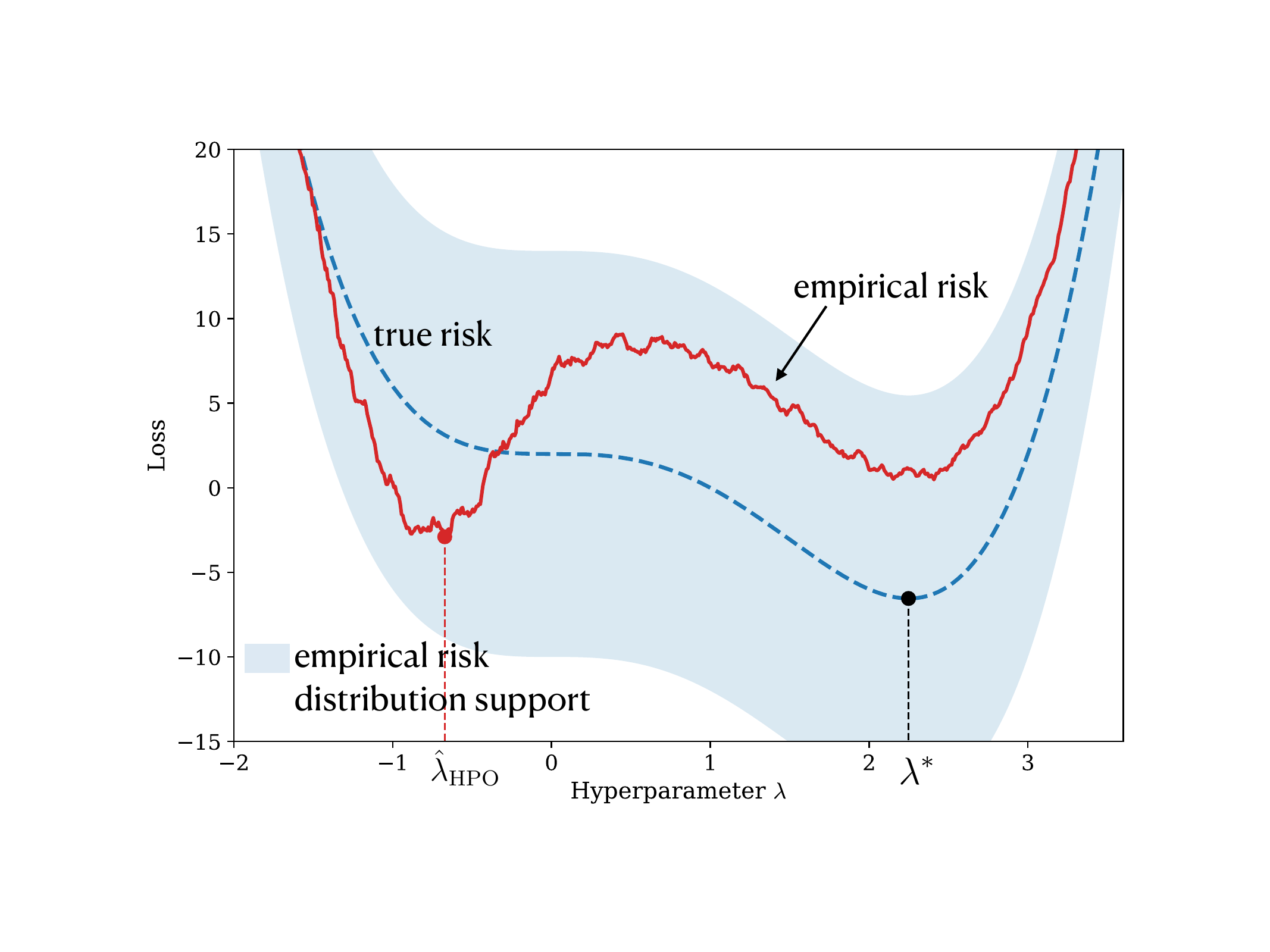}
        \caption{}
        \label{fig:conventional_fail_a}
    \end{subfigure}
    \hfill
    \begin{subfigure}[t]{0.48\linewidth}
        \centering
        \includegraphics[width=\linewidth]{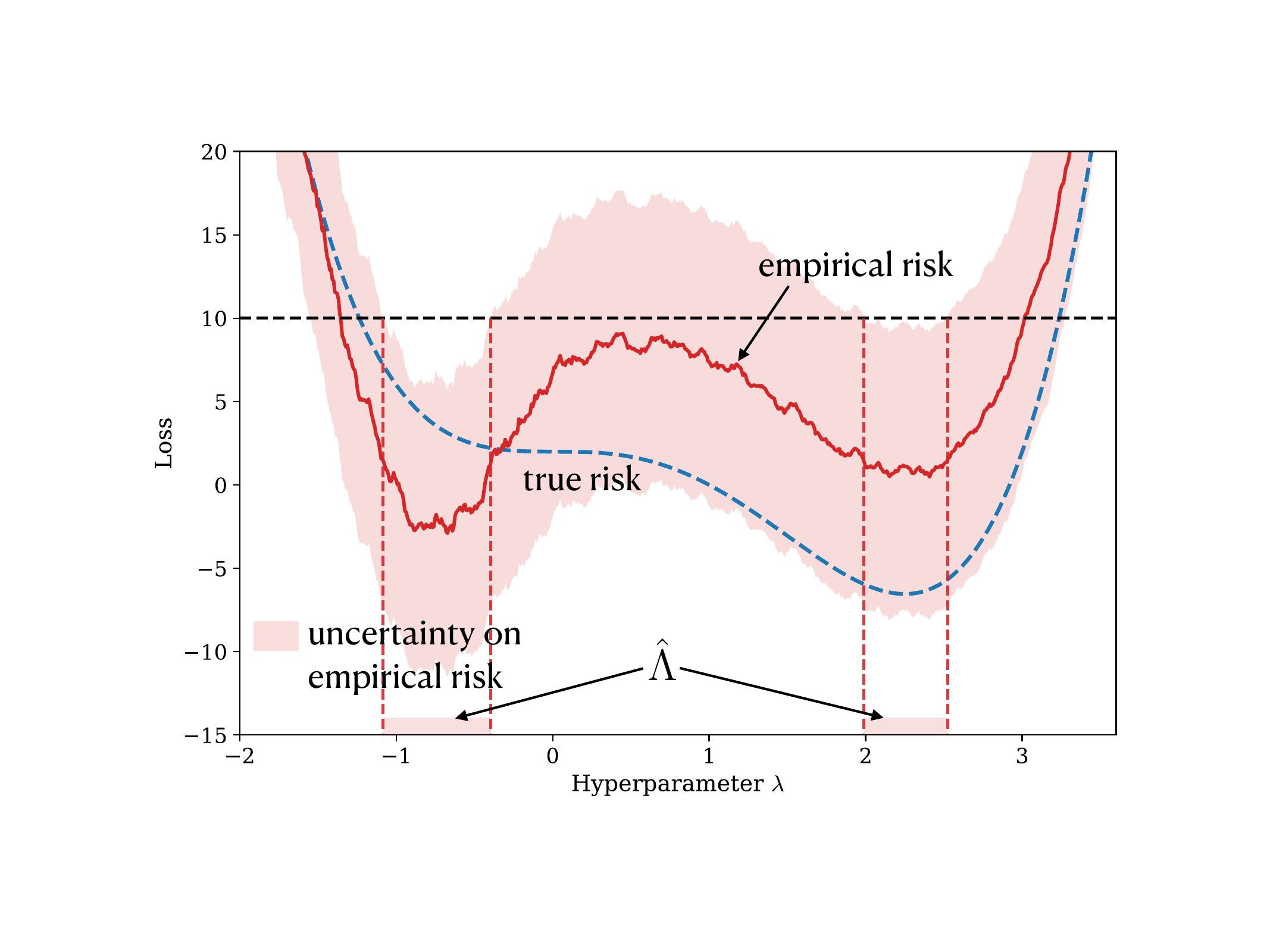}
        \caption{}
        \label{fig:conventional_fail_b}
    \end{subfigure}
    \caption{
    (a) Empirical versus true risk as a function of the hyperparameter $\lambda$.
    The blue dashed curve shows the true risk $R(\lambda)$, while the red curve shows a realization of the empirical risk $\widehat{R}(\lambda)$.
    The shaded region depicts the support of empirical risks across data realizations.
    Minimizing empirical risk selects $\hat{\lambda}_{\mathrm{HPO}}$, which may differ from the true minimizer $\lambda^\star$.
    (b) Risk-controlling hyperparameter selection.
    A tolerated risk level (horizontal dashed line) defines acceptable configurations.
    Statistical uncertainty bands are used to test, for each $\lambda$, whether the true risk exceeds the target.
    The highlighted set $\hat{\Lambda}$ denotes the hyperparameters selected by the statistical procedure, which form a certified subset of the truly safe configurations (those with $R(\lambda)$ below the threshold).
    }
    \label{fig:conventional_fail}
\end{figure}

It is important to emphasize that the illustration in Fig. \ref{fig:conventional_fail}(b) hides several important challenges and should be taken as suggestive rather than prescriptive. These challenges, which will be addressed throughout this monograph, are as follows:

\begin{itemize}

\item \textbf{User-defined failure probability:} A reliable hyperparameter selection procedure must provide a mechanism to control the probability of selecting an unreliable hyperparameter.

\item \textbf{Set-wide error control:} The probabilistic guarantee must apply at the level of the entire set ($\hat{\Lambda}$) of selected hyperparameters. For instance, one can impose the constraint that no hyperparameter in set $\hat{\Lambda}$ is unreliable with the given target probability, or that the expected fraction of unreliable hyperparameters in set $\hat{\Lambda}$ is smaller than a target value.

\end{itemize}

The next section formalizes these requirements.

\section{Problem Definition}
\label{sec:ch1_problem}

Informally, a reliable hyperparameter choice is one that performs well enough, according to a user-defined threshold, with a low probability of violating critical requirements. Reliability is therefore a statement about the future behavior of the deployed system, under uncertainty arising from data variability and environment stochasticity. This perspective, detailed in this section, aligns with recent calls to move from performance-centric to reliability- and safety-centric evaluation of AI systems \citep{amodei2016concrete}.

\subsection{Hyperparameter Selection Domain}

Throughout this monograph, we use the symbol $\lambda$ to denote a hyperparameter or a vector of hyperparameters
controlling the behavior of a machine learning system. In practice, hyperparameters are often selected from a finite or discretized set $\Lambda = \{\lambda_1, \ldots, \lambda_K\}$ of $K = |\Lambda|$ candidates. This pre-selection step can be implemented using any of the HPO methods reviewed in Sec.~\ref{sec:background} from a possibly continuous hyperparameter space.

\subsection{Reliability Requirements}

To determine when a hyperparameter is
acceptable or reliable, we start by specifying an application-specific loss measure, such as prediction error, constraint violation rate, or system-level cost. A hyperparameter is then declared to be reliable if a chosen statistic of the loss, such as its expectation or a tail probability, lies below a user-defined threshold $\alpha$.

To elaborate, for a generic prediction problem, consider a random test point $(X,Y)$ drawn from the data distribution, where $X$ denotes the input features and $Y$ is the ground-truth label. For a given hyperparameter $\lambda$, let $L_\lambda(X,Y)$ denote the incurred loss on a single test point. Examples include the $0$--$1$ loss
\begin{equation}
\label{eq:example_loss}
    L_\lambda(X,Y) \;=\; \mathbb{I}\!\big\{ \hat{Y}_\lambda(X) \neq Y \big\},
\end{equation}
where $\hat{Y}_\lambda(X)$ is the predictor deployed under hyperparameter $\lambda$, and $\mathbb{I}\{\cdot\}$ denotes the indicator function, which equals $1$ if its argument is true and $0$ otherwise.

A typical target statistic is the average of the loss (\ref{eq:example_loss}), i.e.,
\begin{equation}
    R(\lambda) = \mathbb{E}\left[L_\lambda(X,Y)\right].
\end{equation}
Other options encompass quantiles, such as the median.

Having defined a risk function $R(\lambda)$, a hyperparameter $\lambda$ is said to be reliable if it is able to control the risk $R(\lambda)$ below a user-specified target level $\alpha$, i.e.,
\begin{equation}
\label{eq:risk_requirement}
   \lambda\;\text{reliable}\;\Leftrightarrow\; R(\lambda) = \mathbb{E}\!\left[ L_\lambda(X,Y) \right] \;\le\; \alpha.
\end{equation}
Importantly, reliability is a property of the deployed system and of the underlying data distribution, not of a finite dataset.

A hyperparameter selection procedure observes a finite calibration dataset and outputs a candidate $\hat{\lambda} \in \Lambda$. Because the data are random, the selected hyperparameter is also random. For a selected hyperparameter $\hat{\lambda}$, the basic guarantee we seek is that the hyperparameter $\hat{\lambda}$ is reliable with probability no smaller than a target $1-\delta$, i.e.,
\begin{equation}
\label{eq:ch1_guarantee}
    \mathbb{P}\!\big[ \hat{\lambda} \text{ is reliable} \big] = \mathbb{P}\!\big[ R(\hat{\lambda})\leq \alpha \big] \;\ge\; 1 - \delta,
\end{equation}
where the probability is over the randomness in the data used for selection.

We emphasize that the guarantee \eqref{eq:ch1_guarantee} does \emph{not} claim that the selected hyperparameter is optimal; or that failures never occur; or that performance is uniform across all possible conditions. However, it does claim that the act of selecting an unreliable hyperparameter is itself a rare event; and that the frequency of this event is explicitly quantified by the failure probability $\delta$.

Fig.~\ref{fig:ch1_reliability_guarantee} provides a visual illustration of the reliability guarantee \eqref{eq:ch1_guarantee} using a simple real-data classification experiment.
We consider a fixed dataset and a finite grid of regularization hyperparameters for an $\ell_2$-regularized logistic regression model \citep{friedman2010regularization}.
For each repetition, the available data are randomly split into training and calibration sets.
A hyperparameter $\hat{\lambda}$ is selected based solely on the calibration data and the resulting model is evaluated on a fixed held-out test set to estimate its deployment risk $R(\hat{\lambda})$.
Repeating this procedure over independent calibration realizations yields a distribution of test risks.

The blue curve corresponds to hyperparameters selected using a statistically valid selection rule that accounts for uncertainty due to finite calibration data (see Fig. \ref{fig:conventional_fail}).
While individual realizations may exceed the target risk level $\alpha = 0.2$, the probability of such violations is controlled.
In particular, the shaded region corresponding to events where the risk exceeds the maximum tolerance $\alpha$, i.e., $R(\hat{\lambda}) > \alpha$, occupies at most a $\delta = 0.1$ fraction of the distribution, illustrating that guarantee~(\ref{eq:ch1_guarantee}) holds.

For comparison, the orange curve corresponds to hyperparameters selected by grid search, which minimizes the empirical risk on the calibration dataset without correcting for selection uncertainty.
Because the search is conducted over many candidate configurations, as illustrated in Fig. \ref{fig:conventional_fail}, this approach can select hyperparameters whose apparent calibration performance is overly optimistic.
As a result, the violation probability can substantially exceed the prescribed tolerance $\delta = 0.1$.
This contrast highlights the importance of statistical correction in reliable hyperparameter selection.

\begin{figure}[h!]
    \centering
    \includegraphics[width=0.8\linewidth]{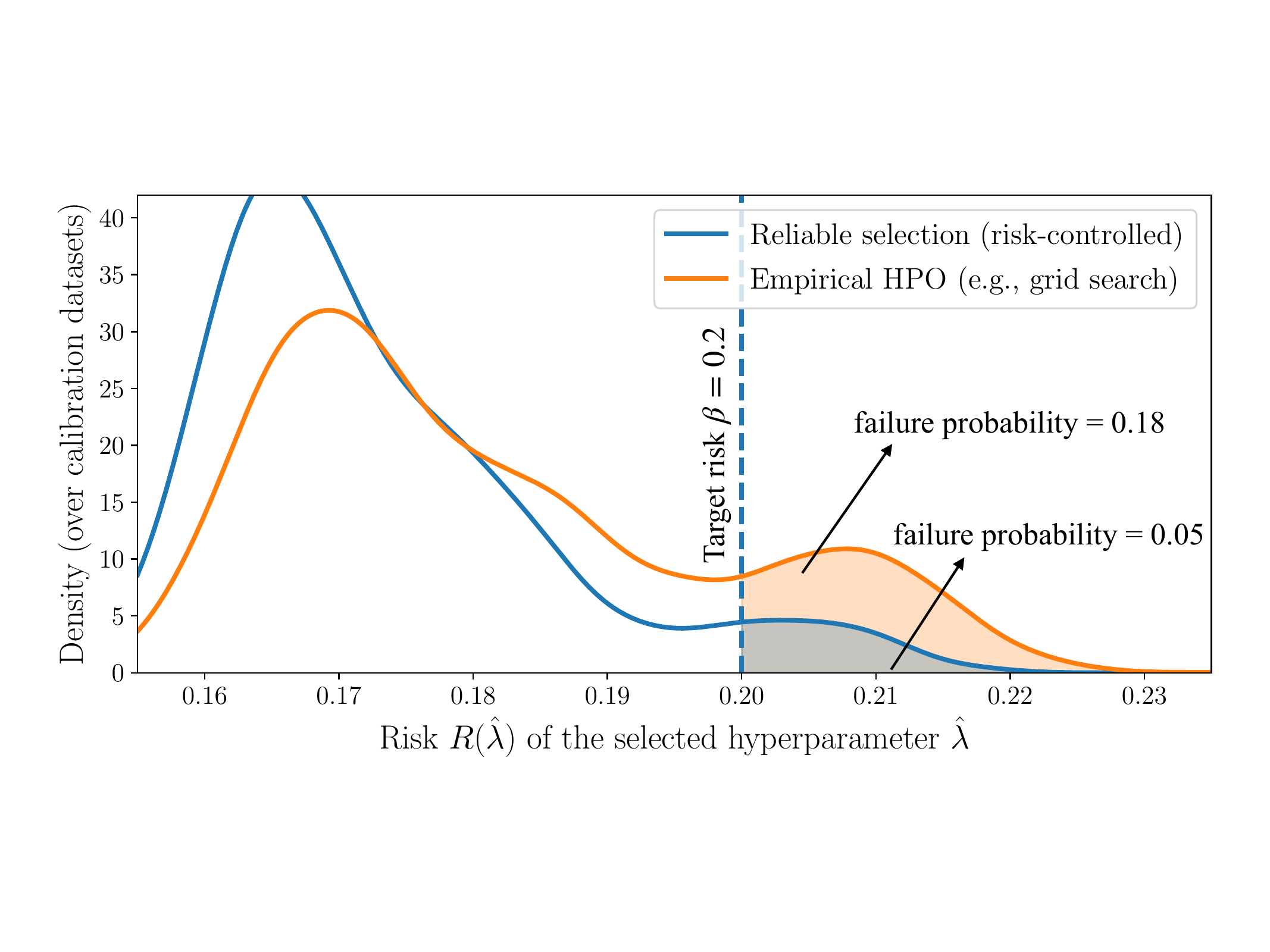}
    \caption{Distribution of the risk $R(\hat{\lambda})$ for the selected hyperparameter $\hat{\lambda}$ over repeated training-calibration data splits.
    The dashed line marks the target risk $\alpha = 0.2$.
    The blue curve corresponds to a statistically valid selection rule, whose violation probability
    $\mathbb{P}(R(\hat{\lambda}) > \alpha)$ is controlled below threshold $\delta = 0.1$ as per (\ref{eq:ch1_guarantee}).
    The orange curve corresponds to conventional HPO based on empirical risk minimization,
    which exhibits a larger violation probability.}
    \label{fig:ch1_reliability_guarantee}
\end{figure}

The remainder of this monograph is concerned with developing methods that achieve such guarantees in practice.

\section{Summary of Notation and Acronyms}
\label{sec:ch1_notation}

Table \ref{tab:notation_full} summarizes the notation used in the monograph, while Table \ref{tab:acronyms} reports the main acronyms found in the text.

\begin{longtable}{p{0.20\linewidth}p{0.72\linewidth}}
\caption{Summary of mathematical notation used throughout the monograph.}
\label{tab:notation_full} \\
\textbf{Symbol} & \textbf{Definition} \\
\hline
\endfirsthead
\textbf{Symbol} & \textbf{Definition} \\
\hline
\endhead
\hline
\multicolumn{2}{r}{\textit{Continued on next page}} \\
\endfoot
\hline
\endlastfoot
$\lambda$ & Hyperparameters (vector-valued) \\
$\Lambda$ & Finite candidate set of hyperparameters, $\Lambda = \{\lambda_1, \dots, \lambda_{|\Lambda|}\}$ \\
$K$ & Number of candidate hyperparameters, $K = |\Lambda|$ \\
$\mathcal{X}$ & Input space \\
$\mathcal{Y}$ & Output/label space \\
$(X, Y)$ & Random test point (features, outcome) drawn from distribution $\mathbb{P}$ \\
$Z$ & Random test point drawn from distribution $\mathbb{P}$ (e.g., $Z = (X,Y)$)\\
$\mathbb{P}$, $\mathbb{P}(\cdot)$, $\mathbb{P}(X,Y)$, $\mathbb{P}(Z)$  & Unknown data-generating distribution; also used as $\mathbb{P}(\cdot)$ to denote the probability of an event under this distribution \\
$f_\lambda$ & Pre-trained model operating under hyperparameter $\lambda$, $f_\lambda: \mathcal{X} \to \mathcal{Y}$ \\
$L_\lambda(X,Y)$ & Per-sample loss under hyperparameter $\lambda$, $L_\lambda: \mathcal{X} \times \mathcal{Y} \to [0,1]$ \\
$R(\lambda)$ & True risk, $R(\lambda) = \mathbb{E}[L_\lambda(X,Y)]$ \\
$\widehat{R}_n(\lambda)$ & Empirical risk on $n$ calibration samples \\
$\mathcal{D}^{\mathrm{cal}}$ & Calibration dataset $\mathcal{D}^{\mathrm{cal}} = \{(X_i, Y_i)\}_{i=1}^n$ \\
$n$ & Number of calibration samples \\
$\alpha$ & Reliability threshold; $\lambda$ is reliable $\Leftrightarrow$ $R(\lambda) \le \alpha$ (single-objective) \\
$\delta$ & User-specified failure probability (outage rate) \\
$\hat{\Lambda}$ & Certified subset of selected hyperparameters, $\hat{\Lambda} \subseteq \Lambda$ \\
$\hat{\lambda}$ & Single deployed hyperparameter, $\hat{\lambda} \in \hat{\Lambda}$ \\
$\mathcal{H}_k$ & Null hypothesis for candidate $\lambda_k$: $R(\lambda_k) > \alpha$ \\
$\mathcal{K}_0$ & Set of indices of true null hypotheses \\
$\hat{\mathcal{K}}$ & Set of rejected hypothesis indices (discoveries) \\
$p_k$ & P-value for null hypothesis $\mathcal{H}_k$ \\
$e_k$ & E-value for null hypothesis $\mathcal{H}_k$ \\
$\mathbb{I}\{\cdot\}$ & Indicator function (equals 1 if argument is true, 0 otherwise) \\
$(\cdot)_+$ & Positive part, $(x)_+ = \max\{x, 0\}$ \\
$\mathcal{R}(\lambda)$ & General risk functional, $\mathcal{R}(\lambda) = \Phi(\mathbb{P}_\lambda)$ \\
$R_q(\lambda)$ & Quantile risk: $(1-q)$-quantile of the distribution of loss $L_\lambda$ \\
$I(A;B)$ & Mutual information between random variables $A$ and $B$ \\
$R_l(\lambda)$ & Risk function for objective $l$ in multi-objective setting \\
$\alpha$ & Vector of risk thresholds $(\alpha_1, \dots, \alpha_{L_C})$ in multi-objective setting (Ch.~\ref{chapter:multi_objective}) \\
$L_C$ & Number of reliability (constrained) objectives \\
$\Lambda^{\mathrm{Par}}$ & Pareto-optimal subset of hyperparameters \\
\end{longtable}

\begin{longtable}{p{0.20\linewidth}p{0.72\linewidth}}
\caption{List of acronyms used throughout the monograph.}
\label{tab:acronyms} \\
\textbf{Acronym} & \textbf{Definition} \\
\hline
\endfirsthead
\textbf{Acronym} & \textbf{Definition} \\
\hline
\endhead
\hline
\multicolumn{2}{r}{\textit{Continued on next page}} \\
\endfoot
\hline
\endlastfoot
AI   & Artificial Intelligence \\
BH   & Benjamini--Hochberg (FDR-controlling procedure) \\
BY   & Benjamini--Yekutieli (FDR-controlling procedure under arbitrary dependence) \\
CGF  & Cumulant Generating Function \\
DAG  & Directed Acyclic Graph \\
E-HPO & Empirical Hyperparameter Optimization \\
FDR  & False Discovery Rate \\
FDP  & False Discovery Proportion \\
FST  & Fixed Sequence Testing \\
FWER & Family-Wise Error Rate \\
HPO  & Hyperparameter Optimization \\
IB   & Information Bottleneck \\
IB-LTT & Information Bottleneck Learn-Then-Test \\
KPI  & Key Performance Indicator \\
LLM  & Large Language Model \\
LTT  & Learn-Then-Test \\
MGF  & Moment Generating Function \\
MHT  & Multiple Hypothesis Testing \\
O-RAN & Open Radio Access Network \\
PCA  & Principal Component Analysis \\
PT   & Pareto Testing \\
QLTT & Quantile Learn-Then-Test \\
QoS  & Quality of Service \\
RG   & Reliability Graph \\
RG-PT & Reliability Graph-Based Pareto Testing \\
UE   & User Equipment \\
6G   & Sixth Generation (wireless networks) \\
\end{longtable}

\section{Scope and Organization of the Monograph}

As mentioned, the remainder of this monograph develops the statistically valid hyperparameter selection framework outlined in Sec.~\ref{sec:ch1_problem} and progressively broadens its scope along four directions: more general notions of risk, multiple simultaneous objectives, adaptive and sequential evaluation budgets, and certification with limited labeled data.

\textbf{Chapter~\ref{chapter:mht}} introduces the core methodology, namely the learn-then-test (LTT) framework of \citep{angelopoulos2025learn}, which formulates reliable hyperparameter selection as a multiple hypothesis testing (MHT) problem. Each candidate hyperparameter is associated with the null hypothesis that its true risk exceeds the target level, and a finite-sample certified set is produced by any MHT procedure that controls a global error criterion such as the family-wise error rate (FWER) or the false discovery rate (FDR). The chapter develops the supporting statistical machinery, namely p-values and e-values, and shows how concentration inequalities and testing-by-betting arguments translate empirical risk estimates into valid test statistics.

\textbf{Chapter~\ref{chapter:applications}} illustrates the LTT framework on two representative applications: image classification on Fashion-MNIST \citep{xiao2017fashion} and packet scheduling for a learning-based wireless scheduler \citep{de2020radio, Nokia}. In both cases, conventional empirical hyperparameter optimization is shown to violate the prescribed reliability constraint at rates substantially exceeding the user-specified outage probability, while LTT controls the violation rate as guaranteed by theory.

\textbf{Chapter~\ref{chapter:beyond_average}} extends LTT beyond the average risk to more general reliability functionals. The chapter presents a unified inversion-based recipe for constructing valid p-values from one-sided confidence bounds, and instantiates it for two important non-average criteria: quantile risk control, which yields the quantile LTT (QLTT) procedure of \citep{farzaneh2024quantile} and provides protection against tail events; and information-theoretic relevance constraints, which yields the IB-LTT procedure of \citep{farzaneh2025ibmht} for the information bottleneck problem.

\textbf{Chapter~\ref{chapter:multi_objective}} addresses settings in which several reliability constraints must be enforced simultaneously while best-effort secondary objectives are optimized. The chapter develops Pareto testing (PT) \citep{laufer-goldshtein2023efficiently}, which restricts hypothesis testing to the empirical Pareto frontier and applies fixed sequence testing along the resulting ordering, and reliability graph-based Pareto testing (RG-PT) \citep{farzaneh2025multiobjective}, which encodes prior structural knowledge through a directed acyclic graph and certifies hyperparameters via the DAGGER algorithm of \citep{ramdas2017dagger}.

\textbf{Chapter~\ref{chapter:adaptive}} relaxes the batch-calibration assumption of the previous chapters, allowing calibration data to arrive sequentially and acquisition decisions to depend on the evidence accumulated so far. Building on e-processes and Ville's inequality, the chapter develops adaptive LTT (aLTT) \citep{zecchin2024adaptive}, which combines testing-by-betting with $\varepsilon$-greedy acquisition to identify reliable hyperparameters with substantially fewer evaluations than batch LTT, while preserving FWER and FDR guarantees at every, possibly data-dependent, stopping time.

\textbf{Chapter~\ref{chapter:autoeval}} considers settings where ground-truth labels are expensive, but cheap autoevaluated labels are available, as in LLM-as-a-judge pipelines \citep{gu2024survey}. Combining the testing-by-betting machinery of Chapter~\ref{chapter:adaptive} with prediction-powered inference \citep{angelopoulos2023prediction}, the chapter describes the R-AutoEval method of \citep{einbinder2024semi} and the adaptive R-AutoEval+ method of \citep{park2025adaptive}, the latter of which adaptively re-weights a grid of reliance factors and provably matches the better of full-real and full-autoevaluated testing in sample complexity.

\textbf{Chapter~\ref{chapter:conclusions}} concludes the monograph and outlines open research directions.

\textbf{Appendix~\ref{app:evidence}} collects background material on statistical evidence. It provides a self-contained treatment of p-values and e-values, including their definitions, constructions from concentration bounds and randomization tests, and the key compositional and anytime-validity properties that distinguish them.

\textbf{Appendix~\ref{app:ch2_pvalue_constructions}} collects technical details on p-value and e-value constructions complementing Chapter~\ref{chapter:mht}: a unified moment generating function viewpoint covering bounded, sub-Gaussian, and sub-exponential losses; a proof of superuniformity for the Hoeffding-based p-value; and refined finite-sample constructions including exact binomial and variance-sensitive Bernstein p-values.

\textbf{Appendix~\ref{app:additional_mht}} gathers additional multiple testing procedures: Holm, Hochberg, \v{S}id\'{a}k, and Westfall--Young FWER procedures, and a proof of the FDR guarantee for the e-BH procedure.



\chapter{Hyperparameter Selection via Multiple Hypothesis Testing}
\label{chapter:mht}

This chapter introduces a general framework for risk-controlling hyperparameter selection. As illustrated in Fig. \ref{fig:ch1_reliability_guarantee}, the goal is to identify a configuration $\hat{\lambda}$ for which the true risk $R(\hat{\lambda})$ is guaranteed to lie below a user-chosen threshold with high probability with respect to the randomness of the calibration data. The framework follows the post-training calibration methodology introduced by \citep{angelopoulos2025learn}. This approach is also referred to as LTT, since it is based on training and calibration on separate datasets \citep{grunwald2020safetest, vovk2021evalues, chakraborty2026comparing}.

The chapter is organized as follows. Sec.~\ref{sec:mht_risk_control} introduces the risk-control objective and empirical risk estimation. Sec.~\ref{sec:ch2_mht} reviews the multiple hypothesis testing framework and the family-wise error rate. Sec.~\ref{sec:ch2_risk_control} shows how to connect MHT to hyperparameter certification and proves the main certified-set guarantee. Sec.~\ref{sec:ch2_pval} introduces p-values as test statistics and derives the Hoeffding-based p-value for bounded losses. Sec.~\ref{sec:ch2_eval} introduces e-values as an alternative evidence measure and describes their key properties. Sec.~\ref{sec:FDR} develops false discovery rate control via the BH and e-BH procedures, which are less conservative alternatives to FWER control. Sec.~\ref{sec:ch2_summary} summarizes the chapter. Technical material on p-value and e-value constructions from moment generating function bounds is collected in Appendix~\ref{app:ch2_pvalue_constructions}.

\section{Risk Control}
\label{sec:mht_risk_control}

Throughout this monograph, as discussed in Sec. \ref{sec:ch1_problem}, we fix a finite candidate set
\begin{equation}
\Lambda = \{\lambda_1,\dots,\lambda_{K}\},
\end{equation}
which may arise from any upstream hyperparameter search or model design process. Each hyperparameter $\lambda\in\Lambda$ determines the operation of a pre-trained model
\begin{equation}
f_\lambda:\ \mathcal{X}\to\mathcal{Y},
\end{equation}
where $\mathcal{X}$ denotes the input space and $\mathcal{Y}$ denotes the model output space. The model $f_\lambda$ may be predictive, generative, or it may prescribe a control policy. In the former case, the output space $\mathcal{Y}$ may encompass point estimates for the given target variables, probability distributions on the target variable space, or structured predictions such as class labels in multi-class classification \citep{hastie2009elements}, and bounding boxes or segmentation masks in object detection \citep{ren2015faster, he2017mask}. With generative models $f_\lambda$, the output space $\mathcal{Y}$ may include images \citep{ho2020denoising}, videos \citep{ho2022video}, or text \citep{vaswani2017attention}, often in the form of token sequences. Finally, control policies $f_\lambda$ output action sequences \citep{sutton1998reinforcement}.

The key quantity of interest is the average risk of model $f_\lambda$, which is defined as the expectation of a loss function under the (unknown) data-generating distribution. Concretely, let $(X,Y)\sim \mathbb{P}$ denote a generic test-time input-output pair, and let
\begin{equation}
L_\lambda:\ \mathcal{X}\times\mathcal{Y}\to[0,1]
\end{equation}
be a bounded loss function that encodes a deployment-relevant notion of (negatively oriented) performance. Note that any bounded performance measure can be constrained within the standardized interval $[0,1]$ by rescaling. Furthermore, unbounded losses can be handled via asymptotic or truncated constructions as discussed in \citep{angelopoulos2025learn}.

We define the risk of hyperparameter $\lambda$ as
\begin{equation}
\label{eq:ch2_risk_def}
R(\lambda)\ =\ \mathbb{E}\!\left[L_\lambda\!\big(X,Y\big)\right],
\end{equation}
where the expectation is taken over the unknown test-time data distribution $\mathbb{P}$.

The user specifies a target level $\alpha\in(0,1)$ for the average risk (\ref{eq:ch2_risk_def}), as well as a failure probability $\delta\in(0,1)$. The goal is to select a (data-dependent) hyperparameter $\hat{\lambda}\in\Lambda$ such that the deployed model $f_{\hat{\lambda}}$ satisfies the following risk control guarantee.
\begin{definition}
\label{def:ch2_rcp}
A hyperparameter choice $\hat{\lambda}\in\Lambda$ is said to be \emph{$(\alpha,\delta)$-risk-controlling} if the inequality
\begin{equation}
\label{eq:ch2_rcp_def}
\mathbb{P}( R(\hat{\lambda}) \le \alpha )\ \ge\ 1-\delta
\end{equation}
holds, where the probability $\mathbb{P}(\cdot)$ is taken over the randomness in the calibration data used to optimize the hyperparameter $\hat{\lambda}$.
\end{definition}

The key challenge in ensuring condition (\ref{eq:ch2_rcp_def}) is that the risk $R(\lambda)$ depends on the unknown distribution $\mathbb{P}$, while hyperparameters $\hat{\lambda}$ must be chosen based on a finite calibration dataset. To elaborate, let
\begin{equation}
\mathcal{D}^{\mathrm{cal}} = \{(X_1,Y_1),\dots,(X_n,Y_n) \},
\end{equation}
where $(X_i, Y_i)\overset{\text{i.i.d.}}{\sim}\ \mathbb{P}$ for all $i\in\{1,\ldots, n\}$, denote a calibration dataset. For each candidate hyperparameter $\lambda\in\Lambda$, define the corresponding empirical risk
\begin{equation}
\label{eq:ch2_emp_risk}
\widehat{R}_n(\lambda)\ =\ \frac{1}{n}\sum_{i=1}^n L_\lambda(X_i,Y_i).
\end{equation}
Conventional E-HPO chooses the hyperparameter $\lambda$ that minimizes the empirical estimates $\widehat{R}_n(\lambda)$, i.e.,
\begin{equation}
\label{alg:ch2_ehpo}
\hat{\lambda}_\text{HPO} = \arg\min_{\lambda\in \Lambda}\widehat{R}_n(\lambda).
\end{equation}
As discussed in Sec. \ref{sec:Ch1_no_reliability}, E-HPO does not provide any reliability guarantees in general (see Fig. \ref{fig:conventional_fail}).

\section{Multiple Hypothesis Testing}
\label{sec:ch2_mht}

Following \citep{angelopoulos2025learn}, we address this problem by treating the risk-control objective as a multiple hypothesis testing (MHT) problem.

To explain MHT, consider the standard A/B testing methodology applied by an online retailer, as illustrated in Fig.~\ref{fig:mht_illustration}. The online retailer wishes to explore possible modifications relative to the current website configurations. For example, it may experiment with a new homepage layout, different product image sizes, alternative call-to-action button colors, revised pricing display formats, and personalized recommendation banners.

\begin{figure}[t]
    \centering
    \includegraphics[width=0.9\linewidth]{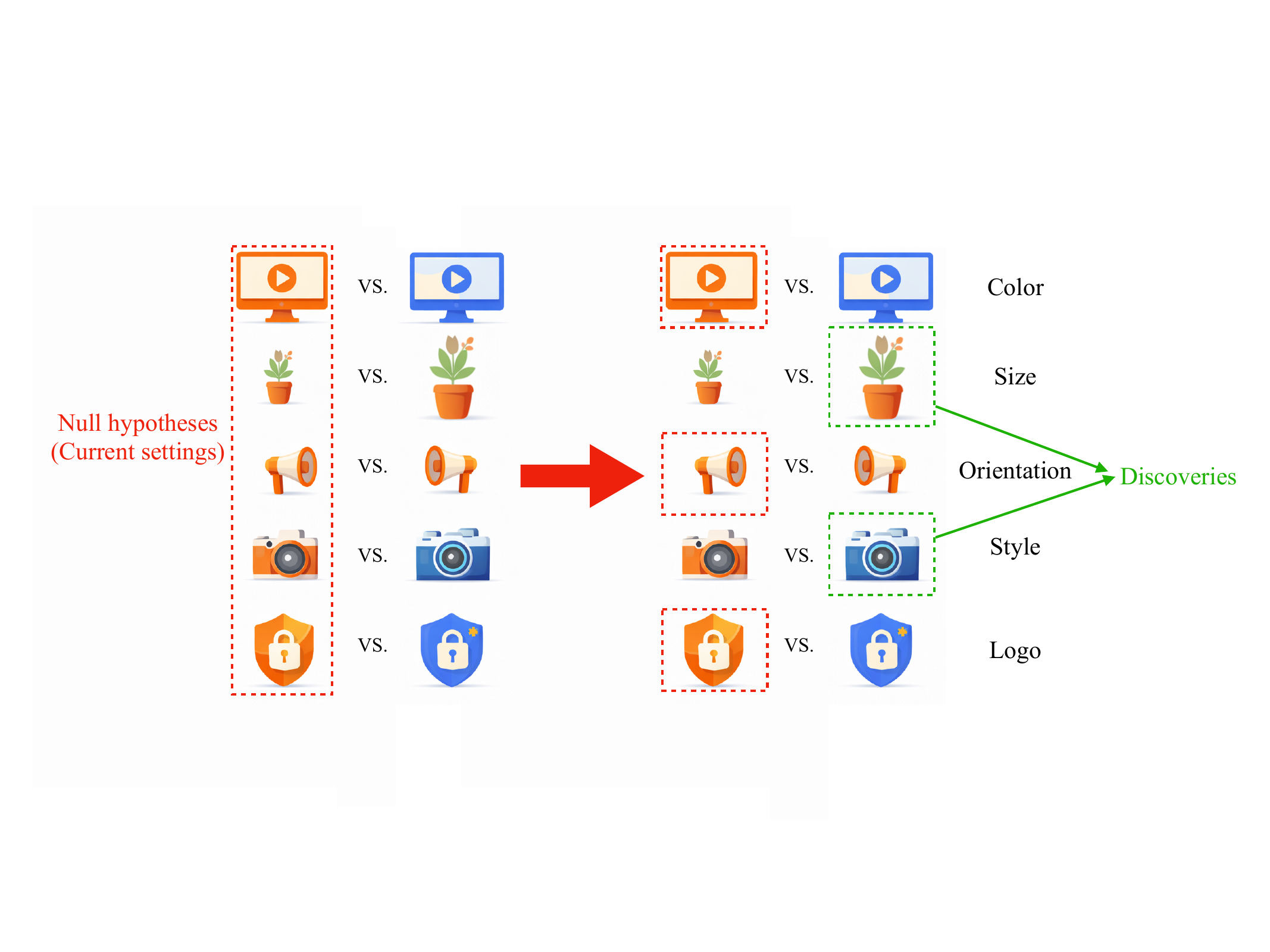}
    \caption{Illustration of multiple hypothesis testing (MHT) for the example of A/B testing implemented by an online retailer. Multiple changes to the current settings of the retailer's website are considered via separate tests, and a number of them are discovered as effective changes.}
    \label{fig:mht_illustration}
\end{figure}

For each proposed modification, the retailer formulates a baseline claim, called the \emph{null hypothesis}, stating that the modification does not improve conversion relative to the current configuration. The retailer then collects user-interaction data, and evaluates whether the observed improvement is strong enough to contradict this baseline claim. When the data provide sufficient statistical evidence against the null hypothesis, the retailer declares a \emph{discovery}, meaning that the modification is judged to produce a genuine improvement.

However, when many modifications are tested simultaneously, the probability of making at least one incorrect discovery increases with the number of experiments. Even if each individual experiment has a small probability $\delta$ of producing a false discovery, running $K$ experiments inflates the overall chance that at least one ineffective modification is incorrectly declared successful. In fact, under independence, this probability equals
$1 - (1-\delta)^K \approx K\delta$,
which grows with $K$. For instance, if $\delta = 0.01$ and $K=50$, the probability of making at least one incorrect claim is $(1-0.99^{50}) \approx 0.395$.

This observation motivates the adoption of MHT tools. As illustrated in Fig. \ref{fig:mht_illustration}, in MHT, we are generally faced with a set $\mathcal{H} = \{\mathcal{H}_1, \ldots, \mathcal{H}_K\}$ of hypotheses that need to be tested simultaneously. An MHT procedure returns a subset $\hat{\mathcal{K}}\subseteq \{1, \ldots, K\}$ of the hypotheses as discoveries, i.e., as null hypotheses that have been \textit{rejected} by the MHT procedure.

Rather than controlling the error rate separately for each hypothesis, MHT controls a \emph{global error criterion} across the entire family of tests. One such criterion is the \emph{family-wise error rate} (FWER), defined as the probability of making even a single incorrect claim among all tested modifications. Procedures that control the FWER ensure that, with high probability, none of the selected discoveries is actually ineffective.

To formalize this setting, for each hypothesis $\mathcal{H}_k \in \mathcal{H}$, MHT methods construct a \textit{test statistic} $T_k$ based on the available data.
The statistic $T_k$ is ideally designed to measure how surprising the data are under the null $\mathcal{H}_k$, so that large values provide evidence against $\mathcal{H}_k$ (see Sec. \ref{sec:ch2_pval}, Sec. \ref{sec:ch2_eval}, and Appendix \ref{app:evidence} for more details on how to construct valid test statistics).
An individual hypothesis $\mathcal{H}_k$ is rejected whenever its test statistic $T_k$ exceeds a suitable rejection threshold.

When testing multiple hypotheses simultaneously, the central statistical risk is that of \textit{false discovery}, i.e., of rejecting a hypothesis $\mathcal{H}_k$ even though it is in fact true. Let
\begin{equation}
\label{eq:ch2_true_nulls}
\mathcal{K}_0 \;=\; \{\, k \;:\; \mathcal{H}_k \text{ is true} \,\}
\end{equation}
denote the indices of all true null hypotheses. A false discovery occurs if hypothesis $\mathcal{H}_k$ is rejected for some $k \in \mathcal{K}_0$, i.e., if the intersection between the set of rejected hypotheses (discoveries) $\hat{\mathcal{K}}$, and the set of true nulls $\mathcal{K}_0$, is non-empty, i.e., $\hat{\mathcal{K}}\cap\mathcal{K}_0 \neq \emptyset$.

To ensure reliability after selection, we seek to control the probability of making any such false discovery.
This leads naturally to the notion of FWER, which is formally defined as
\begin{align}
\label{eq:ch2_fwer_def}
\mathrm{FWER}
&=
\mathbb{P}\!\left(
\exists\, k\in\mathcal{K}_0 \text{ such that } \mathcal{H}_k \text{ is rejected}
\right)\notag\\
& = \mathbb{P}(\hat{\mathcal{K}}\cap\mathcal{K}_0\neq \emptyset).
\end{align}
i.e., the probability of having \emph{any} true null hypothesis among the rejected hypotheses in set $\hat{\mathcal{K}}$.

Fig.~\ref{fig:fwer} illustrates two multiple testing outcomes, which may correspond to two distinct realizations of the calibration dataset $\mathcal{D}^{\mathrm{cal}}$. In the case illustrated in the left panel, the set $\hat{\mathcal{K}}$ of rejected hypotheses contains no true null hypotheses, while in the second at least one true null hypothesis is erroneously rejected, contributing to an increase of the FWER. Controlling FWER at level $\delta$, i.e.,
\begin{equation}
\label{eq:fwer_condition}
\mathrm{FWER} \le \delta,
\end{equation}
guarantees that, with probability at least $1-\delta$, no true null hypothesis is rejected, as in the left panel of Fig. \ref{fig:fwer}.

\begin{figure}[t]
    \centering
    \includegraphics[width=0.95\linewidth]{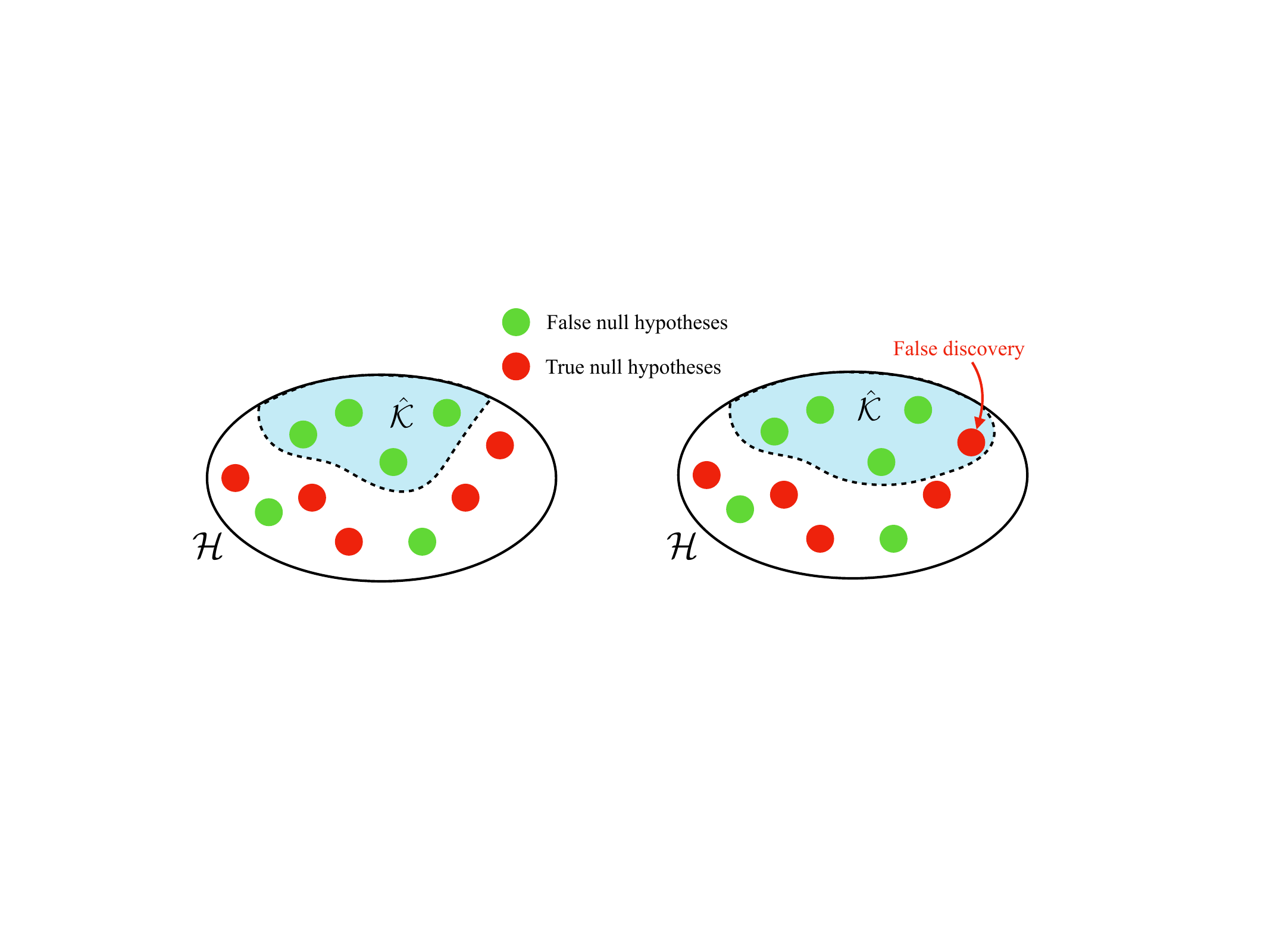}
    \caption{Schematic illustration of family-wise error rate (FWER) control in MHT.
    Each point represents a null hypothesis, with green markers denoting false null hypotheses
    and red markers denoting true null hypotheses.
    The shaded region corresponds to the subset $\hat{\mathcal{K}}$ of hypotheses rejected by a test, i.e., of discoveries.
    \emph{Left:} The rejected set contains no true null hypotheses, a condition that occurs with probability at least $1-\delta$ if the
    FWER condition (\ref{eq:fwer_condition}) is satisfied.
    \emph{Right:} The rejected set includes at least one true null hypothesis, an event that occurs with probability smaller than $\delta$ under the FWER guarantee (\ref{eq:fwer_condition}).}
    \label{fig:fwer}
\end{figure}

\begin{definition}[FWER-controlling procedure]
\label{def:ch2_fwer}
A selection procedure
\begin{equation}
\label{eq:fwer_controlling}
\mathcal{A}_{\text{FWER}}:\mathbb{R}^{K}\to 2^{K}
\end{equation}
that maps a collection of test statistics $(T_1,\dots,T_K)$ to a set of rejected hypotheses (discoveries), i.e.,
\begin{equation}
\hat{\mathcal{K}}=\mathcal{A}_{\text{FWER}}(T_1,\dots,T_K),
\end{equation}
is said to control the FWER at level $\delta \in (0,1]$ if the probability of rejecting any null hypothesis is smaller than $\delta$, i.e.,
\begin{equation}
\label{eq:ch2_fwer_set}
\mathbb{P}\!\left(
\mathcal{K}_0 \cap \hat{\mathcal{K}} = \emptyset
\right)
\;\ge\;
1-\delta.
\end{equation}
\end{definition}
In other words, an FWER-controlling procedure ensures that the entire rejected set consists only of false null hypotheses, with the exception of events having probability at most~$\delta$.

\section{Risk Control via Multiple Hypothesis Testing}
\label{sec:ch2_risk_control}

We now show how to connect MHT to the problem of reliable hyperparameter selection. To start, for each candidate hyperparameter $\lambda_k\in\Lambda$, we define the null hypothesis
\begin{equation}
\label{eq:ch2_null}
\mathcal{H}_k:\quad R(\lambda_k) > \alpha
\end{equation}
that the average risk exceeds the target level $\alpha$. Rejecting $\mathcal{H}_k$ therefore corresponds to certifying that hyperparameter $\lambda_k$ meets the risk requirement (\ref{eq:risk_requirement}).

Assume we have a valid test statistic $T_k$ for each null hypothesis $\mathcal{H}_k$ (see Sec. \ref{sec:ch2_pval} and Sec. \ref{sec:ch2_eval}), as well as an MHT procedure $\mathcal{A}_{\text{FWER}}$ that controls the FWER at level $\delta$. The following guarantee is immediate by the definition of FWER-controlling procedures.

\begin{theorem}
\label{thm:ch2_certified_set}
Any FWER-controlling procedure at level $\delta$ for the null hypotheses (\ref{eq:ch2_null}) returns a subset $\hat{\Lambda}\subseteq \Lambda$ of hyperparameters satisfying the reliability condition
\begin{equation}
\label{eq:ch2_set_guarantee}
\mathbb{P}\left(\sup_{\lambda\in\hat{\Lambda}} R(\lambda)\le \alpha\right)\ \ge\ 1-\delta.
\end{equation}
\end{theorem}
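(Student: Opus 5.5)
The plan is to reduce the statement to the defining property of an FWER-controlling procedure (Definition~\ref{def:ch2_fwer}) by exhibiting an inclusion of events. First, I will fix the correspondence between rejected indices and selected hyperparameters: given the discoveries $\hat{\mathcal{K}}=\mathcal{A}_{\text{FWER}}(T_1,\dots,T_K)$, set $\hat{\Lambda}=\{\lambda_k : k\in\hat{\mathcal{K}}\}$. With the nulls (\ref{eq:ch2_null}), the set of true nulls (\ref{eq:ch2_true_nulls}) is $\mathcal{K}_0=\{k : R(\lambda_k)>\alpha\}$. This set is deterministic, since it depends only on the unknown distribution $\mathbb{P}$ and not on the calibration data.

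Next, I will show the event inclusion $\{\mathcal{K}_0\cap\hat{\mathcal{K}}=\emptyset\}\subseteq\{\sup_{\lambda\in\hat{\Lambda}}R(\lambda)\le\alpha\}$. Suppose no true null is rejected. Then every $k\in\hat{\mathcal{K}}$ satisfies $k\notin\mathcal{K}_0$, so $R(\lambda_k)\le\alpha$. Since $\Lambda$ is finite, the supremum over $\hat{\Lambda}$ is a maximum of finitely many values, each at most $\alpha$. If $\hat{\Lambda}=\emptyset$, the supremum is $-\infty$ by convention, and the inequality holds trivially. This empty case is the only point where care is needed, and I would state the convention explicitly. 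The converse inclusion also holds, so the two events in fact coincide, although only one direction is required.

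Finally, I will apply monotonicity of probability and (\ref{eq:ch2_fwer_set}):
\[
\mathbb{P}\Big(\sup_{\lambda\in\hat{\Lambda}}R(\lambda)\le\alpha\Big)\ \ge\ \mathbb{P}\big(\mathcal{K}_0\cap\hat{\mathcal{K}}=\emptyset\big)\ \ge\ 1-\delta,
\]
which is (\ref{eq:ch2_set_guarantee}). I do not expect a real obstacle. The validity of the test statistics $T_k$ is entirely absorbed into the assumption that $\mathcal{A}_{\text{FWER}}$ controls the FWER for these particular nulls. As a remark, I would note that any $\hat{\lambda}\in\hat{\Lambda}$ selected on the same data, for instance by optimizing a secondary objective, is then $(\alpha,\delta)$-risk-controlling in the sense of Definition~\ref{def:ch2_rcp}, provided $\hat{\Lambda}$ is nonempty or a safe default is used otherwise.
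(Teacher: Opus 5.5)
Your proposal is correct and follows essentially the same route as the paper: both identify the event $\{\sup_{\lambda\in\hat{\Lambda}}R(\lambda)\le\alpha\}$ with the event that no true null, i.e., no $\lambda$ with $R(\lambda)>\alpha$, is rejected, and then invoke the FWER property (\ref{eq:ch2_fwer_set}). Your explicit convention for the supremum over an empty $\hat{\Lambda}$ handles a detail that the paper leaves implicit.
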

\begin{proof}
Let
\begin{equation}
E = \{\nexists \lambda \in \hat{\Lambda}:R(\lambda)>\alpha\} = \left\{\sup_{\lambda\in\hat{\Lambda}} R(\lambda)\le \alpha\right\}
\end{equation}
denote the event that all the hyperparameters in set $\hat{\Lambda}$ are reliable. By the definition (\ref{eq:ch2_null}) of the null hypotheses, this is equivalent to the event $E = \{\hat{\Lambda}\cap \Lambda_0 = \emptyset\}$, where $\Lambda_0$ is the subset of hyperparameters for which the null hypothesis (\ref{eq:ch2_null}) holds, i.e., $\Lambda_0 = \{\lambda\in\Lambda:R(\lambda)>\alpha\}$. Hence, by the FWER property \eqref{eq:ch2_fwer_set}, we have
\begin{equation}
\mathbb{P}\!\left(\sup_{\lambda\in\hat{\Lambda}} R(\lambda)\le \alpha\right) = \mathbb{P}(\hat{\Lambda}\cap \Lambda_0 = \emptyset)
\;\ge\; 1-\delta,
\end{equation}
concluding the proof.
\end{proof}

This result has the following important consequence.

\begin{corollary}
\label{cor:fwer}
Any specific hyperparameter $\hat{\lambda}\in \hat{\Lambda}$, possibly selected via a data-dependent way within the subset $\hat{\Lambda}$, is $(\alpha,\delta)$-risk-controlling in the sense of Definition~\ref{def:ch2_rcp} as long as the selection procedure controls the FWER at level $\delta$.
\end{corollary}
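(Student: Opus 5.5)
The plan is to reduce Corollary~\ref{cor:fwer} to Theorem~\ref{thm:ch2_certified_set} through an event inclusion, so that no new probabilistic argument is needed. Let $\hat{\Lambda}$ be the certified set returned by the FWER-controlling procedure at level $\delta$ for the nulls (\ref{eq:ch2_null}). Let $\hat{\lambda}\in\hat{\Lambda}$ be any element chosen by an arbitrary, possibly data-dependent, rule; the rule may use the same calibration data $\mathcal{D}^{\mathrm{cal}}$, e.g., $\hat{\lambda}=\arg\min_{\lambda\in\hat{\Lambda}}\widehat{R}_n(\lambda)$. The key observation is that for every realization of the data with $\hat{\Lambda}\neq\emptyset$,
\begin{equation*}
\left\{\sup_{\lambda\in\hat{\Lambda}} R(\lambda)\le\alpha\right\}\subseteq\left\{R(\hat{\lambda})\le\alpha\right\}.
\end{equation*}
This holds deterministically, because $\hat{\lambda}$ is one of the elements over which the supremum is taken.

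The steps are as follows. First, fix the event $E$ from the proof of Theorem~\ref{thm:ch2_certified_set}. Second, verify the inclusion above pointwise in the sample space. Since it is pointwise, the dependence of $\hat{\lambda}$ on the data is irrelevant: no independence between the selection rule and the test statistics is required. Third, apply monotonicity of probability together with (\ref{eq:ch2_set_guarantee}):
\begin{equation*}
\mathbb{P}\big(R(\hat{\lambda})\le\alpha\big)\ \ge\ \mathbb{P}\!\left(\sup_{\lambda\in\hat{\Lambda}} R(\lambda)\le\alpha\right)\ \ge\ 1-\delta,
\end{equation*}
which is exactly Definition~\ref{def:ch2_rcp}.

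The only real obstacle is the degenerate case $\hat{\Lambda}=\emptyset$, where $\hat{\lambda}$ is undefined. Here the supremum over the empty set is $-\infty$, so $E$ holds trivially. I would handle this by adopting the standard convention that, when nothing is certified, one deploys a default hyperparameter known a priori to be safe, or abstains. With abstention counted as not violating the requirement, or with a default for which $R\le\alpha$ holds, the inclusion persists on this event as well and the argument goes through unchanged. I would state this convention explicitly. Otherwise the corollary is interpreted as conditional on the procedure returning a nonempty set, with the guarantee applying to the event that a hyperparameter is actually deployed.
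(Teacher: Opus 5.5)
Your argument is correct and matches the paper's (implicit) one: the corollary follows from Theorem~\ref{thm:ch2_certified_set} via the pointwise inclusion $\{\sup_{\lambda\in\hat{\Lambda}}R(\lambda)\le\alpha\}\subseteq\{R(\hat{\lambda})\le\alpha\}$, which is why a data-dependent choice within $\hat{\Lambda}$ costs nothing. One caution on your empty-set fallback: what follows is the joint bound $\mathbb{P}(\hat{\Lambda}\neq\emptyset,\ R(\hat{\lambda})>\alpha)\le\delta$, not a bound conditional on $\hat{\Lambda}\neq\emptyset$; the conditional version can fail, e.g., when every candidate is unreliable and any nonempty output is a violation, so count abstention as a non-violation rather than conditioning on deployment.
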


By Corollary~\ref{cor:fwer}, once the certified set $\hat{\Lambda}$ is obtained under FWER control, we may select any specific hyperparameter configuration $\hat{\lambda}\in\hat{\Lambda}$ that optimizes any secondary objectives such as accuracy, efficiency, or interpretability, without invalidating the risk guarantee (\ref{eq:ch1_guarantee}).

\begin{figure}[t]
    \centering
    \includegraphics[width=0.86\linewidth]{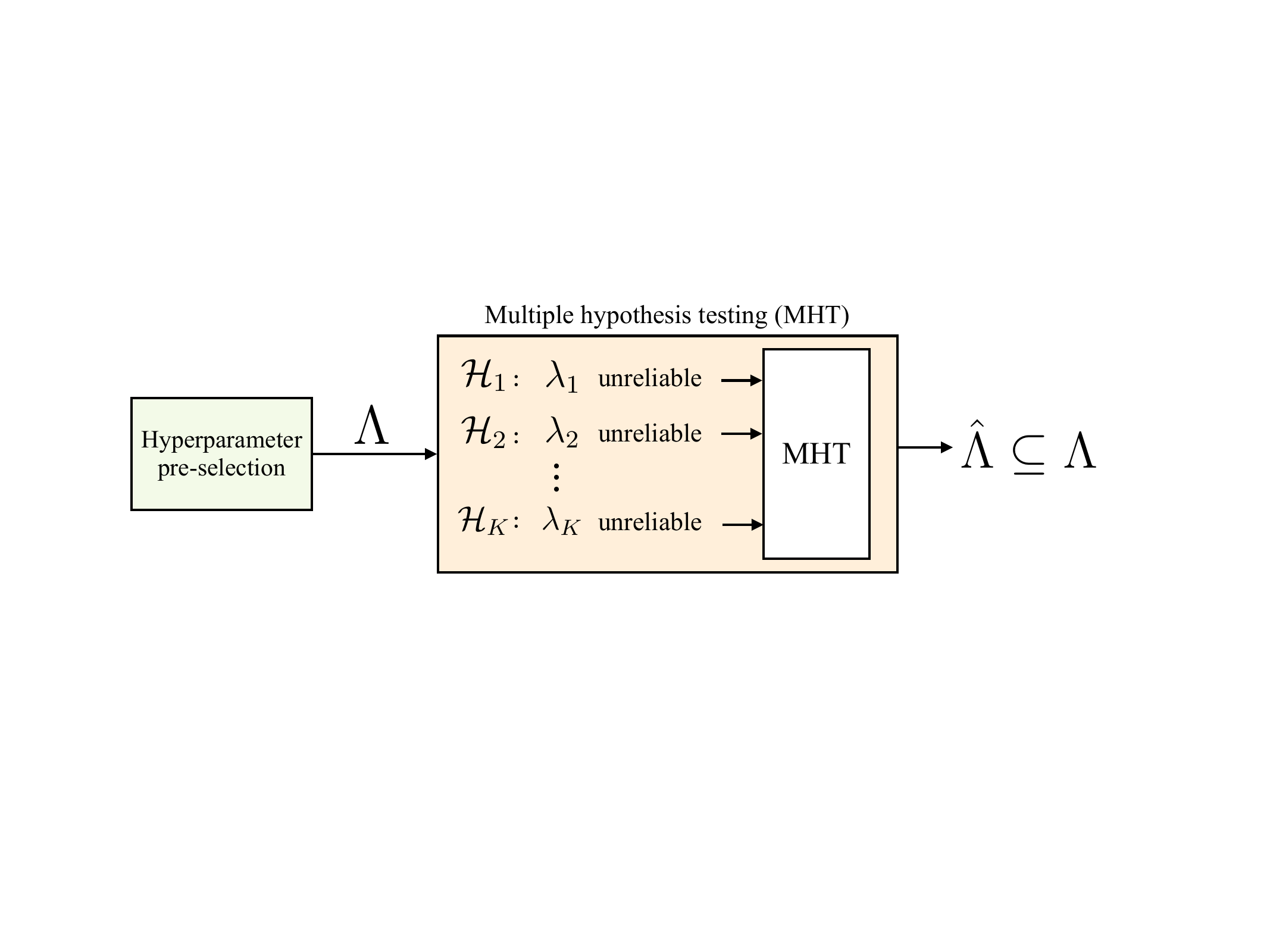}
    \caption{Reliable hyperparameter selection as MHT. Each hyperparameter $\lambda_k\in\Lambda$ is assigned to the null hypothesis $\mathcal{H}_k:R(\lambda_k)>\alpha$ that the hyperparameter is unreliable for a given target maximum risk level $\alpha$. An MHT procedure outputs subset of hyperparameters $\hat{\Lambda}$ that provides reliability guarantees on the number of unreliable hyperparameters (with $R(\lambda)>\alpha$) in subset $\hat{\Lambda}$.}
    \label{fig:ch2_mht_overview}
\end{figure}

Overall, as summarized in Fig. \ref{fig:ch2_mht_overview}, the workflow of reliable hyperparameter selection via MHT is as follows:
\begin{enumerate}
    \item For each candidate hyperparameter $\lambda_k \in \Lambda$, use the calibration data $\mathcal{D}^\mathrm{cal}$ to estimate the empirical risks $\widehat{R}_n(\lambda_k)$ for all $\lambda_k\in \Lambda$ via (\ref{eq:ch2_emp_risk}).

    \item Turn the empirical risks $\{\widehat{R}_n(\lambda_k)\}_{k = 1}^{K}$ in (\ref{eq:ch2_emp_risk}) into valid statistics, i.e., p-values (see Sec. \ref{sec:ch2_pval}) or e-values (see Sec. \ref{sec:ch2_eval}).
    \item Apply an MHT procedure to the derived statistics to obtain a certified set $\hat{\Lambda}\subseteq\Lambda$ of reliable hyperparameters.
    \item Select hyperparameter $\hat{\lambda}\in\hat{\Lambda}$ for deployment by following any, possibly data-dependent, criterion.
\end{enumerate}

\section{P-values}
\label{sec:ch2_pval}

This section describes standard statistics $T_k$ to test the null hypotheses $\mathcal{H}_k$, namely p-values. A self-contained broader background treatment of p-values, including their definition, power, and general constructions via the probability integral transform and concentration bounds, is provided in Appendix \ref{app:evidence}.

\subsection{Definition}

For each candidate hyperparameter $\lambda_k\in\Lambda$, a p-value $p_k$ for the null hypothesis
$\mathcal{H}_k$ in (\ref{eq:ch2_null}) is a function of the calibration data $\mathcal{D}^{\mathrm{cal}}$ such that, for all $u\in[0,1]$, the inequality
\begin{equation}
\label{eq:ch2_pvalue_validity}
\mathbb{P}\!\left( p_k \le u \mid \mathcal{H}_k\right) \;\le\; u
\end{equation}
holds, where the probability is taken over the randomness of the calibration dataset when conditioning on the null hypothesis $\mathcal{H}_k$. Condition (\ref{eq:ch2_pvalue_validity}) is known as the \emph{superuniformity} property. It states that, under the null $\mathcal{H}_k$, the p-value is stochastically no smaller than a uniform random variable in the interval $(0,1)$, i.e., small p-values occur no more often than they would under a uniform draw.

The CDF of a uniform distribution $\text{Unif}(0,1)$ is given by $F_U(u)=u$ for $u\in[0,1]$. Therefore, letting $F_{p_k}(u\mid \mathcal{H}_k)=\mathbb{P}(p_k\le u\mid \mathcal{H}_k)$ denote the CDF of the p-value $p_k$, the superuniformity property (\ref{eq:ch2_pvalue_validity}) corresponds equivalently to the inequality
\begin{equation}
F_{p_k}(u\mid \mathcal{H}_k)\le F_U(u) \quad \text{for all }u\in[0,1],
\end{equation}
so that the null CDF of a valid p-value lies everywhere below the line $F_U(u) = u$ in the interval $u\in [0,1]$. This property is illustrated in Fig. \ref{fig:superuniform}.

\begin{figure}[t]
    \centering
    \includegraphics[width=0.72\linewidth]{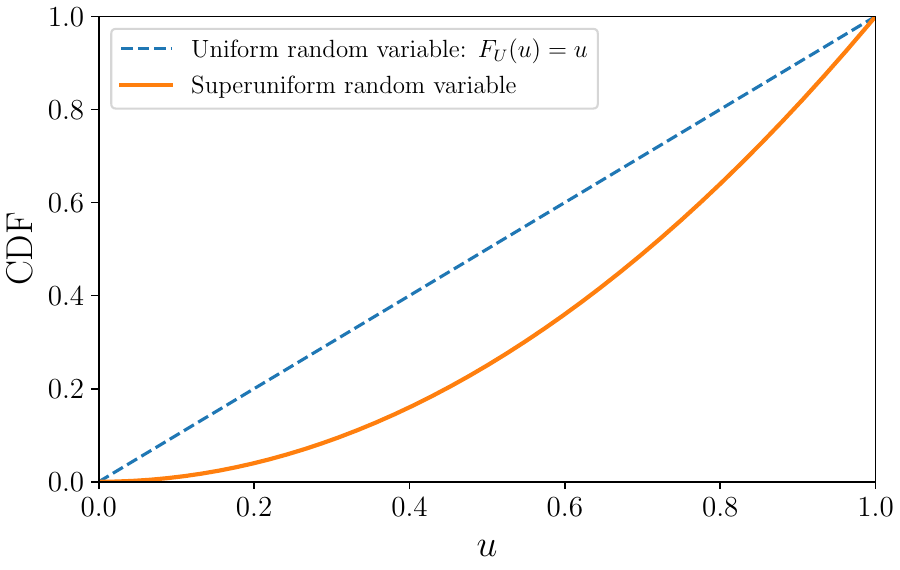}
    \caption{Illustration of the superuniformity property (\ref{eq:ch2_pvalue_validity}): The CDF $F_U(u)=u$ of a uniform random variable $\mathrm{Unif}(0,1)$ is shown as a dashed line, and a superuniform random variable has a CDF $F_p(u)\le u$ for all $u\in [0,1]$, lying on or below the dashed diagonal line.}
    \label{fig:superuniform}
\end{figure}

P-values are often constructed from concentration inequalities \citep{boucheron2003concentration}. Notably, in the bounded-loss setting $L_\lambda(\cdot, \cdot)\in[0,1]$, a simple and widely used construction of p-values is based on Hoeffding’s inequality \citep{hoeffding1963probability}.

\begin{lemma}[Hoeffding's inequality]
\label{lem:hoeffding}
Fix a hyperparameter $\lambda_k\in\Lambda$ and assume that the losses
$L_{\lambda_k}(X_i,Y_i)$ are independent and bounded in $[0,1]$.
Then, for any $t>0$, the empirical risk
$\widehat{R}_n(\lambda_k)$
satisfies
\begin{equation}
\label{eq:hoeffding_unit_interval}
\mathbb{P}\!\left(
\widehat{R}_n(\lambda_k)
\le
R(\lambda_k) - t
\right)
\;\le\;
\exp\!\left(-2n t^2\right).
\end{equation}
\end{lemma}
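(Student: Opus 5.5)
We prove the bound by the Chernoff method, i.e., Markov's inequality applied to an exponential moment, with Hoeffding's lemma controlling the moment generating function of each bounded summand. Write $L_i = L_{\lambda_k}(X_i,Y_i)\in[0,1]$. By the i.i.d. assumption on $\mathcal{D}^{\mathrm{cal}}$, each $L_i$ has mean $\mathbb{E}[L_i]=R(\lambda_k)$. Define the centered variables $W_i = R(\lambda_k) - L_i$, which have zero mean and take values in an interval of length one, namely $[R(\lambda_k)-1, R(\lambda_k)]$. The event in (\ref{eq:hoeffding_unit_interval}) is equivalent to
\begin{equation}
\sum_{i=1}^n W_i \ge nt.
\end{equation}

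For any $s>0$, Markov's inequality applied to $\exp(s\sum_i W_i)$, combined with independence, gives
\begin{equation}
\mathbb{P}\Big(\sum_{i=1}^n W_i\ge nt\Big)\le e^{-snt}\prod_{i=1}^n \mathbb{E}\big[e^{sW_i}\big].
\end{equation}

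The key ingredient is Hoeffding's lemma: a zero-mean random variable $W$ supported on an interval $[a,b]$ satisfies $\mathbb{E}[e^{sW}]\le \exp(s^2(b-a)^2/8)$. I would prove it as follows.
\begin{itemize}
\item Use convexity of $w\mapsto e^{sw}$ to bound $e^{sW}$ by the chord through $(a,e^{sa})$ and $(b,e^{sb})$.
\item Take expectations using $\mathbb{E}[W]=0$. This yields $\mathbb{E}[e^{sW}]\le e^{\varphi(u)}$, where $u=s(b-a)$, $\theta=-a/(b-a)\in[0,1]$, and $\varphi(u)=-\theta u+\log(1-\theta+\theta e^{u})$.
\item Check that $\varphi(0)=0$ and $\varphi'(0)=0$.
\item Show $\varphi''(u)\le 1/4$. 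The second derivative has the form $q(1-q)$ for some $q\in[0,1]$, so it is at most $1/4$.
\item Conclude by Taylor's theorem that $\varphi(u)\le u^2/8$.
\end{itemize}
This lemma is the main step requiring care, specifically the computation of $\varphi''$. Everything else is routine.

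With $b-a=1$, the lemma gives $\mathbb{E}[e^{sW_i}]\le e^{s^2/8}$. The bound therefore becomes $\exp(-snt+ns^2/8)$. Minimizing the exponent over $s>0$ gives $s=4t$ and the value $\exp(-2nt^2)$, which establishes (\ref{eq:hoeffding_unit_interval}).

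Independence is used only to factor the moment generating function. Identical distribution is not essential, provided each $L_i$ has mean $R(\lambda_k)$.
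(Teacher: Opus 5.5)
Your proof is correct. The paper gives no proof of this lemma; it simply cites \citep{hoeffding1963probability}, and your Chernoff-plus-Hoeffding's-lemma argument is the standard proof of that cited result. It also uses the same mechanism the paper uses elsewhere: applying Markov's inequality to the e-value \eqref{eq:hoeffding_evalue}, with $\alpha$ replaced by $R(\lambda_k)$ and $\eta=4t$, reproduces your bound. Beyond that, you additionally prove Hoeffding's lemma (Lemma~\ref{lem:hoeffding_lemma}), which the paper only states.
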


\begin{lemma}
    The statistic
\begin{equation}
\label{eq:ch2_hoeffding_pvalue}
p_k
\;=\;
\exp\!\left(
-2n\big(\alpha - \widehat{R}_n(\lambda_k)\big)_+^2
\right)
\end{equation}
is a p-value for the null hypothesis $\mathcal{H}_k$ in (\ref{eq:ch2_null}), where $(x)_+ = \max\{x,0\}$.
\end{lemma}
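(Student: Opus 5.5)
To establish the superuniformity property (\ref{eq:ch2_pvalue_validity}), I fix $u\in[0,1]$ and assume the null $\mathcal{H}_k$ holds, i.e., $R(\lambda_k)>\alpha$. The goal is $\mathbb{P}(p_k\le u)\le u$. First, the case $u=1$ is trivial, so I take $u\in[0,1)$.

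Because $p_k\in(0,1]$, the case $u=0$ also gives probability zero. For $u\in(0,1)$, I invert the map defining $p_k$. The event $\{p_k\le u\}$ is equivalent to $2n(\alpha-\widehat{R}_n(\lambda_k))_+^2\ge \log(1/u)>0$. This forces the positive part to be strictly positive, so that
\begin{equation*}
\{p_k\le u\}=\left\{\widehat{R}_n(\lambda_k)\le \alpha - t_u\right\},\qquad t_u=\sqrt{\tfrac{\log(1/u)}{2n}}>0 .
\end{equation*}

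Under $\mathcal{H}_k$ we have $\alpha<R(\lambda_k)$, so the event above is contained in $\{\widehat{R}_n(\lambda_k)\le R(\lambda_k)-t_u\}$. Since the calibration points are i.i.d.\ and the losses lie in $[0,1]$, Lemma~\ref{lem:hoeffding} applies with $t=t_u$ and gives
\begin{equation*}
\mathbb{P}(p_k\le u)\le \exp(-2n t_u^2)=u .
\end{equation*}

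The one step that needs care is the event inversion involving the positive part. It relies on $\log(1/u)>0$ for $u<1$, which rules out the branch where $\widehat{R}_n(\lambda_k)\ge\alpha$, since there $p_k=1>u$. With that settled, the monotonicity argument that moves from $\alpha$ to $R(\lambda_k)$ is immediate, and the proof reduces to a single application of Lemma~\ref{lem:hoeffding}.
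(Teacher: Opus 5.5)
Your proof is correct and follows the paper's argument: you invert $\{p_k\le u\}$ into $\{\widehat{R}_n(\lambda_k)\le \alpha - t_u\}$, enlarge this event using $R(\lambda_k)>\alpha$, and apply Hoeffding's inequality with $t=t_u$. Your separate handling of $u\in\{0,1\}$ and of the positive part is slightly more careful than the paper's equality chain, which implicitly needs $u<1$ (at $u=1$ the two events differ, though the bound $\le 1$ is trivial).
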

\begin{proof}
Under the null hypothesis $\mathcal{H}_k$ in (\ref{eq:ch2_null}), i.e., when $R(\lambda_k) = \mathbb{E}[\widehat{R}_n(\lambda_k)]> \alpha$, Hoeffding’s inequality (\ref{eq:hoeffding_unit_interval}) implies the condition
\begin{equation}
\label{eq:Hoeffding_eq}
\mathbb{P}\!\left(
\widehat{R}_n(\lambda_k \mid \mathcal{H}_k) \le \alpha - t
\right)
\;\le\;
\exp\!\left(-2n t^2\right),
\qquad t>0.
\end{equation}
The following chain of inequalities establishes the superuniformity property:
\begin{align}
F_{p_k}(u \mid \mathcal{H}_k)
&=
\mathbb{P}\!\left[
\exp\!\left(
-2n\big(\alpha-\widehat{R}_n(\lambda_k)\big)_+^2
\right)\le u
\ \middle|\ \mathcal{H}_k
\right] \nonumber\\
&=
\mathbb{P}\!\left[
\big(\alpha-\widehat{R}_n(\lambda_k)\big)_+
\ge \sqrt{\frac{-\log u}{2n}}
\ \middle|\ \mathcal{H}_k
\right] \nonumber\\
&=
\mathbb{P}\!\left[
\widehat{R}_n(\lambda_k)
\le
\alpha-\sqrt{\frac{-\log u}{2n}}
\ \middle|\ \mathcal{H}_k
\right] \nonumber\\
&\le
\exp\!\left(
-2n\left(\sqrt{\frac{-\log u}{2n}}\right)^2
\right)\nonumber\\
&= u.
\end{align}

\end{proof}

To interpret the p-value \eqref{eq:ch2_hoeffding_pvalue}, for a given hyperparameter $\lambda_k$, let
\begin{equation}
\Delta_{i,k} = \alpha - L_{\lambda_k}(X_i,Y_i)
\end{equation}
be the instantaneous reliability margin with respect to the target $\alpha$ for hyperparameter $\lambda_k$ given the test pair $(X_i,Y_i)$.
Under the null hypothesis $\mathcal{H}_k$, we have the inequality
\begin{equation}
\mathbb{E}[\Delta_{i,k} \mid \mathcal{H}_k] = \alpha - R(\lambda_k) < 0,
\end{equation}
i.e., the average margin is negative.
Hence, under the null hypothesis $\mathcal{H}_k$, the random variables $\Delta_{i,k}$ have negative mean, and large positive deviations of their sum provide evidence against the null hypothesis $\mathcal{H}_k$. Accordingly, the p-value \eqref{eq:ch2_hoeffding_pvalue} decreases exponentially quickly as the quantity $(\alpha - \widehat{R}_n(\lambda_k))_+$ becomes increasingly positive.

As further discussed in Appendix~\ref{app:ch2_pvalue_constructions}, more refined p-values based on more powerful concentration inequalities such as exact binomial tails for 0-1 losses or empirical Bernstein bounds based on knowledge or estimates of the variance of the loss can substantially improve test performance in terms of capacity to identify true reliable hyperparameters.

\subsection{P-values for Single-Hypothesis Testing}
\label{sec:p_Val_Single}

Before turning to MHT, it is useful to recall how p-values are used for a \emph{single} hypothesis test. To this end, consider one candidate hyperparameter $\lambda_k$ and the null hypothesis $\mathcal{H}_k:R(\lambda_k)>\alpha$.

The primary validity requirement for a single test is \emph{type-I error control}. This requires controlling the probability of falsely deeming hyperparameter $\lambda_k$ as reliable when in fact it is not. Given the p-value $p_k$ for hypothesis $\mathcal{H}_k$, consider the following test
\begin{equation}
    \label{eq:single_test}
    \text{if}\;p_k\leq \delta,\;\text{reject }\mathcal{H}_k,\;\text{while otherwise failing to reject the null}.
\end{equation}
The type-I probability of error is the probability of a false rejection, or false discovery. For the test (\ref{eq:single_test}), this probability is bounded as
\begin{equation}
\label{eq:ch2_type1}
\mathbb{P}\!\left(\text{reject } \mathcal{H}_k\mid \mathcal{H}_k\right)
=
\mathbb{P}(p_k\le \delta \mid \mathcal{H}_k)
\;\le\;
\delta,
\end{equation}
which is an immediate consequence of the superuniformity property \eqref{eq:ch2_pvalue_validity} that defines a p-value. Accordingly, the test (\ref{eq:single_test}) gives a type-I error probability no larger than $\delta$, and is said to be a level-$\delta$ test.

Attaining an arbitrarily low type-I error probability is straightforward, as it suffices to accept the hypothesis $\mathcal{H}_k$ irrespective of the available data. It is thus necessary to evaluate also the \textit{type-II error} probability, i.e., the probability of incorrectly identifying hyperparameter $\lambda_k$ as unreliable when in fact it is reliable. Equivalently, one can evaluate the \textit{power}, i.e., the complement of the type-II error. Writing the alternative hypothesis, i.e., that hyperparameter $\lambda_k$ is reliable, as
\begin{equation}
\mathcal{H}_k^c:\ R(\lambda_k)\le \alpha,
\end{equation}
the power of the level-$\delta$ test (\ref{eq:single_test}) is
\begin{equation}
\label{eq:ch2_power}
\mathrm{Power}(\delta)
\;=\;
\mathbb{P}\!\left(\text{reject } \mathcal{H}_k\mid \mathcal{H}_k^c\right)
=
\mathbb{P}(p_k\le \delta \mid R(\lambda_k)\le \alpha).
\end{equation}

The power of test (\ref{eq:single_test}) depends on the separation between the true risk $R(\lambda_k)$ and the target level $\alpha$, the sample size $n$, and the tightness of the p-value. For example, suppose that hyperparameter $\lambda_k$ is truly reliable by a margin $\varepsilon$, i.e.,
$R(\lambda_k) =  \alpha-\varepsilon$ for some $\varepsilon>0$.
Then, the p-value (\ref{eq:ch2_hoeffding_pvalue}) decreases at the rate
$\exp(-cn\varepsilon^2)$ for a constant $c>0$.
Consequently, the test (\ref{eq:single_test}) rejects the null hypothesis $\mathcal{H}_k$ with higher probability, i.e., has higher power,
when we have more calibration samples $n$, or the gap $\varepsilon$
between the true risk and the target $\alpha$ is larger.

In summary, for a single hypothesis, with the test (\ref{eq:single_test}), the threshold $\delta$ directly controls the type-I error, i.e., the probability of selecting an unreliable hyperparameter, while the chosen p-value construction governs how quickly the test can improve power, i.e., select truly reliable hyperparameters.

\subsection{P-values in MHT}
\label{sec:ch2_pval_mht}

Once p-values $\{p_k\}_{k=1,\ldots, K}$ have been constructed for all the $K$ null hypotheses $\mathcal{H}$, an FWER-controlling certified set
$\hat{\Lambda}$ can be obtained by applying any MHT procedure that controls the FWER
\eqref{eq:ch2_fwer_def} at level $\delta$. We briefly review several widely used MHT procedures in the following.

\textbf{1) Bonferroni correction:} The most classical FWER procedure is the Bonferroni correction \citep{bonferroni1936teoria}, which rejects each null hypothesis $\mathcal{H}_k$ whenever the inequality
\begin{equation}
\label{eq:ch2_bonferroni}
p_k \le \frac{\delta}{K}
\end{equation}
holds.

\begin{proposition}[FWER control of the Bonferroni correction]
\label{prop:ch2_bonferroni_fwer}
The Bonferroni correction \eqref{eq:ch2_bonferroni} controls the FWER at level $\delta$ under arbitrary dependence among the p-values.
\end{proposition}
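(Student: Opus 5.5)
The plan is a direct union bound over the set $\mathcal{K}_0$ of true nulls, combined with the superuniformity property \eqref{eq:ch2_pvalue_validity} of each individual p-value. The point is that the argument never looks at the joint law of $(p_1,\dots,p_K)$, so no assumption on dependence is needed.

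First, I write the complement of the event in \eqref{eq:ch2_fwer_set}. Under the rule \eqref{eq:ch2_bonferroni}, $\hat{\mathcal{K}}=\{k: p_k\le \delta/K\}$. Hence the event $\{\mathcal{K}_0\cap\hat{\mathcal{K}}\neq\emptyset\}$ equals the finite union $\bigcup_{k\in\mathcal{K}_0}\{p_k\le \delta/K\}$. If $\mathcal{K}_0=\emptyset$, the union is empty, the FWER is zero, and there is nothing to prove.

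Second, I apply Boole's inequality, which holds for arbitrary events with no independence assumption:
\begin{equation*}
\mathrm{FWER}=\mathbb{P}\Bigl(\bigcup_{k\in\mathcal{K}_0}\{p_k\le \delta/K\}\Bigr)\le \sum_{k\in\mathcal{K}_0}\mathbb{P}(p_k\le \delta/K).
\end{equation*}
For each $k\in\mathcal{K}_0$ the null $\mathcal{H}_k$ is true, so superuniformity with $u=\delta/K\in[0,1]$ gives $\mathbb{P}(p_k\le\delta/K)\le \delta/K$. Summing over $k\in\mathcal{K}_0$ yields $\mathrm{FWER}\le |\mathcal{K}_0|\,\delta/K\le\delta$, since $|\mathcal{K}_0|\le K$. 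This is exactly \eqref{eq:ch2_fwer_set}.

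There is no real obstacle. The only delicate point is interpretive: superuniformity is stated conditionally on $\mathcal{H}_k$. Since $\mathcal{H}_k$ is a fixed (non-random) property of the unknown distribution, for $k\in\mathcal{K}_0$ this conditioning simply means evaluating probabilities under the true data law. All terms in the sum are therefore probabilities under the same measure, and the union bound applies legitimately.
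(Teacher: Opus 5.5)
Your proposal is correct and follows the paper's proof essentially step for step: you write the FWER event as $\bigcup_{k\in\mathcal{K}_0}\{p_k\le\delta/K\}$, apply the union bound, and invoke superuniformity at $u=\delta/K$ to obtain $|\mathcal{K}_0|\delta/K\le\delta$. Your remark on the conditioning in \eqref{eq:ch2_pvalue_validity} matches the paper's identification $\mathbb{P}(p_k\le\delta/K)=\mathbb{P}(p_k\le\delta/K\mid\mathcal{H}_k)$ for true nulls $k\in\mathcal{K}_0$.
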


\begin{proof}
Under the Bonferroni rule \eqref{eq:ch2_bonferroni}, the event of violating the FWER condition (\ref{eq:ch2_fwer_def}) can be written as
\begin{equation}
\left\{\exists\, k\in\mathcal{K}_0:\ p_k \le \frac{\delta}{K}\right\},
\end{equation}
i.e., that there exists a true null hypothesis $k$ that satisfies the Bonferroni condition (\ref{eq:ch2_bonferroni}).
By the union bound,
\begin{equation}
\label{eq:app_union_bound}
\mathbb{P}\!\left(\exists\, k\in\mathcal{K}_0:\ p_k \le \frac{\delta}{K}\right)
\le
\sum_{k\in\mathcal{K}_0}
\mathbb{P}\!\left(p_k \le \frac{\delta}{K}\right).
\end{equation}
For each $k\in\mathcal{K}_0$, the hypothesis $\mathcal{H}_k$ is true, and hence, by the superuniformity condition (\ref{eq:ch2_pvalue_validity}) with $u=\delta/K$,
\begin{equation}
\mathbb{P}\!\left(p_k \le \frac{\delta}{K}\right)
=
\mathbb{P}\!\left(p_k \le \frac{\delta}{K}\middle| \mathcal{H}_k\right)
\le
\frac{\delta}{K}.
\end{equation}
Substituting into (\ref{eq:app_union_bound}) yields
\begin{equation}
\mathrm{FWER}
\le
\sum_{k\in\mathcal{K}_0} \frac{\delta}{K}
=
\frac{|\mathcal{K}_0|}{K}\,\delta
\le
\delta,
\end{equation}
since $|\mathcal{K}_0|\le K$.
\end{proof}

Note that, equivalently, one may define Bonferroni-adjusted p-values
\begin{equation}
\label{eq:ch2_bonferroni_adj}
\tilde p_k \;=\; \min\{1,\ Kp_k\},
\end{equation}
and reject $\mathcal{H}_k$ if the inequality $\tilde p_k \le \delta$ holds.
Bonferroni is conservative when $K$ is large, but it provides a simple baseline with
minimal assumptions.

\textbf{2) Fixed sequence testing (FST):}
An alternative FWER-controlling procedure is fixed sequence testing, which exploits a pre-specified ordering of the hypotheses.

Suppose that, prior to examining the calibration data used to compute the p-values, the hypotheses in $\mathcal{H}$ are arranged in a fixed sequence from most promising to least promising to be reliable. This ordering may reflect prior information or structural considerations, such as increasing model complexity, decreasing validation performance on a separate dataset, or expert knowledge suggesting that certain hyperparameters are more promising. Note the importance of not using calibration data in this process.

Denote the ordered list of hypotheses by $\mathcal{H}_{(1)}, \ldots, \mathcal{H}_{(K)}$, and let $p_{(1)}, \ldots, p_{(K)}$ be the corresponding p-values arranged according to this predetermined sequence. FST starts by testing $\mathcal{H}_{(1)}$ at level $\delta$ using the test (\ref{eq:single_test}).
If $p_{(1)} \le \delta$, the hypothesis $\mathcal{H}_{(1)}$ is rejected and the procedure proceeds to test $\mathcal{H}_{(2)}$ at the same level $\delta$.
In general, testing continues sequentially for $k = 1, \ldots, K$ as long as the inequality
$p_{(k)} \le \delta$ holds.
FST stops at the first index
\begin{equation}
\label{eq:ch2_fst_kstar}
k^\star
=
\min\{k:\ p_{(k)} > \delta\},
\end{equation}
with the convention $k^\star = K+1$ if all hypotheses are rejected.
The rejected set of hypotheses is then
\begin{equation}
    \hat{\mathcal{K}} = \{\mathcal{H}_{(1)},\ldots,\mathcal{H}_{(k^\star-1)}\}.
\end{equation}

\begin{proposition}[FWER control of fixed sequence testing]
\label{prop:ch2_fst_fwer}
Fixed sequence testing \eqref{eq:ch2_fst_kstar} controls the FWER at level $\delta$.
\end{proposition}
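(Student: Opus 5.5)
The plan is to reduce the FWER event to a single event concerning the \emph{first} true null in the predetermined ordering, and then invoke superuniformity once, with no union bound. If $\mathcal{K}_0=\emptyset$, no false discovery is possible and the FWER is zero, so assume $\mathcal{K}_0\neq\emptyset$. Let $j^\star$ denote the smallest position $j\in\{1,\ldots,K\}$ such that $\mathcal{H}_{(j)}$ is a true null. The key point is that $j^\star$ is deterministic. It depends only on the fixed ordering, which was chosen without looking at the calibration data, and on the unknown true risks $R(\lambda_k)$. It does not depend on the random p-values.

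Next I would show the event inclusion
\begin{equation*}
\{\hat{\mathcal{K}}\cap\mathcal{K}_0\neq\emptyset\}\subseteq\{p_{(j^\star)}\le\delta\}.
\end{equation*}
FST rejects exactly $\mathcal{H}_{(1)},\ldots,\mathcal{H}_{(k^\star-1)}$, where $k^\star$ is given by (\ref{eq:ch2_fst_kstar}). Suppose some true null $\mathcal{H}_{(j)}$ is rejected. Then $j\le k^\star-1$. Since $j^\star$ is the first true null in the ordering, $j^\star\le j\le k^\star-1$. Because $k^\star$ is the first index with $p_{(k)}>\delta$, every index below $k^\star$ has $p_{(k)}\le\delta$, and in particular $p_{(j^\star)}\le\delta$. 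Put differently, the procedure can only reach a true null by first rejecting the earliest one, so the whole family-wise error is controlled by that single test.

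Finally, $\mathcal{H}_{(j^\star)}$ is true and $j^\star$ is fixed, so the superuniformity condition (\ref{eq:ch2_pvalue_validity}) with $u=\delta$ gives
\begin{equation*}
\mathrm{FWER}\le\mathbb{P}(p_{(j^\star)}\le\delta)\le\delta.
\end{equation*}
This holds under arbitrary dependence among the p-values, because only one marginal probability is used. The only delicate step is justifying that $j^\star$ is non-random. This is exactly why the ordering must not use calibration data: if it did, $p_{(j^\star)}$ would be a data-selected p-value and superuniformity could fail.
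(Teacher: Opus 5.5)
Your proof is correct and follows the paper's own argument: a family-wise error requires rejecting the first true null in the fixed ordering, so superuniformity applied once at level $\delta$ bounds the FWER. Your version is slightly more careful than the paper's in two places. You state an event inclusion, and hence an inequality, where the paper writes an equality. You also make explicit that the index of the first true null is deterministic because the ordering does not use the calibration data.
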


\begin{proof}
Let $k_0$ be the smallest index such that $\mathcal{H}_{(k_0)}$ is a true null hypothesis, i.e., $\mathcal{H}_{(k_0)}\in \mathcal{K}_0$.
A family-wise error occurs only if $\mathcal{H}_{(k_0)}$ is rejected, since the procedure stops immediately after the first non-rejection.
Because $\mathcal{H}_{(k_0)}$ is tested at level $\delta$,
\begin{equation}
\mathrm{FWER}
=
\mathbb{P}\!\left(p_{(k_0)} \le \delta \middle| \mathcal{H}_{(k_0)}\right)
\le
\delta,
\end{equation}
where the inequality follows from the superuniformity property \eqref{eq:ch2_pvalue_validity}.
\end{proof}

Unlike Bonferroni, FST does not divide the level $\delta$ across the $K$ hypotheses. When the ordering places truly reliable hyperparameters early in the sequence, the procedure can therefore be substantially more powerful. However, its performance depends critically on the quality of the predetermined ordering: if unreliable hyperparameters appear early, the procedure may stop prematurely and fail to identify reliable configurations appearing later in the sequence.

\section{E-values}
\label{sec:ch2_eval}

This section introduces e-values as alternative statistics $T_k$ for the null hypotheses $\mathcal{H}_k$. A self-contained background treatment of e-values, including their interpretation as generalized likelihood ratios and the proofs of the properties summarized below, is provided in Appendix \ref{app:evidence}.

\subsection{Definition}

A conceptually distinct way to construct valid statistics is through \emph{e-values}. E-values quantify evidence against a null hypothesis via an expectation constraint, rather than a tail-probability constraint like p-values
\citep{vovk2021evalues,grunwald2020safetest,shafer2019gamefoundation, ramdas2025hypothesis, shafer2019language}. They can be thought of as a generalization of likelihood ratios, as a betting score, or as a strong type of p-value with special properties \citep{ramdas2025hypothesis}.

For each candidate hyperparameter $\lambda_k\in\Lambda$, an e-value $e_k$ for the null hypothesis $\mathcal{H}_k$ in \eqref{eq:ch2_null} is a nonnegative random variable satisfying the inequality
\begin{equation}
\label{eq:ch2_evalue_validity}
\mathbb{E}\!\left[e_k\mid \mathcal{H}_k\right] \;\le\; 1,
\end{equation}
where the expectation is taken over the randomness in the calibration data used to evaluate, conditioned on the null hypothesis $\mathcal{H}_k$.
Large values of the statistic $e_k$ provide evidence against the null hypothesis $\mathcal{H}_k$.

The defining property \eqref{eq:ch2_evalue_validity} implies that
\begin{equation}
\label{eq:pval_eval}
p_k = \frac{1}{e_k}
\end{equation}
is a p-value for hypothesis $\mathcal{H}_k$. In fact, by Markov's inequality we have
\begin{equation}
\label{eq:ch2_markov_test}
\mathbb{P}\!\left(e_k \ge \frac{1}{u}\middle|\mathcal{H}_k\right) = \mathbb{P}\left(\frac{1}{e_k}\geq u \middle| \mathcal{H}_k\right) \leq \mathbb{E}[e_k\mid\mathcal{H}_k]\cdot u\leq u.
\end{equation}
Therefore, by the definition (\ref{eq:ch2_evalue_validity}), the reciprocal (\ref{eq:pval_eval}) of an e-value is a p-value. However, the converse does not hold: For a p-value $p_k$, it is generally not true that the reciprocal $1/p_k$ is an e-value. E-values hence correspond to special types of p-values.

As further discussed in Sec. \ref{sec:eval_properties} and Appendix \ref{app:evidence}, e-values have useful properties that general p-values do not possess. Notably, e-values enable the post-hoc selection of the level of the test \citep{koning2024continuous}; and multiple e-values can be combined via averaging or through products (under independence), supporting optional continuation and stopping in sequential testing \citep{ramdas2025hypothesis}.

\subsection{Construction}
\label{sec:e_value_construction}

There are several principled constructions of valid e-values. We summarize the most relevant ones for hyperparameter selection.

\paragraph{1. E-values from concentration inequalities:} Like p-values, e-values can be constructed directly from concentration bounds. Consider again the bounded-loss setting $L_\lambda(\cdot,\cdot)\in[0,1]$. We first recall Hoeffding's lemma \citep{hoeffding1963probability}.

\begin{lemma}[Hoeffding's lemma]
\label{lem:hoeffding_lemma}
Let $X$ be a random variable such that $a \le X \le b$ almost surely.
Then for any $\eta\in\mathbb{R}$, the inequality
\begin{equation}
\label{eq:hoeffding_lemma}
\mathbb{E}\!\left[
\exp\!\bigl(\eta (X-\mathbb{E}[X])\bigr)
\right]
\;\le\;
\exp\!\left(
\frac{\eta^2 (b-a)^2}{8}
\right)
\end{equation}
holds.
\end{lemma}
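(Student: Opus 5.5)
The plan is to reduce to the case $\mathbb{E}[X]=0$ and an interval containing zero, bound the exponential by its chord using convexity, and then control the logarithm of the resulting expression by a second-order Taylor expansion. Set $Y = X - \mathbb{E}[X]$. Then $\mathbb{E}[Y]=0$ and $Y\in[a',b']$ with $a'=a-\mathbb{E}[X]\le 0\le b'=b-\mathbb{E}[X]$ and $b'-a'=b-a$, so it suffices to show $\mathbb{E}[e^{\eta Y}]\le \exp(\eta^2(b'-a')^2/8)$. If $a'=b'$, then $Y=0$ almost surely and the claim is trivial, so assume $a'<b'$.

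\textbf{Step 1 (convexity).} Since $y\mapsto e^{\eta y}$ is convex, for $y\in[a',b']$ we have
\begin{equation*}
e^{\eta y}\le \frac{b'-y}{b'-a'}e^{\eta a'}+\frac{y-a'}{b'-a'}e^{\eta b'}.
\end{equation*}
Taking expectations and using $\mathbb{E}[Y]=0$ gives
\begin{equation*}
\mathbb{E}[e^{\eta Y}]\le \frac{b'}{b'-a'}e^{\eta a'}-\frac{a'}{b'-a'}e^{\eta b'}.
\end{equation*}

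\textbf{Step 2 (reparametrization).} Let $\theta=-a'/(b'-a')\in[0,1]$ and $h=\eta(b'-a')$. The right-hand side then becomes $e^{\varphi(h)}$, where
\begin{equation*}
\varphi(h)=-\theta h+\log\bigl(1-\theta+\theta e^{h}\bigr).
\end{equation*}
The target inequality reduces to $\varphi(h)\le h^2/8$ for all real $h$.

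\textbf{Step 3 (Taylor bound; main obstacle).} Direct computation gives $\varphi(0)=0$ and $\varphi'(0)=-\theta+\theta=0$. For the second derivative, write $\rho(h)=\theta e^{h}/(1-\theta+\theta e^{h})\in[0,1]$. Then $\varphi''(h)=\rho(h)(1-\rho(h))\le 1/4$, since $\rho(1-\rho)$ is maximized at $\rho=1/2$. Taylor's theorem with Lagrange remainder then yields $\varphi(h)=\varphi''(\xi)h^2/2\le h^2/8$ for some $\xi$ between $0$ and $h$. Substituting $h=\eta(b-a)$ gives the claim.

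The only delicate point is getting the derivative computation of $\varphi''$ right and recognizing it as a Bernoulli-type variance $\rho(1-\rho)$, which is bounded by $1/4$. Everything else is routine.
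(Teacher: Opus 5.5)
Your proof is correct and complete, covering the degenerate case $a=b$ and both signs of $\eta$: the chord bound from convexity, the reparametrization to $\varphi(h)=-\theta h+\log(1-\theta+\theta e^{h})$, and the second-order Taylor estimate using $\varphi''=\rho(1-\rho)\le 1/4$ all check out. The paper does not prove this lemma but only cites Hoeffding (1963), and your argument is essentially Hoeffding's original proof, so there is nothing to add.
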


Fix any $\eta>0$ and define the statistic
\begin{equation}
\label{eq:hoeffding_evalue}
e_k(\eta)
\;=\;
\exp\!\Bigl(
\eta n(\alpha-\widehat{R}_n(\lambda_k))
-
\frac{\eta^2 n}{8}
\Bigr).
\end{equation}
To verify that $e_k(\eta)$ is an e-value, note that under the null $\mathcal{H}_k$ we have the inequality $\mathbb{E}[\widehat{R}_n(\lambda_k)]\ge\alpha$. Since the losses are bounded in $[0,1]$, Lemma \ref{lem:hoeffding_lemma} implies that, for any $\eta>0$,
\begin{equation}
\label{eq:eval_from_bounds}
\mathbb{E}\!\left[
\exp\!\bigl(
\eta n(\alpha-\widehat{R}_n(\lambda_k))
\bigr)
\ \middle|\ \mathcal{H}_k
\right]
\le
\exp\!\left(\frac{\eta^2 n}{8}\right).
\end{equation}
Dividing both sides by $\exp(\eta^2 n/8)$ yields
\begin{equation}
\mathbb{E}[e_k(\eta)\mid\mathcal{H}_k]\le 1,
\end{equation}
so that $e_k(\eta)$ satisfies the defining property \eqref{eq:ch2_evalue_validity}.

Large positive deviations of $\alpha-\widehat{R}_n(\lambda_k)$ produce exponentially large e-values $e_k(\eta)$, thereby providing strong evidence against the null. The parameter $\eta$ controls the trade-off between sensitivity to small deviations and robustness to variability. In practice, the parameter $\eta$ may be optimized, discretized over a grid and averaged, or selected via mixture constructions to improve power while preserving validity \citep{howard2021time, ramdas2023gametheoretic, waudby2024estimating}.

\paragraph{2. Calibrating P-values into E-values:}

\noindent Any p-value can be transformed into an e-value using an \emph{e-calibrator}. Since e-values have additional properties as compared to p-values, this conversion typically yields statistics that are less powerful than the original p-values in detecting true null hypotheses. In other words, the price to pay for benefiting from the special properties of e-values (see Sec. \ref{sec:eval_properties}) is a decrease in power.

\begin{definition}
\label{def:calibrator}
Given a superuniform random variable $U$, function $f:[0,1]\to[0,\infty)$ is an e-calibrator if the following inequality holds
\begin{equation}
\label{eq:ch2_ecalibrator}
\mathbb{E}[f(U)] \le 1.
\end{equation}
\end{definition}
By this definition, if $p_k$ is a p-value, then
\begin{equation}
\label{eq:ch2_p_to_e}
e_k = f(p_k)
\end{equation}
is an e-value for the same null hypothesis.

A common family of e-calibrators can be expressed as
\begin{equation}
\label{eq:ch2_simple_calibrator}
f_\kappa(p) = (1-\kappa) \, p^{-\kappa}, \qquad \kappa \in (0,1),
\end{equation}
which can be shown to satisfy the condition \eqref{eq:ch2_ecalibrator} \citep{vovk2021evalues}. An optimal e-calibrator would yield the largest e-value $f_\kappa(p)$ across all possible p-values $p\in[0,1]$. However, as illustrated in Fig. \ref{fig:ecalibrators}, within the family (\ref{eq:ch2_simple_calibrator}), no single value of $\kappa$ yields a larger e-value $f_\kappa(p)$, for all possible values of $p$ \citep{vovk2021evalues}. Therefore, the optimal choice of the parameter $\kappa$ is non-trivial, and a common selection is $\kappa=1/2$ \citep{yanchenko2025hypothesis}.

\begin{figure}[t]
\centering
\includegraphics[width=0.7\linewidth]{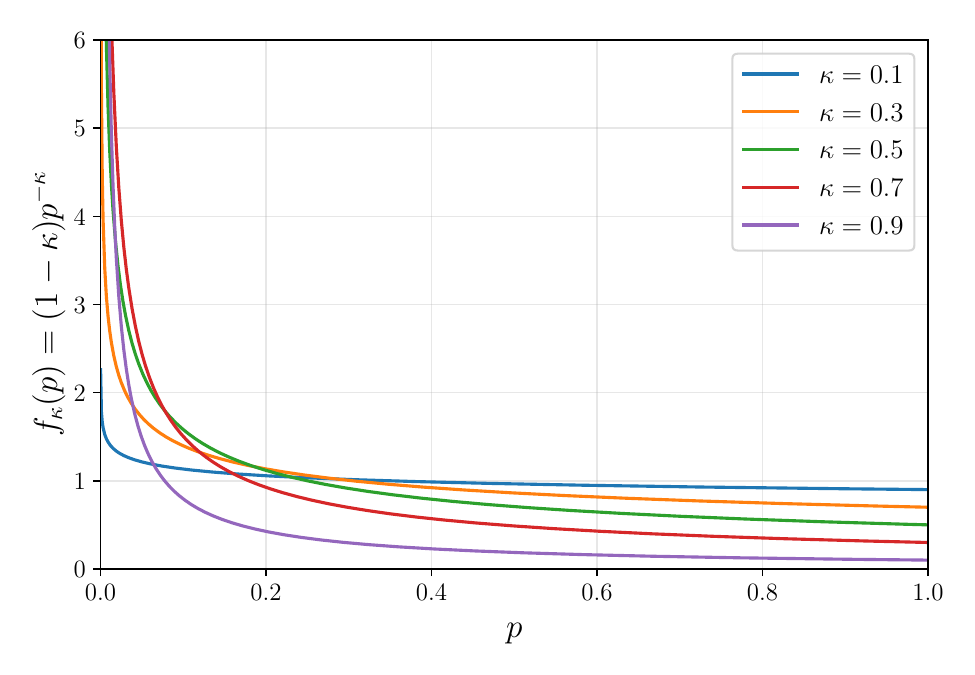}
\caption{
E-calibrators from the family (\ref{eq:ch2_simple_calibrator}) for several
values of $\kappa\in(0,1)$.
Each curve maps a p-value $p$ to an e-value according to
(\ref{eq:ch2_simple_calibrator}).
The figure illustrates that no single value of $\kappa$ dominates
uniformly over the entire range $p\in[0,1]$: calibrators with larger
$\kappa$ produce larger e-values for very small p-values, while
smaller $\kappa$ yields larger e-values for moderate p-values.
}
\label{fig:ecalibrators}
\end{figure}

More generally, admissible calibrators can be derived via convex duality arguments \citep{vovk2021evalues}.
Calibrators allow one to leverage existing p-value constructions (Sec.~\ref{sec:ch2_pval}),
while inheriting the sequential and combination advantages of e-values, at the cost of a loss in power.

\subsection{Universal Inference}

Under suitable technical assumptions, the likelihood-based universal inference approach \citep{wasserman2020universal} can be leveraged to be applied in settings involving minimizers of empirical risk measures \citep{dey2026generalized}.

\subsection{E-values for Single-Hypothesis Testing}

For a single null hypothesis $\mathcal{H}_k:R(\lambda_k)>\alpha$, an e-value $e_k$ can be used to define a level-$\delta$ test as
\begin{equation}
    \label{eq:single_test_eval}
    \text{if}\;e_k\geq \frac{1}{\delta},\;\text{reject }\mathcal{H}_k;\;\text{otherwise accept }\mathcal{H}_k.
\end{equation}
The validity of test (\ref{eq:single_test_eval}) follows immediately from the definition \eqref{eq:ch2_evalue_validity} and from Markov's inequality as
\begin{equation}
\label{eq:ch2_single_e_test}
\mathbb{P}\left(\text{reject }\mathcal{H}_k \middle| \mathcal{H}_k\right)
=
\mathbb{P}\!\left(e_k\ge \frac{1}{\delta}\middle| \mathcal{H}_k\right)
\;\le\;
\delta\,\mathbb{E}[e_k\mid \mathcal{H}_k]
\;\le\;
\delta.
\end{equation}
Thus, e-values provide type-I error control via the test (\ref{eq:single_test_eval}).

\subsection{Properties}
\label{sec:eval_properties}

E-values possess structural properties that distinguish them from p-values and make them particularly attractive in adaptive and post-selection settings. We highlight two key features: post-hoc level selection \citep{grunwald2020safetest} and stability under convex combination \citep{vovk2021evalues}. The corresponding limitations of p-values that motivate these properties, as well as the underlying proofs, are reviewed in Appendix~\ref{app:evidence}.

\medskip
\noindent\textbf{1. Post-hoc selection of the testing level:} A distinctive feature of e-values is that a single realized test statistic simultaneously supports valid tests at all levels $\delta\in(0,1)$.
For the test (\ref{eq:single_test}) based on p-values, the type-I error guarantee is only valid if the level $\delta$ is specified in advance, prior to observing the data, and thus the p-values $p_k$. When the significance level $\delta$ is chosen as a function of the p-value $p$, one has the phenomenon of p-hacking \citep{simonsohn2014p}, causing a violation of the type-I error guarantee. In contrast, with e-values, any rejection rule of the form (\ref{eq:single_test_eval}) retains valid type-I error control even with a data-dependent choice of $\delta$. In this sense, $1/e$ plays the role of a data-determined significance level.

\medskip
\noindent\textbf{2. Additivity and stability under convex combination:} Another major advantage of e-values is their stability under convex combination. If $e_1,\dots,e_M$ are valid e-values for the same null hypothesis $\mathcal{H}$, then the average
\begin{equation}
\bar e=\frac{1}{M}\sum_{m=1}^M e_m
\end{equation}
is again a valid e-value. Indeed, by linearity of expectation we can write
\begin{equation}
\mathbb{E}[\bar e \mid \mathcal{H}]
=
\frac{1}{M}\sum_{m=1}^M \mathbb{E}[e_m \mid \mathcal{H}]
\le
1,
\end{equation}
proving that $\bar e$ is also a valid e-value. More generally, for any nonnegative weights $w_1,\dots,w_M$ satisfying $\sum_{m=1}^M w_m = 1$, the weighted average
\begin{equation}
e_w = \sum_{m=1}^M w_m e_m
\end{equation}
remains a valid e-value \citep{vovk2021evalues}. Importantly, this stability holds without any independence assumptions among the e-values $e_m$.

\subsection{E-values for MHT}

E-values can be used in MHT in two closely related ways. First, since $p_k = 1/e_k$ is a valid p-value by \eqref{eq:ch2_markov_test}, any p-value based FWER-controlling procedure discussed in Sec.~\ref{sec:ch2_pval_mht} can be applied to the induced p-values as $\mathcal{A}_\text{FWER}(1/e_1, \ldots, 1/e_K)$. This immediately yields a certified set $\hat{\Lambda}$ with the same FWER guarantees as in Sec. \ref{sec:ch2_pval_mht}.

For example, one may control FWER using the test (\ref{eq:single_test_eval}) with modified thresholds
\begin{equation}
e_k \ge \frac{|\Lambda|}{\delta}.
\end{equation}
This guarantees the condition $\mathrm{FWER}\le \delta$ as, by the union bound, we can write the inequality
\begin{equation}
\label{eq:eval_fwer_proof}
\mathrm{FWER}
=
\mathbb{P}\!\left(\bigcup_{k\in\mathcal{K}_0}
\left\{e_k \ge \frac{|\Lambda|}{\delta}\right\}\right)
\le
\sum_{k\in\mathcal{K}_0}
\mathbb{P}\!\left(
e_k \ge \frac{|\Lambda|}{\delta}
\middle| \mathcal{H}_k
\right).
\end{equation}
By Markov’s inequality and the property (\ref{eq:ch2_evalue_validity}) we have
$\mathbb{E}[e_k \mid \mathcal{H}_k]\le 1$,
\begin{equation}
\label{eq:eval_modified_property}
\mathbb{P}\!\left(
e_k \ge \frac{|\Lambda|}{\delta}
\middle| \mathcal{H}_k
\right)
\le
\frac{\delta}{|\Lambda|}.
\end{equation}
Inserting (\ref{eq:eval_modified_property}) into (\ref{eq:eval_fwer_proof}) and summing over $k\in\mathcal{K}_0$ yields
\begin{equation}
\mathrm{FWER}
\le
\frac{|\mathcal{K}_0|}{|\Lambda|}\,\delta
\le
\delta,
\end{equation}
which establishes the desired condition $\mathrm{FWER}\le \delta$.

More powerful MHT procedures may be obtained by targeting \emph{false discovery rate} (FDR) instead of the FWER. This is discussed in Sec. \ref{sec:FDR}.

\section{False Discovery Rate Control}
\label{sec:FDR}

While FWER control prevents \emph{any} false certification with high probability via the requirement (\ref{eq:ch2_fwer_def}), it can be overly conservative when the number of candidate hyperparameters $K$ is large. In such cases, it may be acceptable to tolerate a number of false certifications, as long as they constitute only a small fraction of the certified set. This motivates control of the \emph{false discovery rate} (FDR).

\subsection{Definition}

To quantify the amount of false certification within the subset $\hat{\Lambda}$ of selected hyperparameters, let  $ \big|\,\hat{\Lambda}\cap\Lambda_0|$ denote the number of false rejections, i.e., the number of certified hyperparameters that are actually unreliable. We recall that $\Lambda_0\subseteq \Lambda$ denotes the subset of unreliable hyperparameters.

The \textit{false discovery proportion} (FDP) is the ratio
\begin{equation}
\label{eq:ch2_fdp_def}
\mathrm{FDP}
\;=\;
\frac{\big|\,\hat{\Lambda}\cap\Lambda_0|}{\max\{1,|\hat{\Lambda}|\}}
\end{equation}
between the number of false discoveries and the total number of discoveries. The FDR is defined as the average
\begin{equation}
\label{eq:ch2_fdr_def}
\mathrm{FDR}
\;=\;
\mathbb{E}\!\left[\mathrm{FDP}\right]
\;=\;
\mathbb{E}\!\left[
\frac{\big|\,\hat{\Lambda}\cap\Lambda_0|}{\max\{1,|\hat{\Lambda}|\}}
\right],
\end{equation}
where the expectation is over the calibration dataset.

Controlling the FDR at level $\delta$, i.e., $\text{FDR}<\delta$, ensures that, on average, the fraction of unreliable hyperparameters inside the certified set $\hat{\Lambda}$ is at most $\delta$. Unlike FWER control, FDR control does not rule out the presence of any false certifications on a given run. Rather, it bounds their expected proportion.

\begin{definition}[FDR-controlling procedure]
\label{def:ch2_fdr}
A procedure
\begin{equation*}
\mathcal{A}_{\text{FDR}}\!:[0,1]^{K}\to 2^{K},
\end{equation*}
mapping a collection of statistics $T_1, \ldots, T_K$ to a set of selected hyperparameters $\hat{\Lambda}=\mathcal{A}_{\text{FDR}}(T_1,\ldots,T_K)$, is said to \emph{control the FDR} at level $\delta$ if
\begin{equation}
\label{eq:ch2_fdr_set}
\mathbb{E}\!\left[
\frac{\big|\,\hat{\Lambda}\cap\Lambda_0|}{\max\{1,|\hat{\Lambda}|\}}
\right]
\;\le\;
\delta.
\end{equation}
\end{definition}

Fig.~\ref{fig:fdr} illustrates two outcomes corresponding to different realizations of the calibration data. In the first realization (left), the FDP is smaller, while in the second (right) the FDP is larger. The FDR condition \eqref{eq:ch2_fdr_set} ensures that on average, the realized FDP is no larger than the specified outage rate $\delta$.

\begin{figure}[t]
    \centering
    \includegraphics[width=0.95\linewidth]{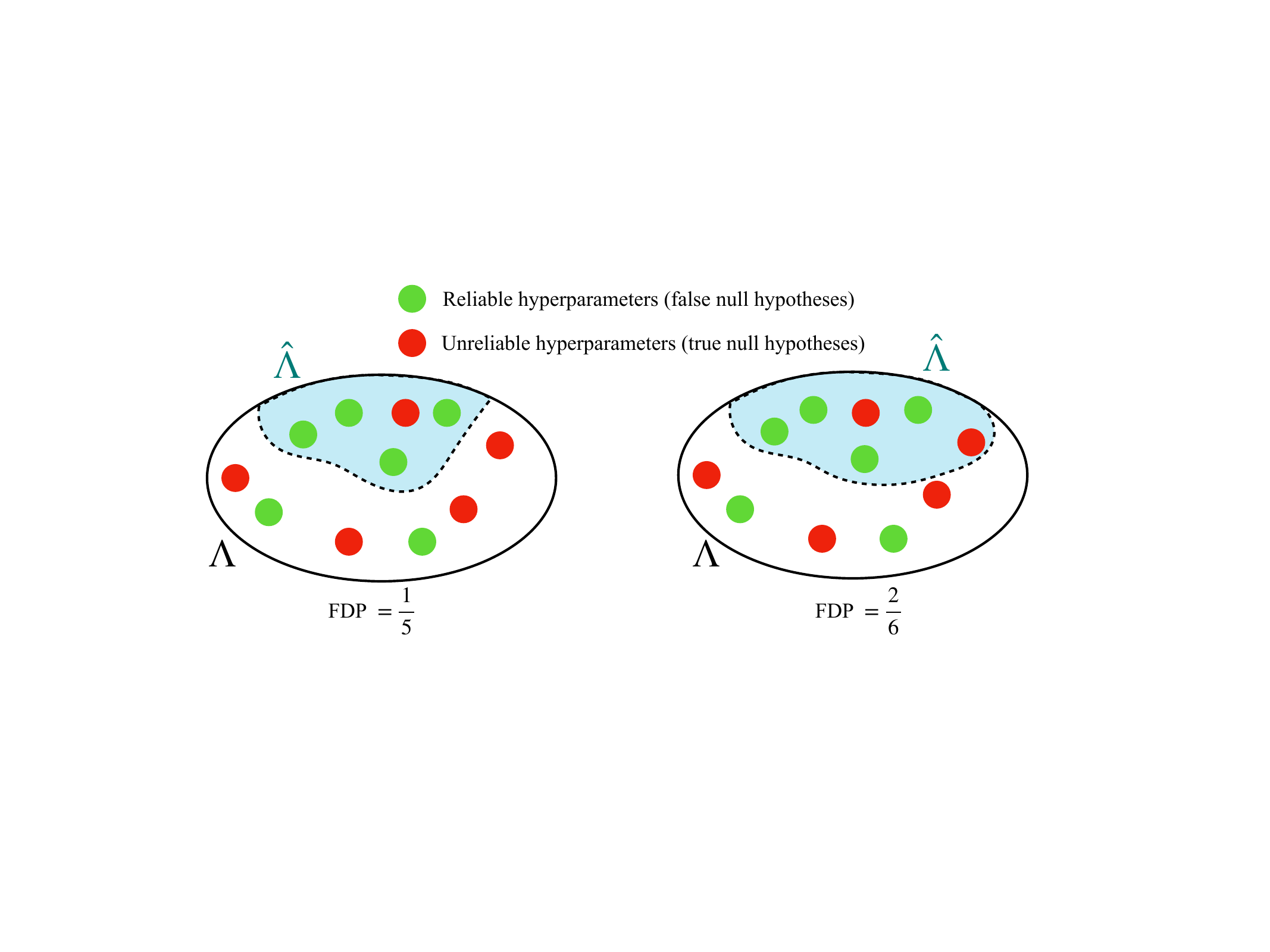}
    \caption{Schematic illustration of FDR control in hyperparameter
    selection. Each point represents a candidate hyperparameter, with green and red markers
    denoting truly reliable and unreliable configurations, respectively. The shaded region
    corresponds to the subset $\hat{\Lambda}$ of selected hyperparameters. The FDR requirement (\ref{eq:ch2_fdr_set}) ensures that the FDP is on average no larger than the target $\delta$.}
    \label{fig:fdr}
\end{figure}

\subsection{FDR Control via P-values}
\label{sec:ch2_fdr_pval}

The most widely used FDR-controlling method for p-values is the Benjamini--Hochberg (BH) procedure \citep{benjamini1995controlling}. The BH procedure is an example of a \textit{step-up} procedure that tests the null hypotheses in increasing order of the corresponding p-values \citep{rice2007mathematical}. To elaborate, let $p_{(1)} \le p_{(2)} \le \cdots \le p_{(K)}$ denote the p-values $\{p_1, \ldots, p_K\}$ sorted in increasing order. BH rejects the $k^\star$ hypotheses with smallest p-values, i.e., those indexed by $\{(1),\dots,(k^\star)\}$, where $k^\star$ is selected as
\begin{equation}
\label{eq:ch2_bh_kstar}
k^\star
=
\max\left\{
k:\ p_{(k)} \le \frac{k}{K}\,\delta
\right\}.
\end{equation}

Accordingly, the smallest p-value $p_{(1)}$ is compared with the same threshold as in the Bonferroni procedure, i.e., $\delta/K$, while larger p-values are compared to an increasing threshold, namely $k\delta/K$ for p-value $p_{(k)}$. Under independence, BH controls the FDR at level $\delta$.

When the p-values are arbitrarily dependent, a classical conservative modification is the Benjamini--Yekutieli (BY) procedure \citep{benjamini2001control}. The BY procedure replaces $\delta$ in (\ref{eq:ch2_bh_kstar}) by $\delta/c_{K}$, where
\begin{equation}
\label{eq:ch2_by_harmonic}
c_{K}
=
\sum_{i=1}^{K}\frac{1}{i}.
\end{equation}
Equivalently, BY uses the tests $p_{(k)} \le k\delta / Kc_{K}$ in (\ref{eq:ch2_bh_kstar}). This ensures FDR control without dependence assumptions, at the cost of generally reducing the number of discoveries.

\subsection{FDR Control via E-values}

The standard FDR-controlling rule for e-values is the e-BH procedure introduced in
\citep{wang2022fdr,ramdas2025hypothesis}, which can be seen as a direct counterpart of the BH procedure for p-values.
Let $e_{(1)} \ge e_{(2)} \ge \cdots \ge e_{(|\Lambda|)}$ be the sorted e-values in decreasing order. The e-BH procedure rejects the $k^\star$ hypotheses with largest e-values, with
\begin{equation}
\label{eq:ch2_eBH_kstar}
k^\star
=
\max\left\{
k:\ e_{(k)} \ge \frac{|\Lambda|}{k\,\delta}
\right\},
\end{equation}
with the convention $k^\star = 0$ if no such $k$ exists.

In contrast to the BH procedure for controlling the FDR via p-values as described in Sec. \ref{sec:ch2_fdr_pval}, e-BH controls FDR at the target level $\delta$ under arbitrary dependence among the e-values, without requiring a correction term as in the BY procedure (see Appendix \ref{app:e-BH} for a proof).

Variants and refinements of e-value FDR control include adaptive and weighted versions,
as well as procedures that combine e-values across groups or along structured graphs
\citep{wang2022fdr}.
These methods inherit robustness to dependence and compatibility with optional stopping (see Sec. \ref{sec:eval_properties}).

\section{Summary}
\label{sec:ch2_summary}

This chapter introduced LTT, a statistically principled approach to hyperparameter
selection based on MHT. The central idea is to
associate each candidate hyperparameter with a null hypothesis expressing
violation of the desired risk constraint, and to certify only those candidates
for which the null can be rejected with controlled error.

Upstream procedures generate a finite candidate set
using arbitrary heuristics, optimization routines, or domain knowledge.
Downstream, the calibration data are used to construct valid marginal
evidence measures, either p-values or e-values, one per candidate hyperparameter.
An MHT procedure is then applied to control global error,
yielding a certified set of hyperparameters.

When the testing rule controls the FWER,
the resulting certified set enjoys a finite-sample post-selection guarantee:
with high probability, every hyperparameter in the certified set satisfies
the prescribed risk requirement.
Any (possibly data-dependent) choice within that set therefore remains
risk-controlling.
If FDR control is used instead, the guarantee
becomes weaker but potentially less conservative, allowing larger certified sets.

We discussed two complementary forms of marginal evidence.
P-values rely on superuniform tail bounds, and connect naturally to classical
concentration inequalities.
E-values rely on expectation bounds, and are particularly well suited to
sequential or adaptive settings due to their stability under optional stopping
and averaging.

Algorithm~\ref{alg:ch2_mht_pipeline} summarizes the full
LTT hyperparameter selection pipeline.
Its modular structure highlights a core message of this chapter:
statistical validity can be layered on top of arbitrary hyperparameter
generation strategies, provided that the final deployment decision
is restricted to the certified set produced by a valid multiple testing rule.

\begin{algorithm}[t]
\caption{Learn-Then-Test (LTT): Reliable hyperparameter selection via multiple hypothesis testing}
\label{alg:ch2_mht_pipeline}
\begin{algorithmic}
\STATE \textbf{Input:} candidate set $\Lambda=\{\lambda_1,\dots,\lambda_{|\Lambda|}\}$; calibration data $\{(X_i,Y_i)\}_{i=1}^n$;
risk threshold $\alpha$; error level $\delta$; error type $\text{err}\in\{\mathrm{FWER},\mathrm{FDR}\}$; error-controlling MHT procedure $\mathcal{A}_{\mathrm{err}}(\cdot)$ at level $\delta$; selection algorithm $\mathcal{A}_{\mathrm{sel}}(\cdot)$, possibly data-dependent

\STATE \textbf{Output:} certified set $\hat{\Lambda}\subseteq\Lambda$ and a deployed hyperparameter $\hat{\lambda}\in\hat{\Lambda}$ (if nonempty)

\FOR{$j = 1,\dots,|\Lambda|$}
    \STATE Compute empirical risk
    \[
    \widehat{R}_n(\lambda_j)
    \;=\;
    \frac{1}{n}\sum_{i=1}^n L_{\lambda_j}\!\big(X_i,Y_i\big)
    \]
    \STATE Compute a p-value $p_j$ or e-value $e_j$ for the null hypothesis $\mathcal{H}_j:R(\lambda_j)>\alpha$
\ENDFOR

\STATE Apply $\mathcal{A}_{\mathrm{err}}$ to $\{p_j\}_{j=1}^{|\Lambda|}$ or $\{e_j\}_{j=1}^{|\Lambda|}$ and obtain the subset
\[
\hat{\Lambda}
=
\mathcal{A}_{\mathrm{err}}(p_1,\dots,p_{|\Lambda|})\;\;\text{or} \;\;
\hat{\Lambda}
=
\mathcal{A}_{\mathrm{err}}(e_1,\dots,e_{|\Lambda|})
\]

\IF{$\hat{\Lambda}\neq\emptyset$}
    \STATE Choose $\hat{\lambda} = \mathcal{A}_{\mathrm{sel}}(\hat{\Lambda})\in \hat{\Lambda}$
\ENDIF
\end{algorithmic}
\end{algorithm}


\chapter{Applications}
\label{chapter:applications}

The previous chapters introduced a statistical framework for reliable hyperparameter selection. The central idea is to move from best-effort tuning methods, which optimize empirical performance, to procedures that provide explicit reliability guarantees under finite calibration data.

This chapter presents representative applications of statistically valid hyperparameter selection. For each application, we describe the underlying system, define the relevant hyperparameters and performance metrics, and show how the statistical framework developed in the previous chapters can be applied to obtain reliability guarantees for deployment. Specifically, Sec.~\ref{sec:app_basic_ml} applies LTT to image classification on the Fashion-MNIST dataset, and Sec.~\ref{sec:app_wireless} considers a wireless scheduling application based on real packet-delay data.

\section{Image Classification}
\label{sec:app_basic_ml}

We begin with an image-classification example based on the Fashion-MNIST dataset \citep{xiao2017fashion}. Fashion-MNIST is a standard benchmark consisting of grayscale images of size $28\times 28$ from ten clothing categories, including T-shirts, pullovers, coats, sandals, bags, and ankle boots. For each input image $X$, the model outputs a class label $\hat Y_\lambda(X)$ determined by hyperparameter configuration $\lambda$. In this experiment, performance is measured through the $0$--$1$ loss
\begin{equation}
L_\lambda(X,Y)=\mathbb{I}\{\hat Y_\lambda(X)\neq Y\}.
\end{equation}
Specifically, the loss for a test point $(X,Y)$ is equal to 0 if the true label $Y$ is estimated correctly, and 1 otherwise.
Throughout this example, we set the target reliability level to $\alpha=0.2$, and the target outage rate to $\delta = 0.2$.

The classifier is a multinomial logistic regression model \citep{hosmer2013applied} applied to principal component analysis (PCA) \citep{abdi2010principal} features extracted from the image. Hence, each candidate hyperparameter is a pair
\begin{equation}
\lambda=(d,C),
\end{equation}
where $d$ denotes the number of retained principal components, and $C$ is the inverse regularization coefficient of the classifier. We consider a small finite candidate set obtained from the grid
\begin{equation}
d\in\{5,20,80\},
\qquad
C\in\{0.02,0.2,2,10\},
\end{equation}
yielding a total of $12$ candidate configurations.

The available data are divided into three disjoint parts. A training split is used to fit all candidate models, and a separate calibration split is used for hyperparameter selection. Finally, an independent evaluation split is reserved for testing the selected hyperparameters. \textcolor{myblue}{The training split is used once to fit all $K=12$ candidate models and then remains fixed. The remaining data are then randomly divided into calibration and evaluation sets independently for each of the 100 trials reported below. In each trial, the calibration set is used to select a hyperparameter, which is then assessed on the corresponding evaluation set.} LTT uses Algorithm \ref{alg:ch2_mht_pipeline} by computing p-values using the Hoeffding bound (\ref{eq:ch2_hoeffding_pvalue}) and using the Bonferroni correction (\ref{eq:ch2_bonferroni}) to control the FWER (\ref{def:ch2_fwer}). The conventional E-HPO baseline \eqref{alg:ch2_ehpo} simply selects the hyperparameter with the lowest empirical risk. Among the set $\hat{\Lambda}$ of hyperparameters returned by Algorithm \ref{alg:ch2_mht_pipeline}, we pick the one with the smallest empirical risk as the selected hyperparameter.

Fig.~\ref{fig:fmnist_risk_distribution} reports the distribution of the empirical risk obtained by the selected hyperparameters over \textcolor{myblue}{100 independent random splits of the data into calibration and evaluation sets}. The dashed vertical line marks the target level \(\alpha=0.2\), and the shaded portions show the mass of the distributions corresponding to violations of the reliability requirement. Conventional E-HPO returns hyperparameters that fail to meet the requirement $\alpha = 0.2$ on the average 0-1 loss for a fraction $0.53$ of the test runs, while LTT successfully guarantees an outage rate of $0.03$, which is well below the target $\delta = 0.2$. \textcolor{myblue}{This disparity reflects selection bias: E-HPO picks the configuration with the lowest empirical risk on the calibration set, which systematically favors configurations whose empirical risk underestimates the true risk for that particular draw of calibration data. Across many calibration draws, this optimistic selection inflates the deployment risk beyond the target level. LTT avoids this pitfall by requiring statistically significant evidence that the true risk lies below the target $\alpha$, with Bonferroni correction ensuring the selected hyperparameter $\lambda$ remains safely within the reliable region even over repeated calibration draws.}

Fig.~\ref{fig:fmnist_examples} complements these results with representative images from the independent evaluation set. For each image, we report the ground-truth label together with the predictions produced by the hyperparameters chosen by conventional E-HPO and LTT. Consistent with the distributional comparison in Fig.~\ref{fig:fmnist_risk_distribution}, E-HPO produces models that can make more mistakes on visually ambiguous items.

\begin{figure}[t]
    \centering
    \includegraphics[width=0.8\linewidth]{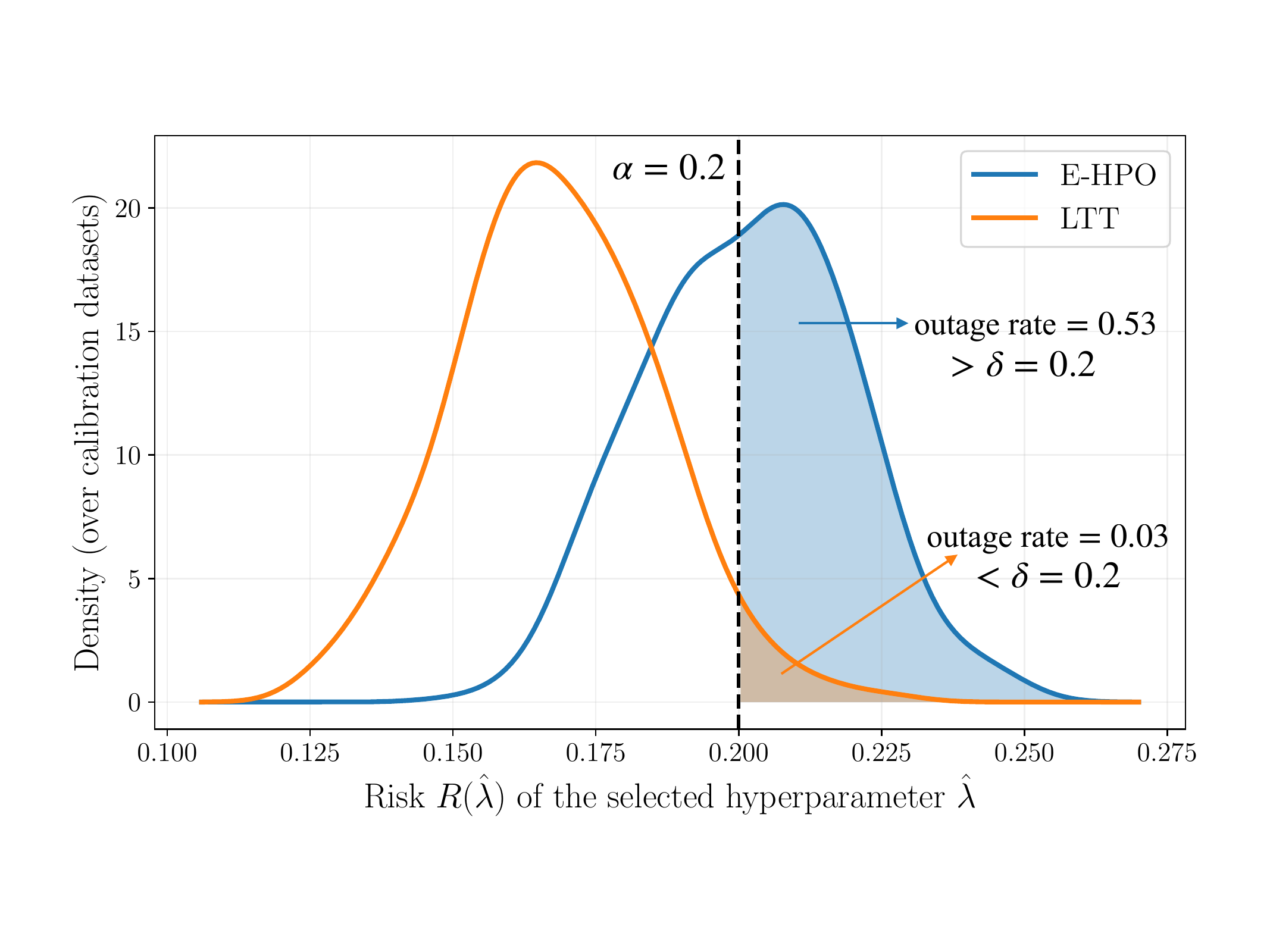}
    \caption{Distribution of the empirical risk of the hyperparameters selected by conventional E-HPO \eqref{alg:ch2_ehpo} and LTT (Algorithm \ref{alg:ch2_mht_pipeline}), evaluated over many random batches from an independent evaluation set on Fashion-MNIST. E-HPO violates the reliability requirement in a fraction 0.53 of the test runs, exceeding the target $\delta = 0.2$. In contrast, LTT successfully guarantees an outage rate of $0.03$, which is well below the target $\delta = 0.2$.}
    \label{fig:fmnist_risk_distribution}
\end{figure}

\begin{figure}[t]
    \centering
    \includegraphics[width=\linewidth]{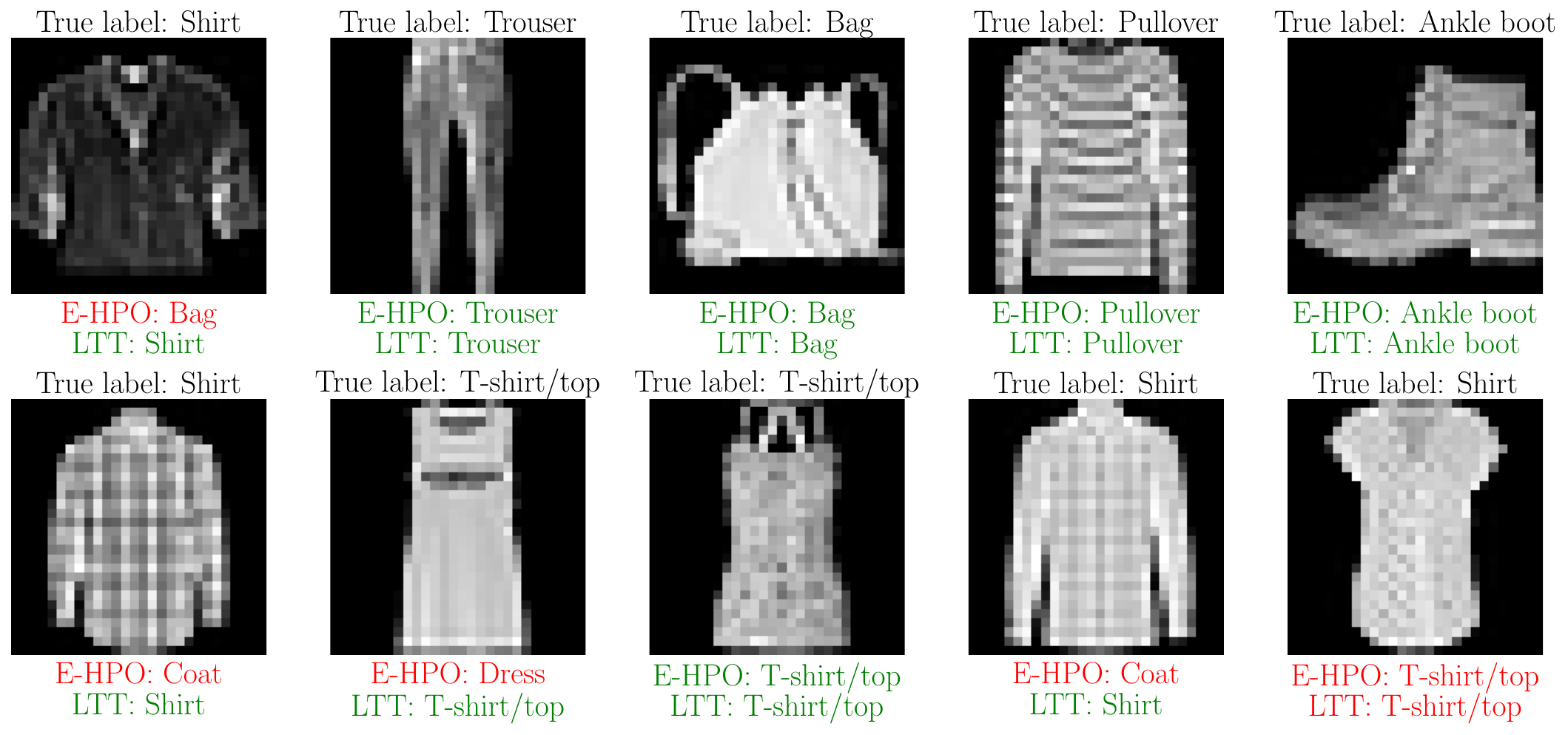}
    \caption{Representative Fashion-MNIST images from the independent evaluation set, shown together with the ground-truth class and the predictions produced by models obtained with the hyperparameters selected via conventional E-HPO and LTT.}
    \label{fig:fmnist_examples}
\end{figure}

\section{Wireless Scheduling}
\label{sec:app_wireless}

We now study an engineering application drawn from the wireless communications domain.
Specifically, we consider a radio access scheduling problem based on the Nokia wireless suite \citep{Nokia} studied in \citep{farzaneh2024quantile}. In this setting, hyperparameters control the behavior of a learning-based scheduler that allocates wireless resources among users.

We consider a downlink wireless system in which a base station must allocate radio resources to a set of user equipments (UEs).
At every transmission time interval, the base station assigns a number of resource blocks to active users according to a scheduling policy. Each episode consists of a sequence of time intervals during which packets arrive, are buffered, and are scheduled for transmission.

The scheduling policy is implemented by the learning-based agent described in \citep{de2020radio} whose behavior depends on a vector of hyperparameters
\begin{equation}
\lambda = (\lambda_1,\lambda_2,\lambda_3,\lambda_4).
\end{equation}

Each component of vector $\lambda$ controls the relative weight assigned to a specific term in the scheduler's reward model. Referring to \citep{de2020radio} for details, hyperparameter $\lambda_1$ weighs a term that accounts for channel-quality;
hyperparameter $\lambda_2$ weighs a term that grows with queue backlog;
hyperparameter $\lambda_3$ weighs a term that depends on waiting time; and hyperparameter $\lambda_4$ weighs a term that encourages fairness among UEs.

To evaluate the performance of a hyperparameter configuration $\lambda$, we define a risk function based on packet delays experienced by UEs belonging to the most stringent quality-of-service (QoS) class. Specifically, the risk is defined as $R(\lambda) = \mathbb{E}[L_\lambda(X,Y)]$, where $L_\lambda(X,Y)$ is the average packet delay (in ms) incurred during an episode under hyperparameter $\lambda$.

In the experiments presented below, we set the maximum tolerated average delay to $\alpha = 10$ ms, reflecting a typical latency requirement for delay-sensitive traffic \citep{series2017minimum}. Moreover, the candidate hyperparameter set is constructed by scaling a baseline configuration $\lambda^\star$ obtained from the optimization procedure proposed in \citep{de2020radio}.
Specifically, we consider all configurations of the form
$\lambda = (a_1\lambda^\star_1,a_2\lambda^\star_2,a_3\lambda^\star_3,a_4\lambda^\star_4)$
with scaling coefficients
$a_i \in \{1/2,1,3/2,2\}$. This results in a candidate set containing $K=256$ hyperparameter configurations.

Using the boundedness of the delay-based loss, we construct p-values via Hoeffding’s inequality as described in Chapter~\ref{chapter:mht}.
We then apply the Bonferroni correction \eqref{eq:ch2_bonferroni} to these p-values to control the FWER at level $\delta = 0.2$.

\textcolor{myblue}{The dataset of recorded episodes is divided into a training split and a remaining pool of data. The training split is used once to obtain the scheduling policy for each of the $K=256$ candidate configurations and then remains fixed. The remaining data are then randomly divided into calibration and evaluation sets independently for each trial. In each trial, the calibration set is used to select a hyperparameter configuration, which is then assessed on the corresponding evaluation set.}

Fig. \ref{fig:nokia} illustrates the distribution of the packet delays for models obtained using the hyperparameter chosen by E-HPO \eqref{alg:ch2_ehpo} and by LTT (Algorithm \ref{alg:ch2_mht_pipeline}) over the test dataset. It can be seen that, while the empirical mean of the delay for the hyperparameter chosen by LTT is successfully controlled below the threshold $\alpha = 10$ ms, the average packet delay for the hyperparameter chosen by E-HPO is 10.9 ms, which is larger than the threshold $\alpha = 10$ ms. This highlights the fact that E-HPO does not provide any statistical guarantees on the hyperparameters it outputs.

\begin{figure}
    \centering
    \includegraphics[width=\linewidth]{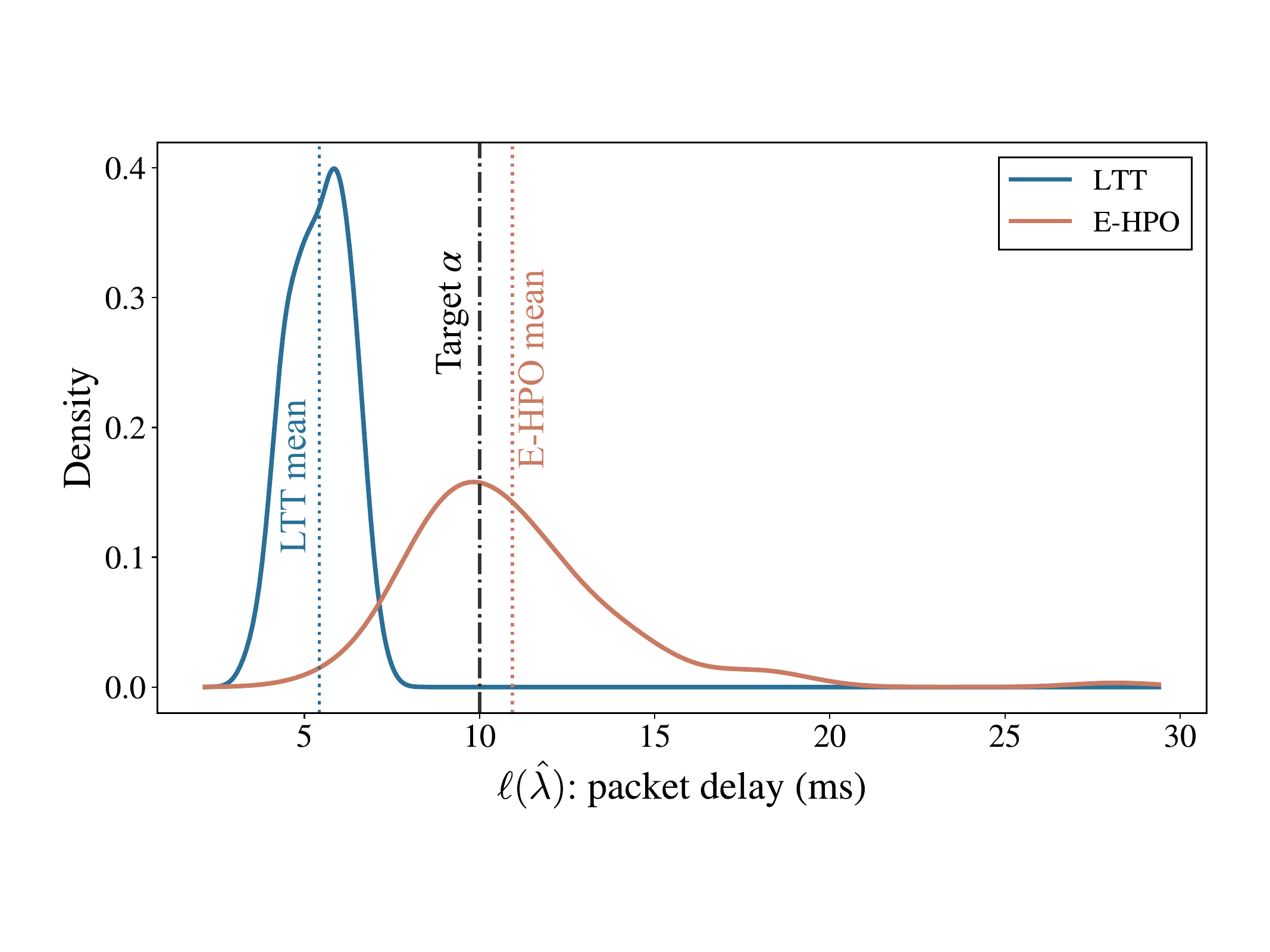}
    \caption{Distribution of individual packet delays observed on a single run obtained using hyperparameters selected via E-HPO and LTT.}
    \label{fig:nokia}
\end{figure}

To further see this, we repeat this experiment over \textcolor{myblue}{100 independent random splits into calibration and evaluation sets}, and plot the distribution of the empirical mean of the delay of the hyperparameters chosen by E-HPO and LTT over the test dataset. This distribution, illustrated in Fig. \ref{fig:nokia2}, shows that in a fraction $0.66$ of runs, E-HPO exhibited an empirical average above the target $\alpha = 10$ ms, which is well above the target outage rate of $\delta = 0.2$. On the other hand, LTT does not fail, keeping the outage rate below the target $\delta = 0.2$.

\begin{figure}
    \centering
    \includegraphics[width=\linewidth]{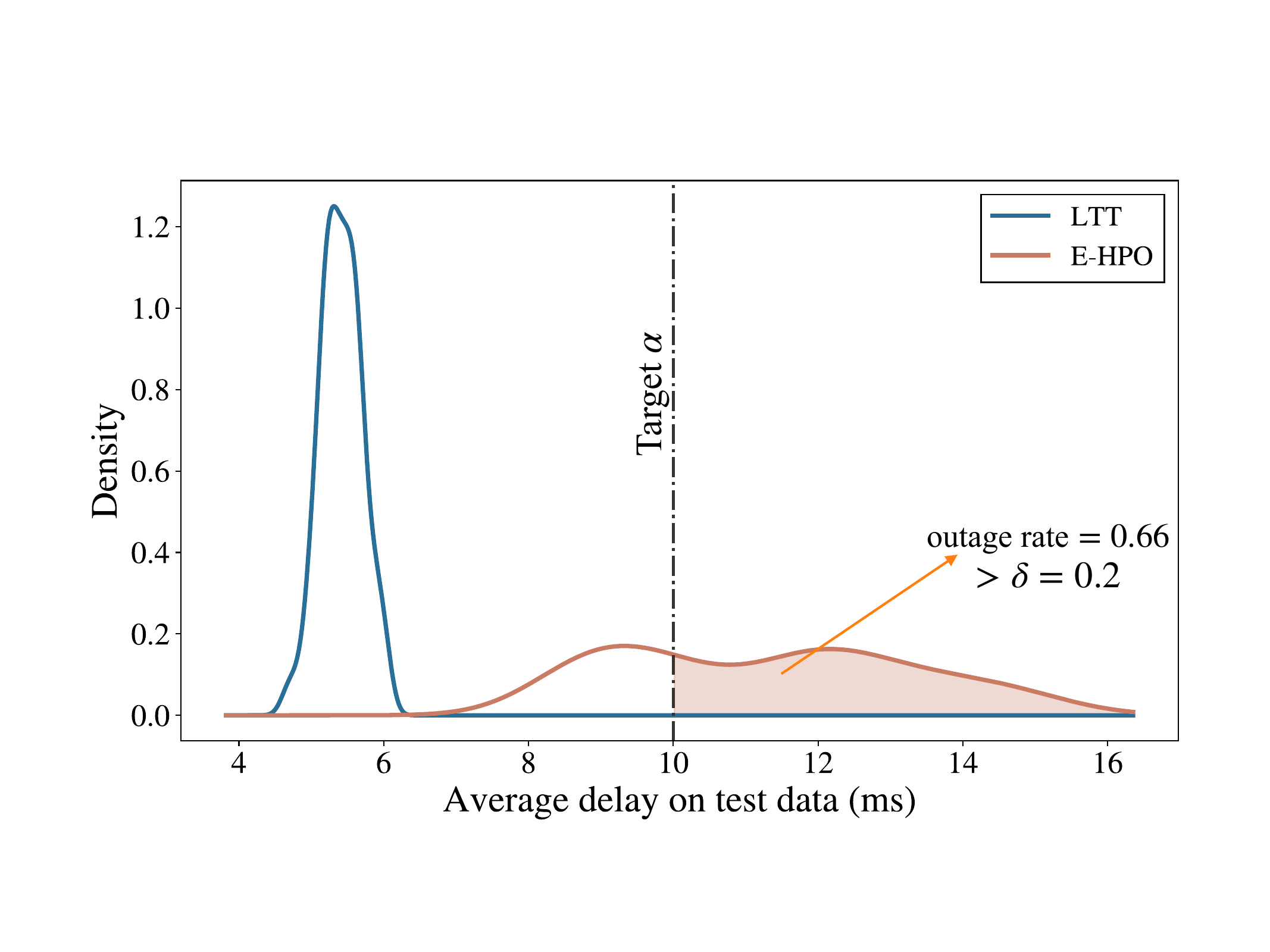}
    \caption{Distribution of the average delay obtained by the hyperparameters selected via E-HPO and LTT on 100 random splits of the dataset.}
    \label{fig:nokia2}
\end{figure}

\chapter{Beyond Average Risk}
\label{chapter:beyond_average}

In Chapter~\ref{chapter:mht}, we described LTT, a formal reliable hyperparameter selection approach aimed at controlling the average risk (\ref{eq:ch2_risk_def}) (see Algorithm \ref{alg:ch2_mht_pipeline}).
A hyperparameter was declared reliable if its expected loss did not exceed a user-specified threshold $\alpha$ as per (\ref{eq:risk_requirement}), and statistical validity was achieved by testing the null hypotheses (\ref{eq:ch2_null}) with control of FWER or FDR.

While the average risk is a natural and widely adopted performance measure, it is often insufficient for modern AI deployments. In fact, in many applications, reliability concerns focus not on the mean behavior of the system, but rather on tail events, constraint violations, or more general functionals of the loss distribution.

In this chapter, we develop this generalization of LTT by first introducing a unified methodological framework, and then exploring it along two main directions. First, we consider quantile risk control, which replaces the average risk with a specified quantile of the loss distribution, thereby enabling guarantees on tail behavior in Sec.~\ref{sec:qltt}. Second, we study reliability constraints involving information-theoretic quantities, with particular emphasis on the information bottleneck framework, where relevance is measured through mutual information in Sec.~\ref{sec:ch4_ib}.

\section{Motivation}

The central question of this chapter is the following:
Can the MHT-based hyperparameter selection procedure introduced in Chapter \ref{chapter:mht} be extended beyond average risk to certify hyperparameters under more general reliability criteria?

For example, settings of interest may include the following:
\begin{itemize}
    \item In safety-critical systems, one may wish to control the probability that the loss exceeds a critical threshold, rather than its average \citep{amodei2016concrete, varshney2016engineering}.
    \item In latency-sensitive applications, guarantees may be required on high-percentile delays, e.g., $95^\text{th}$ or $99^\text{th}$ percentiles, not only on the mean \citep{dean2013tail, cardwell2000modeling, bennis2018ultrareliable}.
    \item In information bottleneck learning, the reliability requirement may involve information-theoretic quantities such as mutual information \citep{tishby2000information, saxe2019information}.
    \item In fairness-aware systems \citep{hardt2016equality, barocas2016big}, constraints may involve group-conditional risks rather than a single global average.
\end{itemize}

These examples share a common structure: reliability is expressed not as a bound on the average loss $\mathbb{E}[L_\lambda(X,Y)]$, but as a constraint on a more general functional
$\mathcal{R}(\lambda)$ of the distribution of the loss $L_\lambda(X,Y)$,
which may represent a quantile, a tail probability, a divergence, or an information-theoretic measure.

The key insight is that the MHT framework of Chapter~\ref{chapter:mht} does not fundamentally depend on the specific form of the risk functional. Rather, it requires only the ability to construct valid test statistics for null hypotheses of the form
\begin{equation}
\label{eq:ch3_general_risk}
\mathcal{H}_k:\ \mathcal{R}(\lambda_k)>\alpha,
\end{equation}
where $\mathcal{R}(\lambda)$ can be any general functional derived from the distribution of the loss $L_\lambda(X,Y)$.

\section{General Reliability Functionals}
\label{sec:general_reliability_functionals}

Let $L_\lambda$ denote the loss induced by hyperparameter $\lambda$. As in the previous chapters, the random variable $L_\lambda$ may be a function of input-output variables $X$ and $Y$ as $L_\lambda = L_\lambda(X,Y)$, but the framework here is more general. It encompasses, e.g., unsupervised or self-supervised learning settings, in which the loss depends only on the input variable $X$. Let also $P_\lambda$ denote the distribution of the random variable $L_\lambda$.
We are interested in reliability measures of the form
\begin{equation}
\mathcal{R}(\lambda)
=
\Phi\!\left(P_\lambda\right),
\end{equation}
where $\Phi$ is a functional mapping a probability distribution to a real number.

The reliability requirement takes the unified form
\begin{equation}
\lambda \text{ is reliable}
\quad \Longleftrightarrow \quad
\mathcal{R}(\lambda) \le \alpha.
\end{equation}
This formulation encompasses many practically relevant notions of reliability, including the average risk $R(\lambda) = \mathbb{E}[L_\lambda]$ studied so far.

\section{Confidence Bounds and Inversion-Based Testing}
\label{sec:confidence_bounds}

This section reviews a useful approach to construct p-values and e-values for the null hypothesis
\begin{equation}
\label{eq:general_hyp}
    \mathcal{H}_\lambda:\quad \mathcal{R}(\lambda) > \alpha,
\end{equation}
where $\mathcal{R}(\lambda)$ is a general risk measure as defined in Sec. \ref{sec:general_reliability_functionals}.

Assume that, based on calibration data $\mathcal{D}^{\mathrm{cal}}$,
we can construct a one-sided upper confidence bound
$U_\varepsilon(\lambda)$ on the reliability measure $\mathcal{R}(\lambda)$
such that the inequality
\begin{equation}
\label{eq:confidence_bound}
\mathbb{P}\!\left(
\mathcal{R}(\lambda)
\le
U_\varepsilon(\lambda)
\right)
\ge
1-\varepsilon
\end{equation}
holds for all probabilities $\varepsilon \in (0,1)$,
where the probability is taken over the randomness of the calibration data.

The specific construction of the upper bound $U_\varepsilon(\lambda)$ depends on the risk measure $\mathcal{R}(\cdot)$, and we will provide specific examples in the next sections.

\begin{lemma}
    Given a family of upper bounds $U_\varepsilon(\lambda)$ for $\varepsilon\in(0,1)$ as in (\ref{eq:confidence_bound}), a p-value $p_\lambda$ for the null hypothesis $\mathcal{H}_\lambda$ in (\ref{eq:general_hyp}) is obtained by inverting the confidence bound $U_\varepsilon(\lambda)$ as

\begin{equation}
\label{eq:inverted_pval}
p_\lambda
=
\inf
\left\{
\varepsilon \in (0,1) :
U_\varepsilon(\lambda) \le \alpha
\right\}.
\end{equation}
In words, as illustrated in Fig. \ref{fig:upper_bound}, the p-value $p_\lambda$ is the smallest confidence level at which
the upper bound falls below $\alpha$.
\end{lemma}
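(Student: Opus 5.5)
The plan is to verify the superuniformity property \eqref{eq:ch2_pvalue_validity} directly. That is, for every $u\in[0,1]$ I would show $\mathbb{P}(p_\lambda\le u\mid\mathcal{H}_\lambda)\le u$, using only the coverage property \eqref{eq:confidence_bound} of the bounds $U_\varepsilon(\lambda)$. The case $u=1$ is trivial. Fix $u\in[0,1)$ and assume the null $\mathcal{H}_\lambda:\mathcal{R}(\lambda)>\alpha$ holds.

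The first step is an event inclusion. Take any $u'\in(u,1)$. If $p_\lambda\le u<u'$, then by the definition \eqref{eq:inverted_pval} of the infimum there exists some $\varepsilon<u'$ with $U_\varepsilon(\lambda)\le\alpha$. Here I need the upper bounds to be nested. Concretely, $U_\varepsilon(\lambda)$ should be nonincreasing in $\varepsilon$: a larger tolerated error probability gives a looser-confidence, hence smaller, bound. This holds for all standard constructions, such as Hoeffding bounds and quantile bounds. Under this monotonicity, $U_{u'}(\lambda)\le U_\varepsilon(\lambda)\le\alpha$, and therefore
\begin{equation*}
\{p_\lambda\le u\}\subseteq\{U_{u'}(\lambda)\le\alpha\}.
\end{equation*}

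The second step uses the null. When $\mathcal{R}(\lambda)>\alpha$, the event $\{U_{u'}(\lambda)\le\alpha\}$ is contained in $\{\mathcal{R}(\lambda)>U_{u'}(\lambda)\}$. By \eqref{eq:confidence_bound}, this event has probability at most $u'$. Hence $\mathbb{P}(p_\lambda\le u\mid\mathcal{H}_\lambda)\le u'$ for every $u'\in(u,1)$. Letting $u'\downarrow u$ gives the bound $u$. This is the same chain of reasoning as in the Hoeffding p-value proof, where the bound was inverted explicitly.

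The main obstacle is the monotonicity in $\varepsilon$. Without it, the infimum could be attained at some $\varepsilon$ where the bound happens to be small while $U_{u'}$ is not, and the inclusion in the first step fails. I would handle this in one of two ways. The first is to state monotonicity as an implicit standing assumption on the family $\{U_\varepsilon\}$, which holds for every construction used later. The second is to replace $U_\varepsilon$ by its monotone envelope $\tilde U_\varepsilon=\sup_{\varepsilon'\ge\varepsilon}U_{\varepsilon'}$ and check that the envelope still satisfies coverage. The latter is not automatic, since coverage for each $\varepsilon'$ separately does not imply coverage of the supremum, so I would favor the monotonicity assumption. I would also remark briefly that measurability of $p_\lambda$ follows from monotonicity: the infimum can then be taken over rational $\varepsilon$.
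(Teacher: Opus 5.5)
Your argument is correct and follows essentially the same route as the paper's proof: the chain $\{p_\lambda\le u\}\subseteq\{U_{u'}(\lambda)\le\alpha\}\subseteq\{\mathcal{R}(\lambda)>U_{u'}(\lambda)\}$ under the null, followed by the coverage property. It is in fact more careful than the paper, which asserts the exact equivalence $\{p_\lambda\le\delta\}\equiv\{U_\delta(\lambda)\le\alpha\}$; that equivalence relies implicitly on monotonicity in $\varepsilon$ and fails when the infimum in \eqref{eq:inverted_pval} is not attained, whereas your explicit monotonicity assumption and the limit $u'\downarrow u$ handle both issues.

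One correction to your closing remark: the envelope $\tilde U_\varepsilon=\sup_{\varepsilon'\ge\varepsilon}U_{\varepsilon'}$ does satisfy coverage automatically, because $\tilde U_\varepsilon\ge U_\varepsilon$ pointwise. The real reason it is not a fix is that it defines a different, larger p-value than the one in the statement.
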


\begin{proof}
Fix the outage rate $\delta \in (0,1)$ and assume that hypothesis $\mathcal{H}_\lambda$ is true, i.e.,
$\mathcal{R}(\lambda) > \alpha$.
By definition of the statistic (\ref{eq:inverted_pval}), the event $\{p_\lambda \leq \delta\}$ is equivalent to the upper bound $U_\delta(\lambda)$ being smaller than $\alpha$, i.e.,
\begin{equation}
\label{eq:ch4_pval_event}
\{ p_\lambda \le \delta \}
\equiv
\{ U_\delta(\lambda) \le \alpha \}.
\end{equation}
Since we assume the inequality $\mathcal{R}(\lambda) > \alpha$, the event $U_\delta(\lambda) < \mathcal{R}(\lambda) $ holds if we have $U_\delta(\lambda) \le \alpha$, i.e.,
\begin{equation}
\label{eq:ch4_pval_event_2}
\{ U_\delta(\lambda) \le \alpha \}
\subseteq
\{  U_\delta(\lambda) <\mathcal{R}(\lambda) \}.
\end{equation}
Therefore, by (\ref{eq:ch4_pval_event}) and (\ref{eq:ch4_pval_event_2}), the probability of the event $p_\lambda \le \delta$ is upper bounded by the probability of the event $\mathcal{R}(\lambda_k) > U_\delta(\lambda_k)$, which in turn is upper bounded by $\delta$ due to (\ref{eq:confidence_bound}), i.e.,
\begin{equation}
\label{eq:ch4_pval_event_3}
\mathbb{P}\!\left(
p_\lambda \le \delta
\right)
=
\mathbb{P}\!\left(
U_\delta(\lambda) \le \alpha
\right)
\le
\mathbb{P}\!\left(
\mathcal{R}(\lambda) > U_\delta(\lambda)
\right)
\le
\delta.
\end{equation}
This establishes the validity of the p-values $p_\lambda$ in (\ref{eq:inverted_pval}).
\end{proof}

\begin{figure}
    \centering
    \includegraphics[width=0.7\linewidth]{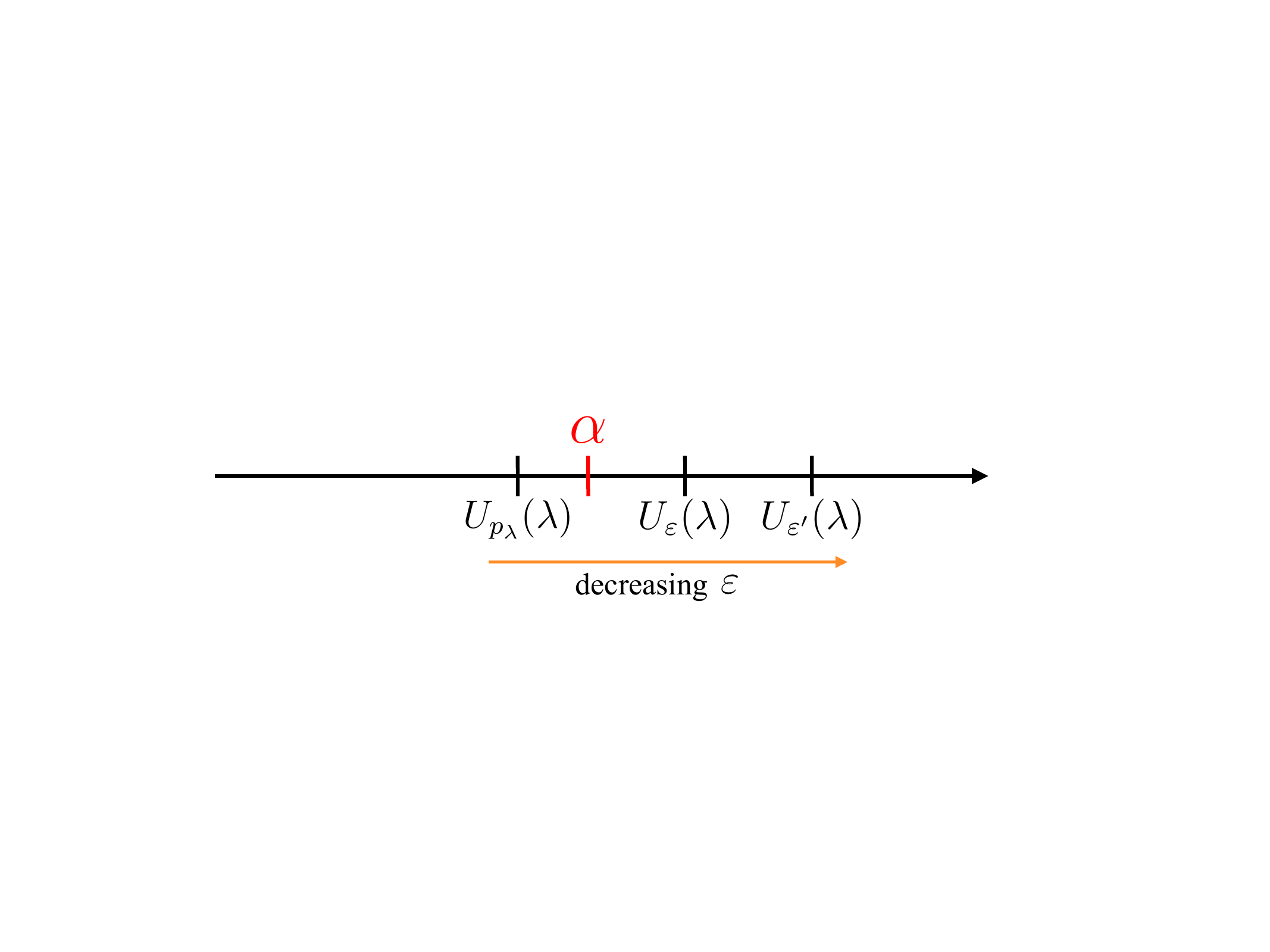}
\caption{
Illustration of the inversion-based p-value construction (\ref{eq:inverted_pval}) from a family of one-sided upper confidence bounds $U_\varepsilon(\lambda)$.
For a fixed hyperparameter $\lambda$, by the definition (\ref{eq:inverted_pval}), the bound $U_\varepsilon(\lambda)$ is nonincreasing in $\varepsilon$ (arrow).
The p-value $p_\lambda$ is defined as the smallest $\varepsilon$ for which the bound drops below the target threshold $\alpha$.
}
\label{fig:upper_bound}
\end{figure}

\section{Quantile Risk Control}
\label{sec:qltt}

In many systems, controlling the mean is insufficient if rare but large failures are unacceptable. In this section, we describe a variant of LTT that can control the quantile risk.

\subsection{Quantile Risk}
\label{sec:quantile_risk}

We define the quantile risk $\mathcal{R}(\lambda)$ as
\begin{equation}
\mathcal{R}(\lambda) = R_q(\lambda)
=
\inf\left\{ t \in \mathbb{R} :
\mathbb{P}(L_\lambda \le t) \ge 1-q
\right\},
\end{equation}
i.e., as the $(1-q)$-quantile of the loss $L_\lambda$.
The objective of quantile risk control is to certify hyperparameters that violate the null hypothesis
\begin{equation}
\label{eq:ch4_quantile_null}
\mathcal{H}_\lambda:\quad R_q(\lambda) > \alpha,
\end{equation}
where $\alpha$ is the target maximum latency.

As an example, consider the setting in Fig. \ref{fig:quantile_latency} in which a base station of a wireless system provides radio access to a group of users (see also Sec. \ref{sec:app_wireless}).
Consider as the loss $L_\lambda$ the delay experienced by one of the users.
A system that only ensures a mean delay below $30$ ms does not prevent occasional larger delay spikes.
This situation is illustrated in Fig.~\ref{fig:quantile_latency}(a), where the average delay, across multiple users, is below $30$ ms despite two users experiencing a delay larger than $30$ ms.

Instead, a system that controls the quantile risk as $R_{0.1}(\lambda) \le 30 \text{ ms}$ ensures that at least $90\%$ of users experience delays below $30$ ms.
As illustrated in Fig.~\ref{fig:quantile_latency}(b), this constraint explicitly limits the proportion of users experiencing high latency.

\begin{figure}[t]
\centering
\begin{subfigure}{0.3\textwidth}
    \centering
    \includegraphics[width=\linewidth]{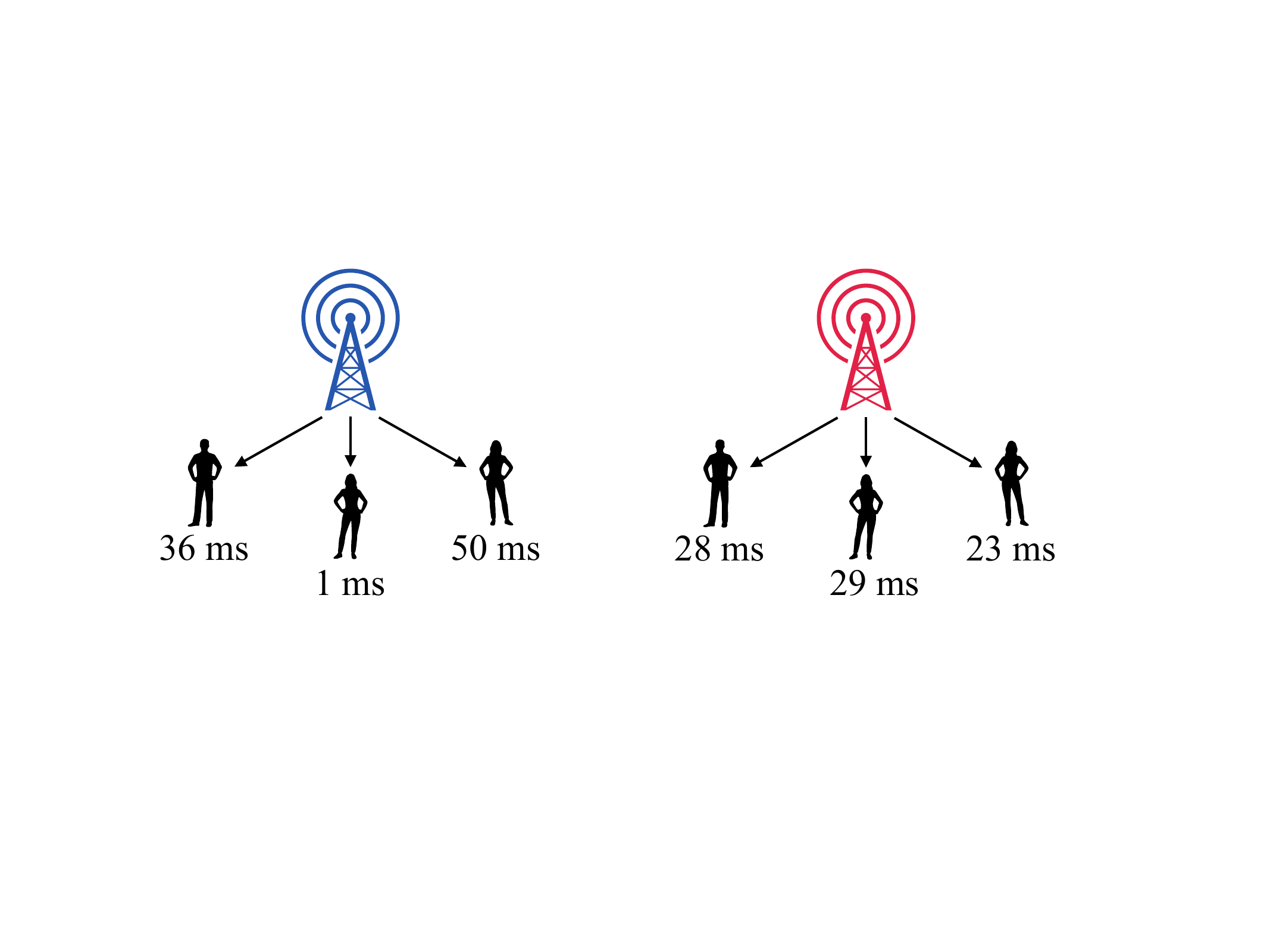}
    \caption{}
\end{subfigure}
\hspace{0.03\textwidth}
\begin{subfigure}{0.3\textwidth}
    \centering
    \includegraphics[width=\linewidth]{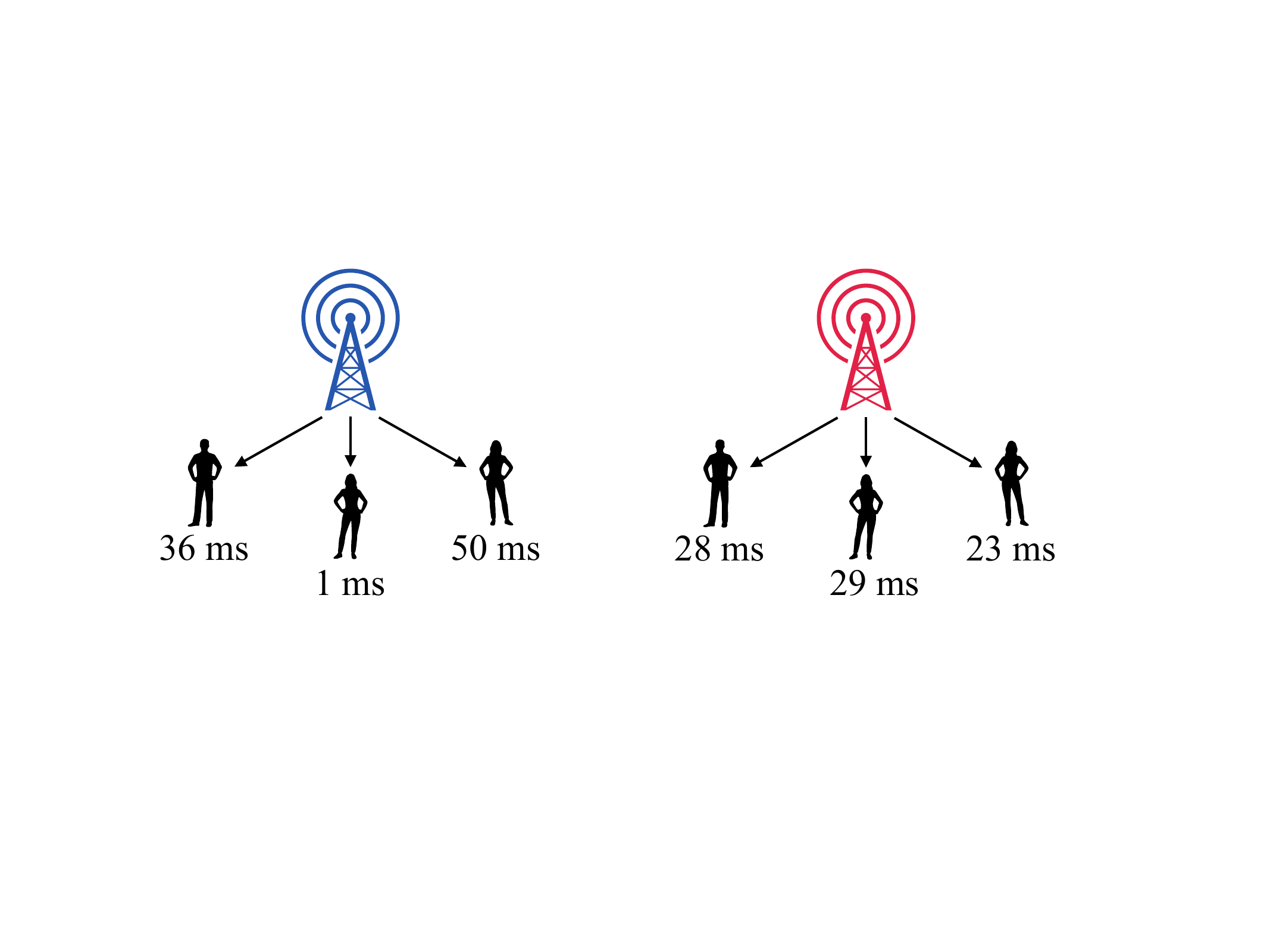}
    \caption{}
\end{subfigure}

\caption{
Illustration of mean versus quantile risk control for a wireless system.
(a) The average delay across the users is below $30$ ms, but two users experience a larger latency, showing that mean control alone does not prevent extreme outcomes.
(b) Controlling the $0.1$-quantile of the latency ensures that at least $90\%$ of users experience delays below $30$ ms, thereby limiting the proportion of high-latency events.
}
\label{fig:quantile_latency}
\end{figure}

\subsection{Confidence Bound and P-value for the Quantile Risk}
\label{sec:qltt_conf}

In order to construct a p-value for the null hypothesis \eqref{eq:ch4_quantile_null}, we now obtain a one-sided upper confidence bound $U_\varepsilon(\lambda)$ as in \eqref{eq:confidence_bound} for the quantile risk $\mathcal{R}(\lambda)=R_q(\lambda)$, i.e.,
\begin{equation}
\label{eq:ch4_quantile_cb_goal}
\mathbb{P}\!\left(
R_q(\lambda)
\le
U_\varepsilon(\lambda)
\right)
\ge
1-\varepsilon,
\qquad \forall\,\varepsilon\in(0,1).
\end{equation}

For each hyperparameter $\lambda\in\Lambda$, write the losses evaluated on the calibration datasets as
$\ell_i(\lambda)$, $ i=1,\ldots,n$,
and let $\ell_{(1)}(\lambda)\le \cdots \le \ell_{(n)}(\lambda)$ denote the corresponding sorted values.
An upper bound $U_\varepsilon(\lambda)$ can then be obtained as the empirical $(1-q^\star(\varepsilon))$-quantile of the calibration losses \citep{howard2022sequential}, i.e.,
\begin{equation}
\label{eq:ch4_quantile_U_def}
U_\varepsilon(\lambda)
=
\ell_{(\lceil n(1-q^\star(\varepsilon))\rceil)}(\lambda),
\end{equation}
with an adjusted outage level $q^\star(\varepsilon)$ \citep{howard2022sequential}. Given the confidence bound (\ref{eq:ch4_quantile_U_def}), a p-value for the null hypothesis \eqref{eq:ch4_quantile_null} is obtained as in \eqref{eq:inverted_pval}.

\subsection{Example}

We illustrate the difference between average-risk control and quantile-risk control using the wireless scheduling application introduced in Sec.~\ref{sec:app_wireless}. In this example, the performance metric of interest is the packet delay experienced by the highest-priority QoS class. As discussed in Sec. \ref{sec:quantile_risk}, the goal of the network operator is to ensure that packet delays remain below a target threshold with high probability.

We compare LTT, reviewed in Chapter~\ref{chapter:mht}, which controls the average delay by enforcing the constraint
$\mathbb{E}[L_\lambda] \le \alpha$,
with target $\alpha=10$ ms, with the quantile-risk control method described in this section. The latter, referred to as quantile LTT (QLTT), enforces the requirement
$R_{0.1}(\lambda) \le \alpha$,
ensuring that at least $90\%$ of packets experience delays below $10$ ms.

Fig.~\ref{fig:quantile_wireless_example} shows the empirical distribution of packet delays obtained using the hyperparameters selected by the two procedures. The histogram is evaluated over test-time realizations of packet delays for a fixed calibration dataset. Note that this differs from the plot in Fig. \ref{fig:fmnist_risk_distribution}, which describes the variability of the average risk across realizations of the calibration data. The hyperparameter selected using LTT achieves a mean delay below the target threshold. However, the distribution exhibits a heavy right tail, resulting in a $0.9$-quantile that exceeds the desired latency target. In contrast, the hyperparameter selected using QLTT produces a distribution with significantly reduced tail behavior, ensuring that the $90^\text{th}$ percentile of the delay remains below the target value $\alpha = 10$ ms. \textcolor{myblue}{It is also worth noting that the mean delay of the QLTT-selected configuration is lower than that of the LTT-selected configuration. This is a secondary consequence of the different certification constraints. QLTT's quantile requirement admits only configurations whose delay distributions are tightly concentrated below the threshold, and such light-tail schedulers also tend to incur lower average delays. LTT's certified set $\hat{\Lambda}$, by contrast, spans a wider range of tail behaviors, and among these LTT selects the configuration that maximises the scheduler's empirical reward, not necessarily the one with the smallest mean delay. The configuration chosen by QLTT may indeed lie within $\hat{\Lambda}$, but LTT's secondary selection rule passes it over in favor of a more aggressively tuned scheduler that operates closer to the mean-delay threshold.}

\begin{figure}[t]
\centering
\includegraphics[width=0.8\linewidth]{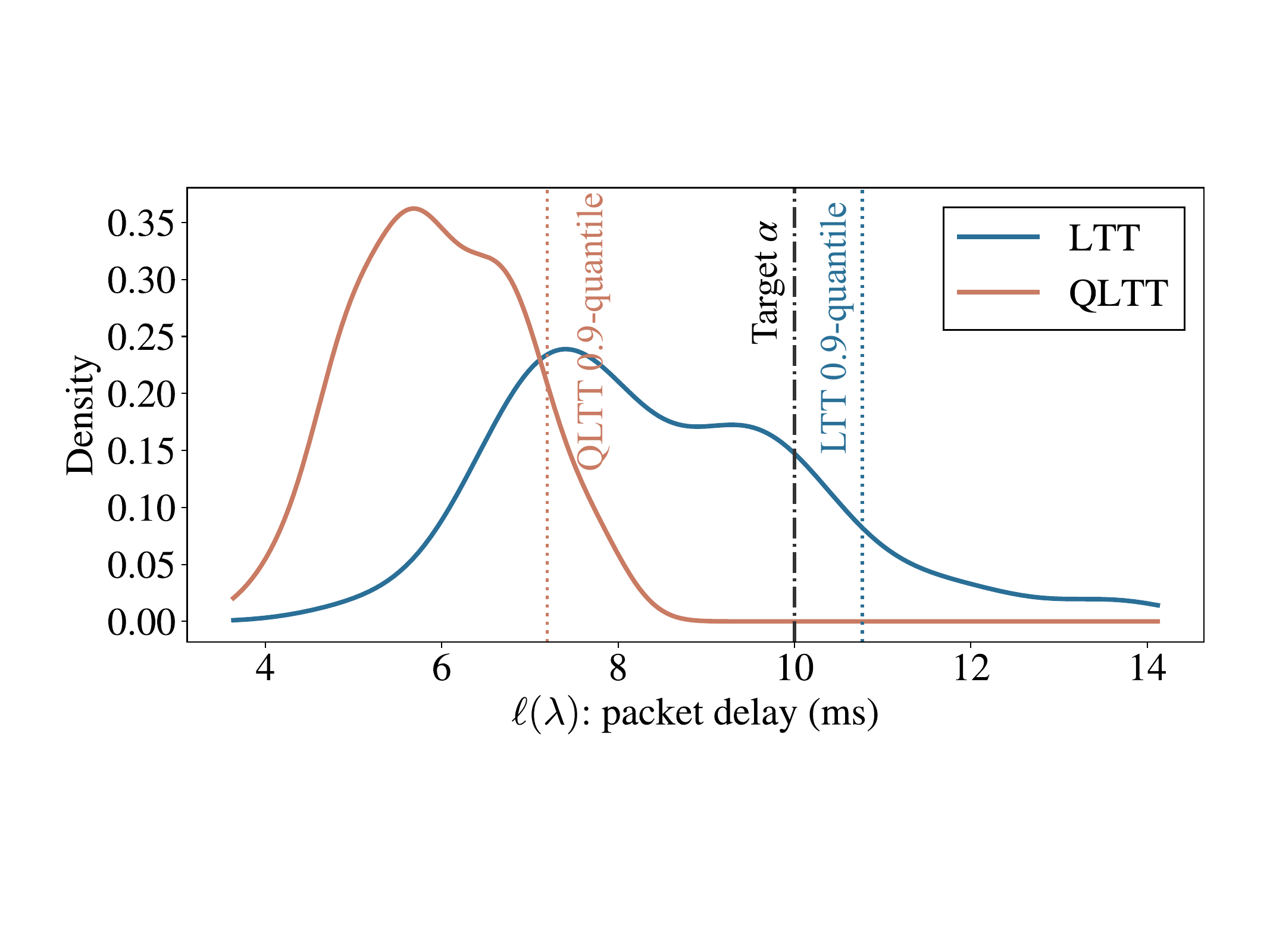}
\caption{
Distribution of packet delays obtained using hyperparameters selected by two statistically valid procedures in the wireless scheduling setting of Sec.~\ref{sec:app_wireless}.
The blue histogram corresponds to average-risk control, while the red histogram corresponds to quantile-risk control.
The dashed vertical lines indicate the corresponding $0.9$-quantiles of the delay distributions. }
\label{fig:quantile_wireless_example}
\end{figure}

\section{Information-Theoretic Constraints}
\label{sec:ch4_ib}

In this section, we describe another variant of LTT that addresses the information bottleneck problem.

\subsection{The Information Bottleneck Problem}

The information bottleneck (IB) problem \citep{tishby2000information,saxe2019information} formalizes representation learning as a trade-off between compression and relevance.
To describe it, consider an input random variable $X$ and a target variable $Y$.
The goal is to extract from input $X$ a compressed representation $T$ that preserves information about the target variable $Y$. For example, input $X$ may be a text, $Y$ a topic of interest, and $T$ a summary of the text that preserves information about topic $Y$.

The IB problem leverages the information-theoretic measure of statistical dependence known as mutual information. Given two discrete random variables $A$ and $B$ with joint distribution $P(A,B)$, the mutual information is defined as
\begin{equation}
I(A;B)=\sum_{a,b} P(a,b)\log \frac{P(a,b)}{P(a)P(b)},
\end{equation}
where $P(A)$ and $P(B)$ are the marginal distributions obtained from the joint distribution $P(A,B)$.

With this definition, the IB problem seeks a mechanism $\mathbb{P}_{T|X}$ that generates a representation $T$ from input $X$ by balancing two competing objectives:
\begin{itemize}
    \item \emph{Compression:} Minimize the mutual information $I(X;T)$ so as to compress the representation $T$;
    \item \emph{Relevance:} Maximize the mutual information $I(T;Y)$, so that the representation $T$ retains information about the target $Y$.
\end{itemize}

Specifically, the IB problem is defined as the optimization problem in which the mutual information terms $I(X;T)$ and $I(T;Y)$ are replaced by their estimates $\hat{I}(X;T)$ and $\hat{I}(T;Y)$, respectively:
\begin{equation}
\label{eq:IB_problem}
    \min_{P_{T\mid X}} \; \hat{I}(X;T)-\lambda \hat{I}(T;Y)
\end{equation}
over the conditional distribution $\mathbb{P}_{T|X}$, where $\lambda\geq 0$ is a hyperparameter. These can be obtained using a dataset of pairs $(X,Y)$, e.g., using plug-in or neural estimators \citep{belghazi2018mutual, poole2019variational}.

The choice of the hyperparameter $\lambda$ in \eqref{eq:IB_problem} is dictated by the desired trade-off between relevance and compression. Ideally, hyperparameter $\lambda$ should ensure that the solution of problem \eqref{eq:IB_problem}, which we denote as $T_\lambda \sim P^\lambda_{T\mid X}$, ensures the relevance constraint
\begin{equation}
\label{eq:ch4_ib_req}
I(T_\lambda;Y)\ge \alpha.
\end{equation}
Accordingly, for every hyperparameter \(\lambda\in\Lambda\), we define the null hypothesis
\begin{equation}
\label{eq:ch4_ib_null}
\mathcal{H}_\lambda:\quad I(T_\lambda;Y)<\alpha.
\end{equation}
Rejecting the null hypothesis \eqref{eq:ch4_ib_null} means certifying that hyperparameter \(\lambda\) satisfies the desired relevance requirement \eqref{eq:ch4_ib_req}.

As shown in \citep{farzaneh2025ibmht}, statistically valid testing for \eqref{eq:ch4_ib_null} can be obtained by constructing a one-sided \emph{lower} confidence bound on the mutual information \(I(T_\lambda;Y)\) \citep{stefani2014confidence}, and then inverting this bound to form valid p-values or e-values (see Sec. \ref{sec:confidence_bounds}). This confidence bound is obtained by using a calibration dataset which is distinct from the dataset used for estimates in \eqref{eq:IB_problem}.

Once a subset $\hat{\Lambda}$ of reliable hyperparameters has been identified using this procedure, we choose the final hyperparameter as
\begin{equation}
\label{eq:ch4_ib_selection}
\lambda^\star
\in
\arg\min_{\lambda\in\hat{\Lambda}}
\widehat I(X;T_\lambda),
\end{equation}
where \(\widehat I(X;T_\lambda)\) is the estimate of the compression term \(I(X;T_\lambda)\) in \eqref{eq:IB_problem}. We refer to this approach as IB-LTT.

\subsection{Example}
\label{sec:ib_example}

We illustrate the IB-LTT using a variational IB model \citep{alemi2016deep}, following the experimental setting detailed in \citep{farzaneh2025ibmht}.
We set a target relevance level \(\alpha=2.28\) and outage level \(\delta=0.1\).
To visualize the performance of the different selection strategies, Fig.~\ref{fig:ib_mht_example} reports the joint distribution of the mutual informations $I(X;T)$ and $I(T;Y)$ evaluated across 100 realizations of the calibration dataset.

In the figure, the relevance constraint \(I(T;Y)\ge \alpha\) is visualized by a vertical threshold at \(\alpha\).
Outage events correspond to points that lie to the left of this threshold, i.e., trials (calibration dataset realizations) in which the selected representation fails to meet the required relevance level. E-HPO yields an empirical outage probability of approximately \(0.27\), indicating frequent violations of the relevance requirement \eqref{eq:ch4_ib_req}.
In contrast, IB-LTT concentrates the mass of the joint distribution in the figure to the right of the threshold, reducing the outage probability to about \(0.07\), thereby satisfying the desired reliability level.

\begin{figure}[t]
    \centering
    \includegraphics[width=0.8\linewidth]{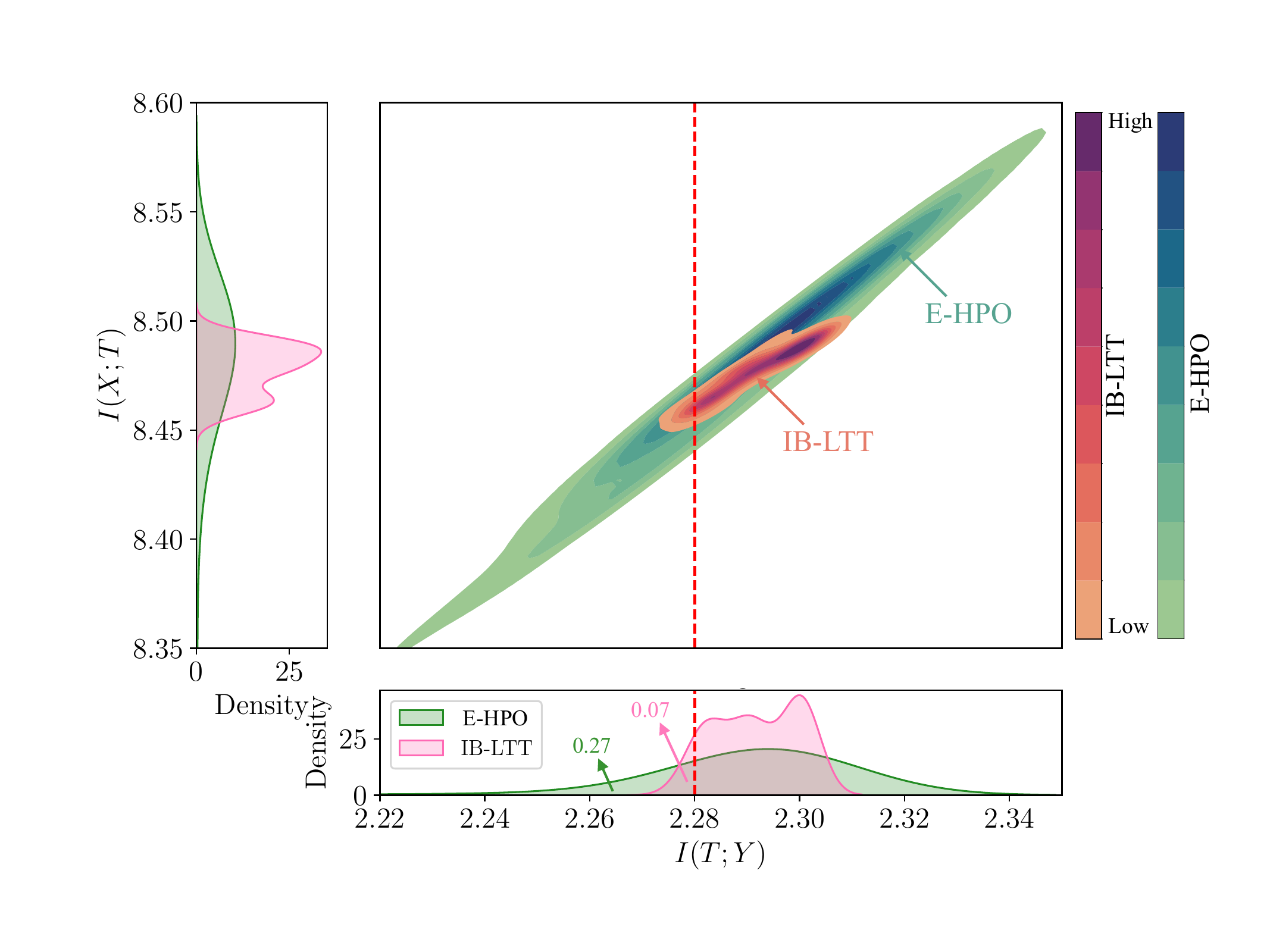}
    \caption{
    Joint distribution of relevance and compression for the hyperparameter \(\lambda^\star\) selected by E-HPO and IB-LTT across 100 independent trials. The horizontal axis shows the achieved relevance \(I(T;Y)\) and the vertical axis shows the compression \(I(X;T)\).
    The vertical dashed line indicates the target relevance level \(\alpha\).}
    \label{fig:ib_mht_example}
\end{figure}

\section{Summary}

This chapter extended the LTT framework beyond the classical setting of average risk control to accommodate more general and practically relevant notions of reliability. We showed that the MHT perspective introduced in Chapter~\ref{chapter:mht} naturally generalizes to arbitrary risk functionals of the loss distribution, provided that suitable confidence bounds can be constructed.

The key methodological insight is that statistical validity can be preserved by formulating reliability requirements in terms of hypotheses of the form $\mathcal{R}(\lambda) > \alpha$, and by constructing p-values via the inversion of one-sided confidence bounds. This yields a unified and modular approach, in which different reliability criteria can be handled by designing appropriate confidence bounds for the corresponding functional $\mathcal{R}(\lambda)$.

We instantiated this framework along two main directions. First, we considered quantile risk control, which enables guarantees on tail behavior of the loss distribution. The resulting QLTT procedure provides protection against rare but severe failures, as illustrated in the wireless scheduling example, where it enforces constraints on high-percentile latency rather than on average delay.

Second, we addressed reliability constraints involving information-theoretic quantities through the information bottleneck framework. The proposed IB-LTT method certifies relevance in terms of mutual information with statistical guarantees, and then selects among reliable configurations by optimizing compression. This yields a principled approach to hyperparameter selection that mirrors the underlying IB objective while ensuring reliability.

Overall, this chapter demonstrates that LTT is not limited to average-risk guarantees, but can serve as a general-purpose framework for statistically valid hyperparameter selection under a broad class of reliability criteria, including tail risks, constraint violations, and information-theoretic objectives.

\chapter{Multi-Objective Hyperparameter Selection}
\label{chapter:multi_objective}

The previous chapters focused on selecting hyperparameters that satisfy a single reliability constraint, namely that the risk $R(\lambda)$ lies below a user-specified threshold $\alpha$. In many practical deployments, however, a single scalar objective is insufficient to capture the full range of system requirements.

Modern AI systems often operate under multiple, potentially conflicting performance criteria. For instance, an LLM may need to be accurate, well-calibrated, and computationally efficient \citep{10.1098/rsta.2024.0522}; a wireless scheduler must balance latency, throughput, and fairness \citep{kuo2007utility}; and a computer vision system must jointly optimize detection recall and segmentation quality \citep{he2017mask}. In such settings, the practitioner faces not one but several objectives, some of which are hard constraints to be satisfied with statistical guarantees, while others are soft objectives to be optimized in a best-effort manner.

This chapter introduces a principled framework for multi-objective hyperparameter selection that extends the LTT methodology introduced in Chapter~\ref{chapter:mht}. As detailed in Sec.~\ref{sec:ch5_problem}, the central challenge is to simultaneously satisfy a number of reliability constraints with statistical guarantees, while making a best effort at optimizing a number of additional unconstrained objectives. The chapter introduces \textit{Pareto testing} (PT) \citep{laufer-goldshtein2023efficiently} in Sec.~\ref{sec:ch5_pareto_testing}. PT addresses this challenge by combining multi-objective optimization with sequential hypothesis testing to efficiently identify configurations on the reliability-performance frontier. Then, it describes \textit{reliability graph-based Pareto testing} (RG-PT) \citep{farzaneh2025multiobjective} in Sec.~\ref{sec:ch5_rg}. RG-PT further incorporates structured prior knowledge about expected reliability relationships among hyperparameters via a directed acyclic graph \citep{ramdas2017dagger}.

\section{Multi-Objective Problem Formulation}
\label{sec:ch5_problem}

We begin by formalizing the multi-objective hyperparameter selection problem. As in Chapter~\ref{chapter:mht}, let $\Lambda = \{\lambda_1, \ldots, \lambda_K\}$ denote a finite candidate set, and let $f_\lambda$ denote a pre-trained model operating under hyperparameter $\lambda$.

In a multi-objective setting, we associate with each hyperparameter $\lambda$ a collection of $L$ risk functions
\begin{equation}
\label{eq:ch5:risk_def}
    R_l(\lambda) = \mathbb{E}[L_\lambda^l(Z)], \quad l = 1, \ldots, L,
\end{equation}
where $Z$ denotes a generic test-time data point and $L_\lambda^l(Z) \in [0,1]$ is a bounded per-sample loss for objective $l$. Note that, in a supervised learning setup, data point $Z$ includes input $X$ and output $Y$, i.e., $Z = (X,Y)$.

Given a dataset $\mathcal{D}$, the empirical estimate of $R_l(\lambda)$ is computed as
\begin{equation}
    \widehat{R}_l(\lambda \mid \mathcal{D}) = \frac{1}{|\mathcal{D}|}\sum_{Z\in\mathcal{D}} L_\lambda^l(Z).
\end{equation}

We partition the $L$ objectives into two groups:
\begin{itemize}
    \item \textbf{Reliability risk functions} $\{R_l(\lambda)\}_{l=1}^{L_C}$: The first set of $L_C$ risk functions must be controlled below user-specified thresholds $\{\alpha_l\}_{l=1}^{L_C}$. Accordingly, a hyperparameter $\lambda$ is said to be \emph{reliable} if the following inequalities hold.
    \begin{equation}
    \label{eq:ch5_reliability}
        R_l(\lambda) \le \alpha_l \quad \text{for all } l = 1, \ldots, L_C.
    \end{equation}
    \item \textbf{Auxiliary risk functions} $\{R_l(\lambda)\}_{l=L_C+1}^{L}$: The remaining $L - L_C$ risk functions are unconstrained objectives that the practitioner wishes to minimize as a best-effort secondary goal.
\end{itemize}

Fig.~\ref{fig:ch5_wireless_multi_obj} illustrates a concrete example of
multi-objective deployment in a wireless network. A base station serves multiple devices (UEs) and must simultaneously satisfy three system-level requirements $(L_C = 3)$, namely a packet error rate below
$10^{-5}$, a coverage above $95\%$, and a delay below $30$ ms.
Violating any of these three criteria can render the system unreliable for its intended use. Under these requirements, it is also desired to make a best effort at maximizing the data rate ($L = 4$, and hence $L - L_C= 1$).

\begin{figure}[t]
    \centering
    \includegraphics[width=0.7\linewidth]{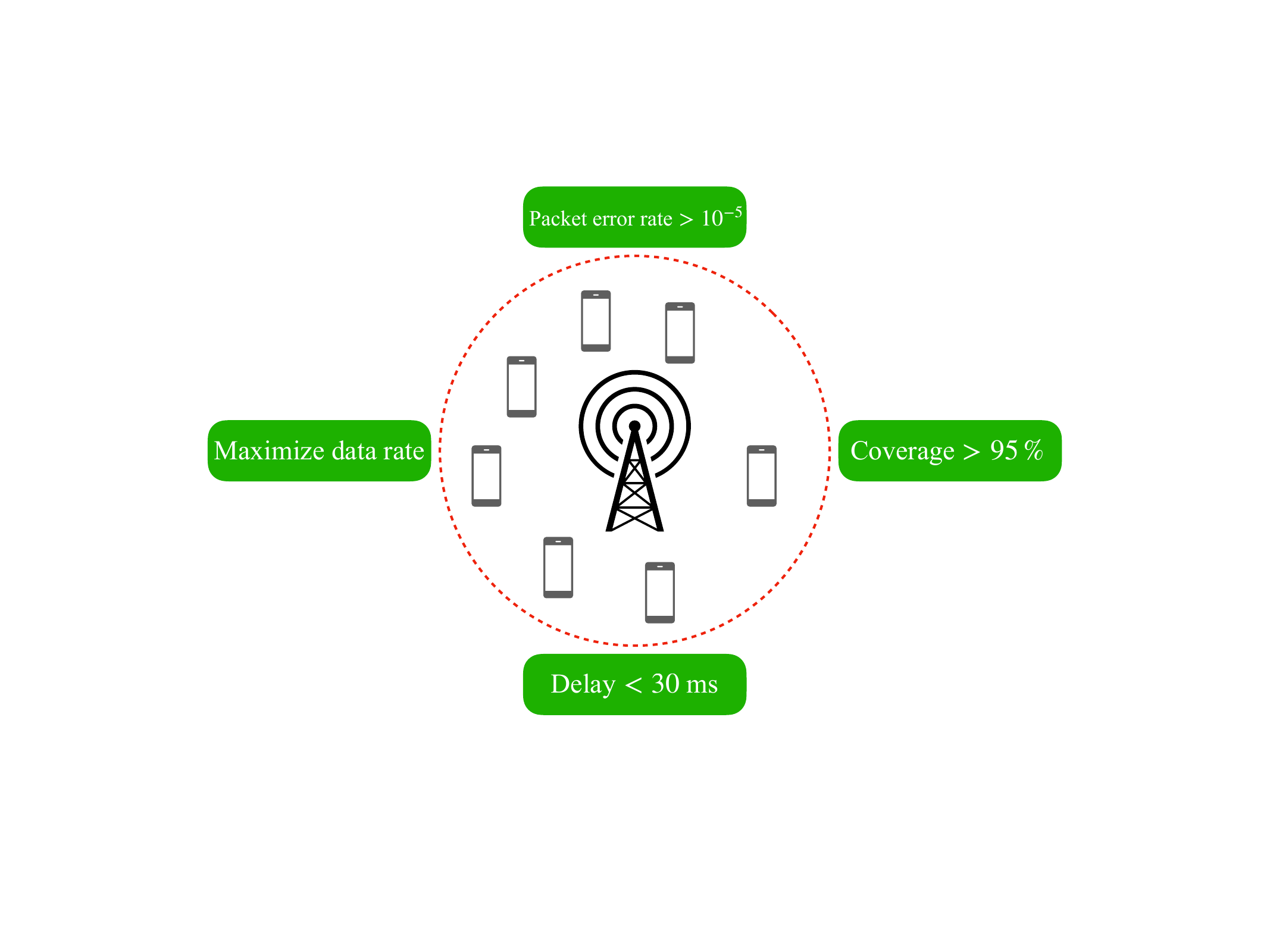}
    \caption{Example of multi-objective reliability requirements in a wireless network.
    A base station serving multiple devices (UEs) must simultaneously satisfy three
    system-level constraints, namely packet error rate, coverage, and delay, while maximizing the data rate.
    Multi-objective hyperparameter selection must certify configurations that satisfy
    all three constraints simultaneously with high probability, while making a best effort at maximizing the data rate.}
    \label{fig:ch5_wireless_multi_obj}
\end{figure}

Multi-objective hyperparameter selection is formalized by the optimization problem
\begin{equation}
\label{eq:ch5_mo_problem}
\begin{aligned}
    &\min_{\lambda \in \Lambda} \; \{R_{L_C+1}(\lambda), \ldots, R_L(\lambda)\} \\
    &\text{subject to} \; R_l(\lambda) \le \alpha_l \;\; \forall\, l = 1, \ldots, L_C.
\end{aligned}
\end{equation}
As with the single-objective case studied in the previous chapters, the risks $R_l(\lambda)$ in \eqref{eq:ch5:risk_def} depend on the unknown data distribution and can only be estimated from finite calibration data. The key challenge is therefore to address problem (\ref{eq:ch5_mo_problem}) using only calibration data $\mathcal{D}^\mathrm{cal}$, while providing formal statistical guarantees on the reliability constraints (\ref{eq:ch5_reliability}) in the optimization \eqref{eq:ch5_mo_problem}.

\begin{definition}[$(\alpha, \delta)$-risk-controlling configuration]
\label{def:ch5_mo_rcp}
A hyperparameter $\hat{\lambda} \in \Lambda$ selected using calibration data is said to be $(\alpha, \delta)$-risk-controlling for the risks $\{R_l(\lambda)\}_{l = 1}^{L_C}$, where $\alpha = (\alpha_1, \ldots, \alpha_{L_C})$, if the following condition holds.
\begin{equation}
\label{eq:ch5_mo_guarantee}
    \mathbb{P}\!\left( R_l(\hat{\lambda}) \le \alpha_l \;\; \forall\, l = 1, \ldots, L_C \right) \ge 1 - \delta,
\end{equation}
where the probability is over the randomness in the calibration data.
\end{definition}

Definition~\ref{def:ch5_mo_rcp} generalizes Definition~\ref{def:ch2_rcp} from Chapter~\ref{chapter:mht} to the case of multiple simultaneous reliability constraints.

\section{Pareto Optimality and the Pareto Frontier}
\label{sec:ch5_pareto}

Before introducing the testing procedures, we recall the notion of Pareto optimality that underlies multi-objective optimization \citep{deb2011multi}.

Given a set of risk functions $\{R_l(\lambda)\}_{l = 1}^L$, for any two hyperparameters $\lambda\;\text{and}\; \lambda' \in \Lambda$, we say that hyperparameter $\lambda'$ \emph{dominates} hyperparameter $\lambda$, written $\lambda' \prec \lambda$, if
\begin{equation}
    R_l(\lambda') \le R_l(\lambda) \;\; \text{for all } l,
    \quad \text{and} \quad
    R_l(\lambda') < R_l(\lambda) \;\; \text{for at least one } l.
\end{equation}
That is, hyperparameter $\lambda'$ is at least as good as hyperparameter $\lambda$ on every objective and strictly better on at least one objective $l\in \{1, \ldots, L\}$.

The \emph{Pareto optimal set} $\Lambda^{\mathrm{Par}}$ consists of all non-dominated hyperparameters, i.e.,
\begin{equation}
\label{eq:ch5_pareto_set}
    \Lambda^{\mathrm{Par}} = \left\{ \lambda \in \Lambda \;:\; \nexists\, \lambda' \in \Lambda \;\text{with}\; \lambda' \ne \lambda \text{ such that } \lambda' \prec \lambda \right\}.
\end{equation}
Accordingly, the hyperparameter settings in the set $\Lambda^{\mathrm{Par}}$ capture the best achievable trade-offs among all objectives. Any hyperparameter outside the set $\Lambda^{\mathrm{Par}}$ is dominated and can be discarded without loss of optimality.

Fig.~\ref{fig:ch5_pareto_frontier} illustrates the Pareto frontier for a two-objective setting with $R_1(\lambda)$ and $R_2(\lambda)$. Dots represent the risk pairs $(R_1(\lambda), R_2(\lambda))$ achievable with some choice of hyperparameter $\lambda \in \Lambda$. Configurations on the frontier, displayed as green dots, represent the best achievable trade-offs between the two risks: no other configuration simultaneously improves both objectives.

\begin{figure}[t]
    \centering
    \includegraphics[width=0.65\linewidth]{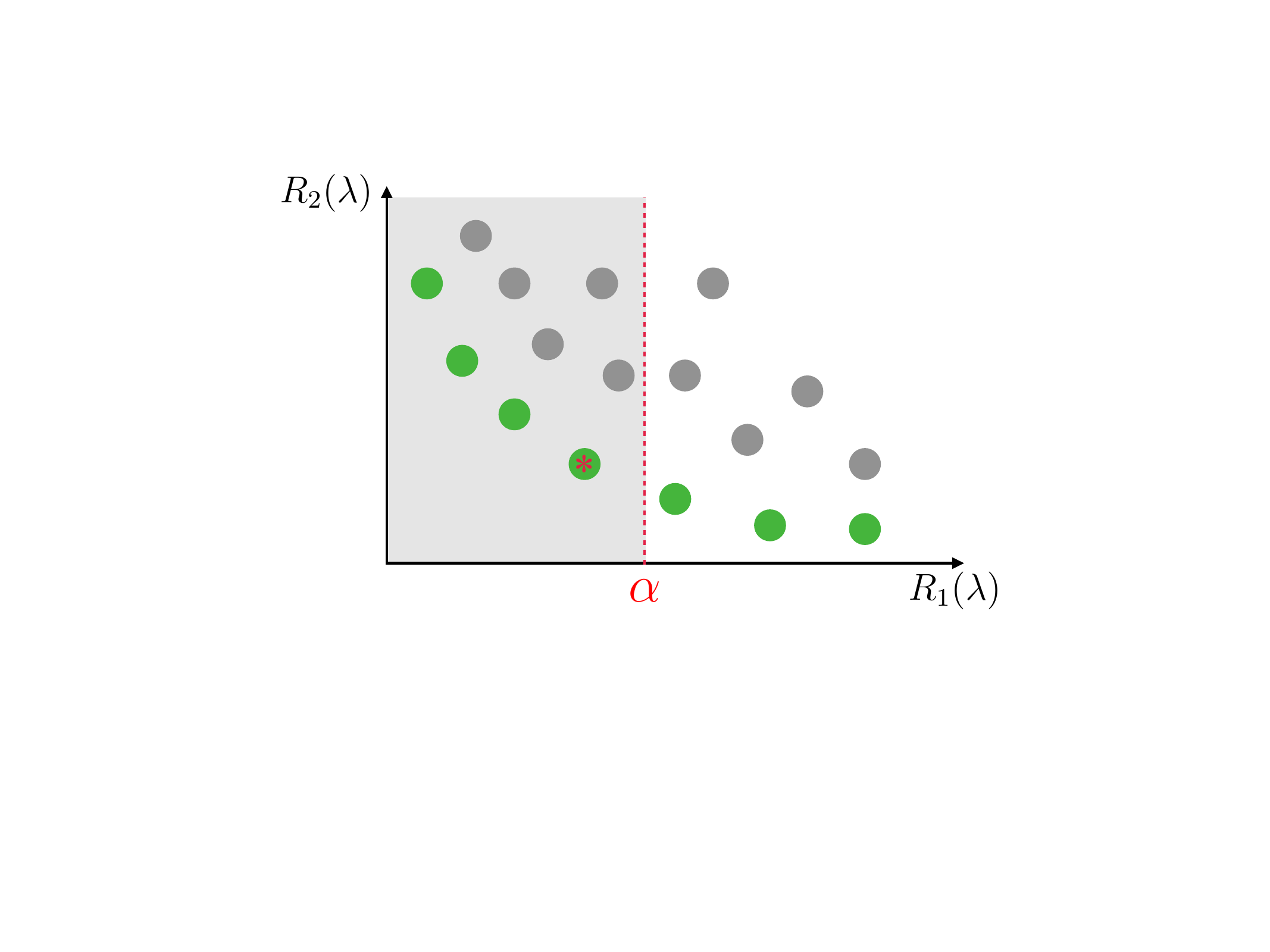}
    \caption{Illustration of the Pareto frontier in a two-objective setting with one constrained objective $R_1(\lambda)$ (to be controlled below $\alpha$) and one free objective $R_2(\lambda)$ (to be minimized). Each dot represents the risk pair $(R_1(\lambda), R_2(\lambda))$ attained with some hyperparameter choice $\lambda$. The Pareto frontier (green dots) identifies the hyperparameters $\lambda$ yielding the best achievable trade-offs. The shaded region marks configurations that satisfy the reliability constraint $R_1(\lambda) \le \alpha$.}
    \label{fig:ch5_pareto_frontier}
\end{figure}

Returning to the multi-objective optimization formulation in problem \eqref{eq:ch5_mo_problem}, assume now that $R_1(\lambda)$ is a reliability risk function, while $R_2(\lambda)$ is an auxiliary risk function, i.e., we have $L_C = 1$ and $L = 2$. Then, as shown in Fig. \ref{fig:ch5_pareto_frontier}, only the configurations with $R_1(\lambda)\le \alpha$ are feasible for problem \eqref{eq:ch5_mo_problem} (shaded area), and the solution of problem \eqref{eq:ch5_mo_problem} corresponds to the dot marked with an asterisk.

\section{Multi-Objective Hypothesis Testing}
\label{sec:ch5_combined_pval}

 As discussed in the previous section, problem \eqref{eq:ch5_mo_problem} must be addressed using only estimates of the risks based on calibration data. To account for this, in order to enforce the constraints in \eqref{eq:ch5_mo_problem}, we adopt an MHT framework in which the null hypothesis for each candidate hyperparameter $\lambda_k \in \Lambda$ generalizes the single-risk hypothesis in (\ref{eq:ch2_null}) as
\begin{equation}
\label{eq:ch5_null}
    \mathcal{H}_k: \quad \exists\, l \in \{1, \ldots, L_C\} \text{ such that } R_l(\lambda_k) > \alpha_l.
\end{equation}
Rejecting hypothesis $\mathcal{H}_k$ means certifying that hyperparameter $\lambda_k$ satisfies all reliability constraints \eqref{eq:ch5_reliability} simultaneously.

As in Chapter \ref{chapter:mht}, testing for hypothesis $\mathcal{H}_k$ can be carried out using p-values or e-values. A p-value for the combined null hypothesis (\ref{eq:ch5_null}) can be obtained by taking the maximum of the individual p-values for the constituent null hypotheses $R_l(\lambda_k) > \alpha_l$ for all $l \in \{1, \ldots, L_C\}$. Conversely, an e-value for the null hypothesis \eqref{eq:ch5_null} can be evaluated as the minimum of the individual e-values.

Focusing on the case of p-values, let $p_{l,k}$ be a valid p-value for the hypothesis
\begin{equation}
\label{eq:ch5:single_hypothesis}
\mathcal{H}_{k,l}: R_l(\lambda_k) > \alpha_l
\end{equation}
for each $l = 1, \ldots, L_C$, as described in Chapter~\ref{chapter:mht}. Then, the combined statistic
\begin{equation}
\label{eq:ch5_combined_pval}
    p_k = \max_{1 \le l \le L_C} p_{l,k}
\end{equation}
is a valid p-value for the null hypothesis $\mathcal{H}_k$ in (\ref{eq:ch5_null}), although it may be conservative.

\begin{lemma}[Combined p-value validity]
\label{lem:ch5_combined_pval}
If $p_{l,k}$ is a valid p-value for hypothesis $\mathcal{H}_{k,l}$ in \eqref{eq:ch5:single_hypothesis} for each $l \in \{1, \ldots, L_C\}$, then the statistic (\ref{eq:ch5_combined_pval}) satisfies the superuniformity condition
\begin{equation}
    \mathbb{P}(p_k \le u \mid \mathcal{H}_k) \le u \quad \text{for all } u \in [0,1],
\end{equation}
and is thus a p-value for the null hypothesis $\mathcal{H}_k$ in \eqref{eq:ch5_null}.
\end{lemma}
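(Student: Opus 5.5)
The plan is to exploit the fact that $\mathcal{H}_k$ in \eqref{eq:ch5_null} is a union of the constituent nulls $\mathcal{H}_{k,l}$. Whenever $\mathcal{H}_k$ holds, at least one of the $\mathcal{H}_{k,l}$ must be true. I will then use the elementary fact that a maximum of numbers is small only if every one of them is small, in particular the one attached to a true constituent null.

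Fix $u\in[0,1]$ and assume $\mathcal{H}_k$ is true. Since the risks $R_l(\lambda_k)$ are deterministic functionals of the data distribution, whether $\mathcal{H}_k$ holds is fixed once the distribution is fixed. So "conditioning on $\mathcal{H}_k$" means evaluating the probability under any data-generating distribution for which $\mathcal{H}_k$ holds, exactly as in the superuniformity definition \eqref{eq:ch2_pvalue_validity}.

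Under such a distribution, select an index $l^\star\in\{1,\ldots,L_C\}$ with $R_{l^\star}(\lambda_k)>\alpha_{l^\star}$, so that $\mathcal{H}_{k,l^\star}$ is true. The index $l^\star$ depends only on the distribution and not on the calibration data, which is what makes the next step legitimate.

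The key event inclusion is
\[
\{p_k\le u\}=\Bigl\{\max_{1\le l\le L_C}p_{l,k}\le u\Bigr\}=\bigcap_{l=1}^{L_C}\{p_{l,k}\le u\}\subseteq\{p_{l^\star,k}\le u\}.
\]
Monotonicity of probability then gives $\mathbb{P}(p_k\le u\mid\mathcal{H}_k)\le\mathbb{P}(p_{l^\star,k}\le u)$. Because $\mathcal{H}_{k,l^\star}$ is true and $p_{l^\star,k}$ is a valid p-value for it, superuniformity \eqref{eq:ch2_pvalue_validity} bounds the right-hand side by $u$. This yields the claim for every $u$, and hence $p_k$ is a p-value for $\mathcal{H}_k$.

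The only delicate point is the choice of $l^\star$. It must be made before looking at the data, so that we never need a union bound or any dependence assumption among the $p_{l,k}$. Given that, the argument is a one-line set inclusion. I will also remark that no independence across $l$ is needed, and that the bound may be loose when several constituent nulls are true, which explains the conservativeness noted before the lemma.
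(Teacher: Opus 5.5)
Your proposal is correct and follows the paper's proof: fix an index $l^\star$ whose constituent null $\mathcal{H}_{k,l^\star}$ is true, use the inclusion $\{\max_l p_{l,k}\le u\}\subseteq\{p_{l^\star,k}\le u\}$, and apply the superuniformity of $p_{l^\star,k}$. Your remark that $l^\star$ is fixed by the data-generating distribution rather than chosen from the data makes explicit a point the paper leaves implicit in its conditional notation $\mathbb{P}(\cdot\mid\mathcal{H}_{k,l^*})$.
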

\begin{proof}
Let $I \subseteq \{1, \ldots, L_C\}$ denote the set of indices $l$ for which the null hypothesis $\mathcal{H}_{k,l}$
is true, i.e., for which $R_l(\lambda_k) > \alpha_l$. Under the combined null $\mathcal{H}_k$,
the set $I$ is non-empty. For any $u \in [0,1]$, we have
\begin{subequations}\label{eq:proof_combined}
\begin{align}
    \mathbb{P}(p_k \le u \mid \mathcal{H}_k)
    &= \mathbb{P}\!\left(\max_{1 \le l \le L_C} p_{l,k} \le u \;\middle|\; \mathcal{H}_k\right)
    \label{eq:proof_combined_a}\\
    &\le \mathbb{P}\!\left( p_{l^*,k} \le u \;\middle|\; \mathcal{H}_{k,l^*}\right)
    \label{eq:proof_combined_b}\\
    &\le u,
    \label{eq:proof_combined_c}
\end{align}
\end{subequations}
where $l^*\in I$ indicates any true null hypothesis $\mathcal{H}_{k,l^*}$, i.e., $l^* \in I$.
The inequality (\ref{eq:proof_combined_b}) holds because the event
$\{p_{l^*,k} \le u\}$ includes the event $\{\max_l p_{l,k} \le u\}$, and hence
has larger or equal probability. The final inequality (\ref{eq:proof_combined_c}) follows
from the superuniformity property (\ref{eq:ch2_pvalue_validity}) of the p-value $p_{l^*,k}$ under the null hypothesis
$\mathcal{H}_{k,l^*}$.
\end{proof}

\section{Pareto Testing}
\label{sec:ch5_pareto_testing}

By Lemma~\ref{lem:ch5_combined_pval}, the combined p-values $\{p_k\}_{k=1}^K$ can be fed directly into any of the MHT procedures reviewed in Sec.~\ref{sec:ch2_pval_mht}, such as Bonferroni correction \eqref{eq:ch2_bonferroni} or FST \eqref{eq:ch2_fst_kstar}, to obtain a certified set $\hat{\Lambda}$ of hyperparameters that are simultaneously reliable across all $L_C$ constraints with probability at least $1 - \delta$.

However, testing all $K$ hyperparameter configurations via Bonferroni or standard FST can result in a small certified set $\hat{\Lambda}$, especially when the number of candidates $K$ and the number of constrained objectives $L_C$ are large. A small set $\hat{\Lambda}$, or even an empty set $\hat{\Lambda}$, offers little room for optimizing the $L - L_C$ auxiliary objectives.

PT \citep{laufer-goldshtein2023efficiently} addresses this efficiency challenge by focusing only on the configurations that are estimated to lie on the Pareto frontier in the joint objective space (see Fig. \ref{fig:ch5_pareto_frontier}). The Pareto frontier is estimated via held-out data, and the restriction of candidate hyperparameters on the estimated Pareto frontier can reduce the number of hypotheses to test, thereby improving statistical power.

\subsection{Overview}

PT splits the calibration dataset $\mathcal{D}^{\mathrm{cal}}$ into two disjoint subsets $\mathcal{D}^{\mathrm{opt}}$ and $\mathcal{D}^{\mathrm{MHT}}$, of respective sizes $n^\mathrm{opt}$ and $n^\mathrm{MHT}$, as
\begin{equation}
    \mathcal{D}^{\mathrm{cal}} = \mathcal{D}^{\mathrm{opt}} \cup \mathcal{D}^{\mathrm{MHT}}.
\end{equation}
Then, PT proceeds along the following two stages.

\textbf{Stage 1 (Optimization):} As illustrated in Fig. \ref{fig:ch5_pareto_testing_procedure}, first, PT addresses the multi-objective optimization problem \eqref{eq:ch5_mo_problem} by replacing all risks, both constrained and unconstrained, with empirical estimates obtained using the dataset $\mathcal{D}^{\mathrm{opt}}$. This can be done with any multi-objective optimizer, including exhaustive grid search for small candidate sets or methods such as Bayesian multi-objective optimization for larger spaces \citep{JMLR:v23:21-0888}. Assuming that the dataset $\mathcal{D}^{\mathrm{opt}}$ is sufficiently large, the resulting estimated Pareto set $\hat{\Lambda}^{\mathrm{Par}} \subseteq \Lambda$ contains only the most promising configurations, and is typically a small fraction of the full set $\Lambda$.

Considering only the estimated reliability risk functions $\{R_l(\lambda)\}_{l = 1}^{L_C}$, PT orders the candidate hyperparameters $\lambda_k \in \hat{\Lambda}^\mathrm{Par}$ from most likely to be reliable to least likely to be reliable. This step can be done using different measures of reliability based on empirical risk estimates obtained from dataset $\mathcal{D}^\mathrm{opt}$. Specifically, PT builds a p-value $p_{l,k}^\mathrm{opt}$ for each configuration $\lambda_k \in \hat{\Lambda}^\mathrm{Par}$ using dataset $\mathcal{D}^\mathrm{opt}$. The combined p-values \eqref{eq:ch5_combined_pval} for the null hypothesis in \eqref{eq:ch5_null}, i.e.,
\begin{equation}
\label{eq:ch5_ordering}
    p^{\mathrm{opt}}_k = \max_{1 \le l \le L_C} p_{l,k}^\mathrm{opt},
\end{equation}
are then used to order the hyperparameters in set $\hat{\Lambda}^\mathrm{Par}$ from the smallest p-value \eqref{eq:ch5_ordering} to the largest.

\textbf{Stage 2 (Testing):} Using the disjoint dataset $\mathcal{D}^{\mathrm{MHT}}$, as shown in Fig. \ref{fig:ch5_pareto_testing_procedure}, PT applies FST, as per \eqref{eq:ch2_fst_kstar}, on the candidates $\lambda_k \in \hat{\Lambda}^\mathrm{Par}$ using the order identified in Stage 1. To this end, combined p-values
\begin{equation}
\label{eq:ch5:test_pval}
    p^{\mathrm{MHT}}_k = \max_{1 \le l \le L_C} p_{l,k}^\mathrm{MHT}
\end{equation}
are computed for each $\lambda_k \in \hat{\Lambda}^{\mathrm{Par}}$ using dataset $\mathcal{D}^\mathrm{MHT}$, and FST is applied using the statistics \eqref{eq:ch5:test_pval}. This yields a certified subset $\hat{\Lambda} \subseteq \hat{\Lambda}^{\mathrm{Par}}$.

Finally, within the certified set $\hat{\Lambda}$, any hyperparameter
$\hat{\lambda} \in \hat{\Lambda}$ can be selected for deployment while satisfying the $(\alpha, \delta)$ guarantee \eqref{eq:ch5_mo_guarantee}. In order to address problem \eqref{eq:ch5_mo_problem}, the configuration $\hat{\lambda}$ is then selected by optimizing a suitable combination of the empirical estimates for the auxiliary risks $\{R_l(\lambda)\}_{l = L_C+1}^L$.

The PT procedure is summarized in Algorithm~\ref{alg:ch5_pareto_testing}. Overall, the statistical efficiency of PT over naive application of LTT to the
full candidate set $\Lambda$ stems from the fact that the Pareto set $\hat{\Lambda}^{\mathrm{Par}}$
    is typically much smaller than the full candidate set $\Lambda$, and that ordering configurations on the Pareto set places the most reliably performing configurations at the
    beginning of the testing sequence before the procedure stops at the first failure.

\begin{algorithm}[h!]
\caption{Pareto Testing (PT) \citep{laufer-goldshtein2023efficiently}}
\label{alg:ch5_pareto_testing}
\begin{algorithmic}
\STATE \textbf{Input:} Candidate set $\Lambda$; calibration data $\mathcal{D}^{\mathrm{cal}} = \mathcal{D}^{\mathrm{opt}} \cup \mathcal{D}^{\mathrm{MHT}}$; risk thresholds $\alpha = (\alpha_1, \ldots, \alpha_{L_C})$; error level $\delta$

\STATE \textbf{Output:} Certified set $\hat{\Lambda} \subseteq \Lambda$

\medskip
\STATE \textit{Stage 1: Optimization}
\STATE Compute empirical risks $\widehat{R}_l(\lambda \mid \mathcal{D}^{\mathrm{opt}})$ for all $\lambda \in \Lambda$ and $l = 1, \ldots, L$
\STATE Compute Pareto frontier $\hat{\Lambda}^{\mathrm{Par}}$ using the empirical risks $\{\widehat{R}_l(\lambda \mid \mathcal{D}^{\mathrm{opt}})\}_{l = 1}^L$
\FOR{each $\lambda_k \in \hat{\Lambda}^{\mathrm{Par}}$}
    \STATE Compute combined p-value $p^{\mathrm{opt}}_k = \max_{l=1}^{L_C} p_{l,k}^\mathrm{opt}$
\ENDFOR
\STATE Order the hyperparameters in set $\hat{\Lambda}^{\mathrm{Par}}$ by increasing
$p^{\mathrm{opt}}_k$ to obtain the ordered sequence $(\lambda^{(1)}, \lambda^{(2)}, \ldots,
\lambda^{(|\hat{\Lambda}^{\mathrm{Par}}|)})$

\medskip
\STATE $k \leftarrow 1$
\FOR{$k = 1,\ldots, |\hat{\Lambda}^{\mathrm{Par}}|$}
    \STATE Compute $p^{\mathrm{MHT}}_{(k)} = \max_{l=1}^{L_C} p_{l,(k)}^{\mathrm{MHT}}$
    for the $k$-th configuration $\lambda^{(k)}$ in the ordered sequence
    \IF{$p^{\mathrm{MHT}}_{(k)} \leq \delta$}
        \STATE Add $\lambda^{(k)}$ to $\hat{\Lambda}$
    \ELSE
        \STATE \textbf{break}
    \ENDIF
\ENDFOR
\STATE \textbf{return} the certified set $\hat{\Lambda}$ of hyperparameters
\end{algorithmic}
\end{algorithm}

\begin{figure}[t]
    \centering
    \includegraphics[width=\linewidth]{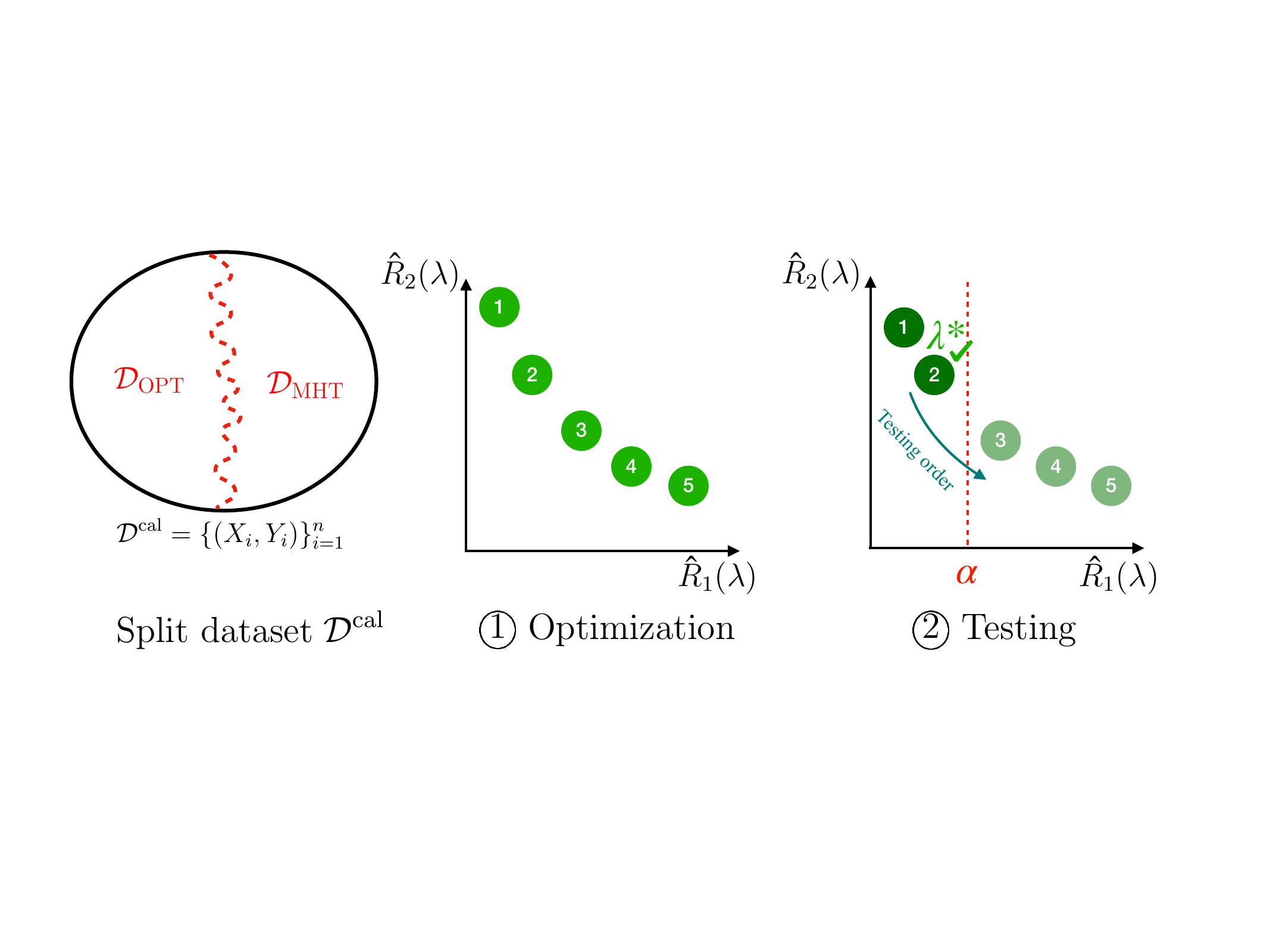}
    \caption{Illustration of the two stages of Pareto Testing (PT) for two risk functions, i.e., $L = 2$, with one reliability risk function $R_1(\lambda)$ and one auxiliary risk function $R_2(\lambda)$, i.e., $L_C = 1$. Having partitioned the dataset $\mathcal{D}^\mathrm{cal}$ into splits $\mathcal{D}^{\mathrm{opt}}$ and $\mathcal{D}^{\mathrm{MHT}}$, the empirical Pareto frontier is estimated using dataset $\mathcal{D}^{\mathrm{opt}}$,
    yielding a set of candidate configurations ordered from most to least likely to be reliable.
Sequential MHT via FST is applied to the ordered Pareto frontier
    using dataset $\mathcal{D}^{\mathrm{MHT}}$. Testing proceeds along the frontier in the predetermined
    order and stops at the first failure to reject the null hypothesis \eqref{eq:ch5_null}. Configurations tested before the first failure (dark green) are certified as reliable, and the final deployed hyperparameter
    $\lambda^*$ is selected among them by considering secondary objectives.}
    \label{fig:ch5_pareto_testing_procedure}
\end{figure}

\subsection{Statistical Guarantee}

The following result establishes that PT achieves the $(\alpha, \delta)$-risk-controlling guarantee of Definition~\ref{def:ch5_mo_rcp}.

\begin{theorem}[Validity of Pareto Testing]
\label{thm:ch5_pt_validity}
Let $\mathcal{D}^{\mathrm{opt}}$ and $\mathcal{D}^{\mathrm{MHT}}$ be disjoint i.i.d.\ subsets of the calibration data. Then, every hyperparameter $\hat{\lambda} \in \hat{\Lambda}$ returned by PT (Algorithm~\ref{alg:ch5_pareto_testing}) is $(\alpha, \delta)$-risk-controlling in the sense of Definition~\ref{def:ch5_mo_rcp}.
\end{theorem}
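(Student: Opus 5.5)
The plan is to condition on the optimization split $\mathcal{D}^{\mathrm{opt}}$, show that Stage 2 is a valid FWER-controlling procedure given this conditioning, and then remove the conditioning.

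\emph{Step 1: freeze the ordering.} Fix a realization of $\mathcal{D}^{\mathrm{opt}}$. Stage 1 depends only on $\mathcal{D}^{\mathrm{opt}}$. Hence the estimated Pareto set $\hat{\Lambda}^{\mathrm{Par}}$ and the ordering $(\lambda^{(1)},\ldots,\lambda^{(|\hat{\Lambda}^{\mathrm{Par}}|)})$ are deterministic functions of $\mathcal{D}^{\mathrm{opt}}$. Since $\mathcal{D}^{\mathrm{MHT}}$ is disjoint from $\mathcal{D}^{\mathrm{opt}}$ and the samples are i.i.d., the data in $\mathcal{D}^{\mathrm{MHT}}$ are independent of $\mathcal{D}^{\mathrm{opt}}$. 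Their conditional distribution given $\mathcal{D}^{\mathrm{opt}}$ is therefore still i.i.d.\ $\mathbb{P}$. Consequently, each $p_{l,k}^{\mathrm{MHT}}$ remains a valid p-value for $\mathcal{H}_{k,l}$ in \eqref{eq:ch5:single_hypothesis} conditionally on $\mathcal{D}^{\mathrm{opt}}$. By Lemma~\ref{lem:ch5_combined_pval}, the combined statistic $p_k^{\mathrm{MHT}}$ in \eqref{eq:ch5:test_pval} is then a conditionally valid p-value for the combined null $\mathcal{H}_k$ in \eqref{eq:ch5_null}.

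\emph{Step 2: conditional FWER control.} Conditionally on $\mathcal{D}^{\mathrm{opt}}$, Stage 2 is exactly fixed sequence testing with a predetermined order, applied to the hypotheses indexed by $\hat{\Lambda}^{\mathrm{Par}}$ and using valid p-values. I will invoke Proposition~\ref{prop:ch2_fst_fwer}, or repeat its argument directly. Let $k_0$ be the first position in the order whose null $\mathcal{H}_{(k_0)}$ is true. A family-wise error can occur only if $p^{\mathrm{MHT}}_{(k_0)}\le\delta$, and this event has conditional probability at most $\delta$. If no true null lies in $\hat{\Lambda}^{\mathrm{Par}}$, the error probability is zero. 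Hypotheses outside $\hat{\Lambda}^{\mathrm{Par}}$ are never rejected and so cannot contribute errors. This gives
\begin{equation*}
\mathbb{P}\big(\hat{\Lambda}\cap\Lambda_0\neq\emptyset \,\big|\, \mathcal{D}^{\mathrm{opt}}\big)\le\delta ,
\end{equation*}
where $\Lambda_0$ is the set of $\lambda_k$ for which \eqref{eq:ch5_null} holds.

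\emph{Step 3: uncondition and conclude.} Taking expectation over $\mathcal{D}^{\mathrm{opt}}$ by the tower property yields $\mathbb{P}(\hat{\Lambda}\cap\Lambda_0=\emptyset)\ge 1-\delta$. As in the proof of Theorem~\ref{thm:ch2_certified_set}, the event $\{\hat{\Lambda}\cap\Lambda_0=\emptyset\}$ equals the event that $R_l(\lambda)\le\alpha_l$ holds for all $l\le L_C$ and all $\lambda\in\hat{\Lambda}$. Any $\hat{\lambda}\in\hat{\Lambda}$, including one chosen in a data-dependent way from the auxiliary risks, satisfies all constraints on this event. This gives \eqref{eq:ch5_mo_guarantee}, exactly as in Corollary~\ref{cor:fwer}.

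\emph{Main obstacle.} The key point is the conditional validity in Step 1. Superuniformity must hold given $\mathcal{D}^{\mathrm{opt}}$, so that the data-dependent ordering behaves as a fixed ordering in the FST argument. This requires the independence of the two splits, which the theorem's hypothesis of disjoint i.i.d.\ subsets provides. One further detail needs care: the index $k_0$ in the FST argument is random through $\mathcal{D}^{\mathrm{opt}}$. For this reason the argument is carried out conditionally, where $k_0$ is fixed, rather than unconditionally.
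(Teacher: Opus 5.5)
Your proposal is correct and follows the same route as the paper's proof: independence of the two splits makes the Stage-1 ordering act as a predetermined order, Lemma~\ref{lem:ch5_combined_pval} gives validity of the combined $\mathcal{D}^{\mathrm{MHT}}$ p-values, and the fixed-sequence-testing FWER argument then yields the guarantee. Your explicit conditioning on $\mathcal{D}^{\mathrm{opt}}$, followed by the tower property, makes rigorous what the paper states only informally as the ordering being independent of the p-values.
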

\begin{proof}
Since datasets $\mathcal{D}^{\mathrm{opt}}$ and $\mathcal{D}^{\mathrm{MHT}}$ are disjoint and i.i.d., the ordering of the hyperparameters in set $\hat{\Lambda}^{\mathrm{Par}}$ computed over dataset $\mathcal{D}^{\mathrm{opt}}$ is independent of the p-values $p^{\mathrm{MHT}}_k$ computed over $\mathcal{D}^{\mathrm{MHT}}$. The combined p-values $p^{\mathrm{MHT}}_k$ are therefore valid for the composite null hypotheses $\mathcal{H}_k$ in (\ref{eq:ch5_null}) by Lemma~\ref{lem:ch5_combined_pval}. Applying FST at level $\delta$ over the fixed predetermined ordering then controls the FWER at level $\delta$ as proved in Section~\ref{sec:ch2_pval_mht}. Hence, with probability at least $1 - \delta$, every $\lambda \in \hat{\Lambda}$ satisfies all reliability constraints (\ref{eq:ch5_reliability}).
\end{proof}

\subsection{Example: Multi-Objective Wireless Scheduling}
\label{sec:ch5_app_wireless}

We revisit the wireless scheduling application introduced in
Sec.~\ref{sec:app_wireless} to illustrate the benefits of PT over single-objective LTT
in a multi-objective setting. Recall that the system consists of a base station allocating
radio resources to UEs across multiple QoS classes,
with the hyperparameter vector $\lambda = (\lambda_1, \lambda_2, \lambda_3, \lambda_4)$
controlling the weights of different scheduling criteria.

We consider a two-objective problem with $L_C = 2$ and $L = 2$. The first
reliability risk function $R_1(\lambda)$ is the average packet delay of UEs in QoS
class 1, which must be controlled below the threshold $\alpha_1 = 10$ ms. The second
reliability risk function $R_2(\lambda)$ is the average packet delay of UEs in QoS
class 2, which must be controlled below the level $\alpha_2 = 12$ ms. The target failure
probability is $\delta = 0.1$.

We compare three procedures: PT, which controls both constraints
simultaneously; LTT controlling only $R_1(\lambda) \le \alpha_1$; and LTT controlling only $R_2(\lambda) \le \alpha_2$. Fig.~\ref{fig:ch5_wireless_pt}
shows the joint distribution of the risks $(R_1(\hat{\lambda}), R_2(\hat{\lambda}))$
of the selected hyperparameter over 100 independent calibration splits for each method,
with the reliability thresholds $\alpha_1$ and $\alpha_2$ shown as dashed lines.

Reflecting their theoretical performance, the first version of LTT successfully controls the risk $R_1(\lambda)$ below the target value
$\alpha_1$, but provides no guarantee on risk $R_2(\lambda)$, with a fraction $0.83$
of runs exceeding the target $\alpha_2$. Similarly, the second instantiation of LTT controls the risk $R_2(\lambda)$ but not
$R_1(\lambda)$, on which it exhibits a violation rate of $0.87$. In contrast, PT controls both risks simultaneously, with the joint
distribution concentrated in the safe region defined by the constraints $R_1(\lambda) \le \alpha_1$
and $R_2(\lambda) \le \alpha_2$.

\begin{figure}[t]
    \centering
    \includegraphics[width=\linewidth]{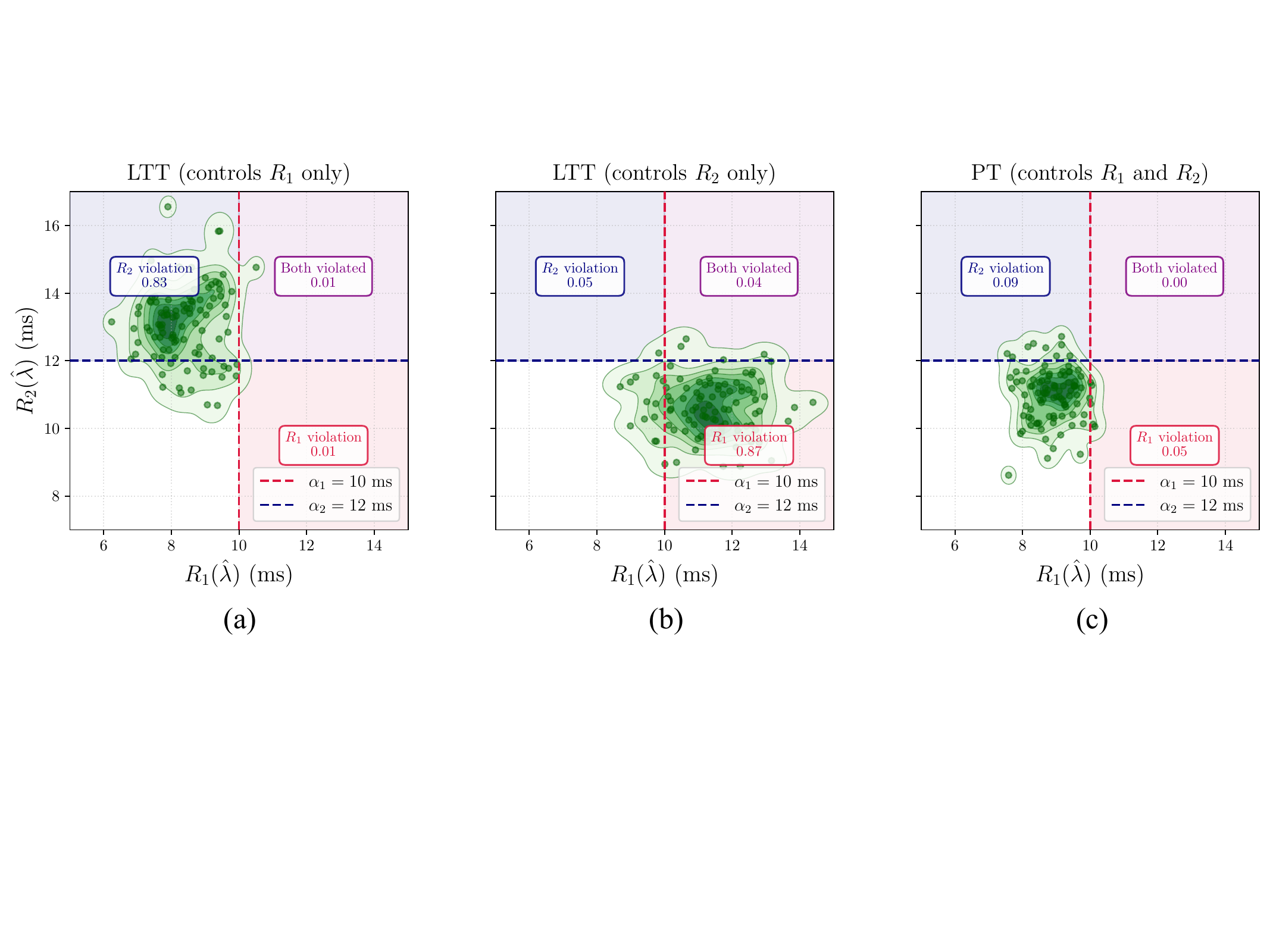}
    \caption{Joint distribution of the risks $(R_1(\hat{\lambda}), R_2(\hat{\lambda}))$
    of the selected hyperparameter over 100 calibration splits in the wireless scheduling
    setting of Sec.~\ref{sec:app_wireless}, for three methods: (a) LTT controlling only
    $R_1(\lambda)$; (b) LTT controlling only $R_2(\lambda)$; and (c) PT
    controlling both $R_1(\lambda)$ and $R_2(\lambda)$ simultaneously. Dashed lines mark the reliability thresholds
    $\alpha_1$ and $\alpha_2$. The shaded region beyond the threshold lines corresponds
    to reliability violations. The fraction of runs violating each reliability constraint is annotated within the corresponding shaded region.}
    \label{fig:ch5_wireless_pt}
\end{figure}

\section{Reliability Graph-Based Pareto Testing}
\label{sec:ch5_rg}

As explained in the previous section, PT orders hyperparameters
based on held-out data. This ordering may be imprecise
when such data are limited, and it fails to capture richer structural
relationships that may be known to exist among hyperparameters.

In fact, in many practical settings, structural knowledge about
relative reliability is available before any calibration data are observed.
For example, longer, or more detailed, prompt templates in LLMs are generally
expected to be more reliable; less sparse model configurations are typically
more expressive; and hyperparameter configurations with larger regularization
coefficients tend to be more conservative. Such relationships define
a partial ordering over the hyperparameter space in terms of expected reliability that can be exploited to
improve the efficiency of the testing procedure.

\emph{Reliability Graph-Based Pareto Testing} (RG-PT)
\citep{farzaneh2025multiobjective} extends PT by encoding this structural
knowledge in a directed acyclic graph (DAG), referred to as the \emph{reliability
graph} (RG). In an RG, each node represents a
candidate hyperparameter $\lambda_i$ and a directed edge from node $\lambda_i$ to node $\lambda_j$
encodes the expectation that hyperparameter $\lambda_i$ is more reliable than hyperparameter $\lambda_j$.
The acyclicity of the graph is a natural requirement: reliability relationships
cannot be circular. Moving from root nodes towards leaves, one encounters
hyperparameters of decreasing expected reliability. All nodes at the same depth
level of the DAG are considered to have approximately the same expected
reliability. An example is shown in Fig. \ref{fig:ch5_reliability_graph}, and is detailed in Sec. \ref{sec:ch5_app_prompt}. An outline of the algorithm can be found in Fig. \ref{fig:ch5_rgpt_steps}.

\begin{figure}[t]
    \centering
    \includegraphics[width=0.85\linewidth]{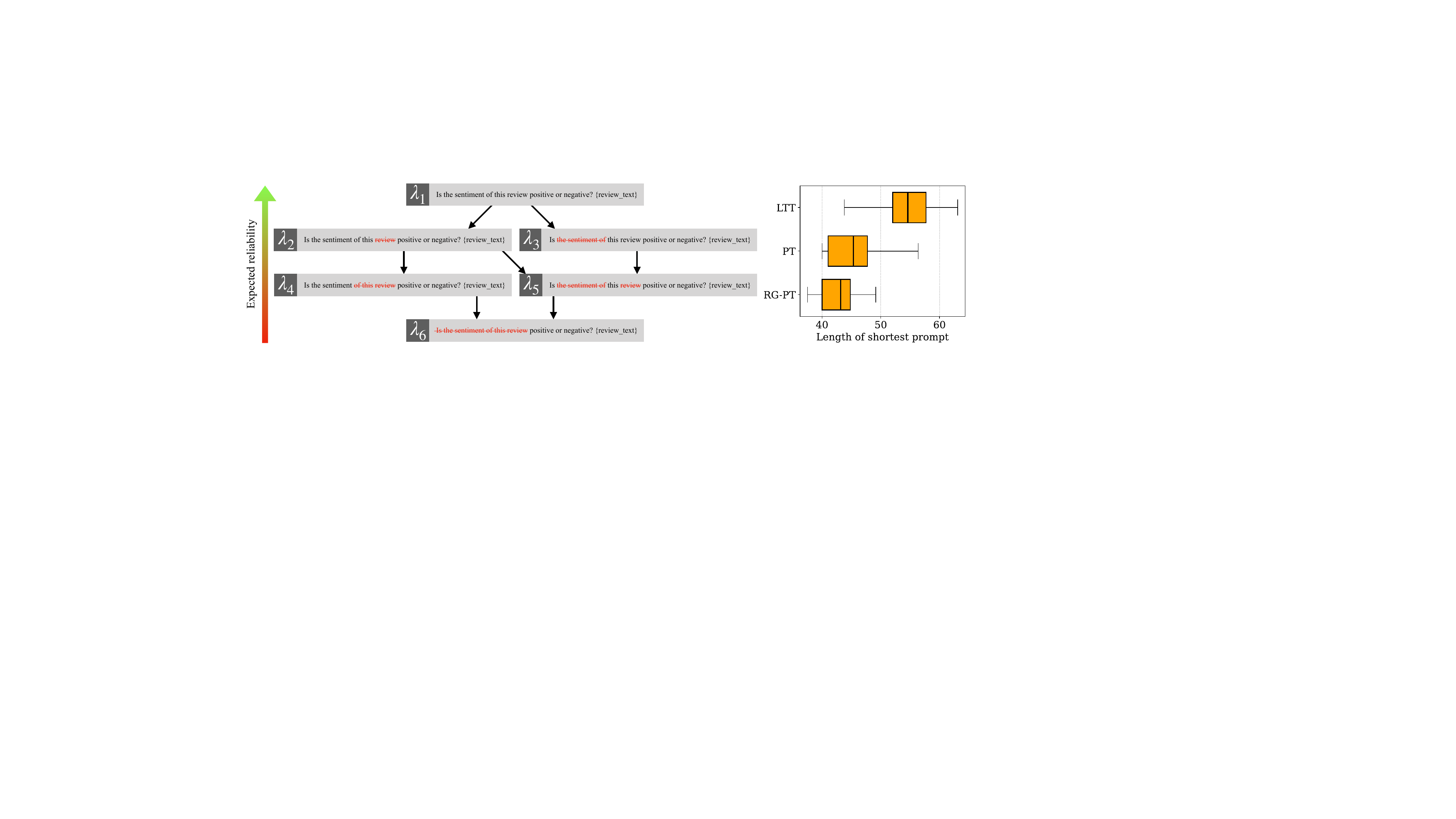}
    \caption{Illustration of a reliability graph (RG) for prompt template
    selection in an LLM-based sentiment analysis task
    \citep{farzaneh2025multiobjective}. Each node $\lambda_i$ represents a
    candidate prompt template, and a directed edge from $\lambda_i$ to
    $\lambda_j$ encodes the prior expectation that $\lambda_i$ is more
    reliable than $\lambda_j$, i.e., more likely to elicit a correct response
    from the LLM. Prompt templates closer to the root nodes are longer and
    more detailed, and are thus expected to be more reliable. Templates closer
    to the leaves are shorter and cheaper in terms of token cost, but are
    expected to be less reliable.}
    \label{fig:ch5_reliability_graph}
\end{figure}

As detailed in \citep{farzaneh2025multiobjective}, the RG is constructed from the Pareto-optimal subset
$\hat{\Lambda}^{\mathrm{Par}}$ using the held-out data split
$\mathcal{D}^{\mathrm{opt}}$ and optional prior information. Prior information is expressed in terms of preferences between pairs of hyperparameters. Depth levels in the RG
are assigned by combining data-driven pairwise p-value comparisons
with the prior pairwise preferences into a scalar reliability score per
hyperparameter via the Bradley-Terry (BT) model \citep{hunter2004mm}. Directed edges between adjacent depth levels are then learned
via non-negative Lasso regression \citep{tibshirani1996regression}, selecting
as parents of each node the hyperparameters at the previous depth level that
are most predictive of its reliability.

As shown in Fig. \ref{fig:ch5_rgpt_steps}, once the RG is constructed, RG-PT performs
MHT via the DAGGER algorithm \citep{ramdas2017dagger}, which operates on DAGs.
DAGGER begins testing at the root nodes and proceeds level by level. If a
hyperparameter is deemed unreliable, all of its descendants are automatically
excluded from further testing. This allows multiple promising paths to be
explored in parallel, unlike the single linear
sequence followed by PT. DAGGER can control the FDR, as defined in Sec. \ref{sec:FDR}, at any given target level $\delta$.

The procedure is summarized in Algorithm~\ref{alg:ch5_rgpt}
and illustrated in Fig.~\ref{fig:ch5_rgpt_steps}. The following result
states its formal guarantee.

\begin{algorithm}[t]
\caption{Reliability Graph-Based Pareto Testing (RG-PT)
\citep{farzaneh2025multiobjective}}
\label{alg:ch5_rgpt}
\begin{algorithmic}
\STATE \textbf{Input:} Candidate set $\Lambda$; calibration data
$\mathcal{D}^{\mathrm{cal}} = \mathcal{D}^{\mathrm{opt}} \cup
\mathcal{D}^{\mathrm{MHT}}$; risk thresholds $\alpha$;
FDR level $\delta$; number of DAG levels $D$; prior parameters
$(n_p, \{\eta_{ij}\})$
\STATE \textbf{Output:} Certified set $\hat{\Lambda} \subseteq \Lambda$
\medskip
\STATE \textit{Stage 1:} Compute $\hat{\Lambda}^{\mathrm{Par}} \subseteq
\Lambda$ via multi-objective optimization using $\mathcal{D}^{\mathrm{opt}}$
\medskip
\STATE \textit{Stage 2:} Learn the RG over set $\hat{\Lambda}^{\mathrm{Par}}$
using dataset $\mathcal{D}^{\mathrm{opt}}$ and prior preferences via
the Bradley-Terry model and non-negative Lasso
\medskip
\STATE \textit{Stage 3:} Run DAGGER on the RG using dataset $\mathcal{D}^{\mathrm{MHT}}$
with combined p-values (\ref{eq:ch5_combined_pval}) to obtain
$\hat{\Lambda}$
\end{algorithmic}
\end{algorithm}

\begin{figure}[t]
    \centering
    \includegraphics[width=\linewidth]{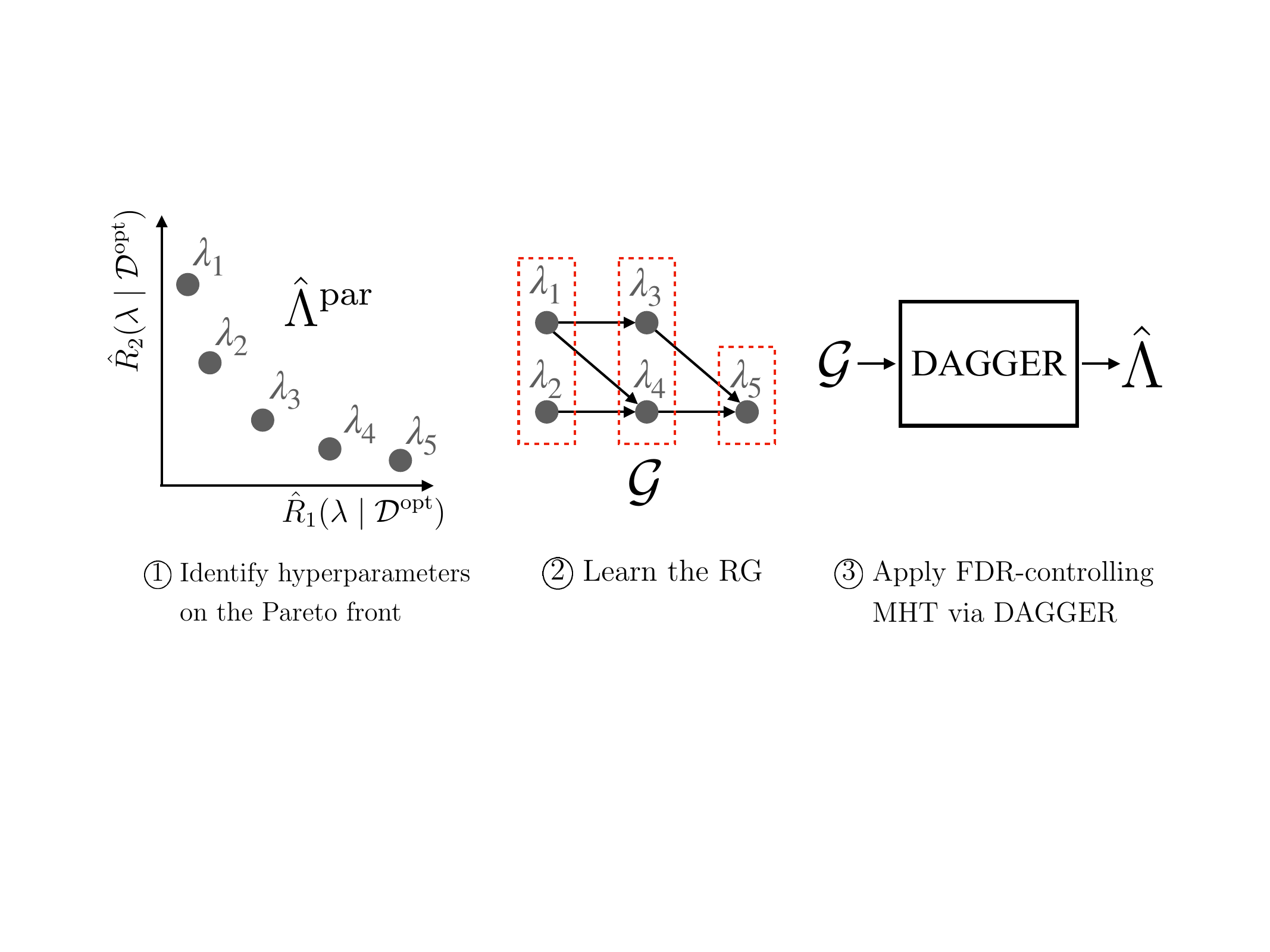}
    \caption{Illustration of the three stages of RG-PT.
    \textcircled{1} The Pareto-optimal subset $\hat{\Lambda}^{\mathrm{Par}}$
    is identified from the full candidate set $\Lambda$ using
    dataset $\mathcal{D}^{\mathrm{opt}}$, retaining only the hyperparameters that
    represent the best achievable trade-offs between the empirical risk
    estimates $\widehat{R}_1(\lambda \mid \mathcal{D}^{\mathrm{opt}})$ and
    $\widehat{R}_2(\lambda \mid \mathcal{D}^{\mathrm{opt}})$.
    \textcircled{2} The reliability graph $\mathcal{G}$ is learned over set
    $\hat{\Lambda}^{\mathrm{Par}}$ using dataset $\mathcal{D}^{\mathrm{opt}}$ and
    optional prior information. Nodes correspond to candidate hyperparameters,
    and directed edges encode inferred reliability dependencies (see Fig. \ref{fig:ch5_reliability_graph}).
    Hyperparameters at the same depth level, indicated by dashed red boxes,
    are considered to have approximately the same expected reliability, with
    reliability decreasing from left to right.
    \textcircled{3} The DAGGER algorithm \citep{ramdas2017dagger} is applied to the reliability graph
    $\mathcal{G}$ using dataset $\mathcal{D}^{\mathrm{MHT}}$, returning the certified
    set $\hat{\Lambda}$ of hyperparameters controlling the FDR \eqref{eq:ch2_fdr_def} at the target level $\delta$.}
    \label{fig:ch5_rgpt_steps}
\end{figure}

\begin{theorem}[FDR control of RG-PT \citep{farzaneh2025multiobjective}]
\label{thm:ch5_rgpt_fdr}
The set $\hat{\Lambda}$ returned by RG-PT meets the FDR constraint
\begin{equation}
\label{eq:ch5_rgpt_fdr}
\mathbb{E}\!\left[
\frac{|\hat{\Lambda} \cap \Lambda_0|}{\max\{1, |\hat{\Lambda}|\}}
\right] \le \delta,
\end{equation}
where $\Lambda_0 = \{\lambda \in \Lambda : \exists\, l \text{ s.t. }
R_l(\lambda) > \alpha_l\}$ is the set of unreliable hyperparameters. This result holds regardless of the accuracy of the prior
information or the quality of the learned RG structure.
\end{theorem}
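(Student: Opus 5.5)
The proof will condition on the optimization split and then invoke the FDR guarantee of DAGGER \citep{ramdas2017dagger} for a fixed DAG. Let $\mathcal{G}$ denote the reliability graph. It is built in Stages 1–2 of Algorithm~\ref{alg:ch5_rgpt}, so it is a deterministic function of $\mathcal{D}^{\mathrm{opt}}$ together with the prior preferences, which are fixed before any data are observed. The same holds for the node set $\hat{\Lambda}^{\mathrm{Par}}$. The testing stage uses only $\mathcal{D}^{\mathrm{MHT}}$, which is disjoint from and independent of $\mathcal{D}^{\mathrm{opt}}$.

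\textbf{Step 1: conditional validity of the p-values.} Conditionally on $\mathcal{D}^{\mathrm{opt}}$, the graph $\mathcal{G}$ is fixed and the data in $\mathcal{D}^{\mathrm{MHT}}$ keep their i.i.d.\ law. Each $p_{l,k}^{\mathrm{MHT}}$ is therefore a valid p-value for $\mathcal{H}_{k,l}$ under this conditional law. By Lemma~\ref{lem:ch5_combined_pval}, the combined p-values $p_k^{\mathrm{MHT}}$ are then superuniform under the composite null~\eqref{eq:ch5_null} for every $\lambda_k\in\Lambda_0\cap\hat{\Lambda}^{\mathrm{Par}}$. Restricting attention to $\hat{\Lambda}^{\mathrm{Par}}$ loses nothing, since $\hat{\Lambda}\subseteq\hat{\Lambda}^{\mathrm{Par}}$ and hypotheses outside the graph are never rejected.

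\textbf{Step 2: DAGGER on the fixed graph.} The p-values for different nodes are computed from the same $\mathcal{D}^{\mathrm{MHT}}$ and are therefore dependent. DAGGER guarantees FDR $\le\delta$ on any fixed DAG, whatever the structure, for superuniform null p-values. Its guarantee holds under independence or PRDS, and under arbitrary dependence when the reshaped (BY-type) thresholds are used.

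This step is the main obstacle. I would handle it by stating that RG-PT runs DAGGER in the version matching the available dependence assumption, namely the reshaped thresholds under arbitrary dependence, and then quoting the corresponding theorem. For intuition on why structure is irrelevant, I would recall the core of DAGGER's proof. DAGGER is a step-up rule with thresholds $\delta\,\ell\,|\mathcal{R}|/(K\,\cdot)$ adjusted by node depth and descendant counts. It satisfies the self-consistency condition $p_k\le \delta\, w_k |\hat{\Lambda}|/K$ with weights summing to $K$. Combining this with the standard lemma $\mathbb{E}[\mathbb{I}\{p\le c\,R\}/R]\le c$ for superuniform $p$ and a reshaped, or PRDS-monotone, $R$ gives the bound
\begin{equation*}
\mathbb{E}\!\left[\frac{|\hat{\Lambda}\cap\Lambda_0|}{\max\{1,|\hat{\Lambda}|\}}\,\middle|\,\mathcal{D}^{\mathrm{opt}}\right]\le \sum_{k\in\Lambda_0}\frac{\delta w_k}{K}\le\delta .
\end{equation*}

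\textbf{Step 3: remove the conditioning and address robustness.} The conditional bound holds for every realization of $\mathcal{D}^{\mathrm{opt}}$. Applying the tower property then yields~\eqref{eq:ch5_rgpt_fdr} unconditionally.

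The robustness claim follows because the guarantee from Step 2 holds uniformly over all DAGs. Inaccurate prior preferences or poorly learned Bradley–Terry and Lasso edges can only lower power; they cannot inflate the FDR.
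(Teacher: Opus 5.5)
Your proposal is correct and takes the same route as the paper, whose proof is a one-line appeal to DAGGER's FDR guarantee. Two of your additions make explicit what that citation leaves implicit: conditioning on $\mathcal{D}^{\mathrm{opt}}$, so that the reliability graph is fixed and independent of $\mathcal{D}^{\mathrm{MHT}}$ (as in the proof of Theorem~\ref{thm:ch5_pt_validity}), and noting that p-values computed on a shared $\mathcal{D}^{\mathrm{MHT}}$ call for DAGGER's reshaped thresholds.

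One caution: your heuristic account of DAGGER's proof, as a self-consistency condition $p_k\le\delta w_k|\hat{\Lambda}|/K$ with nonnegative weights summing to $K$, is not accurate. On a line graph DAGGER tests the root at the full level $\delta$, even though the root may end up as the only rejection. That would force the root's weight to be at least $K$, yet later nodes also receive positive thresholds. Since your argument rests on quoting DAGGER's theorem rather than on that sketch, nothing breaks.
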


The proof follows directly from the FDR-controlling
property of DAGGER \citep{ramdas2017dagger}.

\subsection{Example: Reliable Prompt Engineering via RG-PT}
\label{sec:ch5_app_prompt}

We illustrate RG-PT on a prompt engineering problem for
LLM-based sentiment analysis, following \citep{farzaneh2025multiobjective}.
Each data point $Z = (X, Y)$ consists of a movie review $X$ and a sentiment
label $Y \in \{\text{positive}, \text{negative}\}$ from the Stanford
Sentiment Treebank \citep{socher2013recursive}. A prompt template $\lambda$
is prepended to the review and passed to the LLaMA3-8B-Instruct model
\citep{grattafiori2024llama}. The reliability constraint requires the
average 0-1 classification loss to satisfy the inequality $R_{\mathrm{prompt}}(\lambda)
\le \alpha = 0.2$, while the auxiliary objective is to minimize prompt
length, reflecting inference cost under pay-per-token billing. The candidate
set $\Lambda$ contains 100 instruction-style templates of varying length, which are
generated by LLaMA3-70B-Instruct \citep{grattafiori2024llama} using the
forward generation procedure of \citep{zhou2022large}. The calibration
dataset contains $|\mathcal{D}^{\mathrm{opt}}| = |\mathcal{D}^{\mathrm{MHT}}|
= 1000$ examples each, and an independent test set of 1000 examples is used
for evaluation. The target FDR level is $\delta = 0.1$.

Prior information is incorporated via the LLM-as-a-judge
framework \citep{gu2024survey}: GPT-4 Turbo is queried to assess which of
each pair of prompt templates is more likely to elicit a correct response,
yielding pairwise preferences for the Bradley-Terry model. Within
the certified set $\hat{\Lambda}$, the shortest prompt is selected for
deployment.

Fig.~\ref{fig:ch5_prompt_length} reports the distribution
of the length of the shortest certified prompt over 100 independent calibration
splits. RG-PT consistently identifies shorter reliable prompts than both LTT
and PT, while all three methods satisfy the FDR constraint at the target level
$\delta = 0.1$. This improvement stems from RG-PT's more efficient exploration
of the hyperparameter space along reliability-informed paths encoded in the RG:
by testing hyperparameters in an order informed by the prior reliability graph,
RG-PT can certify shorter prompts that LTT and PT, which either ignore structure
entirely or use only a linear ordering, fail to reach within the same testing
budget.

\begin{figure}[t]
    \centering
    \includegraphics[width=0.5\linewidth]{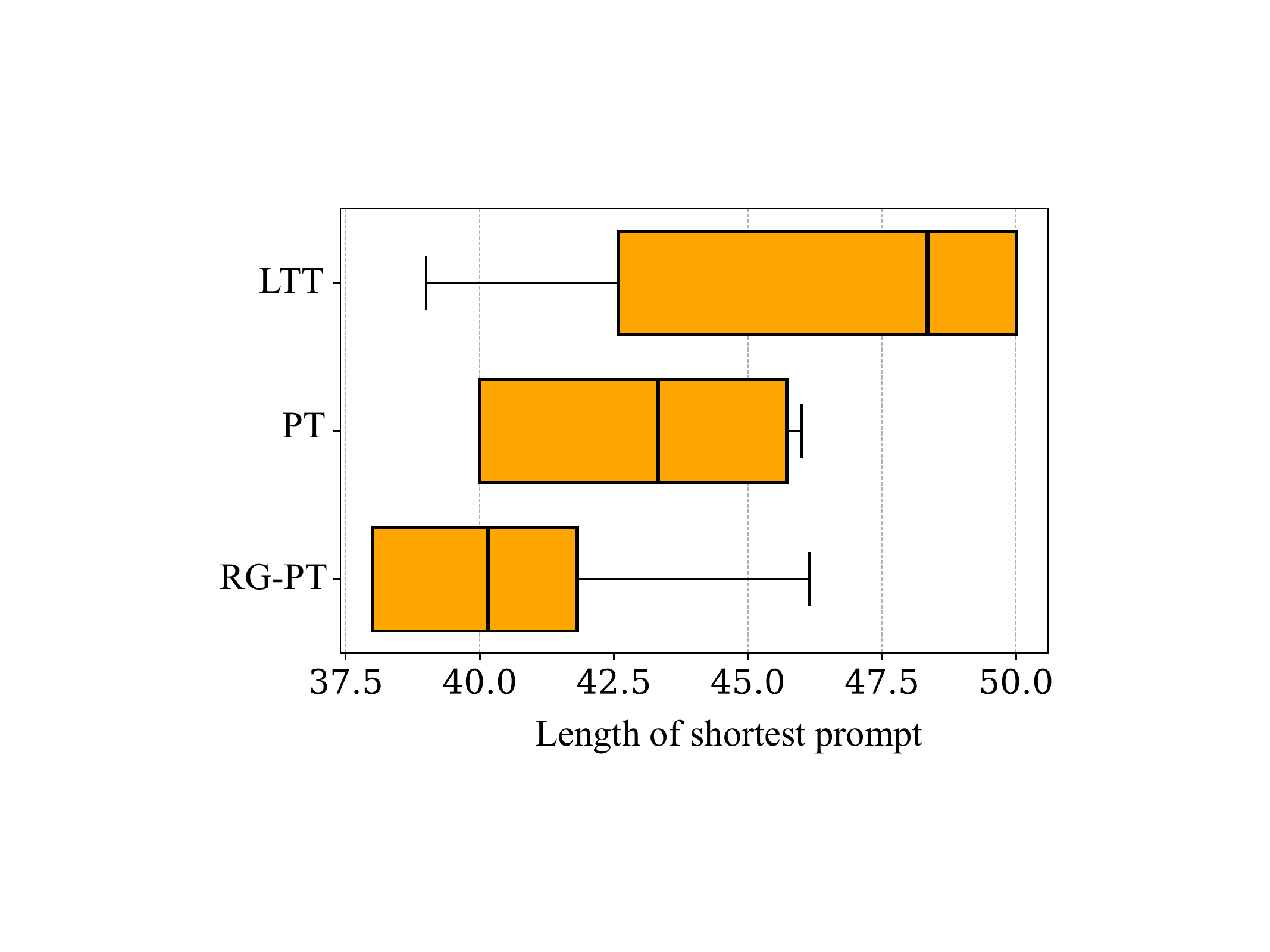}
    \caption{Distribution of the length of the shortest certified prompt
    template returned by LTT, PT, and RG-PT over 100 independent calibration
    splits for the LLM-based sentiment analysis task on the Stanford Sentiment
    Treebank dataset \citep{socher2013recursive}. All methods satisfy the FDR
    constraint at level $\delta = 0.1$. RG-PT identifies shorter reliable
    prompts by leveraging the prior reliability structure encoded in the
    reliability graph, constructed using the LLM-as-a-judge framework
    \citep{gu2024survey}.}
    \label{fig:ch5_prompt_length}
\end{figure}

\section{Summary}
\label{sec:ch5_comparison}

This chapter extended the LTT framework to multi-objective hyperparameter selection, where the practitioner must simultaneously satisfy $L_C$ reliability constraints and optimize $L - L_C$ auxiliary objectives.

The three methods presented in this chapter---LTT, PT, and RG-PT--- differ along several dimensions, as summarized next and in Table \ref{tab:ch5_comparison}.

\paragraph{Testing scope:} LTT applies an MHT procedure such as Bonferroni or BH to all $K = |\Lambda|$ candidate hyperparameters, without any ordering or restriction. PT restricts testing to the empirical Pareto frontier $\hat{\Lambda}^{\mathrm{Par}} \subseteq \Lambda$ and applies FST along a single linear ordering. RG-PT further structures the testing over a DAG, enabling parallel exploration of multiple reliability paths.

\paragraph{Use of prior information:} LTT and PT do not use prior structural information about relative reliability. RG-PT explicitly incorporates such information via prior pairwise preferences in the Bradley-Terry model, while the validity guarantee is maintained regardless of the quality of these priors.

\paragraph{Error control:} All three methods provide formal error guarantees. LTT and PT, as described in \citep{laufer-goldshtein2023efficiently}, can be configured to control either FWER or FDR. RG-PT controls the FDR at the specified level $\delta$. FDR control is generally less conservative than FWER control, enabling larger certified sets at the cost of allowing a small expected fraction of unreliable configurations in $\hat{\Lambda}$.

\paragraph{Multi-objective optimization:} LTT can accommodate
multiple reliability constraints via the combined p-value (\ref{eq:ch5_combined_pval}),
but does not exploit the multi-objective structure of the problem. In particular, it
tests all $K$ candidates without regard for dominance relationships among them, and
does not actively optimize any auxiliary objectives beyond what a secondary criterion
applied to the certified set $\hat{\Lambda}$ can achieve. PT and RG-PT address both
of these limitations: by restricting testing to the Pareto frontier, they eliminate
dominated configurations before testing begins, and by ordering or structuring the
remaining tests, they recover certified configurations that are also efficient with
respect to the auxiliary objectives.

\begin{table}[t]
\centering
\renewcommand{\arraystretch}{1.3}
\resizebox{\columnwidth}{!}{%
\begin{tabular}{lllll}
\hline
\textbf{Property} & \textbf{LTT} & \textbf{PT} & \textbf{RG-PT} \\
\hline
Multi-objective support & No$^*$ & Yes & Yes \\
Prior structure & No & Linear order & DAG \\
Testing scope & Full $\Lambda$ & Pareto frontier & Pareto frontier \\
Error criterion & FWER or FDR & FWER or FDR & FDR \\
\hline
\end{tabular}%
}
\caption{Comparison of LTT, PT, and RG-PT along key dimensions. ($^*$LTT can
accommodate multiple reliability constraints via the combined p-value
(\ref{eq:ch5_combined_pval}), but does not exploit multi-objective structure
to improve efficiency or optimize auxiliary objectives.)}
\label{tab:ch5_comparison}
\end{table}


\chapter{Adaptive and Sequential Hyperparameter Selection}
\label{chapter:adaptive}

The LTT framework introduced in Chapter~\ref{chapter:mht}
and its multi-objective extensions in Chapter~\ref{chapter:multi_objective} share a common
pipeline: (\textit{i}) the calibration data are collected in advance; (\textit{ii}) all candidate
hyperparameters are evaluated on the same fixed dataset $\mathcal{D}^\mathrm{cal}$; and (\textit{iii}) the testing procedure is
applied once to produce a certified set $\hat{\Lambda}$ of hyperparameters.
This batch-based operation is well-suited to settings where data are
plentiful and testing is cheap, but it becomes costly or impractical when data acquisition is expensive or the number of candidates is large.

Consider, for example, the problem of selecting a control policy
for a robotic system via offline reinforcement learning \citep{levine2020offline}.
Policies trained on offline data must be validated through online interaction with the
real environment before deployment \citep{paine2020hyperparameter}. Each such
interaction episode consumes physical resources, may cause wear on hardware, and in
safety-critical settings such as autonomous driving or robotic surgery can pose direct
risk of harm \citep{garcia2015comprehensive}. Reducing the number of episodes used for policy selection is thus of critical importance. Instead of evaluating all $K$ candidate policies on
a fixed budget of episodes, as LTT would require, it would be useful to evaluate policies adaptively, allocating evaluation episodes only to the most promising policies.

As another example, consider automated prompt engineering for an LLM deployed
under pay-per-token billing \citep{zhou2022large}. A set of $K$ candidate prompt
templates is generated, and each template must be evaluated on held-out queries by
executing the LLM, with each execution incurring a direct and measurable monetary cost
\citep{bommasani2021opportunities}. With $K$ ranging into the hundreds in realistic
pipelines \citep{zhou2022large}, and with each evaluation potentially consuming
thousands of tokens, the total cost of exhaustively testing all candidates on a fixed
calibration dataset can be prohibitive. It is far more economical to concentrate
evaluations on the templates that have shown the most promise in early rounds, and to
stop as soon as a sufficiently reliable subset has been identified.

The principle of adaptive evaluation is already implemented, in different forms, by several existing HPO methods. Successive halving \citep{jamieson2016non} and Hyperband \citep{li2018hyperband} allocate computational resources adaptively across configurations, terminating poorly performing candidates early and concentrating the training budget on the most promising ones. Multi-fidelity Bayesian optimization extends this idea by leveraging a surrogate model of the validation performance to decide where to evaluate next \citep{shahriari2016taking}, while \emph{freeze-thaw} Bayesian optimization explicitly models partial learning curves to decide which training runs to pause, resume, or discard \citep{franceschi2025hpo_fnt}.

The guarantees offered by these methods, when available, concern the efficient identification of the empirical optimum, rather than the reliability of the deployed configuration. For example, successive halving and Hyperband come with best-arm-identification and simple-regret bounds under suitable assumptions on the loss curves, ensuring that, with enough budget, a near-empirically-optimal configuration is returned \citep{jamieson2016non,li2018hyperband}. Freeze-thaw and related learning-curve-based schemes are largely heuristic, and their convergence properties typically rest on modelling assumptions about the learning curves rather than on finite-sample concentration arguments \citep{franceschi2025hpo_fnt}. Crucially, however, as in conventional HPO discussed in Chapter~\ref{chapter:intro}, none of these adaptive methods provides finite-sample statistical guarantees that the selected hyperparameter satisfies a user-specified reliability requirement of the form (\ref{eq:ch1_guarantee}).

This chapter introduces an adaptive extension of LTT, called
\emph{adaptive LTT} (aLTT), introduced in \citep{zecchin2024adaptive}. aLTT replaces
the static, batch hypothesis testing of LTT with a sequential, data-dependent procedure
that (\textit{i}) selects which hyperparameters to test next based on evidence accumulated in
prior rounds, and (\textit{ii}) terminates early as soon as a target number of reliable
configurations have been certified. The statistical validity of the procedure, in terms
of FWER or FDR control, is maintained throughout due to e-process-based testing, which supports anytime-valid
inference.

The chapter is organized as follows. Section~\ref{sec:ch6_sequential_setting} motivates and formalizes the
sequential calibration setting.
Sec.~\ref{sec:ch6_eprocess} introduces e-processes and anytime-valid p-values, the
central statistical tools underlying aLTT.
Sec.~\ref{sec:ch6_altt} presents the aLTT algorithm.
Sec.~\ref{sec:ch6_app_wireless} illustrates aLTT on the wireless scheduling
application of Sec.~\ref{sec:app_wireless}, adapted to the sequential calibration setting.

\section{Sequential Hyperparameter Selection}
\label{sec:ch6_sequential_setting}

As illustrated in Fig. \ref{fig:ch6_motivation}(a), under the LTT framework reviewed in Chapter~\ref{chapter:mht}, all
$K = |\Lambda|$ candidate hyperparameters are fully tested using the entire
calibration dataset $\mathcal{D}^{\mathrm{cal}}$.
The MHT procedure is then applied once to the resulting p-values to produce a certified
set $\hat{\Lambda}$.

As discussed, this batch design may be excessively costly in scenarios where one wishes to reduce data usage, as well as the number of evaluation rounds. To address such settings, as shown in Fig. \ref{fig:ch6_motivation}(b), aLTT \citep{zecchin2024adaptive} operates in a feedback loop: at each
round $t$, it evaluates the evidence accumulated so far and asks whether that
evidence is sufficient to produce a certified set of the desired size. If not,
it requests a new batch of calibration data, updates the evidence, and
repeats. When sufficient evidence has been gathered, the loop exits and
$\hat{\Lambda}$ is returned.

\begin{figure}[t]
    \centering
    \includegraphics[width=0.75\linewidth]{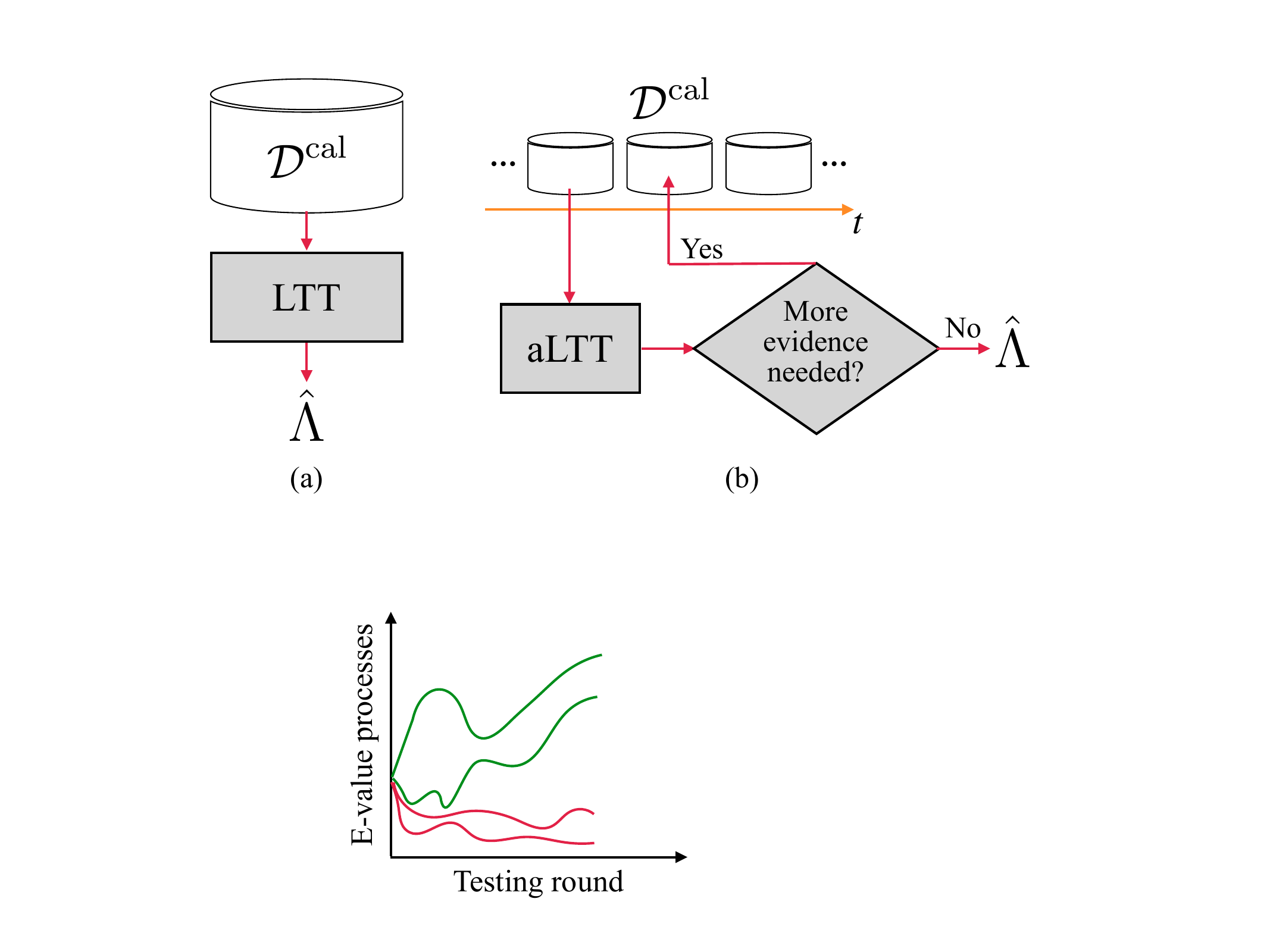}
    \caption{Conceptual comparison of LTT and aLTT.
    (a) LTT is a single-pass batch procedure: a fixed calibration dataset
    $\mathcal{D}^{\mathrm{cal}}$ is collected once, processed by the LTT
    procedure, and a certified set $\hat{\Lambda}$ is returned.
    (b) aLTT is a sequential procedure in which calibration data arrive
    incrementally over rounds $t = 1, 2, \ldots$ At each round, aLTT
    evaluates the accumulated evidence and decides whether more data are
    needed. If so, a new data batch is requested and the available evidence is updated.
    The procedure terminates and returns $\hat{\Lambda}$ as soon as sufficient
    evidence has been gathered.}
    \label{fig:ch6_motivation}
\end{figure}

To formalize, as throughout this monograph, let
$\Lambda = \{\lambda_1, \ldots, \lambda_K\}$
be a finite candidate set.  For each hyperparameter $\lambda \in \Lambda$, the risk
\begin{equation}
R(\lambda) = \mathbb{E}[L_\lambda(Z)]
\end{equation}
is defined with respect to an unknown data-generating distribution $\mathbb{P}$,
where $Z$ is a generic test-time data point. As in
Definition~\ref{def:ch2_rcp}, the reliability threshold $\alpha$ and
failure probability $\delta$ are specified by the user.

Sequential hyperparameter selection proceeds in discrete rounds
$t = 1, 2, \ldots$ In each round $t$, the procedure executes the following five steps.

\textbf{\textcircled{1} Hyperparameter selection:} An \emph{acquisition
policy} $\mathcal{Q}_t$ selects a subset $\mathcal{I}^t \subseteq \Lambda$ of candidates to be
tested in round $t$. The policy $\mathcal{Q}_t$ may be \emph{adaptive}, meaning that it may
depend on the evidence $\mathcal{D}^{t-1}$ collected in all previous rounds, or
\emph{non-adaptive}, meaning that the set $\mathcal{I}^t$ is fixed in advance.

\textbf{\textcircled{2} Testing:} A batch of data samples $\mathcal{Z}^t \sim \mathbb{P}$ are drawn independently across rounds, and the empirical loss
$L_{\lambda_i}(\mathcal{Z}^t)$ is evaluated.

\textbf{\textcircled{3} Evidence update:} The evidence accumulated for
each candidate hyperparameter is updated using the newly observed empirical losses.
\begin{equation}
\label{eq:evidence_t}
\mathcal{D}^t = \mathcal{D}^{t-1} \cup
\bigl\{(\mathcal{I}^t,\, \{L_{\lambda_i}(Z_i^t)\}_{\lambda_i \in \mathcal{I}^t})\bigr\},
\end{equation}

\textbf{\textcircled{4} Certification:} A decision rule $\mathcal{A}$
is applied to the current evidence to produce a candidate certified set $\hat{\Lambda}^t$.
Depending on the desired error criterion, the decision rule $\mathcal{A}$ can be either FWER-controlling or FDR-controlling.

\textbf{\textcircled{5} Stopping check:} If $|\hat{\Lambda}^t| \ge d$
for a user-specified minimum certified set size $d \ge 1$, or if the maximum number of
rounds $t_{\max}$ has been reached, the procedure terminates and returns
$\hat{\Lambda} = \hat{\Lambda}^t$. Otherwise, time index $t$ is incremented and the procedure
returns to \textcircled{1}.

The entire sequential procedure is thus specified by the tuple
$(\{\mathcal{Q}_t\}_{t \geq 1},\allowbreak\, \mathcal{A},\allowbreak\, T)$, where $\{\mathcal{Q}_t\}_{t \geq 1}$ is the
acquisition policy, $\mathcal{A}$ is the decision rule that maps accumulated evidence
to a certified set, and $T$ is a stopping time that may itself depend on the evidence
accumulated so far (adaptive stopping) or be fixed in advance (non-adaptive stopping).

As described in the remainder of this chapter, aLTT implements adaptive HPO by leveraging the statistical tool of e-processes (see Fig. \ref{fig:ch6_sequential_loop}).

\begin{figure}[t]
    \centering
    \includegraphics[width=\linewidth]{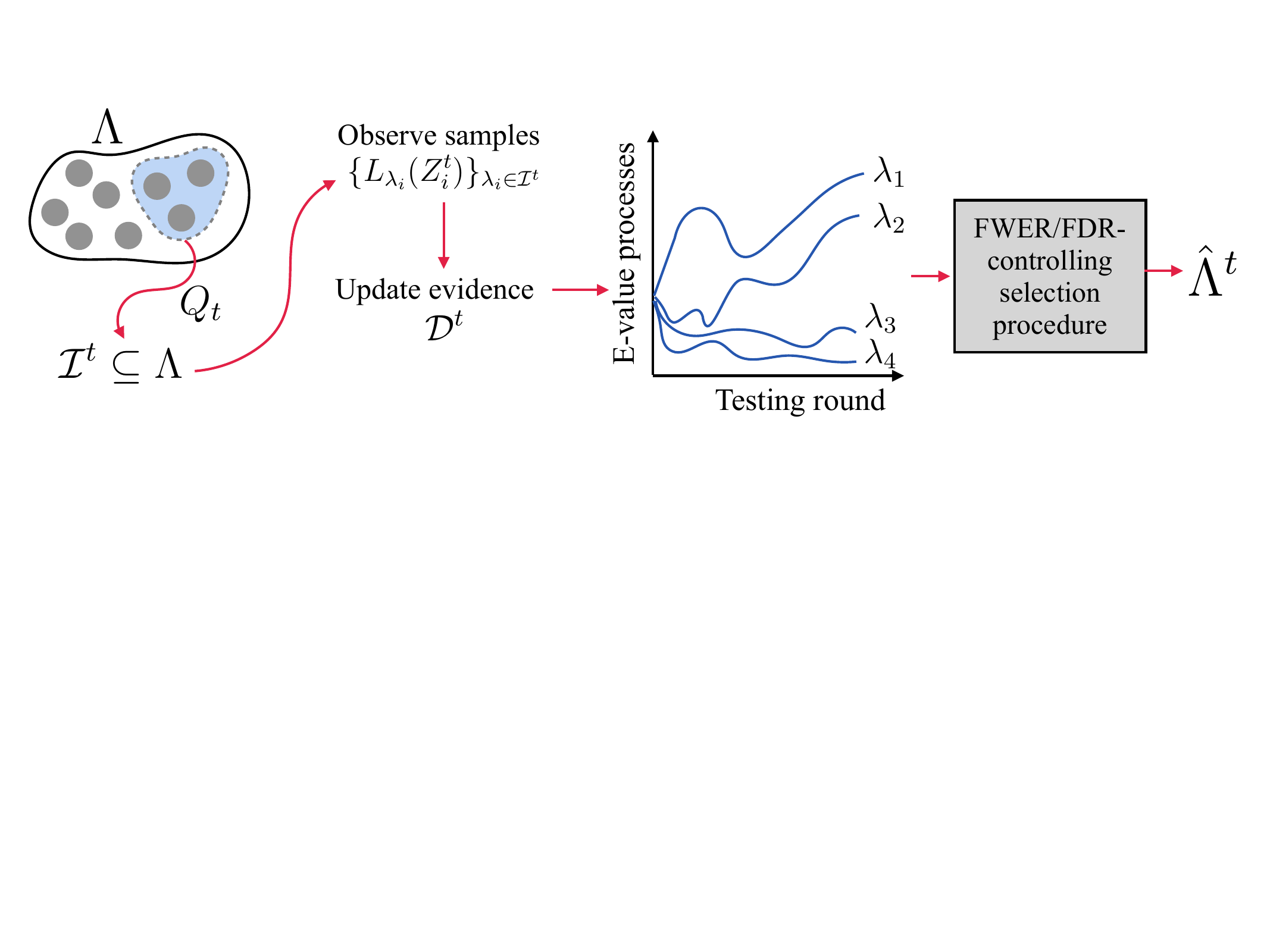}
    \caption{Illustration of the operations performed within a single round $t$
    of the aLTT calibration procedure. The acquisition policy $\mathcal{Q}_t$ selects a
    subset $\mathcal{I}^t \subseteq \Lambda$ of candidates to test
    (\textcircled{1}). Fresh losses $\{L_{\lambda_i}(Z_i^t)\}_{\lambda_i \in
    \mathcal{I}^t}$ are observed (\textcircled{2}) and used to update the
    evidence $\mathcal{D}^t$ (\textcircled{3}), which in practice amounts to
    updating the e-process $E_i^t$ for each tested candidate, as shown by the
    curves. Candidates with growing e-processes (e.g., $\lambda_1, \lambda_2$)
    accumulate evidence of reliability, while those with flat or decaying
    e-processes (e.g., $\lambda_3, \lambda_4$) do not. The current e-process
    values are then passed to an FWER- or FDR-controlling selection procedure
    to produce the updated certified set $\hat{\Lambda}^t$ (\textcircled{4}).
    At the end of each round, a stopping check (\textcircled{5}) determines
    whether calibration should continue or terminate.}
    \label{fig:ch6_sequential_loop}
\end{figure}

\section{E-Processes and Anytime-Valid Inference}
\label{sec:ch6_eprocess}

The central obstacle to applying LTT in a sequential,
data-dependent manner is the invalidity of p-values under adaptive stopping.
In fact, the type-I error guarantee relies on the superuniform property \eqref{eq:ch2_pvalue_validity}, which is no longer guaranteed if one conditions on observations made in past testing rounds. This violation of statistical validity is known as p-hacking, and is a key contributor to the reproducibility crisis in science
\citep{simonsohn2014p}. A more detailed discussion of this limitation of p-values, including a worked illustration of the optional-stopping pathology, is provided in Appendix \ref{app:evidence}.

E-values, introduced in Sec.~\ref{sec:ch2_eval}, are special types of test statistics that support anytime-valid testing: e-values computed at different time points can be combined by multiplication without
losing validity. This makes them the natural foundation for sequential, adaptive testing \citep{ramdas2025hypothesis, wang2022fdr}.

\subsection{E-Processes}

An \emph{e-process} for null hypothesis $\mathcal{H}_i:
R(\lambda_i) > \alpha$ is a sequence of non-negative random variables
$\{E_i^t\}_{t \geq 0}$ such that, for any stopping time $T$, which may depend
on all evidence accumulated up to and including round $T$, the stopped value $E_i^T$ is
a valid e-value, i.e.,
\begin{equation}
\label{eq:ch6_eprocess_def}
\mathbb{E}[E_i^T \mid \mathcal{H}_i] \le 1.
\end{equation}
Condition~\eqref{eq:ch6_eprocess_def} is the sequential analog of the e-value
validity condition (\ref{eq:ch2_evalue_validity}), and it holds for all stopping
times simultaneously, not just for a fixed time horizon. A more general treatment of e-processes, including Ville's inequality and the corresponding anytime-valid testing guarantees, is given in Appendix \ref{app:evidence}.

A common and powerful construction of an e-process for the null
$\mathcal{H}_i:R(\lambda_i) > \alpha$ follows the principle of testing by betting \citep{ramdas2023gametheoretic}. Accordingly, at each round $t$ in
which hyperparameter $\lambda_i$ is tested, we observe the loss $L_{\lambda_i}(Z_i^t)$, and define a
per-round e-value
\begin{equation}
\label{eq:ch6_per_round_eval}
\epsilon_i^t
\;=\;
1 + \mu_i^t\!\left(\alpha - L_{\lambda_i}(Z_i^t)\right),
\end{equation}
where
\begin{equation}
\label{eq:ch6_condition}
\mu_i^t \in \left(0,\, \tfrac{1}{1-\alpha}\right)
\end{equation}
is a tunable \emph{betting
parameter} \citep{waudby2024estimating}.
In the testing by betting interpretation, the term \eqref{eq:ch6_per_round_eval} represents the multiplicative wealth growth for a bettor investing an amount $\mu_i^t$ against the null hypothesis $\mathcal{H}_i$. By \eqref{eq:ch6_per_round_eval}, the wealth grows by an amount that increases whenever the inequality $L_{\lambda_i}(Z_i)<\alpha$ holds. Recalling that the empirical estimate is unbiased, i.e., $\mathbb{E}[L_{\lambda_i}(Z_i)] = R(\lambda_i)$, this condition provides evidence against the null $\mathcal{H}_i:R(\lambda_i) > \alpha$.

Setting $E_i^0 = 1$, the e-process is then given by the running product of the multiplicative wealth gains \eqref{eq:ch6_per_round_eval} as
\begin{equation}
\label{eq:ch6_eprocess_update}
E_i^t
\;=\;
\begin{cases}
E_i^{t-1} \cdot \epsilon_i^t & \text{if } \lambda_i \in \mathcal{I}^t, \\
E_i^{t-1} & \text{otherwise.}
\end{cases}
\end{equation}
That is, the e-process is updated by multiplying in the per-round factor $\epsilon_i^t$
whenever hyperparameter $\lambda_i$ is tested, and is held constant otherwise. In the testing by betting interpretation, $E_i^t$ represents the current wealth of the bettor for hyperparameter $\lambda_i$.

\begin{lemma}[Validity of the sequential product e-process]
\label{lem:ch6_eprocess_valid}
The sequence $\{E_i^t\}$ in \eqref{eq:ch6_eprocess_update} is an e-process for the null hypothesis $\mathcal{H}_i$.
\end{lemma}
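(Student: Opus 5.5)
The plan is to show that, under the null $\mathcal{H}_i$, the process $\{E_i^t\}_{t\ge 0}$ is a nonnegative supermartingale with $E_i^0=1$ with respect to the filtration $\mathcal{F}_t=\sigma(\mathcal{D}^t)$ generated by the accumulated evidence (\ref{eq:evidence_t}). The optional stopping theorem then yields $\mathbb{E}[E_i^T\mid\mathcal{H}_i]\le 1$ for every stopping time $T$, which is exactly the defining condition (\ref{eq:ch6_eprocess_def}).

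\textbf{Step 1 (nonnegativity).} Since $L_{\lambda_i}(\cdot)\in[0,1]$, we have $\alpha-L_{\lambda_i}(Z_i^t)\ge \alpha-1=-(1-\alpha)$. The betting constraint (\ref{eq:ch6_condition}) gives $\mu_i^t(1-\alpha)<1$, so each factor satisfies $\epsilon_i^t>0$. Hence $E_i^t>0$ for all $t$ by induction on (\ref{eq:ch6_eprocess_update}).

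\textbf{Step 2 (supermartingale property).} We require $\mu_i^t$ to be \emph{predictable}, i.e., $\mathcal{F}_{t-1}$-measurable, and we note that the acquisition decision $\mathbb{I}\{\lambda_i\in\mathcal{I}^t\}$ is also $\mathcal{F}_{t-1}$-measurable, since $\mathcal{Q}_t$ depends only on past evidence. The fresh sample $Z_i^t$ is drawn independently of $\mathcal{F}_{t-1}$, so $\mathbb{E}[L_{\lambda_i}(Z_i^t)\mid\mathcal{F}_{t-1}]=R(\lambda_i)>\alpha$ under $\mathcal{H}_i$. Therefore, on the event $\{\lambda_i\in\mathcal{I}^t\}$,
\begin{equation*}
\mathbb{E}[\epsilon_i^t\mid\mathcal{F}_{t-1}]=1+\mu_i^t\bigl(\alpha-R(\lambda_i)\bigr)\le 1 ,
\end{equation*}
using $\mu_i^t>0$. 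On the complementary event the multiplicative factor is identically $1$. In both cases $\mathbb{E}[E_i^t\mid\mathcal{F}_{t-1}]\le E_i^{t-1}$.

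\textbf{Step 3 (stopping).} For a bounded stopping time $T\le t_{\max}$, Doob's optional stopping theorem for supermartingales gives $\mathbb{E}[E_i^T]\le E_i^0=1$. For a possibly unbounded $T$, apply the bounded result to $T\wedge m$. Nonnegativity lets us invoke Fatou's lemma as $m\to\infty$, with $E_i^T$ defined as the a.s.\ limit on $\{T=\infty\}$, which exists by the supermartingale convergence theorem. This yields $\mathbb{E}[E_i^T\mid\mathcal{H}_i]\le 1$.

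The main obstacle is Step 2. The argument must be precise about measurability: the betting parameter and the acquisition set must not depend on the very sample $Z_i^t$ they multiply, or the conditional expectation bound fails. I will state this predictability requirement explicitly as an implicit assumption of the construction, noting that the $\varepsilon$-greedy acquisition of aLTT and standard betting strategies such as those of \citep{waudby2024estimating} satisfy it. The remaining steps are routine.
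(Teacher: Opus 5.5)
Your proposal is correct and follows the same route as the paper: you show each betting factor is nonnegative because $\mu_i^t<1/(1-\alpha)$, establish the conditional bound $\mathbb{E}[\epsilon_i^t\mid\mathcal{D}^{t-1},\mathcal{H}_i]\le 1$, and conclude by Doob's optional stopping for nonnegative supermartingales. Your explicit predictability requirement and the Fatou argument for unbounded $T$ make rigorous what the paper leaves implicit; one small adjustment is that for randomized acquisition such as $\varepsilon$-greedy, $\mathcal{F}_{t-1}$ should be enlarged to include the round-$t$ coin flip, which is independent of $Z_i^t$, since it is not $\sigma(\mathcal{D}^{t-1})$-measurable.
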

\begin{proof}
By the unbiasedness of the empirical estimate, we have the inequality $\mathbb{E}[L_{\lambda_i}(Z^t)] = R(\lambda_i) \ge \alpha$ under the null $\mathcal{H}_i$,
and hence the average multiplicative gain in \eqref{eq:ch6_per_round_eval} satisfies the condition
\begin{equation}
\mathbb{E}[\epsilon_i^t \mid \mathcal{D}^{t-1}, \mathcal{H}_i]
= 1 + \mu_i^t\!\left(\alpha - R(\lambda_i)\right)
\;\le\; 1.
\end{equation}
Given the condition in \eqref{eq:ch6_condition}, i.e., $\mu_i^t < 1/(1-\alpha)$, as well as the assumption $L_{\lambda_i}(Z^t) \in [0,1]$, each factor
$\epsilon_i^t$ is non-negative. Together with the bound above, this shows that
the product $E_i^t$ is non-negative, and that it satisfies the inequality $\mathbb{E}[E_i^t\mid \mathcal{H}_i]\leq 1$. This shows that the process $\{E_i^t\}$ is a non-negative supermartingale with respect to the filtration generated
by $\mathcal{D}^t$ under $\mathcal{H}_i$. By Doob's optional stopping theorem for
non-negative supermartingales \citep{ramdas2023gametheoretic}, the expectation at
any stopping time $T$ satisfies
\begin{equation}
\mathbb{E}[E_i^T \mid \mathcal{H}_i] \;\le\; E_i^0 \;=\; 1,
\end{equation}
confirming that the sequence $E_i^T$ is an e-process as per condition~\eqref{eq:ch6_eprocess_def}.
\end{proof}

\subsection{Betting Strategies}

The power of the e-process $\{E_i^t\}_{t = 1}^T$ to detect reliable hyperparameters depends on how quickly it grows
when hyperparameter $\lambda_i$ is truly reliable. The rate of this growth, in turn, depends critically on the choice of the betting
parameters $\{\mu_i^t\}_{t \geq 1}$.

If the betting parameter $\mu_i^t$ is chosen to be large, the gambler is aggressive: the
wealth grows rapidly when the hyperparameter is reliable, but can also decrease rapidly
when individual losses exceed $\alpha$. Conversely, if $\mu_i^t$ is small, the procedure is
conservative, and growth is slow but stable. The optimal choice for the sequence of betting parameters $\mu_i^t$ is typically considered to be one that
maximizes the expected log-wealth growth, i.e., $\mathbb{E}[\log \epsilon_i^t]$, subject to
the constraint $\mu_i^t \in (0, 1/(1-\alpha))$ \citep{shafer2019gamefoundation,ramdas2023gametheoretic,waudby2024estimating}.

Accordingly, a practical and principled choice is the
\emph{adaptive growth rate adaptive to the particular alternative} (aGRAPA) strategy
\citep{waudby2024estimating}, which sets the parameter $\mu_i^t$ based on the empirical mean of the
losses observed up to round $t-1$. More precisely, aGRAPA estimates the true risk
$R(\lambda_i)$ from past data and sets the parameter $\mu_i^t$ to be the solution of the per-round empirical wealth optimization problem. This adaptive approach requires no prior knowledge of
$R(\lambda_i)$ and performs well across a wide range of true risk values.

Another option is the \emph{online Newton step} (ONS) betting
strategy \citep{waudby2024estimating}, which updates $\mu_i^t$ using a second-order
online learning rule applied to the log-wealth objective. ONS typically achieves faster
growth rates in practice.

For simplicity of exposition, we set $\mu_i^t = \mu$ as a fixed
constant in the theoretical development below, while noting that data-adaptive
strategies are preferable in practice.

\subsection{Anytime-Valid P-values}

Given an e-process $\{E_i^t\}_{t \geq 0}$, one can derive an
\emph{anytime-valid p-value} $P_i^t$ for hypothesis $\mathcal{H}_i$ as follows. Define
\begin{equation}
\label{eq:ch6_anytime_pval}
P_i^t
\;=\;
\frac{1}{\max_{\tau \le t} E_i^\tau}.
\end{equation}
That is, the anytime-valid p-value at time $t$ is the reciprocal of the
running maximum of the e-process up to round $t$.

\begin{lemma}[Anytime validity of the p-value \eqref{eq:ch6_anytime_pval}]
\label{lem:ch6_anytime_pval}
For any stopping time $T$ that may depend on the entire history
$\mathcal{D}^T$, the statistic $P_i^T$ in (\ref{eq:ch6_anytime_pval}) is a valid
p-value for $\mathcal{H}_i$.
\end{lemma}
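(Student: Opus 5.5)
The plan is to reduce the claim to Ville's inequality for the nonnegative supermartingale $\{E_i^t\}$ established in the proof of Lemma~\ref{lem:ch6_eprocess_valid}. For any $u\in(0,1]$, I would rewrite the event $\{P_i^T\le u\}$ using the definition \eqref{eq:ch6_anytime_pval} as
\begin{equation}
\{P_i^T\le u\}=\Bigl\{\max_{\tau\le T}E_i^\tau\ge \tfrac{1}{u}\Bigr\}\subseteq\Bigl\{\sup_{t\ge 0}E_i^t\ge \tfrac{1}{u}\Bigr\}.
\end{equation}
This inclusion is the key observation. It removes the dependence on the (possibly data-dependent) stopping time $T$, so superuniformity only needs to be established for the event that the wealth process ever crosses the level $1/u$.

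The second step bounds $\mathbb{P}(\sup_t E_i^t\ge 1/u\mid\mathcal{H}_i)\le u$. Under $\mathcal{H}_i$, $\{E_i^t\}$ is a nonnegative supermartingale with $E_i^0=1$: each factor satisfies $\mathbb{E}[\epsilon_i^t\mid\mathcal{D}^{t-1},\mathcal{H}_i]\le1$ and $\epsilon_i^t\ge0$, and factors equal to one when $\lambda_i\notin\mathcal{I}^t$, which is also fine since $\mathcal{I}^t$ is $\mathcal{D}^{t-1}$-measurable. I could then invoke Ville's inequality directly, as recalled in Appendix~\ref{app:evidence}.

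To keep the argument self-contained, I would instead derive it from the optional stopping statement already used in Lemma~\ref{lem:ch6_eprocess_valid}. Define the hitting time $\sigma=\inf\{t:E_i^t\ge 1/u\}$ and, for fixed $N$, the bounded stopping time $\sigma\wedge N$. Optional stopping gives $\mathbb{E}[E_i^{\sigma\wedge N}\mid\mathcal{H}_i]\le1$. Markov's inequality, together with the fact that $E_i^{\sigma\wedge N}\ge 1/u$ on $\{\sigma\le N\}$, then yields $\mathbb{P}(\max_{t\le N}E_i^t\ge1/u)\le u$. Letting $N\to\infty$ by continuity from below gives the bound on the supremum.

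Combining the two steps gives $\mathbb{P}(P_i^T\le u\mid\mathcal{H}_i)\le u$ for all $u\in[0,1]$, which is the superuniformity property \eqref{eq:ch2_pvalue_validity}. The case $u=0$ is trivial, since $P_i^T>0$ whenever the maximum is finite, and for $u=1$ the bound holds trivially.

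The main obstacle is the careful handling of the passage from fixed to arbitrary times. The naive approach of applying Markov to $E_i^T$ alone does not work, because $P_i^T$ involves the running maximum rather than the stopped value. This is why the argument routes through the hitting time $\sigma$ and the limit $N\to\infty$ instead of appealing to $\mathbb{E}[E_i^T]\le1$ directly. A possibly infinite $T$ is also covered, since the bound on $\sup_t E_i^t$ is uniform over all times.
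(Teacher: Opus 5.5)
Your proposal is correct and follows the paper's proof. Both arguments rewrite $\{P_i^T\le u\}$ as the running maximum of the e-process reaching $1/u$ and bound that event by Ville's inequality; you additionally derive Ville yourself from optional stopping at $\sigma\wedge N$ plus Markov, and you correctly treat $\{E_i^t\}$ as a supermartingale rather than the martingale the paper names. One small technicality: letting $N\to\infty$ bounds the event that some $E_i^t$ reaches $1/u$, not $\{\sup_t E_i^t\ge 1/u\}$. This suffices here because $T\le t_{\max}$ makes the maximum attained, and for unbounded $T$ you can run the argument at levels $c<1/u$ and let $c\uparrow 1/u$.
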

\begin{proof}
For any $u \in [0,1]$, note that the inequality $P_i^T \leq u$ holds if and only if
$\max_{\tau \leq T} E_i^\tau \geq 1/u$, which implies $E_i^T \geq 1/u$ for
some $\tau \leq T$. By Ville's inequality for non-negative martingales
\citep{ville1939etude}, we have
\begin{equation}
\mathbb{P}\!\left(
\max_{\tau \leq T} E_i^\tau \geq \frac{1}{u}
\;\middle|\; \mathcal{H}_i
\right)
\;\leq\;
u\, \mathbb{E}[E_i^0 \mid \mathcal{H}_i]
\;=\;
u,
\end{equation}
and thus the superuniformity condition $\mathbb{P}(P_i^T \leq u \mid \mathcal{H}_i) \leq u$ holds, as required.
\end{proof}

\section{Adaptive Learn-Then-Test}
\label{sec:ch6_altt}

We now present the full aLTT algorithm \citep{zecchin2024adaptive},
which combines the e-process machinery described in Sec.~\ref{sec:ch6_eprocess} with an adaptive
acquisition policy and an adaptive stopping rule.

\subsection{Algorithm Description}

aLTT operationalizes the five-step sequential procedure of
Sec.~\ref{sec:ch6_sequential_setting} by maintaining, for each candidate
$\lambda_i \in \Lambda$, a running e-process $E_i^t$ initialized at $E_i^0 = 1$.
At each round $t$, the acquisition policy selects a subset of hyperparameters $\mathcal{I}^t \subseteq \Lambda$ (step \textcircled{1});
empirical losses are evaluated (step \textcircled{2}); and the e-processes are updated
multiplicatively as per \eqref{eq:ch6_eprocess_update} (step \textcircled{3}).
The current e-process values are then passed to the chosen MHT procedure to produce
the certified set $\hat{\Lambda}^t$ as (step \textcircled{4})
\begin{equation}
\hat{\Lambda}^t
\;=\;
\begin{cases}
\mathcal{A}_{\mathrm{FWER}}(P_1^t, \ldots, P_K^t) & \text{if FWER control is desired,}\\
\mathcal{A}_{\mathrm{FDR}}(E_1^t, \ldots, E_K^t) & \text{if FDR control is desired,}
\end{cases}
\end{equation}
where $P_i^t = 1/\max_{\tau \leq t} E_i^\tau$ are the anytime-valid p-values in
\eqref{eq:ch6_anytime_pval}. For FWER control, any procedure from
Sec.~\ref{sec:ch2_pval_mht} can be applied; for FDR control, the e-BH procedure
from Sec.~\ref{sec:FDR} is typically used directly on the e-values $E_i^t$.

Finally, the
stopping check (step \textcircled{5}) terminates the procedure if $|\hat{\Lambda}^t| \geq d$
or $t = t_{\max}$, and otherwise increments $t$ and returns to \textcircled{1}.
The full aLTT procedure is summarized in Algorithm~\ref{alg:ch6_altt}.

\begin{algorithm}[t]
\caption{Adaptive Learn-Then-Test (aLTT) \citep{zecchin2024adaptive}}
\label{alg:ch6_altt}
\begin{algorithmic}
\STATE \textbf{Input:} Candidate set $\Lambda = \{\lambda_1, \ldots, \lambda_K\}$;
risk threshold $\alpha$; error level $\delta$; error type
$\mathrm{err} \in \{\mathrm{FWER}, \mathrm{FDR}\}$; acquisition policy
$\{\mathcal{Q}_t\}_{t \geq 1}$; betting parameters $\{\mu_i^t\}$; maximum rounds
$t_{\max}$; minimum certified set size $d$; error-controlling MHT procedure $\mathcal{A}_{\mathrm{err}}(\cdot)$ at level $\delta$;

\STATE \textbf{Output:} Certified set $\hat{\Lambda} \subseteq \Lambda$

\STATE Initialize $E_i^0 \leftarrow 1$ for all $i$; $t \leftarrow 1$;
$\hat{\Lambda} \leftarrow \emptyset$

\WHILE{$t \leq t_{\max}$ \textbf{and} $|\hat{\Lambda}| < d$}
    \STATE $\mathcal{I}^t \leftarrow \mathcal{Q}_t(\{E_i^{t-1}\}_{i=1}^K)$

    \FOR{each $\lambda_i \in \mathcal{I}^t$}
        \STATE Observe the losses $L_{\lambda_i}(Z_i^t)$ and update the e-process
        $E_i^t \leftarrow E_i^{t-1} \cdot \bigl(1 + \mu_i^t(\alpha -
        L_{\lambda_i}(Z_i^t))\bigr)$
    \ENDFOR

    \FOR{each $\lambda_i \notin \mathcal{I}^t$}
        \STATE $E_i^t \leftarrow E_i^{t-1}$
    \ENDFOR

    \IF{$\mathrm{err} = \mathrm{FWER}$}
        \STATE Compute the p-value $P_i^t \leftarrow 1 / \max_{\tau \leq t} E_i^\tau$
        for all $i$
        \STATE $\hat{\Lambda} \leftarrow
        \mathcal{A}_{\mathrm{FWER}}(P_1^t, \ldots, P_K^t)$
    \ELSE
        \STATE $\hat{\Lambda} \leftarrow
        \mathcal{A}_{\mathrm{FDR}}(E_1^t, \ldots, E_K^t)$
    \ENDIF

    \STATE $t \leftarrow t + 1$
\ENDWHILE

\STATE \textbf{return} $\hat{\Lambda}$
\end{algorithmic}
\end{algorithm}

\subsection{Acquisition Policies}
\label{sec:ch6_acquisition}

A key design choice in aLTT is the acquisition policy $\mathcal{Q}_t$, which
determines which candidates to test at each round. Following \citep{zecchin2024adaptive}, we discuss two natural choices.

\paragraph{Non-adaptive (random) acquisition:} The simplest policy
selects one or more candidates uniformly at random from set $\Lambda$ at each round,
independently of the e-process values. This recovers the same long-run performance as
LTT when $T \to \infty$, but, crucially, produces valid intermediate certified sets at
every round $t$, enabling early termination.

\paragraph{$\varepsilon$-greedy acquisition:} A more effective
adaptive policy is the $\varepsilon$-greedy rule: With probability $1 - \varepsilon$,
select the uncertified candidate $\lambda_i \notin \hat{\Lambda}^{t-1}$ with the
largest current e-process value $E_i^{t-1}$; otherwise, with probability
$\varepsilon$, select a candidate uniformly at random from the uncertified set.
The greedy component concentrates testing effort on the candidates
that have accumulated the most evidence of reliability so far, accelerating the growth
of their e-processes toward the rejection threshold. In contrast, the random component ensures that
candidates whose e-processes have grown slowly due to limited testing are not
permanently ignored.

\subsection{Statistical Guarantees}

The fundamental validity of aLTT rests on the anytime-validity
property of the e-process (Lemma~\ref{lem:ch6_anytime_pval}): no matter at which round
$T$ the procedure terminates, whether triggered by the early stopping criterion or the
maximum round limit, the resulting certified set carries the same formal guarantees as
the batch LTT procedure.

\begin{theorem}[Validity of aLTT \citep{zecchin2024adaptive}]
\label{thm:ch6_altt_validity}
Let $T$ be any stopping time (possibly adaptive) at which aLTT terminates.
\begin{enumerate}
    \item[\normalfont(\textit{i})] If $\mathcal{A}_{\mathrm{FWER}}$ is applied, the certified
    set $\hat{\Lambda} = \hat{\Lambda}^T$ satisfies the FWER requirement
    \begin{equation}
    \label{eq:ch6_fwer_guarantee}
    \mathbb{P}\!\left(
    \sup_{\lambda \in \hat{\Lambda}} R(\lambda) \le \alpha
    \right) \;\ge\; 1 - \delta.
    \end{equation}
    \item[\normalfont(\textit{ii})] If $\mathcal{A}_{\mathrm{FDR}}$ (e-BH) is applied, the
    certified set satisfies the FDR requirement
    \begin{equation}
    \label{eq:ch6_fdr_guarantee}
    \mathbb{E}\!\left[
    \frac{|\hat{\Lambda} \cap \Lambda_0|}{\max\{1, |\hat{\Lambda}|\}}
    \right] \;\le\; \delta,
    \end{equation}
    where $\Lambda_0 = \{\lambda \in \Lambda : R(\lambda) > \alpha\}$.
\end{enumerate}
\end{theorem}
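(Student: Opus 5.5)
Part (i) reduces to Theorem~\ref{thm:ch2_certified_set} once we know that the stopped anytime-valid p-values $P_1^T,\ldots,P_K^T$ are valid p-values for the nulls $\mathcal{H}_1,\ldots,\mathcal{H}_K$. Part (ii) reduces to the e-BH guarantee once we know that the stopped e-process values $E_1^T,\ldots,E_K^T$ are valid e-values. The plan is to prove marginal validity at the stopping time first, and then invoke the dependence-robust FWER/FDR procedures. Those procedures need nothing beyond marginal validity, so the arbitrary dependence created by the shared acquisition policy and the shared stopping rule costs nothing.

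The first step is to verify that each $\{E_i^t\}$ is a nonnegative supermartingale under $\mathcal{H}_i$ with respect to the filtration $\mathcal{F}_t=\sigma(\mathcal{D}^t)$. This filtration contains all candidates' histories, not just that of $\lambda_i$. Three facts are needed:
\begin{itemize}
\item the indicator $\mathbb{I}\{\lambda_i\in\mathcal{I}^t\}$ and the bet $\mu_i^t$ are $\mathcal{F}_{t-1}$-measurable, possibly after adding independent randomization for the $\varepsilon$-greedy coin;
\item the fresh sample $Z_i^t$ is independent of $\mathcal{F}_{t-1}$;
\item under $\mathcal{H}_i$ we have $\mathbb{E}[\epsilon_i^t\mid\mathcal{F}_{t-1}]=1+\mu_i^t(\alpha-R(\lambda_i))\le 1$.
\end{itemize}
When $\lambda_i$ is not tested, the factor is $1$. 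Together these give $\mathbb{E}[E_i^t\mid\mathcal{F}_{t-1}]\le E_i^{t-1}$, which is essentially Lemma~\ref{lem:ch6_eprocess_valid} with the filtration made explicit.

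The second step applies optional stopping and Ville's inequality to this supermartingale. The stopping time $T$ is bounded by $t_{\max}$ and is adapted to $\mathcal{F}_t$, since it is determined by $|\hat{\Lambda}^t|$. Optional stopping then gives $\mathbb{E}[E_i^T\mid\mathcal{H}_i]\le 1$. Ville's inequality, via Lemma~\ref{lem:ch6_anytime_pval}, gives $\mathbb{P}(P_i^T\le u\mid\mathcal{H}_i)\le\mathbb{P}(\sup_t E_i^t\ge 1/u)\le u$ for every $u\in[0,1]$.

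For (i), we take any $\mathcal{A}_{\mathrm{FWER}}$ valid under arbitrary dependence, such as Bonferroni (Proposition~\ref{prop:ch2_bonferroni_fwer}), and apply it to $(P_1^T,\ldots,P_K^T)$. The union-bound argument of that proof uses only the marginal superuniformity just established. It therefore yields $\mathbb{P}(\hat{\Lambda}\cap\Lambda_0\neq\emptyset)\le\delta$, and the argument of Theorem~\ref{thm:ch2_certified_set} converts this into \eqref{eq:ch6_fwer_guarantee}.

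For (ii), we apply e-BH to $(E_1^T,\ldots,E_K^T)$. The e-BH FDR proof in Appendix~\ref{app:e-BH} uses only $\mathbb{E}[e_k]\le1$ for $k\in\mathcal{K}_0$ and holds under arbitrary dependence. It bounds
\[
\mathrm{FDR}\le\sum_{k\in\mathcal{K}_0}\mathbb{E}\!\left[\frac{\mathbb{I}\{e_k\ge K/(k^\star\delta)\}}{\max\{1,k^\star\}}\right]\le\sum_{k\in\mathcal{K}_0}\frac{\delta}{K}\,\mathbb{E}[e_k]\le\delta,
\]
so plugging in $e_k=E_k^T$ gives \eqref{eq:ch6_fdr_guarantee}.

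The main obstacle is the first step. The acquisition policy is adaptive and couples the candidates: whether $\lambda_i$ is sampled depends on the other e-processes, and the stopping time depends on all of them. The supermartingale property must therefore hold with respect to the joint filtration, not the marginal one. Getting this right requires the predictability of $\mathcal{I}^t$ and $\mu_i^t$, and the independence of new samples from the past. Once that is in place, everything else is a reuse of earlier results.
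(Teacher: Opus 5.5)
Your proposal is correct and follows the paper's proof. Both establish marginal validity of the stopped anytime-valid p-values $P_i^T$ via Ville's inequality and of the stopped e-values $E_i^T$ via optional stopping, then invoke the FWER argument of Theorem~\ref{thm:ch2_certified_set} and the e-BH result of Theorem~\ref{thm:app_eBH}. Your explicit use of the joint filtration, which makes $\mathcal{I}^t$ and $\mu_i^t$ predictable and keeps fresh samples independent of the past, and your restriction to dependence-robust FWER procedures such as Bonferroni are details the paper's proof leaves implicit.
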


\begin{proof}
For part (\textit{i}), by Lemma~\ref{lem:ch6_anytime_pval}, each statistic $P_i^T$ is a valid p-value for
$\mathcal{H}_i$ at the (possibly adaptive) stopping time $T$. The FWER-controlling
procedure $\mathcal{A}_{\mathrm{FWER}}$ applied to these p-values therefore provides
the guarantee \eqref{eq:ch6_fwer_guarantee} by the same argument as
Theorem~\ref{thm:ch2_certified_set}.

For part (\textit{ii}), the e-values $E_i^T$ satisfy $\mathbb{E}[E_i^T \mid \mathcal{H}_i] \leq
1$ by the e-process property \eqref{eq:ch6_eprocess_def}, for any stopping time $T$.
The FDR guarantee then follows from the e-BH theorem (Appendix~\ref{app:e-BH}) applied
to the stopped e-values.
\end{proof}

\section{Example: Adaptive Wireless Scheduling}
\label{sec:ch6_app_wireless}

We illustrate the aLTT framework using the wireless scheduling
application introduced in Sec.~\ref{sec:app_wireless}, now adapted to the sequential calibration setting.
This application is a natural fit for adaptive testing: each evaluation of a candidate
hyperparameter configuration requires executing a full scheduling episode in the Nokia
wireless simulator \citep{Nokia}, and reducing the total number of such episodes is a
primary operational objective.

\subsection{Setting}

We consider the same downlink scheduling system described in
Sec.~\ref{sec:app_wireless}. The learning-based scheduler of \citep{de2020radio} is
parameterized by a vector $\lambda = (\lambda_1, \lambda_2, \lambda_3, \lambda_4)$
controlling the weights of four terms in the reward model. Following
Sec.~\ref{sec:app_wireless}, a candidate set of $K = 20$ configurations is
considered.

The per-episode loss is the mean packet delay of the
highest-priority QoS class, normalized to $[0,1]$ by clipping at $20$ ms as
\begin{equation}
    L_\lambda(Z^t)
    \;=\;
    \frac{\min\!\left(\mathrm{delay}_\lambda(Z^t),\; 20\,\mathrm{ms}\right)}{20\,\mathrm{ms}}.
\end{equation}
The reliability requirement is $R(\lambda) = \mathbb{E}[L_\lambda(Z)] \le \alpha$,
where $\alpha = 0.5$ corresponds to the mean delay target of $10$ ms used in
Sec.~\ref{sec:app_wireless}. The target error level is $\delta = 0.1$.

In the sequential calibration setting, data arrive one episode at
a time. At each round $t$, the acquisition policy $\mathcal{Q}_t$ selects a single
candidate $\lambda_i$ from the pool of uncertified configurations, one scheduling
episode is drawn i.i.d.\ from the Nokia simulator, and the per-round normalized loss
$L_{\lambda_i}(Z_i^t)$ is observed. The e-process for $\lambda_i$ is then updated as
per \eqref{eq:ch6_eprocess_update} with fixed betting parameter $\mu = 1.5$, while
the e-processes of all other candidates remain unchanged. The maximum number of
testing rounds is $t_{\max} = 2000$.

We compare aLTT with the $\varepsilon$-greedy acquisition policy
for $\varepsilon \in \{0.25, 0.50, 0.75, 0.95\}$ against non-adaptive LTT
($\varepsilon = 1.0$, uniform random selection with no greedy component). Both FWER
control via Bonferroni and FDR control via e-BH are evaluated. Results are averaged
over $200$ independent calibration runs.

\subsection{Results}

Fig.~\ref{fig:ch6_wireless_tpr} shows the TPR as a function of
the testing round $t$ on a logarithmic scale for both FWER and FDR control.
The advantage of the $\varepsilon$-greedy acquisition policy over
non-adaptive LTT is clear in both panels. Under FDR control, aLTT with
$\varepsilon = 0.25$ reaches $50\%$ of LTT's final TPR in approximately
150 rounds, while non-adaptive LTT requires the
full testing budget of $t_{\max} = 2000$ rounds to reach the same level. The greedy
policy achieves this by concentrating episodes on the candidates with the largest
accumulated e-process values, building up evidence of reliability much faster than
uniform random selection. Under FWER control, the Bonferroni threshold
$\delta/K = 0.005$ is more stringent, and the absolute TPR values are lower for all
methods, but the relative advantage of aLTT is equally pronounced: aLTT with
$\varepsilon = 0.25$ reaches $50\%$ of LTT's final TPR in approximately
335 rounds.

In both panels, the benefit of the greedy component is monotone in the $\varepsilon$-greedy acquisition probability
$\varepsilon$: smaller $\varepsilon$ (more exploitation of accumulated evidence) yields
both faster convergence and a higher final TPR, while $\varepsilon = 1.0$ (non-adaptive
LTT) is dominated by all greedy variants across the entire testing horizon.

\begin{figure}[t]
    \centering
    \includegraphics[width=\linewidth]{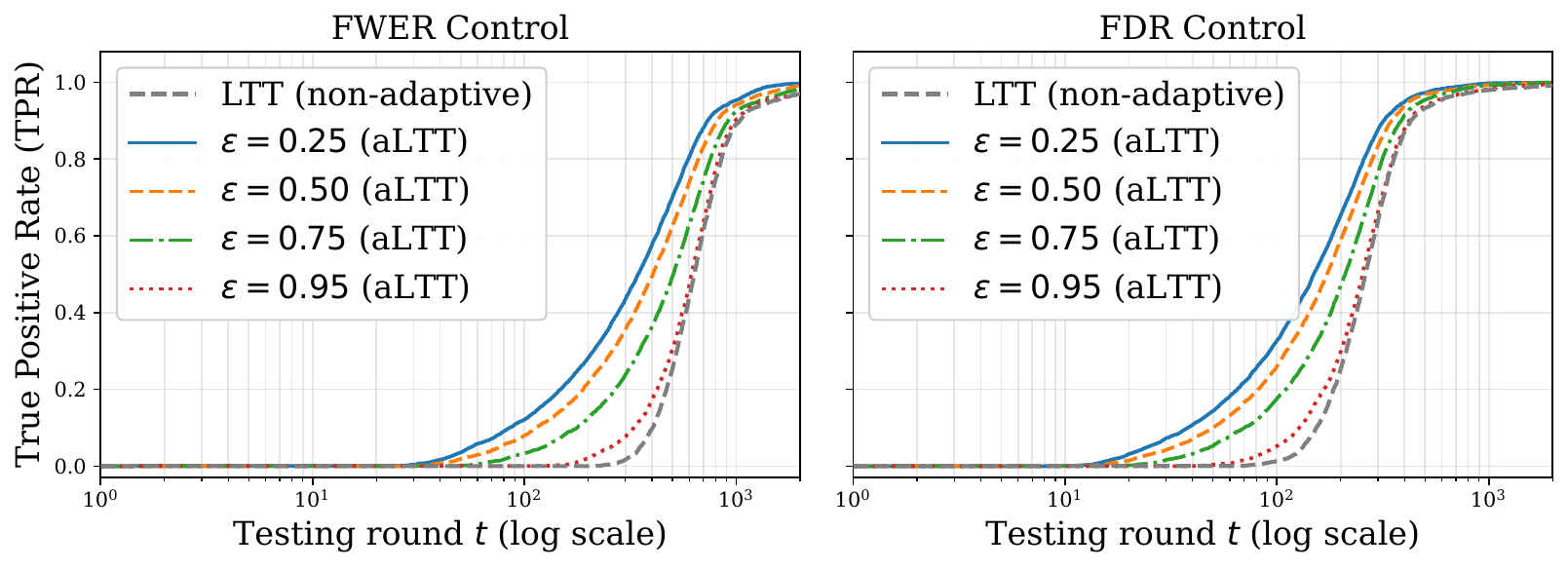}
    \caption{True positive rate (TPR) as a function of the number of testing rounds
    $t$ (log scale) in the wireless scheduling application of
    Sec.~\ref{sec:ch6_app_wireless}, using episode data from the Nokia wireless
    simulator \citep{Nokia} with $K = 20$ candidate configurations and reliability
    threshold $\alpha = 0.5$ (corresponding to a mean packet delay of $10$ ms).
    Results are shown for aLTT with $\varepsilon$-greedy acquisition for
    $\varepsilon \in \{0.25, 0.50, 0.75, 0.95\}$ and non-adaptive LTT
    ($\varepsilon = 1.0$), under FWER control via Bonferroni correction (left)
    and FDR control via e-BH (right), both at level $\delta = 0.1$.
    Curves are averaged over $200$ independent calibration runs. Smaller $\varepsilon$
    (more evidence-driven acquisition) yields substantially faster convergence to a
    large certified set.}
    \label{fig:ch6_wireless_tpr}
\end{figure}

These results confirm the two key properties of aLTT established
in this chapter. First, the $\varepsilon$-greedy acquisition policy substantially
accelerates the discovery of reliable hyperparameter configurations by concentrating
episodes on the candidates with the largest accumulated e-process values. Second, this
acceleration comes at no cost to statistical validity: by
Theorem~\ref{thm:ch6_altt_validity}, the FWER and FDR guarantees hold at every round
and for any data-dependent stopping time, as a direct consequence of the anytime
validity of the e-processes.

\section{Summary}
\label{sec:ch6_summary}

This chapter extended the LTT framework of
Chapter~\ref{chapter:mht} to the adaptive sequential setting, introducing the aLTT
algorithm of \citep{zecchin2024adaptive}. The central motivation is that, in many
practical deployments, evaluating candidate hyperparameters is costly, whether in
terms of online interactions with a physical environment, LLM inference costs, or
scarce labeled data, and it is therefore desirable to concentrate testing effort on
the most promising candidates and terminate calibration as soon as a reliable set has
been identified.

The key methodological departure from LTT is the replacement of
static p-values with e-processes. As established in
Lemma~\ref{lem:ch6_eprocess_valid}, the running product of per-round e-values forms a
non-negative supermartingale under the null hypothesis, and its stopped value remains a
valid e-value at any stopping time, including data-dependent ones. This anytime-validity
property, formalized in Lemma~\ref{lem:ch6_anytime_pval} for the derived p-values,
is what makes adaptive acquisition and early stopping statistically rigorous: the FWER
and FDR guarantees of Theorem~\ref{thm:ch6_altt_validity} hold regardless of when and
why the procedure terminates.

It is worth emphasizing that the statistical guarantees of aLTT
do not depend on the quality of the acquisition policy: any policy, adaptive or not,
yields a valid certified set. The policy affects only the speed at which reliable
candidates are discovered, not the correctness of the certification. This separation
between validity and efficiency is a practically important property that allows
practitioners to design aggressive acquisition strategies without fear of compromising
the reliability guarantees.

Finally, aLTT is complementary to the multi-objective extensions
of Chapter~\ref{chapter:multi_objective}. While PT and RG-PT address the challenge of
certifying hyperparameters under multiple simultaneous reliability constraints, aLTT
addresses the challenge of doing so efficiently when each evaluation is costly. Combining
the two directions, adaptive sequential testing under multiple objectives, is a natural
extension.


\chapter{Hyperparameter Selection with Autoevaluation}
\label{chapter:autoeval}

The frameworks developed in previous chapters assume that the loss $L_\lambda(X_i, Y_i)$ can be evaluated exactly on each calibration sample $(X_i, Y_i)$, where $Y_i$ is a ground-truth label obtained through human annotation or direct measurement. In many modern AI applications, however, producing ground-truth labels is prohibitively expensive.

Consider, for instance, the problem of selecting a prompt template for an LLM deployed on open-ended generation tasks, or of choosing a quantization format for a pretrained model evaluated on reading comprehension. In both cases, assessing whether a model output is correct ideally requires human judgment: a domain expert must read each response and decide whether it meets the desired quality standard. At scale, this process can be slow and costly, limiting the number of calibration samples available for hyperparameter selection.

\emph{Autoevaluation} (AutoEval) \citep{gu2024survey, zheng2023judging, novikova2017we, park2025adaptive} addresses this bottleneck by replacing ground-truth labels with labels generated by an automated evaluator, typically a larger pretrained model acting as a judge. Given an input $X$, the autoevaluator $f: \mathcal{X} \to \mathcal{Y}$ produces a synthetic label $f(X)$ at low cost. This makes it possible to evaluate candidate hyperparameters on large collections of unlabeled inputs, substantially reducing the dependence on human annotation.

The central challenge with AutoEval is \emph{bias}: the autoevaluator $f$ may systematically disagree with the ground-truth labeling mechanism, e.g., with human judgment, so that the synthetic loss $L_\lambda(X, f(X))$ is a biased estimate of the true risk $R(\lambda) = \mathbb{E}[L_\lambda(X,Y)]$. Relying naively on autoevaluated losses can therefore lead to incorrect hyperparameter selection, with no control over the resulting failure probability.

This chapter develops a statistically principled framework for hyperparameter selection that combines a small amount of ground-truth labeled calibration data with a large pool of autoevaluated data, while providing the same finite-sample reliability guarantees as the LTT framework of Chapter~\ref{chapter:mht}. The methodology is based on \emph{prediction-powered inference} (PPI) \citep{angelopoulos2023prediction, zrnic2024cross, song2026demystifying}, which uses real data to correct for the bias introduced by the autoevaluator, and on the testing-by-betting framework introduced in Chapter~\ref{chapter:adaptive} to construct valid e-values from the bias-corrected estimates.

The chapter is organized as follows. Section~\ref{sec:ch7_setting} formalizes the autoevaluation setting and introduces the evaluation protocols. Section~\ref{sec:ch7_reval} reviews reliable evaluation using only real data (R-Eval) \citep{waudby2024estimating}. Section~\ref{sec:ch7_rautoeval} describes how PPI can be used to incorporate autoevaluated data into the testing framework (R-AutoEval) \citep{einbinder2024semi}. Section~\ref{sec:ch7_rautoeval_plus} presents R-AutoEval+, an adaptive method that automatically tunes the reliance on synthetic data based on the quality of the autoevaluator \citep{park2025adaptive}. Section~\ref{sec:ch7_example} illustrates the framework on the wireless scheduling application of Sec.~\ref{sec:app_wireless}.

\section{The Autoevaluation Setting}
\label{sec:ch7_setting}

As in the rest of the monograph, we fix a finite candidate set $\Lambda = \{\lambda_1, \ldots, \lambda_K\}$ of hyperparameters and a bounded loss function $L_\lambda: \mathcal{X} \times \mathcal{Y} \to [0,1]$. The true risk of each candidate is given by $R(\lambda) = \mathbb{E}[L_\lambda(X,Y)]$, with test data prior $(X,Y) \sim \mathbb{P}$.

As illustrated in Fig. \ref{fig:ch7_autoeval_setting}, in the autoevaluation setting, we assume access to two data sources:
\begin{itemize}
    \item A \textbf{ground-truth labeled dataset} $\mathcal{D}^{\mathrm{cal}} = \{(X_i, Y_i)\}_{i=1}^n$ of $n$ ground-truth labeled samples $(X_i, Y_i) \overset{\text{i.i.d.}}{\sim} \mathbb{P}$, where the number of labeled samples $n$ is small due to, e.g., annotation cost in the case of human labels.
    \item An \textbf{unlabeled dataset} $\mathcal{D}^{\mathrm{unl}} = \{\tilde{X}_i\}_{i=1}^N$ of $N$ unlabeled inputs $\tilde{X}_i \overset{\text{i.i.d.}}{\sim} \mathbb{P}_X$, where the number of unlabeled inputs $N$ is large and the inputs are cheap to obtain.
\end{itemize}
A pre-trained \textbf{autoevaluator} $f: \mathcal{X} \to \mathcal{Y}$ provides synthetic labels $f(\tilde{X})$ for inputs in the unlabeled set. The autoevaluator may be, for instance, a large LLM used as a judge. The autoevaluator $f$ is applied to the unlabeled dataset $\mathcal{D}^{\mathrm{unl}}$ to obtain the synthetic calibration dataset $\tilde{\mathcal{D}}^\mathrm{cal} = \{(\tilde{X}_i, f(\tilde{X}_i))\}_{i = 1}^N$.

\begin{figure}[t]
    \centering
    \includegraphics[width=0.65\linewidth]{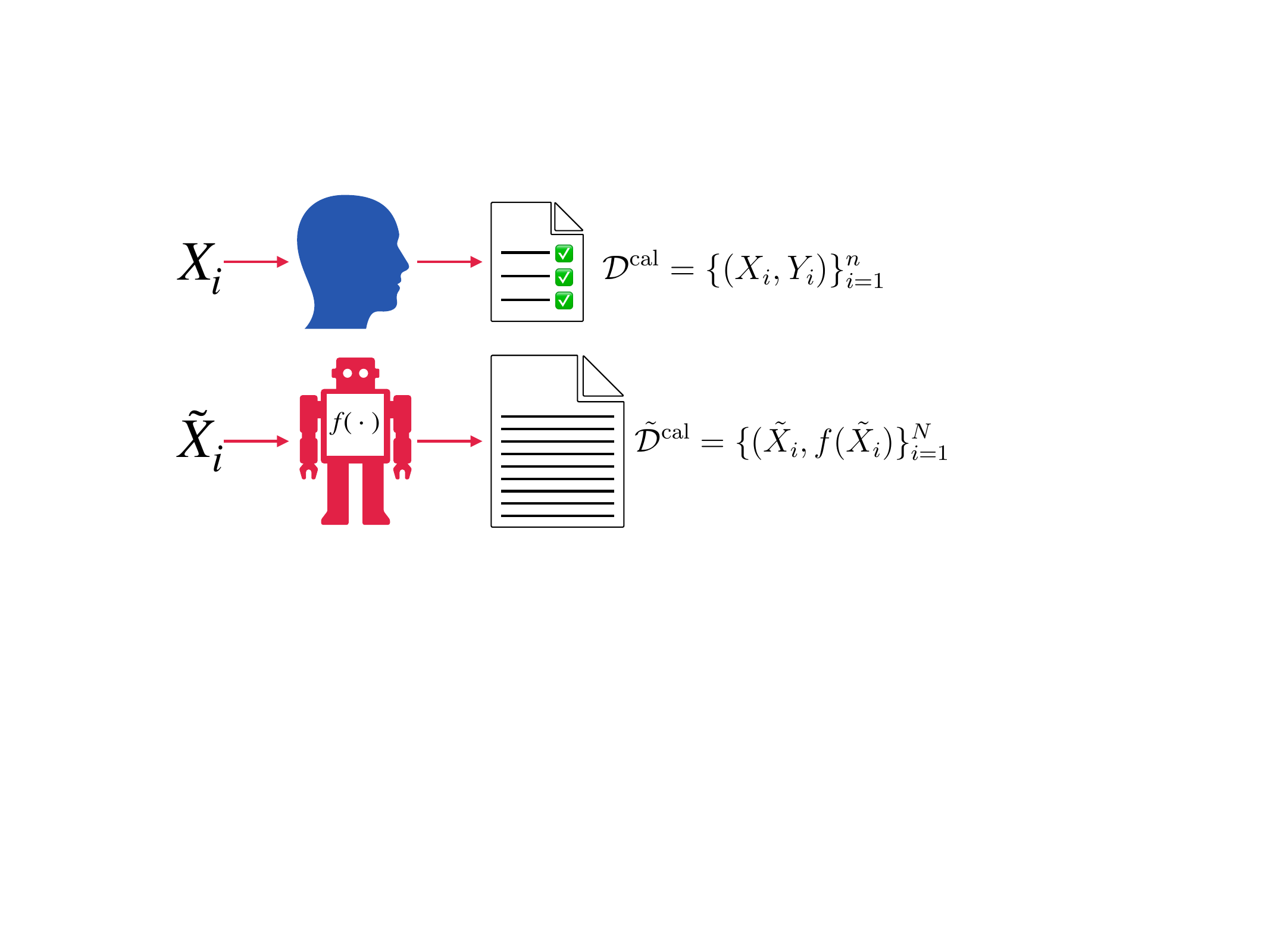}
    \caption{Illustration of the two data sources in the autoevaluation setting.
    The ground-truth labeled dataset $\mathcal{D}^{\mathrm{cal}} = \{(X_i, Y_i)\}_{i=1}^n$
    is produced, e.g., by a human annotator and is small due to annotation cost.
    The synthetic calibration dataset $\tilde{\mathcal{D}}^{\mathrm{cal}} =
    \{(\tilde{X}_i, f(\tilde{X}_i))\}_{i=1}^N$ is produced by applying the
    autoevaluator $f$ to the large unlabeled dataset $\mathcal{D}^{\mathrm{unl}}$,
    and is cheap to obtain but potentially biased.}
    \label{fig:ch7_autoeval_setting}
\end{figure}

The autoevaluated risk estimate for a hyperparameter $\lambda$ is obtained using the synthetic labels as
\begin{equation}
\label{eq:ch7_autoeval_risk}
\widehat{R}^f_N(\lambda) = \frac{1}{N} \sum_{i=1}^N L_\lambda\!\big(\tilde{X}_i, f(\tilde{X}_i)\big).
\end{equation}
While this estimate has low variance due to the large size $N$ of the unlabeled dataset, it is generally biased, i.e.,
\begin{equation}
\mathbb{E}[\widehat{R}^f_N(\lambda)] = \mathbb{E}[L_\lambda(X, f(X))] \neq R(\lambda).
\end{equation}

Three strategies are distinguished in this chapter:
\begin{itemize}
    \item \textbf{Eval}: This baseline scheme uses only the ground-truth labeled
    data $\mathcal{D}^{\mathrm{cal}}$ to estimate the risk via the empirical average
    \begin{equation}
    \label{eq:ch7_eval_risk}
    \widehat{R}_n(\lambda) = \frac{1}{n} \sum_{i=1}^n L_\lambda(X_i, Y_i).
    \end{equation}
    This estimate is unbiased, i.e., $\mathbb{E}[\widehat{R}_n(\lambda)] = R(\lambda)$,
    but it has high variance due to the small labeled dataset size $n$.
    \item \textbf{AutoEval}: This complementary baseline uses only the autoevaluated
    data via the estimate \eqref{eq:ch7_autoeval_risk}. This approach exhibits low
    variance due to large unlabeled dataset size $N$ but is generally biased.
    \item \textbf{R-AutoEval \citep{einbinder2024semi} / R-AutoEval+
    \citep{park2025adaptive}}: These methods combine both data sources to achieve
    bias-corrected, statistically valid risk certification by leveraging PPI \citep{angelopoulos2023prediction}.
\end{itemize}

As in Chapter~\ref{chapter:adaptive}, reliability certification in this chapter is formulated as a sequential testing problem. Specifically, the ground-truth labeled samples in $\mathcal{D}^{\mathrm{cal}}$ are processed one by one, and at each step the accumulated evidence is used to update an e-process for each candidate hyperparameter, as described in Sec.~\ref{sec:ch6_eprocess}. This sequential formulation enables early termination as soon as sufficient evidence has been gathered, reducing the number of expensive ground-truth labeled samples required.

As formalized in Definition~\ref{def:ch2_rcp}, the goal remains to select a hyperparameter $\hat{\lambda} \in \Lambda$ satisfying the $(\alpha, \delta)$-risk-controlling guarantee $\mathbb{P}(R(\hat{\lambda}) \leq \alpha) \geq 1 - \delta$. The challenge is to achieve this guarantee while reducing the amount of ground-truth labeled data required.

To measure the cost of an evaluation method, we define its \emph{sample complexity} as the minimum expected number of ground-truth labeled samples needed to certify a reliable hyperparameter $\lambda$, i.e., to ensure that it is included in the set $\hat{\Lambda}$. This can be defined mathematically as the conditional average
\begin{equation}
\label{eq:ch7_sample_complexity}
n_{\min}(\delta, \lambda) = \mathbb{E}\!\left[\min\{n : {\lambda} \in \hat{\Lambda}\} \;\middle|\; R({\lambda}) \le \alpha\right].
\end{equation}
A method with smaller sample complexity can certify reliable hyperparameters using fewer expensive ground-truth labeled samples.

\section{Reliable Evaluation with Real Data}
\label{sec:ch7_reval}

The simplest statistically valid approach is to use only the ground-truth labeled data $\mathcal{D}^{\mathrm{cal}}$. In the sequential testing setting described above, this is done by applying the e-process framework of Chapter~\ref{chapter:adaptive}, with the per-round observation $L_\lambda(X_i, Y_i)$, which is an unbiased estimate of $R(\lambda)$ bounded in $[0,1]$.

Following the testing-by-betting construction of Sec.~\ref{sec:ch6_eprocess}, for any given candidate hyperparameter $\lambda$, R-Eval \citep{waudby2024estimating} constructs the e-process
\begin{equation}
\label{eq:ch7_reval_eprocess}
E^{\mathrm{R\text{-}Eval}}_n = \prod_{i=1}^n \bigl(1 - \mu_i\bigl(L_\lambda(X_i, Y_i) - \alpha\bigr)\bigr),
\end{equation}
where the betting parameter $\mu_i \in (0, 1/(1-\alpha))$ is chosen using past observations. By Lemma~\ref{lem:ch6_eprocess_valid}, the sequence \eqref{eq:ch7_reval_eprocess} provides a valid e-process for the null hypothesis $\mathcal{H}_\lambda: R(\lambda) > \alpha$. The corresponding anytime-valid test, with target error level $\delta \in (0,1)$, is
\begin{equation}
\label{eq:ch7_reval_test}
T_n^{\mathrm{R\text{-}Eval}} = \mathbb{I}\!\left\{\max_{i \le n} E^{\mathrm{R\text{-}Eval}}_i \ge 1/\delta\right\},
\end{equation}
which certifies $\lambda$ as reliable as soon as the e-process exceeds the threshold $1/\delta$ and satisfies $\mathbb{P}(T_n^{\mathrm{R\text{-}Eval}} = 1 \mid R(\lambda) > \alpha) \le \delta$. To certify a set $\hat{\Lambda} \subseteq \Lambda$, the e-values $\{E^{\mathrm{R\text{-}Eval}}_n(\lambda_k)\}_{k=1}^K$ are fed into any of the MHT procedures of Chapter~\ref{chapter:mht}, exactly as in Algorithm~\ref{alg:ch6_altt}.

The limitation of R-Eval is its sample complexity. Since only the $n$ ground-truth labeled samples are used, the e-process grows slowly when $n$ is small, delaying certification. In fact, as shown in \citep{waudby2024estimating}, as $\delta \to 0$, the sample complexity $n_{\min}(\delta, \lambda)$ in \eqref{eq:ch7_sample_complexity} scales as $\log(1/\delta)/g^\star(\lambda)$, where $g^\star(\lambda) = \mathbb{E}[\log(1 - \mu^\star(L_\lambda(X,Y) - \alpha)) \mid R(\lambda) \le \alpha]$ is the maximum expected log-wealth increment, and $\mu^\star$ is the optimal constant betting parameter. As discussed in Sec.~\ref{sec:ch6_eprocess}, this quantity decreases with the variance of the per-sample loss, so high-variance losses lead to larger sample complexity.

\section{Bias-Corrected Evaluation via PPI}
\label{sec:ch7_rautoeval}

R-AutoEval \citep{einbinder2024semi} addresses the sample complexity
limitations of R-Eval by incorporating autoevaluated data through a bias correction
mechanism based on PPI \citep{angelopoulos2023prediction}. The core idea of PPI
is to use the large unlabeled dataset to obtain a low-variance but potentially
biased risk estimate, and then use the small ground-truth labeled dataset to correct
for this bias. This yields an effective observation that is unbiased and
potentially lower-variance than the raw ground-truth labeled loss alone when the autoevaluator
is sufficiently accurate.

To proceed, let $r = \lfloor |\mathcal{D}^{\mathrm{unl}}|/n \rfloor$ denote
the ratio of unlabeled to labeled samples. Then, pair each labeled data point
$(X_i, Y_i)$ with a distinct subset of $r$ autoevaluated samples drawn from
$\tilde{\mathcal{D}}^{\mathrm{cal}}$, namely $\{(\tilde{X}_{i'}, f(\tilde{X}_{i'}))\}_{i' = r(i-1)+1}^{ri}$.

Using the labeled data point $(X_i, Y_i)$ and the corresponding autoevaluated batch $\{(\tilde{X}_{i'}, f(\tilde{X}_{i'}))\}_{i' = r(i-1)+1}^{ri}$, we construct the PPI-based estimate
\begin{equation}
\label{eq:ch7_ppi_obs}
\ell^f_i(\lambda) = \underbrace{\frac{1}{r} \sum_{i' = r(i-1)+1}^{ri} L_\lambda\!\big(\tilde{X}_{i'}, f(\tilde{X}_{i'})\big)}_{\text{autoevaluator data}} + \underbrace{L_\lambda(X_i, Y_i) - L_\lambda(X_i, f(X_i))}_{\text{bias correction}}.
\end{equation}
The first term is the empirical autoevaluated risk over the $r$ synthetic samples, providing low-variance, but biased, information about the true risk $R(\lambda)$. The second term corrects for the bias of the autoevaluator using the paired real observation. As shown next, the estimate \eqref{eq:ch7_ppi_obs} is unbiased.

\begin{lemma}[Validity of PPI estimates]
\label{lem:ch7_ppi_unbiased}
The estimates $\ell^f_i(\lambda)$ in \eqref{eq:ch7_ppi_obs} are unbiased, i.e.,
\begin{equation}
\mathbb{E}[\ell^f_i(\lambda)] = R(\lambda).
\end{equation}
Moreover, the estimate is bounded as $\ell^f_i(\lambda) \in [-1, 2]$.
\end{lemma}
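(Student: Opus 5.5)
The plan is to prove unbiasedness by linearity of expectation applied to the three summands in \eqref{eq:ch7_ppi_obs}, and then to get the range bound by bounding each summand separately using $L_\lambda\in[0,1]$.

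For unbiasedness, each synthetic input $\tilde{X}_{i'}$ is drawn i.i.d.\ from $\mathbb{P}_X$, the marginal of the labeled pairs $(X_i,Y_i)\sim\mathbb{P}$. Since $f$ is a fixed pre-trained map, independent of the calibration data, every term in the autoevaluator average has expectation
\begin{equation*}
\mathbb{E}\big[L_\lambda(\tilde{X}_{i'},f(\tilde{X}_{i'}))\big]=\mathbb{E}\big[L_\lambda(X,f(X))\big].
\end{equation*}
The first summand of \eqref{eq:ch7_ppi_obs} therefore has mean $\mathbb{E}[L_\lambda(X,f(X))]$. For the bias-correction term, $X_i$ has the same marginal $\mathbb{P}_X$, so $\mathbb{E}[L_\lambda(X_i,f(X_i))]=\mathbb{E}[L_\lambda(X,f(X))]$. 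Also $\mathbb{E}[L_\lambda(X_i,Y_i)]=R(\lambda)$ by \eqref{eq:ch2_risk_def}. Summing with signs, the two autoevaluator expectations cancel and leave $R(\lambda)$.

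For the bound, the average of $r$ values in $[0,1]$ lies in $[0,1]$. The difference $L_\lambda(X_i,Y_i)-L_\lambda(X_i,f(X_i))$ lies in $[-1,1]$. Adding the two ranges gives $\ell^f_i(\lambda)\in[-1,2]$.

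The only real obstacle is making the distributional assumptions explicit. The cancellation needs $\tilde{X}_{i'}$ and $X_i$ to share the marginal $\mathbb{P}_X$, and it needs $f$ to be independent of both datasets, e.g., trained on separate data, so that $f$ can be treated as deterministic. I would state these assumptions explicitly, noting that the setting of Sec.~\ref{sec:ch7_setting} provides them. Integrability is automatic because everything is bounded, and no independence between the labeled and unlabeled samples is needed for the mean.
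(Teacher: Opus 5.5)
Your proposal is correct and follows essentially the same route as the paper's proof. Unbiasedness comes from linearity of expectation plus the shared marginal $\mathbb{P}_X$ of $\tilde{X}$ and $X$, and the range $[-1,2]$ comes from adding the ranges $[0,1]$ and $[-1,1]$ of the two terms. Your explicit remark that $f$ must be fixed and independent of both datasets is a sensible clarification that the paper leaves implicit.
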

\begin{proof}
By linearity of expectation and using the fact that $\tilde{X}$ and $X$ share the same marginal distribution $\mathbb{P}_X$, i.e., $\mathbb{E}[L_\lambda(\tilde{X},
f(\tilde{X}))] = \mathbb{E}[L_\lambda(X, f(X))]$, we obtain
\begin{equation}
\mathbb{E}[\ell^f_i(\lambda)] = \mathbb{E}[L_\lambda(\tilde{X}, f(\tilde{X}))] +
\mathbb{E}[L_\lambda(X,Y)] - \mathbb{E}[L_\lambda(X, f(X))] = R(\lambda).
\end{equation}
For the range of the estimate, since each loss term lies in $[0,1]$, the lower bound is achieved
when the autoevaluated average equals $0$ and $L_\lambda(X_i, Y_i) -
L_\lambda(X_i, f(X_i)) = -1$, giving the lower bound $\ell^f_i(\lambda) \ge -1$. The upper bound is
achieved when the autoevaluated average equals $1$ and $L_\lambda(X_i, Y_i) -
L_\lambda(X_i, f(X_i)) = 1$, giving the upper bound $\ell^f_i(\lambda) \le 2$.
\end{proof}

Since the estimate $\ell^f_i(\lambda)$ is unbiased and bounded, it can be used in place of the estimate $L_\lambda(X_i, Y_i)$ based solely on labeled data to construct a valid e-process for the null hypothesis $\mathcal{H}_\lambda: R(\lambda) > \alpha$. Accordingly, R-AutoEval \citep{einbinder2024semi} replaces the real-data observations in \eqref{eq:ch7_reval_eprocess} with the effective observations \eqref{eq:ch7_ppi_obs}, yielding
\begin{equation}
\label{eq:ch7_rautoeval_eprocess}
E^{\mathrm{R\text{-}AutoEval}}_n = \prod_{i=1}^n \bigl(1 - \mu_i\bigl(\ell^f_i(\lambda) - \alpha\bigr)\bigr),
\end{equation}
where the betting parameter $\mu_i \in (0, 1/(2-\alpha))$ accounts for the extended range $[-1, 2]$ of $\ell^f_i(\lambda)$. The test is then constructed as in \eqref{eq:ch7_reval_test} with the e-process $E^{\mathrm{R\text{-}AutoEval}}_n$ in lieu of $E^{\mathrm{R\text{-}Eval}}_n$.

The sample efficiency of R-AutoEval relative to R-Eval depends on whether the estimate $\ell^f_i(\lambda)$ has lower variance than the raw loss $L_\lambda(X_i, Y_i)$. When the autoevaluator is accurate, the bias correction term $L_\lambda(X_i, Y_i) - L_\lambda(X_i, f(X_i))$ in \eqref{eq:ch7_ppi_obs} has small variance and the synthetic data effectively augments the calibration set, reducing sample complexity. However, when the autoevaluator is inaccurate, this correction term has high variance, and incorporating the synthetic data can actually increase the sample complexity compared to R-Eval \citep{angelopoulos2023prediction}.

This observation motivates a more adaptive approach: rather than committing to a fixed reliance on synthetic data, one should automatically adjust the reliance of the PPI estimator \eqref{eq:ch7_ppi_obs} based on the observed quality of the autoevaluator.

\section{Adaptive Autoevaluation (R-AutoEval+)}
\label{sec:ch7_rautoeval_plus}

R-AutoEval+ \citep{park2025adaptive} resolves the tension between R-Eval and R-AutoEval by introducing a \emph{reliance factor} $\rho \in [0,1]$ that controls the degree to which the estimate \eqref{eq:ch7_ppi_obs} relies on synthetic data. Rather than fixing $\rho$ in advance, R-AutoEval+ considers a finite grid of $S$ candidate values
\begin{equation}
\label{eq:ch7_rho_grid}
0 = \rho_1 < \rho_2 < \cdots < \rho_S = 1,
\end{equation}
and adaptively determines which candidate best matches the quality of the autoevaluator based on the accumulated evidence.

For each candidate $\rho_s$, define the corresponding estimate by generalizing \eqref{eq:ch7_ppi_obs} as \citep{angelopoulos2023prediction}
\begin{equation}
\label{eq:ch7_rho_obs}
\begin{split}
\ell^f_{s,i}(\lambda) &= \underbrace{\frac{\rho_s}{r} \sum_{i' = r(i-1)+1}^{ri} L_\lambda\!\big(\tilde{X}_{i'}, f(\tilde{X}_{i'})\big)}_{\text{autoevaluator data}} \\
&\quad + \underbrace{L_\lambda(X_i, Y_i) - \rho_s \cdot L_\lambda(X_i, f(X_i))}_{\text{bias correction}}.
\end{split}
\end{equation}
Setting $\rho_s = 0$ recovers the pure real-data observation, i.e., R-Eval, whereas $\rho_s = 1$ recovers the PPI effective observation \eqref{eq:ch7_ppi_obs}, i.e., R-AutoEval.

\begin{lemma}[Validity of the $\rho_s$-weighted estimates]
\label{lem:ch7_rho_unbiased}
The estimate $\ell^f_{s,i}(\lambda)$ in \eqref{eq:ch7_rho_obs} is an unbiased estimate of $R(\lambda)$, i.e., $\mathbb{E}[\ell^f_{s,i}(\lambda)] = R(\lambda)$, with range $\ell^f_{s,i}(\lambda) \in [-\rho_s, 1+\rho_s]$.
\end{lemma}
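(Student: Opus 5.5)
The plan mirrors the proof of Lemma~\ref{lem:ch7_ppi_unbiased}, keeping track of the factor $\rho_s$ throughout. There are two separate claims: unbiasedness and the range. For unbiasedness, I will take expectations term by term in \eqref{eq:ch7_rho_obs} and use linearity. The samples $\tilde{X}_{i'}$ are i.i.d.\ from $\mathbb{P}_X$, which is the $X$-marginal of the labeled pair $(X_i,Y_i)$. Hence each summand in the autoevaluator average has mean $\mathbb{E}[L_\lambda(X,f(X))]$, and the first term has mean $\rho_s\,\mathbb{E}[L_\lambda(X,f(X))]$. The bias-correction term has mean $R(\lambda)-\rho_s\,\mathbb{E}[L_\lambda(X,f(X))]$. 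Adding the two, the $\rho_s$-weighted autoevaluator contributions cancel exactly and leave $R(\lambda)$. Independence between the labeled and unlabeled samples is not needed; only the equality of marginals matters.

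For the range, I will write the estimate as $\ell^f_{s,i}(\lambda)=\rho_s A_i + B_i - \rho_s C_i$. Here $A_i\in[0,1]$ is the synthetic average, being an average of $[0,1]$-valued losses, and $B_i=L_\lambda(X_i,Y_i)\in[0,1]$ and $C_i=L_\lambda(X_i,f(X_i))\in[0,1]$. Since $\rho_s\ge 0$, the expression is nondecreasing in $A_i$ and $B_i$ and nonincreasing in $C_i$. The lower bound therefore comes from $A_i=B_i=0$ and $C_i=1$, giving $-\rho_s$. The upper bound comes from $A_i=B_i=1$ and $C_i=0$, giving $1+\rho_s$. Setting $\rho_s=1$ recovers the interval $[-1,2]$ of Lemma~\ref{lem:ch7_ppi_unbiased}, and $\rho_s=0$ recovers $[0,1]$, which is a useful consistency check.

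No step is a real obstacle. The only point needing care is the range bound: it should be justified by termwise monotonicity rather than by exhibiting extremal configurations. The configurations may not all be simultaneously attainable, for instance if $f$ is accurate, so the stated interval is a valid enclosure but not necessarily tight.
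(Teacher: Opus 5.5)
Your proposal is correct and follows essentially the same route as the paper, which simply defers to the argument of Lemma~\ref{lem:ch7_ppi_unbiased}: linearity of expectation with the shared marginal $\mathbb{P}_X$ gives unbiasedness, and bounding each $[0,1]$-valued loss term at its extreme value gives the range. Your monotonicity phrasing of the range bound is a slightly cleaner statement of the same argument, and you are right that the endpoints $-\rho_s$ and $1+\rho_s$ need not be attained.
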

\begin{proof}
The proof follows by the same argument as Lemma~\ref{lem:ch7_ppi_unbiased}.
\end{proof}

For each candidate factor $\rho_s$, R-AutoEval+ constructs a separate e-process
\begin{equation}
\label{eq:ch7_Es}
E_{s,i}^\lambda = \prod_{j=1}^i \bigl(1 - \mu_{s,j}\bigl(\ell^f_{s,j}(\lambda) - \alpha\bigr)\bigr),
\end{equation}
using the effective observations \eqref{eq:ch7_rho_obs}. Larger values of $E_{s,i}^\lambda$ indicate that reliance factor $\rho_s$ is accumulating strong evidence in favor of the reliability of $\lambda$.

The R-AutoEval+ e-value combines these $S$ processes via adaptive exponential weighting as
\begin{equation}
\label{eq:ch7_combined_evalue}
E_n^\lambda = \prod_{i=1}^n \sum_{s=1}^S w_{s,i} \bigl(1 - \mu_{s,i}\bigl(\ell^f_{s,i}(\lambda) - \alpha\bigr)\bigr),
\end{equation}
where the weight $w_{s,i}$ assigned to the $s$-th factor at round $i$ is updated proportionally to the evidence accumulated by that factor as
\begin{equation}
\label{eq:ch7_weight_update}
w_{s,i}(\lambda) = \frac{w_{s,0} \cdot E_{s,i-1}^\lambda}{\sum_{s'=1}^S w_{s',0} \cdot E_{s',i-1}^\lambda},
\end{equation}
with strictly positive initial weights $\{w_{s,0}\}_{s=1}^S$ satisfying $\sum_{s=1}^S w_{s,0} = 1$.

The update rule \eqref{eq:ch7_weight_update} is an instance of exponential weighting, namely the Hedge algorithm \citep{freund1997decision}. In this algorithm, factors that accumulate large e-values, indicating that the corresponding level of reliance on synthetic data is well-supported by the data, are assigned proportionally larger weights.

\begin{lemma}[Validity of the R-AutoEval+ e-value]
\label{lem:ch7_rautoevalplus_valid}
The quantity $E_n^\lambda$ in \eqref{eq:ch7_combined_evalue} is a valid e-value for the null hypothesis $\mathcal{H}_\lambda: R(\lambda) > \alpha$, i.e., $\mathbb{E}[E_n^\lambda \mid R(\lambda) > \alpha] \le 1$.
\end{lemma}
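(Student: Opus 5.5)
The plan is to show that $\{E_n^\lambda\}_{n\ge 0}$, with $E_0^\lambda=1$, is a non-negative supermartingale under $\mathcal{H}_\lambda$ with respect to the filtration $\mathcal{F}_{i}$ generated by the first $i$ labeled points and their paired autoevaluated batches. The claim then follows exactly as in Lemma~\ref{lem:ch6_eprocess_valid}: for fixed $n$ by iterated expectations, and for any stopping time by optional stopping.

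\textbf{Predictability.} First I will check that the weights $w_{s,i}(\lambda)$ in \eqref{eq:ch7_weight_update} are $\mathcal{F}_{i-1}$-measurable. They depend only on $E^\lambda_{s,i-1}$, which are built from observations $j\le i-1$. I will also assume, as in R-Eval, that the betting parameters $\mu_{s,i}$ are chosen from past observations, so they are $\mathcal{F}_{i-1}$-measurable too. The weights are non-negative and sum to one. If every $E^\lambda_{s',i-1}$ vanishes, the denominator is zero; in that case I would adopt the convention that the weights are set to $w_{s,0}$, since the product is already zero and the case is harmless.

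\textbf{Non-negativity.} By Lemma~\ref{lem:ch7_rho_unbiased}, $\ell^f_{s,i}(\lambda)\in[-\rho_s,1+\rho_s]$. So $\ell^f_{s,i}(\lambda)-\alpha\le 1+\rho_s-\alpha$, and each factor $1-\mu_{s,i}(\ell^f_{s,i}-\alpha)$ is non-negative provided $\mu_{s,i}\in(0,1/(1+\rho_s-\alpha)]$. This is the natural analog of the range constraint stated for R-AutoEval, and I will take it as the standing assumption on the bets. A convex combination of non-negative factors is non-negative, so $E_n^\lambda\ge0$.

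\textbf{Conditional mean.} The key computation is
\begin{equation*}
\mathbb{E}\Bigl[\sum_{s} w_{s,i}\bigl(1-\mu_{s,i}(\ell^f_{s,i}-\alpha)\bigr)\Bigm|\mathcal{F}_{i-1}\Bigr]=\sum_s w_{s,i}\bigl(1-\mu_{s,i}(R(\lambda)-\alpha)\bigr)\le 1 .
\end{equation*}
This uses predictability to pull out $w_{s,i}$ and $\mu_{s,i}$. It then uses the fact that the $i$-th labeled point and its paired batch are independent of the past, so $\mathbb{E}[\ell^f_{s,i}\mid\mathcal{F}_{i-1}]=R(\lambda)$ by Lemma~\ref{lem:ch7_rho_unbiased}. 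Finally, $R(\lambda)>\alpha$ under the null and $\sum_s w_{s,i}=1$. Multiplying by $E^\lambda_{i-1}$ and taking expectations inductively gives $\mathbb{E}[E_n^\lambda]\le1$.

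The main obstacle is the first step. One must be careful that the combined wealth uses weights and bets frozen before observation $i$ is revealed. One must also handle the degenerate zero-denominator case and the per-$s$ range condition on $\mu_{s,i}$. The expectation step itself is routine once these are in place.
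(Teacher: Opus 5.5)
Your proposal is correct and follows essentially the same route as the paper: non-negativity of each factor from the range $[-\rho_s,1+\rho_s]$ of $\ell^f_{s,i}(\lambda)$ together with the bet constraint $\mu_{s,i}<1/(1+\rho_s-\alpha)$; predictability of the Hedge weights, which lets you pull them out of the conditional expectation; conditional unbiasedness, which gives $\sum_s w_{s,i}(1-\mu_{s,i}(R(\lambda)-\alpha))\le 1$; and induction via iterated expectations. Your extra care about the predictability of $\mu_{s,i}$ and the zero-denominator convention only tightens details the paper leaves implicit, and the combined product is indeed already zero in that case, since the Hedge product telescopes to $E^\lambda_{i-1}=\sum_s w_{s,0}E^\lambda_{s,i-1}$.
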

\begin{proof}
Since each quantity $\ell^f_{s,i}(\lambda)$ is an unbiased estimate of the true risk $R(\lambda)$ with range $[-\rho_s, 1+\rho_s]$ and $\mu_{s,i} \in (0, 1/(1+\rho_s - \alpha))$, each term $1 - \mu_{s,i}(\ell^f_{s,i}(\lambda) - \alpha)$ is non-negative. Under the null hypothesis $R(\lambda) > \alpha$, since the weights $w_{s,i}$ depend only on past observations $\{\ell^f_{s,j}(\lambda)\}_{j < i}$, the conditional expectation of each factor satisfies
\begin{equation}
\begin{split}
&\mathbb{E}\!\left[\sum_{s=1}^S w_{s,i}\bigl(1 - \mu_{s,i}(\ell^f_{s,i}(\lambda) - \alpha)\bigr) \;\middle|\; \{\ell^f_{s,j}\}_{j < i},\, R(\lambda) > \alpha\right] \\
&\qquad = \sum_{s=1}^S w_{s,i}\bigl(1 - \mu_{s,i}(R(\lambda) - \alpha)\bigr) \le 1.
\end{split}
\end{equation}
Taking iterated expectations and using the fact that $E_0^\lambda = 1$, by induction we obtain $\mathbb{E}[E_n^\lambda \mid R(\lambda) > \alpha] \le 1$.
\end{proof}

The validity of the e-value in Lemma~\ref{lem:ch7_rautoevalplus_valid} implies that the test $T_n = \mathbb{I}\{\max_{i \le n} E_i^\lambda \ge 1/\delta\}$ satisfies the reliability condition $\mathbb{P}(T_n = 1 \mid R(\lambda) > \alpha) \le \delta$, providing the same finite-sample guarantee as R-Eval and R-AutoEval. To certify a set of hyperparameters $\hat{\Lambda} \subseteq \Lambda$ with FWER or FDR control, Algorithm~\ref{alg:ch7_rautoeval_plus} is run for each candidate $\lambda_k \in \Lambda$, and the resulting e-values $\{E_n^{\lambda_k}\}_{k=1}^K$ are fed into any of the MHT procedures described in Chapter~\ref{chapter:mht}.

The adaptive weighting in \eqref{eq:ch7_weight_update} is
designed so that R-AutoEval+ tracks the best reliance factor $\rho_s$ in the
grid \eqref{eq:ch7_rho_grid}, achieving a sample complexity no larger than the
best of R-Eval and R-AutoEval. In particular, reference \citep{park2025adaptive} shows that, under the main condition that the betting strategies $\{\mu_{s,i}\}$ satisfy sublinear regret with respect to the optimal constant betting parameters $\mu_{s,\star}$, the sample complexity of R-AutoEval+ satisfies, as $\delta \to 0$, the inequality
\begin{equation}
\label{eq:ch7_dominance}
n^{\mathrm{R\text{-}AutoEval+}}_{\min}(\delta, \lambda) \;\le\; \min\!\left\{n^{\mathrm{R\text{-}Eval}}_{\min}(\delta, \lambda),\; n^{\mathrm{R\text{-}AutoEval}}_{\min}(\delta, \lambda)\right\}.
\end{equation}
Furthermore, a strict improvement in sample complexity is obtained when there exists an intermediate reliance factor $\rho_s \in (0,1)$ for which the variance of $\ell^f_{s,i}(\lambda)$ is strictly smaller than for both $\rho_1 = 0$ (corresponding to R-Eval, with observation $L_\lambda(X_i, Y_i)$) and $\rho_S = 1$ (corresponding to R-AutoEval, with observation $\ell^f_i(\lambda)$). This occurs whenever the autoevaluator is sufficiently but not perfectly accurate.

The complete R-AutoEval+ procedure for a single candidate hyperparameter $\lambda$ is summarized in Algorithm~\ref{alg:ch7_rautoeval_plus}. It processes labeled samples sequentially, updating the weighted e-value at each round. Once Algorithm~\ref{alg:ch7_rautoeval_plus} has been run for all candidates $\lambda_k \in \Lambda$, the resulting e-values are fed into the MHT procedures of Chapter~\ref{chapter:mht} to obtain a certified set $\hat{\Lambda}$, exactly as in Algorithm~\ref{alg:ch6_altt}.

\begin{algorithm}[t]
\caption{R-AutoEval+ \citep{park2025adaptive}}
\label{alg:ch7_rautoeval_plus}
\begin{algorithmic}
\STATE \textbf{Input:} Candidate hyperparameter $\lambda$; ground-truth labeled data
$\{(X_i, Y_i)\}_{i=1}^n$; unlabeled data $\mathcal{D}^{\mathrm{unl}}$; autoevaluator
$f$; risk threshold $\alpha$; error level $\delta$; reliance factors
$\{\rho_s\}_{s=1}^S$ as in \eqref{eq:ch7_rho_grid}; initial weights
$\{w_{s,0}\}_{s=1}^S$ with $\sum_{s=1}^S w_{s,0} = 1$

\STATE \textbf{Output:} E-value $E_n^\lambda$ for use in MHT (see Chapter~\ref{chapter:mht})

\STATE Initialize $E_0^\lambda \leftarrow 1$, $E_{s,0}^\lambda \leftarrow 1$ for all $s$

\FOR{$i = 1, \ldots, n$}
    \STATE Compute weights $w_{s,i} \leftarrow
    \dfrac{w_{s,0} \cdot E_{s,i-1}^\lambda}{\sum_{s'=1}^S w_{s',0} \cdot E_{s',i-1}^\lambda}$
    for all $s$
    \FOR{$s = 1, \ldots, S$}
        \STATE Compute effective observation $\ell^f_{s,i}(\lambda)$ via \eqref{eq:ch7_rho_obs}
        \STATE Update $E_{s,i}^\lambda \leftarrow E_{s,i-1}^\lambda \cdot \bigl(1 - \mu_{s,i}
        (\ell^f_{s,i}(\lambda) - \alpha)\bigr)$
    \ENDFOR
    \STATE Update $E_i^\lambda \leftarrow E_{i-1}^\lambda \cdot \sum_{s=1}^S w_{s,i} \cdot
    \bigl(1 - \mu_{s,i}(\ell^f_{s,i}(\lambda) - \alpha)\bigr)$
\ENDFOR

\STATE \textbf{return} $E_n^\lambda$
\end{algorithmic}
\end{algorithm}

\section{Example: Adaptive Autoevaluation for Wireless Scheduling}
\label{sec:ch7_example}

We illustrate the autoevaluation framework on the wireless scheduling application of Sec.~\ref{sec:app_wireless}, using the Nokia delay table \citep{Nokia} as the source of ground-truth episodes. As in Sec.~\ref{sec:ch6_app_wireless}, we consider $K=20$ candidate scheduler configurations. To bring the variance of the per-sample loss into a regime where the autoevaluation framework is informative, we adopt the per-episode binary service-level agreement (SLA) loss
\begin{equation}
\label{eq:ch7_example_loss}
L_{\lambda_k}(X_i,Y_i) \;=\; \mathbb{I}\!\left\{\mathrm{delay}_{\lambda_k}(X_i) > 10\,\mathrm{ms}\right\},
\end{equation}
which records whether the average packet delay in episode $X_i$ violates the $10$ ms SLA under configuration $\lambda_k$. We set the tolerated violation rate to $\alpha = 0.3$ and the outage rate to $\delta = 0.1$. Under the loss (\ref{eq:ch7_example_loss}), the empirical risks of the $20$ candidates range from $0$ to $0.84$, with $16$ candidates reliable.

The autoevaluator is constructed as a biased low-variance proxy of the binary loss. Specifically, for each candidate $\lambda_k$ with population mean $R(\lambda_k)$, the autoevaluator's per-sample loss is given by the convex-shrinkage construction
\begin{equation}
\label{eq:ch7_example_autoeval}
L_{\lambda_k}(X_i, f(X_i)) \;=\; R(\lambda_k) + b_k + \kappa\,\bigl(L_{\lambda_k}(X_i,Y_i) - R(\lambda_k)\bigr),
\end{equation}
where $\kappa\in[0,1]$ controls the fidelity of the autoevaluator and $b_k$ is a small per-candidate systematic bias drawn from $\mathcal{N}(0,\,0.02^2)$ and truncated to the range that keeps $L_{\lambda_k}(X_i, f(X_i))$ in $[0,1]$ without clipping. The construction \eqref{eq:ch7_example_autoeval} models a low-fidelity surrogate of the simulator: with $\kappa = 0.9$, the variance of the autoevaluator is reduced by a factor of $\kappa^2 \approx 0.8$ relative to the binary truth, while the per-candidate bias $b_k$, fixed once drawn, introduces a constant offset between the autoevaluator's mean and the true risk $R(\lambda_k)$, simulating the calibration mismatch that occurs when an automated evaluator is not perfectly aligned with the ground-truth loss. The unlabeled term in the PPI estimate \eqref{eq:ch7_ppi_obs} is approximated by the population mean of the autoevaluator over all $200$ episodes.

A direct computation shows that, with the autoevaluator (\ref{eq:ch7_example_autoeval}), the PPI effective observation \eqref{eq:ch7_rho_obs} reduces to
\begin{equation}
\label{eq:ch7_example_ppi}
\ell^f_{s,i}(\lambda_k) \;=\; (1 - \rho_s\,\kappa)\,L_{\lambda_k}(X_i,Y_i) \;+\; \rho_s\,\kappa\, R(\lambda_k),
\end{equation}
which is a convex combination of two quantities in $[0,1]$, implying the inclusion $\ell^f_{s,i}(\lambda_k) \in [0,1]$ for every reliance factor $\rho_s$. Under this tighter range, the per-round factor $1-\mu_{s,i}(\ell^f_{s,i}(\lambda_k)-\alpha)$ remains non-negative as long as $\mu_{s,i} \in (0,\,1/(1-\alpha))$, which is strictly more permissive than the worst-case range $\mu_{s,i}\in(0,\,1/(1+\rho_s-\alpha))$ of Lemma~\ref{lem:ch7_rautoevalplus_valid}.

We compare three statistically valid certification methods: R-Eval, which uses only ground-truth labels via the e-process \eqref{eq:ch7_reval_eprocess}; R-AutoEval with reliance factor $\rho=1$, i.e., the PPI e-process \eqref{eq:ch7_rautoeval_eprocess}; and R-AutoEval+ with the reliance grid $\{0,\,1/3,\,2/3,\,1\}$ and uniform initial Hedge weights $w_{s,0}=1/4$. For all three methods, the betting parameter $\mu_{s,i}$ is chosen via the aGRAPA strategy of Sec.~\ref{sec:ch6_eprocess}, which sets $\mu_{s,i}$ to the value maximizing the expected log-wealth growth (see Sec.~\ref{sec:ch6_eprocess}), estimated from the empirical mean and variance of the past $i-1$ effective observations, clipped to the validity range $(0,\,1/(1-\alpha))$. Certification of $\lambda_k$ from a budget of $n$ ground-truth labeled samples uses the Bonferroni-corrected anytime-valid test $\max_{\tau\le n} E^\lambda_\tau \ge |\Lambda|/\delta$, providing FWER control at level $\delta$. We report the true positive rate (TPR), defined as the fraction of truly reliable hyperparameters certified, averaged over $300$ independent calibration sequences.

Fig.~\ref{fig:ch7_rautoeval_wireless} presents the results. Panel (a) plots TPR against the number of ground-truth labeled samples $n$, while panel (b) plots the miss rate $1-\mathrm{TPR}$ on log--log axes. The autoevaluation methods are dramatically more sample-efficient than R-Eval: R-AutoEval already certifies $94\%$ of the reliable hyperparameters by $n = 80$ labeled samples and reaches near-perfect TPR by $n = 400$, whereas R-Eval reaches only $59\%$ at $n = 80$ and is still below $80\%$ at $n = 2000$.

The bottleneck for R-Eval is the variance of the binary loss, which equals $R(\lambda_k)(1-R(\lambda_k))$ and can be close to $1/4$ for hyperparameters near the threshold; the PPI estimate \eqref{eq:ch7_example_ppi} reduces this variance by a factor $(1-\rho_s\kappa)^2$, which gives R-AutoEval roughly an order-of-magnitude advantage.

R-AutoEval+ tracks R-AutoEval after a brief warm-up of fewer than $50$ rounds, demonstrating that the Hedge re-weighting in \eqref{eq:ch7_weight_update} concentrates rapidly on the high-reliance processes when the autoevaluator is informative. All three methods satisfy the finite-sample reliability constraint, with empirical FWER violation rate equal to zero across all values of $n$.

The main takeaway message is that, when the autoevaluator is informative enough to reduce the variance of the PPI effective observation, R-AutoEval and R-AutoEval+ certify reliable hyperparameters with markedly fewer ground-truth labeled samples than R-Eval. R-AutoEval+ retains this benefit while remaining safe to deploy without prior knowledge of the quality of the autoevaluator.

\begin{figure}[t]
    \centering
    \includegraphics[width=\linewidth]{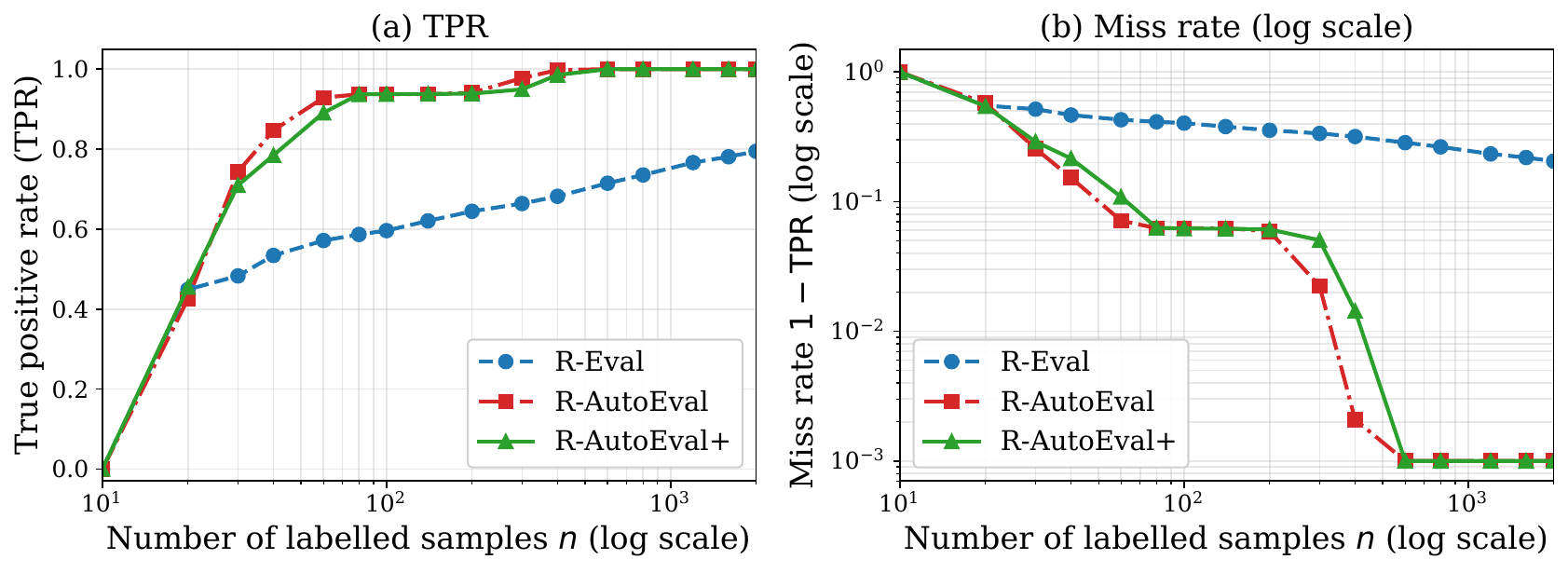}
    \caption{Comparison of R-Eval, R-AutoEval (with reliance factor $\rho=1$), and R-AutoEval+ on the wireless scheduling task of Sec.~\ref{sec:app_wireless}, with $K=20$ candidate scheduler configurations, the binary SLA loss (\ref{eq:ch7_example_loss}), target risk $\alpha=0.3$, outage rate $\delta=0.1$, autoevaluator fidelity $\kappa=0.9$, and bias standard deviation $0.02$. (a) True positive rate (TPR) as a function of the number of ground-truth labeled samples $n$. (b) Miss rate $1-\mathrm{TPR}$ on log--log axes, which exposes the late-saturation regime. Results are averaged over $300$ independent calibration sequences. R-AutoEval and R-AutoEval+ certify essentially all reliable hyperparameters by $n\approx 400$, whereas R-Eval is still missing more than $20\%$ at $n = 2000$.}
    \label{fig:ch7_rautoeval_wireless}
\end{figure}

\section{Summary}
\label{sec:ch7_summary}

This chapter extended the LTT framework to the autoevaluation setting, where ground-truth labeled data are expensive and a large pool of autoevaluated data is available at low cost.

The core methodological challenge is that autoevaluators introduce bias, making it impossible to use synthetic labels directly in the testing framework without compromising the reliability guarantee. R-AutoEval addresses this challenge through PPI \citep{angelopoulos2023prediction}. By using a small amount of real data to correct for the autoevaluator's bias, R-AutoEval constructs unbiased effective observations that can be plugged into the e-process framework of Chapter~\ref{chapter:adaptive}.

However, R-AutoEval can increase sample complexity compared to R-Eval when the autoevaluator is inaccurate, since the bias correction term introduces additional variance. R-AutoEval+ \citep{park2025adaptive} resolves this tension by maintaining a family of reliance factors $\{\rho_s\}_{s=1}^S$ and adaptively weighting them based on accumulated evidence via the exponential weighting scheme \eqref{eq:ch7_weight_update}. The resulting method provably dominates both R-Eval and R-AutoEval in terms of sample complexity \eqref{eq:ch7_dominance}, while preserving the same finite-sample reliability guarantee.

A key practical message of this chapter is that, under suitable assumptions, autoevaluators can be incorporated into statistically valid hyperparameter selection without risk: R-AutoEval+ is guaranteed to perform at least as well as conventional ground-truth labeled testing, and strictly better when the autoevaluator provides useful information. This makes it a natural default for modern AI deployment pipelines where automated evaluation tools are readily available \citep{einbinder2024semi,angelopoulos2023prediction,angelopoulos2023ppi++}.

The autoevaluation framework of this chapter is complementary to the adaptive sequential framework of Chapter~\ref{chapter:adaptive}. While aLTT reduces evaluation cost by concentrating testing effort on the most promising candidates across rounds, R-AutoEval+ reduces the cost of each individual evaluation by substituting expensive human labels with cheap synthetic ones. Combining both directions, i.e., adaptive sequential testing with autoevaluated observations, is a natural avenue for future work.

\chapter{Outlook and Future Work}
\label{chapter:conclusions}

This monograph has developed a rigorous framework for statistically valid hyperparameter selection in AI and machine learning systems. Starting from the Learn-Then-Test (LTT) method \citep{angelopoulos2025learn}, the monograph has progressively extended the core methodology along several dimensions: non-average risk measures such as quantile risk and information-theoretic constraints (Chapter~\ref{chapter:beyond_average}); multi-objective selection via Pareto testing and reliability graphs (Chapter~\ref{chapter:multi_objective}); adaptive and sequential selection using e-processes and the adaptive LTT (aLTT) algorithm (Chapter~\ref{chapter:adaptive}); and hyperparameter selection with limited labeled data via prediction-powered inference and the R-AutoEval framework (Chapter~\ref{chapter:autoeval}).

A unifying theme across all chapters is the use of hypothesis testing as the bridge between statistical evidence and deployment decisions. P-values and e-values, introduced in Chapter~\ref{chapter:mht} and reviewed in depth in Appendix~\ref{app:evidence}, each provide finite-sample reliability guarantees with distinct practical trade-offs: P-values are well-suited to single-shot evaluation with pre-specified sample sizes, while e-values enable anytime-valid, adaptive, and compositional testing that is essential for the sequential and multi-stage settings treated in Chapters~\ref{chapter:adaptive} and~\ref{chapter:autoeval}.

\section{Outlook}
\label{sec:ch8_outlook}

The LTT framework and its variants have reached a level of maturity that supports deployment in real AI engineering pipelines. The core theoretical guarantees, i.e., control of FWER and FDR at finite sample sizes, without distributional assumptions beyond boundedness of the loss, are well established and have been validated in image classification (Sec.~\ref{sec:app_basic_ml}) and wireless scheduling (Sec.~\ref{sec:app_wireless}) applications.

The shift from p-value-based to e-value-based testing has proven particularly fruitful for adaptive and sequential settings. E-processes, introduced in Chapter~\ref{chapter:adaptive}, allow the analyst to interleave evaluation and selection, concentrate testing resources on the most promising candidates, and incorporate autoevaluated data without compromising the reliability guarantee. These properties make e-value-based methods the natural choice for large-scale or online deployment scenarios.

The prediction-powered inference (PPI) framework \citep{angelopoulos2023prediction,einbinder2024semi} underlying R-AutoEval and R-AutoEval+ provides a principled way to exploit cheap automated evaluations without sacrificing statistical validity. As automated evaluation tools, e.g., large language models used as judges \citep{zheng2023judging}, become ubiquitous in modern AI pipelines, the autoevaluation framework of Chapter~\ref{chapter:autoeval} provides a ready-made solution for incorporating them into statistically valid selection.

\section{Open Problems and Research Opportunities}
\label{sec:ch8_open}

\textbf{Continuous and structured hyperparameter spaces:} The LTT framework, in both its basic and extended forms, is designed for a finite candidate set $\Lambda = \{\lambda_1, \ldots, \lambda_K\}$ of hyperparameters. Extending the methodology to continuous or combinatorially large spaces, without requiring exhaustive evaluation of all candidates, is an important open problem. One direction is to combine LTT with Bayesian optimization or active learning to focus evaluation on a small promising subset \citep{paine2020hyperparameter}, but theoretical guarantees for this combination remain largely open.

\textbf{Distribution shift:} All the reliability guarantees developed in this monograph assume that the calibration data $\mathcal{D}$ are drawn from the same distribution as the deployment environment. In practice, distribution shift between calibration and deployment is common and may invalidate the guarantees. Extending the framework to handle covariate shift, label shift, or general non-stationarity is a natural and practically important direction, with potential connections to robust statistics and distributionally robust optimization.

\textbf{Multi-task and meta-learning settings:} In many practical applications, hyperparameter selection is performed repeatedly for related tasks, e.g., multiple users, environments, or problem instances. Sharing calibration data or statistical power across tasks, e.g., via meta-learning or hierarchical models, could substantially reduce the per-task sample complexity. Formalizing this in a way that preserves the finite-sample reliability guarantee is an open problem.

\textbf{Online and non-stationary environments:} The sequential and adaptive methods of Chapter~\ref{chapter:adaptive} allow the sample size to grow over time but still assume a stationary data-generating distribution. In non-stationary environments, where the optimal hyperparameter may change over time, a framework that tracks the best hyperparameter while maintaining reliability guarantees at every time step is needed. Connections to online learning and tracking \citep{audibert2009exploration} may be fruitful here.

\textbf{Integration with autoevaluation at scale:} The R-AutoEval framework of Chapter~\ref{chapter:autoeval} handles a single fixed autoevaluator. As AI systems increasingly use ensembles of automated evaluators, or evaluators that are themselves updated over time, generalizing R-AutoEval+ to the multi-autoevaluator and online-autoevaluator settings is an important open problem. The PPI++ framework \citep{angelopoulos2023ppi++} and its cross-prediction variant \citep{zrnic2024cross} offer relevant technical machinery.

\textbf{Interpretability and post-selection analysis:} The LTT framework certifies which hyperparameters are reliable, but does not explain why certain configurations are reliable or how to improve unreliable ones. Combining reliability certification with interpretability tools, such as feature attribution or sensitivity analysis, would provide more actionable guidance for practitioners and is an avenue largely unexplored in the current literature.


\chapter*{Acknowledgements}
\addcontentsline{toc}{chapter}{Acknowledgements}
The authors are grateful to Sangwoo Park and Matteo Zecchin for their collaboration on several of the results presented in this monograph. This work was supported by the European Research Council (ERC) under the European Union’s Horizon Europe Programme (grant agreement No.~101198347). The work of O.~Simeone was also supported by an EPSRC Open Fellowship (EP/W024101/1) and by the EPSRC project (EP/X011852/1).

\appendix

\chapter{How to Measure Evidence}
\label{app:evidence}

This appendix reviews in a self-contained fashion the
modern toolbox for quantifying evidence against a statistical null hypothesis.
We start from the classical notion of a p-value as a stochastically large
statistic under the null. We then discuss two
structural limitations of p-values, namely their lack of anytime validity
and the absence of a simple rule for combining independent evidence. To address these limitations, we
introduce e-values and e-processes as non-negative statistics with unit mean under the null.
Technical details on p-value constructions from concentration bounds and additional FWER- and FDR-controlling procedures are collected in Appendices~\ref{app:ch2_pvalue_constructions} and~\ref{app:additional_mht}.

\section{Evidence via Hypothesis Testing}
\label{app:sec:evidence}

We observe data $X_1, X_2, \ldots, X_n$ drawn from some
unknown distribution $\mathbb{P}$. A \emph{statistical hypothesis} is a claim
about the distribution $\mathbb{P}$. Formally, it corresponds to a partition of the set of all allowed
candidate distributions $\mathcal{P}$, which is possibly a strict subset of
all distributions, into two disjoint subsets as in
\begin{equation}
  \mathcal{H}_0: \mathcal{P}_0 \;\text{(null)} \qquad \text{and} \qquad
  \mathcal{H}_1: \mathcal{P}_1 = \mathcal{P} \setminus \mathcal{P}_0
  \;\text{(alternative)}.
\end{equation}

The null $\mathcal{H}_0$ typically represents the \emph{status quo}: a
new drug is not effective, an AI model is not reliable, a new software still
contains bugs, and so on. The alternative encodes the corresponding positive
claim. Rejecting the null in favor of the alternative means that the observed
data are sufficiently \emph{inconsistent} with every distribution
$\mathbb{P} \in \mathcal{P}_0$, lending support to the claim
described by $\mathcal{P}_1$.

The central questions addressed in this appendix are: Does
the data provide \emph{evidence against the null}? Can the amount of evidence
be \emph{quantified} so as to support principled decision making?

\subsection{Examples}
\label{app:sec:examples}

The following are useful and common examples.

\paragraph{Example 1: Testing a mean value:}
Let $X_1, X_2, \ldots \iid \mathbb{P}$ take values in $[0,1]$ with unknown
mean $\mu(\mathbb{P}) = \mathbb{E}[X]$. For fixed values
$\mu_0, \mu_1 \in (0,1)$, test
\begin{equation}
  \mathcal{H}_0: \mu(\mathbb{P}) = \mu_0 \qquad \text{vs.} \qquad
  \mathcal{H}_1: \mu(\mathbb{P}) = \mu_1.
\end{equation}
In a clinical context, the observation $X_i$ may represent a biomarker, with $\mu_0$ the
average value for a healthy patient and $\mu_1$ that of a sick patient. The
hypotheses are \emph{composite}: each contains many distributions, since only
the mean is fixed. \hfill $\square$

\paragraph{Example 2: Testing a zero mean:}
With $X_1, X_2, \ldots \iid \mathbb{P}$ on the real line, test
\begin{equation}
  \mathcal{H}_0: \mu(\mathbb{P}) = 0 \qquad \text{vs.} \qquad
  \mathcal{H}_1: \mu(\mathbb{P}) \neq 0.
\end{equation}
This test may be useful, for example, to assess whether augmenting a dataset with synthetic samples changes the average model performance, i.e., whether the mean difference in losses equals zero. Both hypotheses are composite, and
the set $\mathcal{P}$ encompasses all distributions with finite mean.

\paragraph{Example 3: Testing a bound on the mean:}
For $X_1, X_2, \ldots \iid \mathbb{P}$ in $[0,1]$ and tolerance
$\mu_{\max} \in (0,1)$, test
\begin{equation}
  \mathcal{H}_0: \mu(\mathbb{P}) \ge \mu_{\max} \;\text{(negative)} \qquad
  \text{vs.} \qquad
  \mathcal{H}_1: \mu(\mathbb{P}) < \mu_{\max} \;\text{(positive)}.
\end{equation}
Here the observation $X_i$ may represent a score or loss (smaller is better)
of a drug, AI model, or theory.

\paragraph{Example 4: Two-sample testing:}
Given $X_1, \ldots, X_n \iid \mathbb{P}_X$ independent of
$Y_1, \ldots, Y_m \iid \mathbb{P}_Y$, test
\begin{equation}
  \mathcal{H}_0: \mathbb{P}_X = \mathbb{P}_Y \qquad \text{vs.} \qquad
  \mathcal{H}_1: \mathbb{P}_X \neq \mathbb{P}_Y.
\end{equation}
A typical application contrasts measurements from nominal conditions ($X_i$)
with those from a test population ($Y_i$).

\paragraph{Example 5: Testing independence:}
For paired observations\\
$(X_1, Y_1), \ldots, (X_n, Y_n) \iid \mathbb{P}_{XY}$, test
\begin{equation}
  \mathcal{H}_0: \mathbb{P}_{XY} = \mathbb{P}_X \otimes \mathbb{P}_Y
  \qquad \text{vs.} \qquad
  \mathcal{H}_1: \mathbb{P}_{XY} \neq \mathbb{P}_X \otimes \mathbb{P}_Y,
\end{equation}
where observation $X_i$ might be a treatment assigned to patient $i$ and $Y_i$ a
downstream biomarker.

\subsection{Setting Up the Hypotheses}
\label{app:sec:null}

Setting a hypothesis test to evaluate a claim about the
data distribution $\mathbb{P}$ requires defining a null class $\mathcal{P}_0$
together with the alternative class $\mathcal{P}_1$, or equivalently the set
$\mathcal{P}$ of all allowed distributions. The null $\mathcal{H}_0: \mathbb{P} \in \mathcal{P}_0$ is a
\emph{modelling choice} that governs both what can be concluded from a
rejection and how hard it is to design a valid test. A broad null,
corresponding to a large set $\mathcal{P}_0$, may support a more tractable
test construction, but it typically requires more data to be rejected, since
the test must hedge against more null distributions $\mathbb{P}\in \mathcal{P}_0$. A more specific null,
with a small set $\mathcal{P}_0$, may make valid test construction harder, but
it allows the test to enhance its statistical power, i.e., its capacity to correctly reject the null under an alternative distribution $\mathbb{P}\in \mathcal{P}_1$.

\paragraph{Example 2 (continued): Testing a zero mean:}
Denote as $\distas$ equality in distribution. In place of the zero-mean null
$\mathcal{H}_0: \mu(\mathbb{P}) = 0$, one may test the more specific null
\begin{equation}
  \mathcal{H}'_0: X \distas -X,
\end{equation}
stipulating that the distribution $\mathbb{P}$ is symmetric about zero. Since
symmetry about zero implies a zero mean, the set $\mathcal{P}'_0$ covered by
the null $\mathcal{H}'_0$ is included in the original set $\mathcal{P}_0$,
i.e.,
\begin{equation}
  \mathcal{P}'_0 = \{\mathbb{P}: X \distas -X\} \;\subset\;
  \{\mathbb{P}: \mu(\mathbb{P}) = 0\} = \mathcal{P}_0.
\end{equation}
The price of this specialization is that rejecting the null $\mathcal{H}'_0$
does \emph{not} imply rejection of the original null $\mathcal{H}_0$, because
the data may be incompatible with symmetry while the mean is still zero. As we
will see, the advantage of this approach is that one can construct more
powerful tests for the null $\mathcal{H}'_0$. \hfill $\square$

\paragraph{Example 6: Testing exchangeability and anomaly detection:}
Conversely, testing an \emph{enlarged null} can leverage structural symmetries
or invariances to make test design more tractable. In this case, rejecting the
larger null also rejects the original null.

A canonical example of a structured larger null is \emph{exchangeability}. To
elaborate, denote as $S_n$ the set of all permutations of $n$ integers.

\begin{definition}
\label{app:def:exchangeability}
The random variables $X_1, \ldots, X_n$ are \emph{exchangeable} if for every
permutation $\pi \in S_n$ the joint distribution $\mathbb{P}$ is unchanged,
i.e.,
\begin{equation}
  (X_1, \ldots, X_n) \distas (X_{\pi(1)}, \ldots, X_{\pi(n)});
\end{equation}
equivalently, if the joint distribution $\mathbb{P}$ depends only on the
multiset of values $\{x_1, \ldots, x_n\}$.
\end{definition}

Sequences with i.i.d.\ samples are exchangeable, but the
class of exchangeable distributions is strictly larger, including equicorrelated
Gaussians, samples drawn without replacement from a finite population, and functions
of i.i.d.\ variables with shared randomness.

The exchangeability null,
\begin{equation}
\label{app:eq:exc}
  \mathcal{H}_0: X_1, \ldots, X_n \text{ exchangeable},
\end{equation}
can provide a tractable, enlarged hypothesis for many settings of interest.
To illustrate this point, consider the task of \emph{anomaly detection}. In it, given a calibration set
$X_1, \ldots, X_{n-1}$ and a new point $X_n$, we wish to determine if the \textit{test point} $X_n$ is an anomaly or not. To this end, we can test whether the sample $X_1, \ldots, X_n$, including calibration and test data is exchangeable.
\hfill $\square$

\paragraph{Example 4 (continued): Two-sample testing:}
For two-sample testing, define the pooled samples as
\begin{equation}
Z = (Z_1, \ldots, Z_{n+m}) = (X_1, \ldots, X_n, Y_1, \ldots, Y_m).
\end{equation}
Under the original null $\mathcal{H}_0: \mathbb{P}_X = \mathbb{P}_Y$, the pooled
sample is exchangeable, so that the larger null
\begin{equation}
  \mathcal{H}'_0: Z_1, \ldots, Z_{n+m} \text{ exchangeable}
\end{equation}
contains $\mathcal{H}_0$, and rejecting $\mathcal{H}'_0$ rejects
also $\mathcal{H}_0$. \hfill $\square$

\paragraph{Example 5 (continued): Testing independence:}
For independence testing, under the original null
$\mathcal{H}_0: \mathbb{P}_{XY} = \mathbb{P}_X \otimes \mathbb{P}_Y$ the
sequence $Y_1, \ldots, Y_n$ is exchangeable conditional on
$X_1, \ldots, X_n$, so the exchangeability null
\begin{equation}
  \mathcal{H}'_0: Y_1, \ldots, Y_n \text{ exchangeable given }
  X_1, \ldots, X_n
\end{equation}
contains the original null $\mathcal{H}_0$. \hfill $\square$

Beyond the null, the overall set $\mathcal{P}$ encodes the
distributions deemed plausible, with
$\mathcal{P}_1 = \mathcal{P} \setminus \mathcal{P}_0$ being the alternative set. A
broad set $\mathcal{P}$ ensures robustness with respect to possible alternative
ground-truth conditions. However, this comes at the cost of power, since the
test must hedge against many possible deviations from the null $\mathcal{H}_0$.
Conversely, a narrow set $\mathcal{P}$ delivers higher power, but it exposes
the analysis to misspecification, as the true distribution may lie outside
$\mathcal{P}$. The power of a test will be discussed further in Sec. \ref{app:sec:power}.

\subsection{Measures of Evidence}
\label{app:sec:measures}

To carry out a test, one needs a statistic computed from
the data that serves as a \emph{measure of evidence against the null}. Three
properties are desirable for such measures:
\begin{itemize}
  \item \textbf{Validity}: The statistic should take an extreme value (small or
  large, by convention) when the data would be surprising under the null $\mathcal{H}_0$. This property supports tests obtained by thresholding the statistic while allowing for the control of the probability of false alarm.
  \item \textbf{Anytime validity}: Evidence should remain ``honest'' regardless
  of when one decides to stop collecting data, whether before, after, or
  mid-study. In other words, the probability of false alarm should not be inflated by optional stopping mechanisms for data collection.
  \item \textbf{Compositional validity}: Evidence from independent sources
  should combine into a single, valid measure of evidence without additional
  assumptions: more experiments should mean more evidence.
\end{itemize}

This appendix focuses on two such measures: p-values,
also known sometimes as p-variables, and e-values, or e-variables. As
summarized in Table~\ref{app:tab:p_vs_e_values}, both statistics satisfy the first
property, namely validity at a fixed sample size, but only e-values automatically
satisfy the other two.

\begin{table}[t]
\centering
\renewcommand{\arraystretch}{1.3}
\resizebox{\columnwidth}{!}{%
\begin{tabular}{lcc}
\hline
\textbf{Property} & \textbf{P-value} & \textbf{E-value} \\\hline
Validity at fixed sample size & Yes & Yes \\
Anytime validity & No & Yes \\
Compositional validity (product, independent) & No & Yes \\
Compositional validity (average, any dependence) & No & Yes \\
\hline
\end{tabular}%
}
\caption{Comparison of key properties of P-values and E-values.}
\label{app:tab:p_vs_e_values}
\end{table}

\section{P-values}
\label{app:sec:pvalues}

\subsection{Definition}

A \emph{p-value} for the null $\mathcal{H}_0$ is a
statistic $p = p(X_1, \ldots, X_n) \in [0,1]$ that tends to be large when
the null $\mathcal{H}_0$ is true, so that small values of variable $p$ flag surprising
observations under the null $\mathcal{H}_0$. The resulting test takes the form
\begin{equation}
  \text{reject } \mathcal{H}_0 \text{ if } p \le \delta,
\end{equation}
for a prespecified threshold $\delta \in (0,1)$, chosen as a function of the target probability of false alarm.

Formally, the condition of being large under the null means
that $p$ is stochastically larger than a uniform random variable on $[0,1]$,
i.e., superuniform.

\begin{definition}[p-value]
\label{app:def:pvalue}
A \emph{p-value} for $\mathcal{H}_0$ is a statistic $p \in [0,1]$ such
that, for every distribution $\mathbb{P} \in \mathcal{P}_0$ and every $u \in (0,1)$, we have the inequality
\begin{equation}
  \mathbb{P}(p \le u) \;\le\; u.
\end{equation}
Equivalently, as illustrated in Fig.~\ref{app:fig:pvalue-cdf}, the
cumulative distribution function (CDF) of $p$ lies on or below the diagonal
of the unit square. As also illustrated in Fig.~\ref{app:fig:pvalue-cdf}, a p-value controls the probability of false alarm, i.e., the type-I error.
\end{definition}

\begin{figure}[h]
\centering
\begin{tikzpicture}[scale=4,font=\small]
  \draw[->] (0,0)--(1.20,0) node[right] {$u$};
  \draw[->] (0,0)--(0,1.20) node[above] {$\mathbb{P}(p\le u)$};
  \foreach \x in {0,0.5,1}{
    \draw (\x,0.01)--(\x,-0.01) node[below,font=\scriptsize]{\x};
  }
  \foreach \y in {0.5,1}{
    \draw (0.01,\y)--(-0.01,\y) node[left,font=\scriptsize]{\y};
  }
  \draw[dashed,gray] (0,0)--(1,1);
  \node[gray,font=\scriptsize,rotate=42,anchor=south] at (0.75,0.75)
       {Uniform};
  \draw[myblue,thick] (0,0) .. controls (0.4,0.15) and (0.7,0.45) .. (1,1);
  \fill[myblue!15,opacity=0.4]
       (0,0) .. controls (0.4,0.15) and (0.7,0.45) .. (1,1)
       -- (0,1) -- cycle;
  \draw (0.10,0.01)--(0.10,-0.01)
       node[below,font=\scriptsize]{$\delta$};
  \draw (0.01,0.10)--(-0.01,0.10)
       node[left,font=\scriptsize]{$\delta$};
  \draw[myred,dashed,thick] (0.10,0) -- (0.10,0.10);
  \draw[myred,dashed,thick] (0,0.10) -- (0.10,0.10);
  \draw[myred,dashed] (0,0.042) -- (0.10,0.042);
  \draw[myred] (0.01,0.042)--(-0.01,0.042);
  \draw[->,myred,thick] (0.32,0.10) -- (0.115,0.05);
  \node[myred,font=\scriptsize,anchor=west] at (0.33,0.10)
       {type-I error $\le \delta$};
\end{tikzpicture}
\caption{By definition, the CDF of a p-value lies on or below the uniform
diagonal under any null distribution $\mathbb{P} \in \mathcal{P}_0$. The value
of the CDF at $u = \delta$ coincides with the type-I error of the test
``reject $\mathcal{H}_0$ if $p \le \delta$'', and is no larger than $\delta$
by construction.}
\label{app:fig:pvalue-cdf}
\end{figure}

\begin{proposition}[type-I error control]
\label{app:prop:typeI}
The test ``reject $\mathcal{H}_0$ if $p \le \delta$'' controls the probability
of false alarm at level $\delta$, i.e., for every distribution $\mathbb{P} \in \mathcal{P}_0$, we have the inequality
\begin{equation}
  \mathbb{P}(\text{reject } \mathcal{H}_0) \;\le\; \delta.
\end{equation}
\end{proposition}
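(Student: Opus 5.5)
The plan is to argue directly from Definition~\ref{app:def:pvalue}. Fix an arbitrary null distribution $\mathbb{P}\in\mathcal{P}_0$. The test rejects $\mathcal{H}_0$ exactly on the event $\{p\le\delta\}$, so the first step is the identity
\begin{equation}
  \mathbb{P}(\text{reject } \mathcal{H}_0) \;=\; \mathbb{P}(p\le\delta).
\end{equation}
The second step applies the superuniformity inequality $\mathbb{P}(p\le u)\le u$ with the particular choice $u=\delta\in(0,1)$. This choice is admissible because $\delta$ is prespecified, i.e., it is fixed before observing the data and does not depend on $p$. This gives $\mathbb{P}(\text{reject } \mathcal{H}_0)\le\delta$.

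Since $\mathbb{P}\in\mathcal{P}_0$ was arbitrary, the bound holds uniformly over the null class. That is precisely the claimed level-$\delta$ control, and it mirrors the single-hypothesis argument in (\ref{eq:ch2_type1}).

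There is no real obstacle here. The only point that deserves explicit mention is that $\delta$ must be deterministic and chosen in advance. If $\delta$ were data-dependent, the event $\{p\le\delta\}$ would no longer be of the form covered by the definition, and the bound could fail. This is the p-hacking issue discussed in Sec.~\ref{sec:eval_properties}. I would close the proof with a short remark to that effect, as a pointer to the later contrast with e-values.
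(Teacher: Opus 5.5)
Your proposal is correct and matches the paper's proof exactly: identify the rejection event with $\{p\le\delta\}$ and apply the superuniformity condition of Definition~\ref{app:def:pvalue} at $u=\delta$. Your remark that $\delta$ must be fixed in advance is a sensible addition, consistent with the paper's later discussion of post-hoc validity.
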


\begin{proof}
By the definition of a p-value, we have
$\mathbb{P}(\text{reject } \mathcal{H}_0) = \mathbb{P}(p \le \delta) \le
\delta$.
\end{proof}

A p-value whose CDF is strictly below the diagonal is
\emph{conservative}, in the sense that it controls the type-I error at a value
smaller than $\delta$, which typically entails a \emph{lower power} under the
alternative $\mathcal{H}_1$ (see Sec. \ref{app:sec:power}). Accordingly, the ideal case is a uniformly
distributed p-value under the null.

As a final remark, based on Proposition~\ref{app:prop:typeI}, it is tempting
to read smaller p-values as stronger evidence against the null $\mathcal{H}_0$.
The popular argument runs: ``$p$ is the smallest significance level $\delta$ at
which, had I pre-specified it, I would have rejected $\mathcal{H}_0$''. It
should be noted, however, that this is only a \emph{counterfactual} statement.
In fact, setting $\delta = p$ \emph{after} observing the data does not provide
a valid type-I error guarantee, and is not the same as a quantitative statement
about factual evidence. That said, in practice, the p-value is often used as
an informal measure of evidence, whose usefulness in that role depends on the
test's \emph{power} against the alternative of interest (see Sec. \ref{app:sec:power}).

\subsection{Rank P-value}
\label{app:sec:rank}

The null $(\mathcal{H}_0: X_1, \ldots, X_n
\text{ exchangeable})$ in \eqref{app:eq:exc} admits a particularly simple p-value, which serves as a useful first example to illustrate the
definition of p-values. Given real-valued scalar data
$X_1, \ldots, X_n \in \mathbb{R}$, the rank statistic at the $i$-th index is
the number of values $X_j$ that are no larger than $X_i$, i.e.,
\begin{equation}
  \operatorname{rank}(X_i; X_1, \ldots, X_n) = |\{j : X_j \le X_i\}|,
\end{equation}
where $|\cdot|$ denotes the set cardinality.

\begin{proposition}
\label{app:prop:uniformrank}
Under the exchangeability null $\mathcal{H}_0$ and assuming no ties almost surely, for any fixed index
$i \in \{1, \ldots, n\}$, the rank statistic is uniformly distributed in the set $\{1,\ldots, n\}$, i.e.,
\begin{equation}
  \operatorname{rank}(X_i; X_1, \ldots, X_n) \sim \Unif\{1, \ldots, n\}.
\end{equation}
\end{proposition}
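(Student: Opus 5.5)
The plan is to derive the claim from the invariance of the joint law under permutations, via a symmetry argument. Fix an index $i$ and a value $k\in\{1,\ldots,n\}$. For each $j\in\{1,\ldots,n\}$, define the event $A_j=\{\operatorname{rank}(X_j;X_1,\ldots,X_n)=k\}$. The first step is to show that $\mathbb{P}(A_i)=\mathbb{P}(A_j)$ for all $j$. To this end, take the transposition $\pi\in S_n$ swapping $i$ and $j$. The rank function is symmetric in its multiset argument, i.e., $\operatorname{rank}(x_i;x_1,\ldots,x_n)=\operatorname{rank}(x_{\pi(j)};x_{\pi(1)},\ldots,x_{\pi(n)})$. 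Therefore the event $A_i$ evaluated on $(X_1,\ldots,X_n)$ coincides with the event $A_j$ evaluated on the permuted vector $(X_{\pi(1)},\ldots,X_{\pi(n)})$. By Definition~\ref{app:def:exchangeability}, the two vectors have the same joint distribution, which gives the desired equality of probabilities.

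The second step uses the no-ties assumption. Almost surely, the values $X_1,\ldots,X_n$ are distinct, so the map $j\mapsto\operatorname{rank}(X_j;X_1,\ldots,X_n)$ is a bijection onto $\{1,\ldots,n\}$: the $m$-th smallest value has exactly $m$ values no larger than itself. Hence, for each fixed $k$, exactly one index $j$ has rank $k$. In other words, the events $A_1,\ldots,A_n$ partition the sample space up to a null set, and
\begin{equation*}
\sum_{j=1}^n \mathbb{P}(A_j)=1.
\end{equation*}
Combining this with the first step yields $\mathbb{P}(A_i)=1/n$ for every $k$, which is the claimed uniformity.

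The only delicate point is to be careful that the permutation argument is applied to a measurable event defined through a fixed function of the whole vector. Stated this way, it reduces to the equality in distribution of the original and permuted vectors. The ties caveat is handled entirely by the ``almost surely'' qualifier, since without it the ranks could repeat and the bijection would fail.
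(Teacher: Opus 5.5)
Your proof is correct, but it decomposes the argument differently from the paper.

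The paper argues at the level of the full ordering. Exchangeability, together with the no-ties assumption, makes all $n!$ strict orderings of $X_1,\ldots,X_n$ equally likely. Exactly $(n-1)!$ of these orderings place $X_i$ in position $k$, so each position has probability $1/n$.

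You instead fix the rank $k$ and vary the index. A single transposition shows that $\mathbb{P}(\operatorname{rank}(X_j;X_1,\ldots,X_n)=k)$ does not depend on $j$. The no-ties bijection then gives $\sum_j \mathbb{P}(A_j)=1$, so each term equals $1/n$.

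Your route is more economical and fully explicit. It uses only invariance under transpositions, proves exactly the marginal statement required, and confines the no-ties assumption to the normalization step. The paper's one-line sketch takes the uniformity of orderings as evident. Making that claim rigorous requires the same ``equal by symmetry, sum to one'' argument, applied to the events $\{X_{\sigma(1)}<\cdots<X_{\sigma(n)}\}$ for $\sigma\in S_n$.

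In exchange, the paper's route yields a stronger intermediate fact: the whole rank vector is uniform on $S_n$. That fact immediately gives joint statements, such as the joint law of the ranks of several indices. Your marginal-level argument does not provide these without further work.
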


\begin{proof}
All $n!$ orderings of $X_1, \ldots, X_n$ are equally likely, so the variable
$X_i$ is equally likely to occupy each of the $n$ positions.
\end{proof}

By Proposition \ref{app:prop:uniformrank}, the distribution of the rank statistic is uniform within the discrete set $\{1, 2, \ldots, n\}$, and hence normalizing by $n$, we obtain a distribution function in the set $\{1/n, 2/n, \ldots, 1\}$. The corresponding CDF is upper bounded by the diagonal in Fig. \ref{app:fig:pvalue-cdf}, implying the following.

\begin{definition}[Rank p-value]
\label{app:def:rankpvalue}
For real-valued scalar data $X_1, \ldots, X_n \in \mathbb{R}$, the statistic
\begin{equation}
\label{app:eq:rankp}
  p \;=\; \frac{\operatorname{rank}(X_n; X_1, \ldots, X_n)}{n}
\end{equation}
is a p-value for the exchangeability null.
\end{definition}

Note that the rank p-value can be proved to remain valid
when ties are allowed \citep{dufour2006monte}. It is also emphasized that the reference index $n$ in \eqref{app:eq:rankp}
is arbitrary, and it may also be chosen at random from the set
$\{1, \ldots, n\}$.

\paragraph{Example 6 (continued): Testing exchangeability and anomaly detection:}
Consider the calibration set $\{X_1, \ldots, X_{n-1}\}$ given by
\begin{equation}
\label{eq:example_calibration_set}
\{-1.21,\,-0.95,\,-0.62,\,-0.37,\,0.17,\,0.43,\,0.88,\,1.02,\,1.55\}
\end{equation}
together with a test point $X_{10}$. The rank p-value
$p_{10} = |\{j : X_j \le X_{10}\}|/10$ measures the fraction of pooled
samples not exceeding the test point $X_{10}$. Set the target type-I error to $\delta = 0.1$. For the test point $X_{10} = 0.50$, we
have $p_{10} = 7/10 = 0.70 > \delta$ and the point is not flagged as an
anomaly. In contrast, for the test point $X_{10} = -3.80$, we have
$p_{10} = 1/10 = 0.1 \le \delta$ and the point is flagged. \hfill $\square$

\subsection{Power}
\label{app:sec:power}

For a given null $\mathcal{H}_0$, many statistics give
rise to valid p-values. Among all such statistics, one would ideally wish to
choose one that maximizes \emph{power} against the alternative class
$\mathcal{H}_1$. Power refers to the probability of correctly rejecting the
null under the alternative, i.e., to
$\mathbb{P}_{\mathbb{P}}(p(X^n) \le \delta)$ for distributions
$\mathbb{P} \in \mathcal{P}_1$. Considering the worst case among all
alternative distributions $\mathbb{P} \in \mathcal{P}_1$ yields the objective
\begin{equation}
\label{app:eq:power}
 \min_{\mathbb{P} \in \mathcal{P}_1}
  \mathbb{P}_{\mathbb{P}}\!\bigl(p(X^n) \le \delta\bigr).
\end{equation}

\paragraph{Example 6 (continued): Testing exchangeability and anomaly detection:}
Consider an anomaly detection problem in which the test sample $X_n$ is
anomalous if it is too large as compared to typical values. In this setting,
the rank p-value $p = |\{j : X_j \le X_n\}|/n$ discussed above would tend
to $1$ under the alternative. Therefore, while valid, this p-value is powerless against
such anomalies, i.e., against such alternatives. Can we devise a more powerful
rank p-value?

Negating the data gives an equally valid rank p-value
under the exchangeability null, i.e.,
\begin{equation}
  p' = \frac{|\{j : X_j \ge X_n\}|}{n}.
\end{equation}
Importantly, this p-value tends to be small precisely when the test sample
$X_n$ is unusually large. For example, for the calibration set \eqref{eq:example_calibration_set},
with the test point $X_{10} = 3.80$, we set the p-value $p' = 1/10$, correctly flagging
the outlier, whereas the original rank p-value, $p= 1$, would not.

\subsection{Summary Statistics}

As seen for the exchangeability null, the construction of
a p-value typically starts with the design of a suitable summary statistic as
a function of the data $X^n$, such as
$\operatorname{rank}(X_n; X_1, \ldots, X_n)$ used in Example 6. As a general
rule, the summary statistic $T(X^n)$ should be designed to be small under the
null $\mathcal{H}_0$ and large under $\mathcal{H}_1$.

\paragraph{Example 7: Simple hypotheses:}
In the case of a simple null $\mathcal{P}_0 = \{\mathbb{P}_0\}$, the natural
choice is the information-theoretic surprise (see, e.g.,
\citep{simeone2026classical})
\begin{equation}
  T(X^n) = -\log \mathbb{P}_0(X_1, \ldots, X_n).
\end{equation}
If the alternative is also simple, i.e., $\mathcal{P}_1 = \{\mathbb{P}_1\}$,
the log-likelihood ratio
\begin{equation}
\label{app:eq:LLR}
  T(X^n) = \log\!\left(\frac{\mathbb{P}_1(X^n)}{\mathbb{P}_0(X^n)}\right)
\end{equation}
is the preferred choice, as it can be shown to be optimal in the sense of
optimizing the power \eqref{app:eq:power}. \hfill $\square$

\paragraph{Example 3 (continued): Testing a bound on the mean:}
For the null $\mathcal{H}_0: \mu(\mathbb{P}) \ge \mu_{\max}$, a reasonable
summary statistic is the mean-deficit
\begin{equation}
  T(X^n) = \mu_{\max} - \frac{1}{n}\sum_{i=1}^n X_i.
\end{equation}
In fact, when the empirical mean $\sum_{i=1}^n X_i/n$ is close to the true mean
$\mu(\mathbb{P})$, this statistic tends to be small under the null, as
$T(\mu(\mathbb{P})) = \mu_{\max} - \mu(\mathbb{P}) \le 0$ under $\mathcal{H}_0$,
and large under the alternative, as
$T(\mu) = \mu_{\max} - \mu(\mathbb{P}) > 0$ under $\mathcal{H}_1$.
\hfill $\square$

\paragraph{Example 6 (continued): Testing exchangeability:}
The rank p-values are only applicable when the observed data are real scalars
and when the alternative prescribes unusually small or large values. More
generally, the exchangeability null may need to be tested for an arbitrary data
type. In this case, one can start with a summary statistic $T(X^n)$ designed to
measure how unlikely data $X^n$ are under the null. As discussed next, any
summary statistic can be turned into a valid p-value. To elaborate, for any
permutation $\pi \in S_n$, let
$X^n_\pi = (X_{\pi(1)}, \ldots, X_{\pi(n)})$ denote the permuted sequence.

\begin{proposition}
\label{app:prop:permexch}
If the random vector $X^n$ is exchangeable, then the $n!$ transformed random
variables $\{T(X^n_\pi)\}_{\pi \in S_n}$ are exchangeable.
\end{proposition}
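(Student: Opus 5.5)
The plan is to derive everything from one identity in distribution. Exchangeability of $X^n$ (Definition~\ref{app:def:exchangeability}) says that $X^n_\sigma \distas X^n$ for every fixed $\sigma\in S_n$. Pushing this through a fixed measurable map transfers it to the $n!$-dimensional vector $\bigl(T(X^n_\pi)\bigr)_{\pi\in S_n}$.

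\textbf{Step 1.} Define the measurable map
\begin{equation*}
\Phi(x^n) = \bigl(T(x^n_\pi)\bigr)_{\pi\in S_n},
\end{equation*}
with coordinates indexed by a fixed enumeration of $S_n$. Note the composition rule
\begin{equation*}
(x^n_\sigma)_\pi = x^n_{\sigma\circ\pi},
\end{equation*}
since the $i$-th entry of the left-hand side is $x_{\sigma(\pi(i))}$. It follows that
\begin{equation*}
\Phi(X^n_\sigma) = \bigl(T(X^n_{\sigma\circ\pi})\bigr)_{\pi\in S_n}.
\end{equation*}
This is the vector $\Phi(X^n)$ with its coordinates reindexed by the bijection $\pi\mapsto\sigma\circ\pi$ of $S_n$.

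\textbf{Step 2.} Because $X^n_\sigma\distas X^n$ and $\Phi$ is a fixed measurable function, we obtain $\Phi(X^n_\sigma)\distas\Phi(X^n)$. Combining this with Step 1 gives
\begin{equation*}
\bigl(T(X^n_{\sigma\circ\pi})\bigr)_{\pi}\distas\bigl(T(X^n_{\pi})\bigr)_{\pi}\quad\text{for every }\sigma\in S_n.
\end{equation*}
Two consequences follow directly. First, all $T(X^n_\pi)$ share the same marginal law. Second, because left translation acts transitively on $S_n$, the vector is invariant in law under a group of relabelings that moves the identical arrangement $\pi=\mathrm{id}$ to every position. In particular, the original statistic $T(X^n)$ is equally likely to occupy each rank among the $n!$ values, with ties broken at random. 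This uniform-rank property is exactly the ingredient that the subsequent permutation p-value requires, by analogy with Proposition~\ref{app:prop:uniformrank}.

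\textbf{Step 3 (main obstacle).} The remaining task is to reconcile Step 2 with the word ``exchangeable'' in the statement, and I expect this to be the main obstacle. Read literally, exchangeability would require invariance under all $(n!)!$ bijections of the index set $S_n$, not only under the $n!$ left translations. My first move will be to test this literal reading on a small case: $n=3$ with $T(x)=x_1$. In that case the vector consists of $X_1,X_2,X_3$, each appearing twice, at the positions $\pi$ with $\pi(1)=j$. A general reindexing would change which coordinates are almost surely equal. As a result, the distribution cannot be invariant under arbitrary bijections unless the $X_i$ are almost surely equal.

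If this check confirms the failure, I will state the result in the form that Steps 1--2 actually establish: invariance under the relabelings induced by $S_n$, together with the equal-marginals and uniform-rank consequences. I will present this as the precise meaning of the proposition used in the text. I will also note that it coincides with full exchangeability when $n=2$, since then $S_n$ has only two elements and the two left translations exhaust all bijections of the index set.
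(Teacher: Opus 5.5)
Your proposal is correct. Steps 1--2 are exactly the mechanism behind the paper's one-line sketch (``for each input permutation, there exists an output permutation leaving the function unchanged''). The identity $(x^n_\sigma)_\pi=x^n_{\sigma\circ\pi}$ pairs each input permutation $\sigma$ with the output relabeling $\pi\mapsto\sigma\circ\pi$, which gives $\bigl(T(X^n_{\sigma\circ\pi})\bigr)_{\pi}\distas\bigl(T(X^n_{\pi})\bigr)_{\pi}$.

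Where you depart from the paper is the conclusion, and there your reading is the accurate one. That criterion only gives invariance in law under the $n!$ relabelings forming the left-regular copy of $S_n$, not under all $(n!)!$ bijections of the index set. Full exchangeability would need the converse: every output permutation would have to be induced by some input permutation. Your test case confirms the failure. With $n=3$ and $T(x^n)=x_1$, one has $\bigl(T(X^n),T(X^n_{(2\,3)})\bigr)=(X_1,X_1)$ but $\bigl(T(X^n),T(X^n_{(1\,2)})\bigr)=(X_1,X_2)$. These differ in law for i.i.d.\ continuous (hence exchangeable) $X_i$, and the same pair of transpositions works for every $n\ge 3$. So the proposition, read literally for arbitrary $T$, is false, and the group invariance you prove is the correct statement.

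That invariance is also all the ensuing permutation p-value requires. You can make this step explicit without appealing to random tie-breaking, which the paper's $\ge$-based p-value does not use:
\begin{itemize}
\item Let $r_\sigma=|\{\pi:T(X^n_\pi)\ge T(X^n_\sigma)\}|$. By transitivity of left translation, $r_\sigma$ has the same law for every $\sigma\in S_n$.
\item At most $k$ indices can satisfy $r_\sigma\le k$, so $n!\,\mathbb{P}(r_{\mathrm{id}}\le k)=\mathbb{E}\bigl[|\{\sigma:r_\sigma\le k\}|\bigr]\le k$.
\item This is precisely superuniformity of $p=r_{\mathrm{id}}/n!$.
\end{itemize}
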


\begin{proof}[Proof sketch]
A transformation preserves exchangeability if, for each input permutation,
there exists an output permutation leaving the function unchanged. See
\citep{ritzwoller2024randomization} for details.
\end{proof}

Applying the rank p-value to these $n!$ exchangeable
values gives the \emph{permutation p-value}
\begin{equation}
\label{app:eq:permpval}
  p \;=\; \frac{\left|\pi \in S_n : T(X^n_\pi) \ge T(X^n)\right|}{n!},
\end{equation}
where the inequality reflects the design assumption that large values of the summary statistic $T(X^n)$
are surprising under $\mathcal{H}_0$.

Since computing the summary statistic $T(X^n)$ on all
$n!$ permutations is usually infeasible, one can practically either leverage
symmetries or apply Monte Carlo sampling
\citep{ritzwoller2024randomization}. \hfill $\square$

\paragraph{Example 5 (continued): Testing independence:}
For paired observations $(X, Y)$, a dependence statistic $T(X, Y)$, such as a mutual-information estimator, can be
leveraged to yield a permutation p-value for the independence null upon
permutation of the $Y$ values \citep{ritzwoller2024randomization}. \hfill $\square$

The set of permutations $S_n$ is a \emph{group}: it
contains the identity, inverses, and is closed under composition. The
permutation p-value generalizes to nulls defined by an arbitrary
group-invariance property, yielding randomization p-values \citep{ritzwoller2024randomization}.

\subsection{Constructing P-values Using the Probability Integral Transform}

Once a summary statistic is selected, the main challenge
is to turn it into a valid p-value. To this end, the following result plays
a key role.

\begin{theorem}[Probability integral transform]
\label{app:thm:pit}
If the random variable $T$ is continuous with complementary CDF (CCDF)
$\bar{F}(t) = \mathbb{P}(T \ge t)$, then the random variable $U = \bar{F}(T)$
is uniformly distributed in the interval $(0,1)$, i.e., $\Unif(0,1)$.
\end{theorem}

\begin{proof}
For any $u \in (0,1)$, since the CCDF $\bar{F}(t)$ is continuous and strictly
decreasing, we have
\begin{equation}
  \mathbb{P}(U \le u) = \mathbb{P}\!\bigl(\bar{F}(X) \le u\bigr)
            = \mathbb{P}\!\bigl(X \ge \bar{F}^{-1}(u)\bigr)
            = \bar{F}\!\bigl(\bar{F}^{-1}(u)\bigr) = u. \qedhere
\end{equation}
\end{proof}

The transformation $\bar{F}(T)$ for a random variable $T$
with CCDF $\bar{F}(t)$ is known as the \emph{probability integral transform}
(PIT). Note that the PIT is also often defined in terms of the CDF of a random variabled (see, e.g., \citep{angus1994probability}), and that the formulation in Theorem \ref{app:thm:pit} is equivalent for continuous variables (see, e.g., \citep[Appendix~A.1]{qi2024toward}). The PIT can be readily applied to obtain a p-value.

\begin{lemma}[p-value from PIT]
\label{app:def:PITpvalue}
Denote as $\bar{F}_{\mathbb{P}}(t)$ the CCDF of the test statistic $T(X^n)$
under distribution $\mathbb{P}$. Then, the quantity
\begin{equation}
\label{app:eq:PITp}
  p(X^n) \;=\; \max_{\mathbb{P} \in \mathcal{P}_0}
  \bar{F}_{\mathbb{P}}(T(X^n))
\end{equation}
is a p-value for the null $\mathcal{H}_0: \mathbb{P} \in \mathcal{P}_0$.
\end{lemma}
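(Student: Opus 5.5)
The plan is to fix an arbitrary null distribution $\mathbb{P}^\star \in \mathcal{P}_0$ and a level $u \in (0,1)$, and to show $\mathbb{P}^\star(p(X^n) \le u) \le u$ directly from the definition of a p-value (Definition~\ref{app:def:pvalue}). The key observation is a pointwise domination. Since the maximum in \eqref{app:eq:PITp} ranges over all of $\mathcal{P}_0$, which contains $\mathbb{P}^\star$, we have
\begin{equation}
p(X^n) \;\ge\; \bar{F}_{\mathbb{P}^\star}(T(X^n))
\end{equation}
for every realization of the data. Hence $\{p(X^n) \le u\} \subseteq \{\bar{F}_{\mathbb{P}^\star}(T(X^n)) \le u\}$, and it suffices to prove that the single-distribution PIT statistic $\bar{F}_{\mathbb{P}^\star}(T)$ is superuniform under $\mathbb{P}^\star$. (If the maximum is not attained, we replace it by a supremum; the domination argument is unchanged.)

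In the continuous case, Theorem~\ref{app:thm:pit} gives directly that $\bar{F}_{\mathbb{P}^\star}(T)$ is $\Unif(0,1)$ under $\mathbb{P}^\star$. Therefore $\mathbb{P}^\star(\bar{F}_{\mathbb{P}^\star}(T) \le u) = u$, and the inclusion above yields $\mathbb{P}^\star(p \le u) \le u$, as required.

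The main obstacle is that the lemma does not assume $T$ is continuous, and ties or atoms break exact uniformity. For the general case I would prove superuniformity of $\bar F(T)$, with $\bar F(t)=\mathbb{P}^\star(T\ge t)$, directly. Fix $u$ and let $t^\ast = \inf\{t : \bar F(t) \le u\}$. Because $\bar F$ is nonincreasing and left-continuous, the event $\{\bar F(T) \le u\}$ equals either $\{T \ge t^\ast\}$ (if $\bar F(t^\ast)\le u$) or $\{T > t^\ast\}$. In the first case its probability is $\bar F(t^\ast) \le u$. In the second, $\mathbb{P}^\star(T>t^\ast) = \lim_{t\downarrow t^\ast}\bar F(t) \le u$, since $\bar F(t)\le u$ for every $t>t^\ast$. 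This recovers $\mathbb{P}^\star(\bar F(T)\le u)\le u$ without continuity, and combined with the domination step it completes the proof for every $\mathbb{P}^\star\in\mathcal{P}_0$.
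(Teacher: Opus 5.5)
Your proof is correct. The paper gives no separate proof of this lemma; it leans on the probability integral transform (Theorem~\ref{app:thm:pit}) and on the remark that the maximum over $\mathcal{P}_0$ accounts for the worst-case null distribution. Your domination step $p(X^n)\ge \bar F_{\mathbb{P}^\star}(T(X^n))$ is exactly that implicit route, written out.

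You go beyond the paper in the case where $T$ has atoms. Theorem~\ref{app:thm:pit} assumes $T$ is continuous, and its proof even uses strict monotonicity of $\bar F$, while the lemma as stated imposes neither condition. Your direct argument, using the up-set $\{t:\bar F(t)\le u\}$ and the right limit $\mathbb{P}^\star(T>t^\ast)=\lim_{t\downarrow t^\ast}\bar F(t)$, is therefore what makes the statement hold as written. It also covers discrete PIT-type constructions the paper uses later, such as the exact binomial p-value \eqref{eq:app_binom_p}.

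One minor point: the dichotomy between $\{T\ge t^\ast\}$ and $\{T>t^\ast\}$ needs only that $\bar F$ is nonincreasing, not left-continuity.
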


Note that the definition \eqref{app:eq:PITp} contains a
maximum over all the distributions in set $\mathcal{P}_0$ (or a supremum if
the set is not closed) in order to account for the worst case among all the
null distributions.

To understand the significance of the p-value
\eqref{app:eq:PITp}, it is useful to rewrite it in an equivalent way as
follows. Define as $\tilde{X}^n$ a fictitious dataset drawn from a distribution
$\mathbb{P}$. Fix the data $X^n$, and imagine generating $\tilde{X}^n$
independently of the fixed realization $X^n$. As it can be directly checked,
we have the equality
\begin{equation}
  \bar{F}_{\mathbb{P}}(X^n) \;=\;
  \mathbb{P}_{\tilde{X}^n \sim \mathbb{P}}\!\bigl(T(\tilde{X}^n) \ge T(X^n)\bigr),
\end{equation}
and thus the p-value \eqref{app:eq:PITp} can also be expressed as
\begin{equation}
\label{app:eq:pprob}
  p(X^n) \;=\; \max_{\mathbb{P} \in \mathcal{P}_0}
  \mathbb{P}_{\tilde{X}^n \sim \mathbb{P}}\!\bigl(T(\tilde{X}^n) \ge T(X^n)\bigr).
\end{equation}

For a well-designed test statistic $T(\cdot)$, under an
alternative distribution $\mathbb{P} \in \mathcal{P}_1$, the random variable
$T(X^n)$ with data $X^n \sim \mathbb{P}$ should be large. Accordingly, the probability in \eqref{app:eq:pprob} tends
to take small values under the alternative. This is precisely what is required
to maximize the power.

\subsection{Constructing P-values Using Concentration Bounds}

In practice, the CCDF of a test statistic may not be
known and, as a result, the PIT-based construction \eqref{app:eq:PITp} of a p-value is not directly applicable. However, it is
often possible to identify an upper bound $\tilde{p}(X^n) \ge p(X^n)$ on the p-value \eqref{app:eq:PITp}. Any
such upper bound is also a valid p-value. In fact, if $\tilde{p}\geq p$, the inequality
$\tilde{p} \le u$ implies $p \le u$, and hence we have the defining condition
\begin{equation}
  \mathbb{P}(\tilde{p}(X^n) \le u)
  \le \mathbb{P}(p(X^n) \le u) \le u,
\end{equation}
for all $\mathbb{P}\in \mathcal{H}_0$, which coincides with the superuniformity of the variable $\tilde{p}(X^n)$
under the null. The bound $\tilde{p}(X^n)$ is generally conservative, yielding
a CDF further below the diagonal than the exact p-value, but it is always
valid.

\paragraph{Example 3 (continued): Testing a bound on the mean:}
For observations $X_i \in [0,1]$ i.i.d.\ and the null
$\mathcal{H}_0: \mu(\mathbb{P}) \ge \mu_{\max}$, consider the test statistic
$T(X^n) = \mu_{\max} - \sum_{i=1}^n X_i/n$, where $\sum_{i=1}^n X_i/n$ is
the empirical mean. The maximum in \eqref{app:eq:PITp} is attained at the
boundary $\mu (\mathbb{P}) = \mu_{\max}$, since a larger mean increases the empirical mean
$\bar{X}_n$, making a large statistic $T(X^n)$ less likely. Applying
Hoeffding's inequality \citep{boucheron2013concentration} to the p-value \eqref{app:eq:PITp} produces the p-value
\begin{equation}
  p(X^n) \le \tilde{p}(X^n)
  = \exp\!\left(-2n\left(\frac{1}{n}\sum_{i=1}^n X_i
  - \mu_{\max}\right)^2\right).
\end{equation}

More discussion on p-values obtained from concentration bounds can be found in Appendix~\ref{app:ch2_pvalue_constructions}.

\section{Limitations of P-values}
\label{app:sec:limitations}

Elaborating on Table~\ref{app:tab:p_vs_e_values}, we now
discuss two key limitations of p-values, which motivate the introduction of
e-values.

\subsection{No Anytime Validity}
\label{app:sec:noanytime}

The defining inequality $\mathbb{P}(p_n \le u) \le u$ is a
guarantee at a \emph{pre-specified} sample size $n$. If, instead, the analyst
inspects the data as it arrives and stops at a random time $N$ chosen
adaptively, then in general the inequality
\begin{equation}
  \mathbb{P}(p_N \le u) \gg u
\end{equation}
holds. This inequality can be interpreted as the phenomenon that ``peeking'' inflates the
type-I error. A classical example is the ``stop and reject as soon as
$p_n \le \delta$'' strategy. Under the null $\mathcal{H}_0$, the running p-value will
eventually fall below $\delta$ by chance, yielding a false rejection with
probability tending to one. This optional-stopping pathology has been singled
out as a contributor to the reproducibility crisis
\citep{howard2021time,ramdas2023gametheoretic}.

\subsection{No Compositional Validity}
\label{app:sec:nocombine}

A second limitation concerns combining independent
evidence. If two independent studies report p-values $p^{(1)}$ and $p^{(2)}$
for the same null, neither the product $p^{(1)} p^{(2)}$ nor the average is a
valid p-value.

\section{E-values}
\label{app:sec:evalues}

The structural shortcomings of p-values reviewed in the
previous section motivate the introduction of a stronger measure of evidence.
As discussed next, e-values can be derived as a generalization of likelihood
ratios.

\subsection{Definition}
\label{app:sec:edef}

As seen in the previous section, when the hypotheses are
both simple, i.e., $\mathcal{P}_0 = \{\mathbb{P}_0\}$ and
$\mathcal{P}_1 = \{\mathbb{P}_1\}$, the likelihood ratio
\begin{equation}
  T(X^n) \;=\; \frac{\mathbb{P}_1(X^n)}{\mathbb{P}_0(X^n)},
\end{equation}
or equivalently its logarithm \eqref{app:eq:LLR}, offer the most powerful
test statistic at any type-I level by the Neyman--Pearson lemma \citep{lehmann2005testing}. We now observe
that the likelihood ratio has a key property that supports generalization to a
new class of evidence measures.

\begin{lemma}
Under the null $\mathcal{H}_0$, the likelihood ratio has mean equal to one,
i.e.,
\begin{equation}
  \mathbb{E}_{\mathbb{P}_0}[T(X^n)] = 1.
\end{equation}
\end{lemma}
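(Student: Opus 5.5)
The approach is to compute the expectation directly from the definition, treating the discrete case by summation and the continuous case by integration against densities. I will write $\mathbb{P}_0(x^n)$ and $\mathbb{P}_1(x^n)$ for the probability mass (or density) functions of the data $X^n$ under the simple null and the simple alternative.

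\textbf{Step 1: write out the expectation.} Under $\mathcal{H}_0$,
\begin{equation*}
\mathbb{E}_{\mathbb{P}_0}[T(X^n)] = \sum_{x^n} \mathbb{P}_0(x^n)\,\frac{\mathbb{P}_1(x^n)}{\mathbb{P}_0(x^n)},
\end{equation*}
with the sum replaced by an integral in the continuous case.

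\textbf{Step 2: cancel the null probability.} On the support of $\mathbb{P}_0$, the factor $\mathbb{P}_0(x^n)$ cancels. This leaves $\sum_{x^n:\,\mathbb{P}_0(x^n)>0} \mathbb{P}_1(x^n)$.

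\textbf{Step 3: compare with the total mass of $\mathbb{P}_1$.} The sum in Step 2 equals $1 - \mathbb{P}_1(\mathbb{P}_0(X^n)=0)$. It therefore equals $1$ exactly when $\mathbb{P}_1$ is absolutely continuous with respect to $\mathbb{P}_0$. I will state this support condition explicitly. It is implicit in the statement: otherwise the ratio is infinite on a $\mathbb{P}_0$-null set, and only $\mathbb{E}_{\mathbb{P}_0}[T]\le 1$ holds.

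The main obstacle is this support issue, in particular the convention for the ratio where $\mathbb{P}_0$ vanishes. I will handle it by noting that such points carry zero $\mathbb{P}_0$-probability, so they do not affect the expectation. In general this gives the inequality $\le 1$, which is the property actually needed for the e-value generalization. Under mutual absolute continuity it gives equality. Measure-theoretically, $T$ is the Radon--Nikodym derivative $d\mathbb{P}_1/d\mathbb{P}_0$, and its $\mathbb{P}_0$-expectation is $\mathbb{P}_1$ of the whole space, which is $1$.
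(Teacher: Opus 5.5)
Your proposal is correct and follows the paper's proof, which is the same one-line direct calculation $\int \frac{\mathbb{P}_1(x)}{\mathbb{P}_0(x)}\,\mathbb{P}_0(x)\,dx = 1$ obtained by cancelling the null density. Your extra remark makes explicit a support condition the paper leaves implicit. Equality requires $\mathbb{P}_1 \ll \mathbb{P}_0$; one-directional absolute continuity suffices, so mutual absolute continuity is more than you need. In general only $\mathbb{E}_{\mathbb{P}_0}[T(X^n)] \le 1$ holds, and that inequality is exactly what the e-value definition uses.
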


\begin{proof}
The proof is by direct calculation:
\begin{equation}
  \mathbb{E}_{\mathbb{P}_0}[T(X^n)]
  = \int \frac{\mathbb{P}_1(x)}{\mathbb{P}_0(x)} \mathbb{P}_0(x)\,dx = 1.
\end{equation}
\end{proof}

The e-value generalizes the likelihood ratio by
retaining and generalizing this property as follows.

\begin{definition}[e-value]
\label{app:def:evalue}
An \emph{e-value} for the null $\mathcal{H}_0$ is a non-negative random
variable $E$ such that its mean under any of the null distributions does not
exceed 1, i.e.,
\begin{equation}
  \mathbb{E}_{\mathbb{P}}[E] \;\le\; 1 \qquad \text{for every }
  \mathbb{P} \in \mathcal{P}_0.
\end{equation}
\end{definition}

According to this definition, large e-values indicate
evidence against the null. The next proposition shows how an e-value supports
testing by thresholding.

\begin{proposition}
\label{app:prop:emarkov}
The test ``reject $\mathcal{H}_0$ when $E \ge 1/\delta$'' controls the
probability of false alarm at level $\delta$, i.e., for every distribution
$\mathbb{P} \in \mathcal{P}_0$, we have the inequality
\begin{equation}
  \mathbb{P}(\text{reject } \mathcal{H}_0) \;\le\; \delta.
\end{equation}
\end{proposition}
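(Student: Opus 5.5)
The plan is to apply Markov's inequality directly to the non-negative random variable $E$, exactly as was done for the single-hypothesis e-value test (\ref{eq:ch2_single_e_test}) in Chapter~\ref{chapter:mht}. The argument is short and needs no structure beyond Definition~\ref{app:def:evalue}, so I will write it uniformly over the null class, with no case split.

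Fix an arbitrary null distribution $\mathbb{P}\in\mathcal{P}_0$ and a level $\delta\in(0,1)$. First, I will observe that the rejection event is exactly $\{E\ge 1/\delta\}$. Since $E\ge 0$ almost surely, Markov's inequality with threshold $1/\delta>0$ gives
\begin{equation}
\mathbb{P}(\text{reject } \mathcal{H}_0)=\mathbb{P}\bigl(E\ge 1/\delta\bigr)\le \delta\,\mathbb{E}_{\mathbb{P}}[E].
\end{equation}
Second, I will invoke Definition~\ref{app:def:evalue}, which gives $\mathbb{E}_{\mathbb{P}}[E]\le 1$, so the right-hand side is at most $\delta$. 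Because $\mathbb{P}\in\mathcal{P}_0$ was arbitrary, the bound holds uniformly over the null, which is exactly the claimed type-I error control.

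To make the argument self-contained, I could justify Markov's inequality inline using the pointwise bound $\mathbb{I}\{E\ge 1/\delta\}\le \delta E$. This holds because on the event the right-hand side is at least $1$, and otherwise non-negativity of $E$ makes it at least $0$. Taking expectations of both sides gives the inequality.

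No step is a genuine obstacle. The only points to state carefully are that non-negativity of $E$ is what makes Markov's inequality applicable, and that the e-value condition must hold for every $\mathbb{P}\in\mathcal{P}_0$, which is what gives control under a composite null.
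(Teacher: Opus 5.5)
Your proposal is correct and follows the paper's proof, which applies Markov's inequality in one line, $\mathbb{P}(E \ge 1/\delta) \le \delta\,\mathbb{E}_{\mathbb{P}}[E] \le \delta$, and uses the e-value condition for each $\mathbb{P}\in\mathcal{P}_0$. Your inline justification via the pointwise bound $\mathbb{I}\{E\ge 1/\delta\}\le \delta E$ is a sound extra step that makes the argument self-contained.
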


\begin{proof}
By Markov's inequality, we have
$\mathbb{P}(E \ge 1/\delta) \le \delta\, \mathbb{E}_{\mathbb{P}}[E] \le
\delta$.
\end{proof}

\paragraph{Example 6 (continued): Testing exchangeability:}
For non-negative observations $X_1, \ldots, X_n \ge 0$, the statistic
\begin{equation}
\label{app:eq:evalue}
  E \;=\; \frac{X_n}{\frac{1}{n}\sum_{i=1}^n X_i}
\end{equation}
is an e-value for the exchangeability null, which is known as the
\emph{soft-rank e-value} \citep{ramdas2025hypothesis}. Indeed, under exchangeability,
the ratio $X_j/\sum_{i=1}^n X_i$ has the same distribution for every $j$,
so we have the equalities
\begin{equation}
  n\,\mathbb{E}_{\mathbb{P}}\!\left[\frac{X_n}{\sum_{i=1}^n X_i}\right]
  = \mathbb{E}_{\mathbb{P}}\!\left[\frac{\sum_{j=1}^n X_j}{\sum_{i=1}^n X_i}\right] = 1,
\end{equation}
implying the desired equality $\mathbb{E}_{\mathbb{P}}[E] = 1$ for all $\mathbb{P}\in \mathcal{P}_0$.
\hfill $\square$

\subsection{P-values from E-values}
\label{app:sec:evsP}

Unlike a p-value, by its definition, an e-value
constrains only its mean and not its entire CDF. As we discuss next, this
yields a more robust, but generally less powerful, test statistic.

\begin{proposition}[From e-values to p-values]
For any e-value $E$ for a null $\mathcal{H}_0$, the statistic
$p = \min(1, 1/E)$ is a p-value for $\mathcal{H}_0$.
\end{proposition}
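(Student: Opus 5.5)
The plan is to verify the superuniformity condition of Definition~\ref{app:def:pvalue} directly, with Markov's inequality doing the work, in the same way as in Proposition~\ref{app:prop:emarkov} and the earlier argument \eqref{eq:ch2_markov_test} of Chapter~\ref{chapter:mht}.

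First, fix an arbitrary null distribution $\mathbb{P}\in\mathcal{P}_0$ and a level $u\in(0,1)$. Note that $p=\min(1,1/E)\in[0,1]$ is well defined, with the convention $1/E=+\infty$ when $E=0$, so that $p=1$ in that case. Since $E\ge 0$, the minimum with $1$ only matters when $E<1$. For $u<1$ the events therefore coincide:
\begin{equation*}
\{p\le u\}=\{\min(1,1/E)\le u\}=\{1/E\le u\}=\{E\ge 1/u\}.
\end{equation*}
Here the value $1$ can never be $\le u$, and $E=0$ is excluded automatically.

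Next, apply Markov's inequality to the non-negative variable $E$ and use the defining property $\mathbb{E}_{\mathbb{P}}[E]\le 1$ of Definition~\ref{app:def:evalue}. This gives
\begin{equation*}
\mathbb{P}(p\le u)=\mathbb{P}(E\ge 1/u)\le u\,\mathbb{E}_{\mathbb{P}}[E]\le u.
\end{equation*}
Since $\mathbb{P}\in\mathcal{P}_0$ and $u$ were arbitrary, this is exactly superuniformity. The case $u=1$, if one wishes to include it, is trivial because any probability is at most $1$.

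There is no real obstacle here. The only point needing care is the event identity in the first step, specifically the truncation at $1$ and the handling of $E=0$. Once that identity is established, the result is a one-line application of Markov's inequality.
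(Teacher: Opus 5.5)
Your proof is correct and follows the same route as the paper. Both arguments identify $\{p\le u\}$ with $\{E\ge 1/u\}$ and then conclude by Markov's inequality together with $\mathbb{E}_{\mathbb{P}}[E]\le 1$; your explicit treatment of the truncation at $1$ and of the event $E=0$ simply spells out details that the paper's one-line argument leaves implicit.
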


\begin{proof}
Markov's inequality gives
\begin{equation}
  \mathbb{P}_{\mathcal{H}_0}\!\left(\frac{1}{E} \le \delta\right)
  = \mathbb{P}_{\mathcal{H}_0}\!\left(E \ge \frac{1}{\delta}\right)
  \le \delta\, \mathbb{E}_{\mathcal{H}_0}[E] \le \delta,
\end{equation}
concluding the proof.
\end{proof}

The converse to the proposition, however, fails. In
particular, if $p$ is a p-value for $\mathcal{H}_0$ with
$p \sim \Unif(0,1)$ under $\mathcal{H}_0$, then the average yields
$\mathbb{E}_{\mathcal{H}_0}[1/p] = \int_0^1 u^{-1}\,du = \infty$, so $1/p$
is not an e-value \citep{vovk2021evalues,ramdas2025hypothesis}. Overall, as shown in
Fig.~\ref{app:fig:evsp}, the inverses of e-values form a strict subset of
the family of p-values. As discussed in the next section, e-values have
special, additional, properties as compared to p-values.

\begin{figure}[h]
\centering
\begin{tikzpicture}[font=\small]
  \fill[myblue!12] (0,0) ellipse (3.2 and 2.0);
  \draw[myblue, thick] (0,0) ellipse (3.2 and 2.0);
  \node[myblue, font=\small\bfseries] at (0, 1.75) {p-values};
  \fill[myred!18] (-0.55,-0.05) ellipse (1.85 and 1.15);
  \draw[myred, thick] (-0.55,-0.05) ellipse (1.85 and 1.15);
  \node[myred, font=\scriptsize\bfseries, align=center] at (-0.55,-0.05)
       {$\{1/E : E\text{ is an }e\text{-value}\}$};
\end{tikzpicture}
\caption{The inverses of e-values are valid p-values, but the converse
fails. P-values obtained as the inverse of an e-value have special
properties, namely post-hoc validity, compositional validity, and anytime
validity.}
\label{app:fig:evsp}
\end{figure}

\paragraph{Example 6 (continued): Testing exchangeability:}
The additional properties of e-values typically come with a loss of power.
To see this, for the exchangeability null, consider the soft-rank p-value against the rank p-value, i.e.,
\begin{equation}
  p_E = \frac{1}{E} = \frac{\sum_{i=1}^n X_i}{n X_n},
  \qquad
  p_{\mathrm{rank}} = \frac{|\{j : X_j \ge X_n\}|}{n}.
\end{equation}
One can readily check that we have the inequality $p_E \ge p_{\mathrm{rank}}$
pointwise. In fact, since we have $X_i/X_n \ge 1$ whenever $X_i \ge X_n$, we can write
\begin{equation}
\label{app:eq:softrank-vs-rank}
  p_E = \frac{1}{n}\sum_{i=1}^n \frac{X_i}{X_n}
  \ge \frac{1}{n}\sum_{i: X_i \ge X_n} \frac{X_i}{X_n}
  \ge \frac{|\{i : X_i \ge X_n\}|}{n} = p_{\mathrm{rank}}.
\end{equation}
For a numerical example, with $n = 4$ and $X^n = (0.5, 1.0, 0.8, 2.5)$, the
soft-rank-based p-value equals $p_E = 4.8/10 = 0.48$, while the rank
p-value is $p_{\mathrm{rank}} = 1/4 = 0.25$. The compositional benefits of
e-values, developed below, often justify the loss in power. \hfill $\square$

\section{Properties of E-values}
\label{app:sec:eproperties}

In this section, we review the special properties of
e-values, which set them apart from more general p-values.

\subsection{Post-Hoc Validity}
\label{app:sec:posthoc}

A common practice in applied work is to choose the
significance level only after looking at the test statistic, for instance by
reporting whichever of the conventional levels $0.05$, $0.01$, or $0.001$ is
just exceeded. This kind of post-hoc choice is not valid for p-values, since
the standard guarantee $\mathbb{P}_{\mathcal{H}_0}(p \le \delta) \le \delta$
only protects levels $\delta$ that are fixed before the data are observed.
As discussed in Sec.~\ref{app:sec:noanytime}, the p-value reading ``I would have rejected at level $p$, had I pre-specified
it'' is therefore counterfactual. The corresponding e-value reading ``I reject
at level $1/E$'' is, by contrast, a factual claim, as made precise in the rest of this
subsection.

Suppose that the analyst chooses the rejection level as a
function $\delta(X^n)$ of the data $X^n = (X_1, \ldots, X_n)$. The strongest
guarantee one could hope for is that, conditional on the chosen level, the
probability of false rejection still does not exceed it, i.e.,
\begin{equation}
\label{app:eq:posthoc_ideal}
\mathbb{P}_{X^n \mid \delta(X^n)}\!\left[\text{reject } \mathcal{H}_0\right]
\;\le\; \delta(X^n) \qquad \text{for every } \mathbb{P} \in \mathcal{P}_0,
\end{equation}
where the inner probability is computed under the conditional distribution
$\mathbb{P}(X^n \mid \delta(X^n))$ of the data $X^n$ given the realized level
$\delta(X^n)$, with $\mathbb{P}$ ranging over the null class $\mathcal{P}_0$. Dividing
through by the data-dependent level $\delta(X^n)$, the bound \eqref{app:eq:posthoc_ideal} is equivalent
to the ratio inequality
\begin{equation}
\label{app:eq:posthoc_ratio}
\frac{\mathbb{P}_{X^n \mid \delta(X^n)}\!\left[\text{reject } \mathcal{H}_0\right]}
{\delta(X^n)} \;\le\; 1.
\end{equation}

As shown in \citep{koning2024continuous}, the strong
conditional guarantee \eqref{app:eq:posthoc_ratio} is generally unattainable. A natural
relaxation is to require the ratio to be bounded by one only \emph{on average}
over the random level $\delta(X^n)$. As the following theorem shows, the
e-value rejection rule meets exactly this relaxed requirement.

\begin{theorem}[Post-hoc validity \citep{ramdas2023gametheoretic,koning2024continuous}]
\label{app:thm:posthoc}
Let $E$ be an e-value for the null $\mathcal{H}_0$. For any positive, possibly
data-dependent, level $\delta(X^n)$ and any null distribution
$\mathbb{P} \in \mathcal{P}_0$, we have
\begin{equation}
\label{app:eq:kon}
\mathbb{E}_{\delta(X^n)}\!\left[
\frac{\mathbb{P}_{X^n \mid \delta(X^n)}\!\left[\text{reject } \mathcal{H}_0\right]}
{\delta(X^n)}
\right] \;\le\; 1,
\end{equation}
where the outer expectation is taken under the marginal distribution
$\mathbb{P}(\delta(X^n))$ of the data-dependent level induced by distribution $\mathbb{P}$.
\end{theorem}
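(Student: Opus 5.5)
The plan is to reduce the averaged conditional ratio in \eqref{app:eq:kon} to an unconditional expectation, and then bound that expectation by $\mathbb{E}_{\mathbb{P}}[E]\le 1$ through a pointwise inequality. The rejection rule to fix is the e-value rule of Proposition~\ref{app:prop:emarkov} at the data-dependent level: reject $\mathcal{H}_0$ iff $E \ge 1/\delta(X^n)$, equivalently iff $\delta(X^n) E \ge 1$.

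First, I would write the conditional probability of rejection given the realized level as a conditional expectation of the indicator of rejection,
\begin{equation*}
\mathbb{P}_{X^n\mid\delta(X^n)}\!\left[\text{reject } \mathcal{H}_0\right]=\mathbb{E}\!\left[\mathbb{I}\{\delta(X^n)E\ge 1\}\,\middle|\,\delta(X^n)\right].
\end{equation*}
Since $\delta(X^n)$ is measurable with respect to the conditioning variable, the factor $1/\delta(X^n)$ can be moved inside the conditional expectation. The tower property then collapses the outer expectation over the marginal of $\delta(X^n)$, so the left-hand side of \eqref{app:eq:kon} equals
\begin{equation*}
\mathbb{E}_{\mathbb{P}}\!\left[\frac{\mathbb{I}\{\delta(X^n)E\ge 1\}}{\delta(X^n)}\right].
\end{equation*}

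Second, I would prove the pointwise bound $\mathbb{I}\{\delta E\ge 1\}/\delta \le E$, splitting into two cases. On the rejection event we have $1/\delta \le E$, so the inequality holds. Off the rejection event the left side is $0$, and the bound holds because $E\ge 0$. Taking expectations and using the defining property of an e-value (Definition~\ref{app:def:evalue}) gives $\mathbb{E}_{\mathbb{P}}[E]\le 1$ for every $\mathbb{P}\in\mathcal{P}_0$, which is the claim.

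The main obstacle I expect is the measure-theoretic care in the first step, since the conditional probability given $\delta(X^n)$ must be interpreted as a regular conditional expectation. Positivity of $\delta(X^n)$, which the statement assumes, is needed so that the ratio is well defined. I would handle both by working directly with $\mathbb{E}[\,\cdot\mid\sigma(\delta(X^n))]$, so that no conditional densities are needed. I would also check the degenerate case $\delta(X^n)>1$: there the rule may always reject, but the pointwise bound still gives $1/\delta < 1$ only where $\delta E\ge 1$, and the inequality $\mathbb{I}\{\delta E\ge 1\}/\delta\le E$ still holds, so no separate argument is required.
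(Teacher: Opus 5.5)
Your proposal is correct and matches the paper's proof essentially step for step: the tower property reduces the left-hand side of \eqref{app:eq:kon} to $\mathbb{E}\bigl[\mathbb{I}\{E\ge 1/\delta(X^n)\}/\delta(X^n)\bigr]$, and the pointwise bound $\mathbb{I}\{E\ge 1/\delta(X^n)\}/\delta(X^n)\le E$ (using $E\ge 0$ off the rejection event) together with $\mathbb{E}_{\mathbb{P}}[E]\le 1$ concludes. One small slip in your side remark: when $\delta(X^n)>1$ the rule need not always reject, since $E$ may still fall below $1/\delta(X^n)$, but the pointwise bound covers that case anyway, so nothing in the argument changes.
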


\begin{proof}
The rejection event of the e-value test is $\{E \ge 1/\delta(X^n)\}$. By the
tower property of conditional expectation, the left-hand side of
\eqref{app:eq:kon} equals the marginal expectation
\begin{equation}
\mathbb{E}\!\left[
\frac{\mathbb{I}\{E \ge 1/\delta(X^n)\}}{\delta(X^n)}
\right].
\end{equation}
On the rejection event, the indicator equals one, so the fraction equals
$1/\delta(X^n)$, which by definition of the event is at most $E$; off the
rejection event, the fraction equals zero. The pointwise bound
\begin{equation}
\label{app:eq:posthoc_pointwise}
\frac{\mathbb{I}\{E \ge 1/\delta(X^n)\}}{\delta(X^n)} \;\le\; E
\end{equation}
thus holds for every realization. Taking expectations under $\mathbb{P}$ and
using the e-value definition $\mathbb{E}[E] \le 1$ gives \eqref{app:eq:kon}.
\end{proof}

The bound \eqref{app:eq:kon} substantiates the claim that, when using e-values, the analyst is free to inspect the
calibration data, decide when to stop, and pick the rejection level after the
fact. In fact, the expected ratio between the conditional type-I error and the chosen
level is still bounded by one. The criterion \eqref{app:eq:kon} was further shown in \citep{koning2024continuous} to serve as a
natural measure of post-hoc validity within an axiomatic formulation.

\subsection{Compositional Validity}
\label{app:sec:composition}

While p-values cannot be generally combined, e-values
combine in two simple ways. For \emph{independent} e-values $E^{(1)}$ and
$E^{(2)}$ for the same null, the product $E^{(1)} \cdot E^{(2)}$ is again an
e-value, since we have
$\mathbb{E}[E^{(1)} E^{(2)}] =
\mathbb{E}[E^{(1)}]\,\mathbb{E}[E^{(2)}] \le 1$.

For \emph{arbitrarily dependent} e-values $E^{(1)}, \ldots, E^{(K)}$, the
average
\begin{equation}
  \tfrac{1}{K}\bigl(E^{(1)} + \cdots + E^{(K)}\bigr)
\end{equation}
is always an e-value by linearity of expectation.

\paragraph{Example:}
Suppose labs A and B compute e-values $E_A = 7.3$ and $E_B = 8.7$. At level
$\delta = 0.05$ neither value exceeds the threshold $1/\delta = 20$. Their
product, however, does, i.e., $E_A \cdot E_B \approx 63.5 \ge 20$. Thus, the
null $\mathcal{H}_0$ is rejected at $\delta = 0.05$ using no distributional
assumption beyond independence. \hfill $\square$

\subsection{Anytime Validity and E-processes}
\label{app:sec:anytime}

As discussed in Sec.~\ref{app:sec:composition}, a sequence of e-values for the same null can be
multiplied to give a sequential measure of evidence. To elaborate, assume i.i.d.
observations $X_1, X_2, \ldots$ with $E_t$ being an e-value
computed from $X_t$. The running product
\begin{equation}
  W_n \;=\; \prod_{t=1}^n E_t, \qquad \text{with} \; W_0 = 1,
\end{equation}
is an example of an e-process, which can be used for sequential testing,
resolving the ``peeking'' problem of p-values highlighted in
Sec.~\ref{app:sec:limitations}. Before formalizing this property, we define
e-processes more generally.

\begin{definition}[e-process]
\label{app:def:eprocess}
A non-negative process obtained as the product $W_n = \prod_{t=1}^n E_t$,
with $W_0 = 1$, is an \emph{e-process} for the null class $\mathcal{P}_0$
if for every $t \ge 1$, the statistic $E_t \ge 0$ depends only on the past
observations $X_1, \ldots, X_{t-1}$, and for every $\mathbb{P} \in
\mathcal{P}_0$ and every past sequence $X_1 = x_1, \ldots, X_{t-1} =
x_{t-1}$, we have
\begin{equation}
\label{app:eq:eprocess}
  \mathbb{E}_{\mathbb{P}}\!\bigl[E_t \mid X_1 = x_1, \ldots,
  X_{t-1} = x_{t-1}\bigr] \;\le\; 1.
\end{equation}
\end{definition}
The condition \eqref{app:eq:eprocess} generalizes the
requirement that the variables $E_t$ be independent e-values, allowing
dependence among the variables.

The next result, a sequential analog of
Proposition~\ref{app:prop:emarkov}, shows the anytime validity of tests
obtained from e-processes.

\begin{theorem}[Anytime validity, Ville's inequality]
\label{app:thm:ville}
Let $\{W_n\}$ be an e-process. The test ``reject $\mathcal{H}_0$ as soon as
$W_n \ge 1/\delta$'' controls the Type-I error probability at level $\delta$, i.e.,
\begin{equation}
\label{app:eq:ville}
  \mathbb{P}\!\left(\sup_{n \ge 0} W_n \ge \frac{1}{\delta}\right)
  \;\le\; \delta
\end{equation}
for all null distributions $\mathbb{P}\in \mathcal{P}_0$.
\end{theorem}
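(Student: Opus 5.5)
The plan is to reduce Ville's inequality to Markov's inequality applied at a suitably chosen stopping time, combined with the supermartingale property of the e-process. The first step is to show that, under any null distribution $\mathbb{P}\in\mathcal{P}_0$, the process $\{W_n\}$ is a non-negative supermartingale with respect to the filtration generated by $X_1,\ldots,X_n$. This follows directly from Definition~\ref{app:def:eprocess}. Writing $W_n = W_{n-1}E_n$ with $W_{n-1}$ measurable with respect to the past, condition \eqref{app:eq:eprocess} gives
\begin{equation}
\mathbb{E}_{\mathbb{P}}[W_n \mid X_1,\ldots,X_{n-1}] = W_{n-1}\,\mathbb{E}_{\mathbb{P}}[E_n\mid X_1,\ldots,X_{n-1}] \le W_{n-1}.
\end{equation}
Non-negativity is immediate from $E_t\ge 0$. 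Iterating, we get $\mathbb{E}_{\mathbb{P}}[W_n]\le W_0=1$ for every fixed $n$.

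The second step fixes a finite horizon $N$ and introduces the stopping time $\tau = \min\{n\le N : W_n \ge 1/\delta\}$, with $\tau=N$ if no such $n$ exists. Since $\tau$ is bounded, optional stopping for supermartingales yields $\mathbb{E}_{\mathbb{P}}[W_\tau]\le W_0 = 1$. This can be proved by hand to avoid quoting the theorem: write $W_\tau = \sum_{n=0}^{N} W_n \mathbb{I}\{\tau=n\}$, note that $\{\tau\ge n\}$ depends only on $X_1,\ldots,X_{n-1}$, and telescope using the one-step supermartingale inequality. On the event $\{\max_{n\le N} W_n\ge 1/\delta\}$ we have $W_\tau\ge 1/\delta$. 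Markov's inequality then gives
\begin{equation}
\mathbb{P}\!\left(\max_{0\le n\le N} W_n \ge \tfrac{1}{\delta}\right) \le \mathbb{P}\!\left(W_\tau\ge \tfrac1\delta\right) \le \delta\,\mathbb{E}_{\mathbb{P}}[W_\tau]\le \delta .
\end{equation}

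The final step removes the finite horizon. The events $A_N=\{\max_{n\le N}W_n\ge 1/\delta\}$ increase in $N$, and their union is $\{\exists n: W_n\ge 1/\delta\}$. Continuity of probability from below then gives the bound $\delta$ for this union, which is the statement \eqref{app:eq:ville} with the supremum interpreted as attained at some finite $n$. This is also exactly the event that the sequential test ``reject as soon as $W_n\ge 1/\delta$'' ever rejects. If one insists on the event $\sup_n W_n \ge 1/\delta$ where the supremum is not attained, one applies the argument to $1/\delta'$ for $\delta'>\delta$ slightly and lets $\delta'\downarrow\delta$. Since the bound holds for every $\mathbb{P}\in\mathcal{P}_0$, the claim follows.

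The main obstacle is the optional stopping step. One must verify that $\{\tau\ge n\}$ is determined by the first $n-1$ observations, so that the conditional bound \eqref{app:eq:eprocess} can be applied inside the telescoping sum. Working with a bounded stopping time and then passing to the limit avoids any integrability issues with unbounded stopping.
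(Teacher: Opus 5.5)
Your proof is correct and complete. The paper does not actually prove this theorem. It states it as the classical Ville inequality, and where it is used (the anytime-valid p-value lemma in Chapter~\ref{chapter:adaptive}) it simply cites \citep{ville1939etude}, with optional stopping likewise delegated to \citep{ramdas2023gametheoretic}. Your argument is the standard self-contained derivation of that cited result:
the product structure together with \eqref{app:eq:eprocess} makes $\{W_n\}$ a nonnegative supermartingale;
optional stopping at the bounded first-crossing time gives $\mathbb{E}_{\mathbb{P}}[W_\tau]\le 1$;
Markov's inequality bounds the finite-horizon crossing probability;
continuity from below removes the horizon.

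Compared with the citation, this shows exactly where the e-process condition enters, and it buys some generality. Your second step only uses $\mathbb{E}_{\mathbb{P}}[W_\tau]\le 1$ at a bounded stopping time. The same argument therefore proves the inequality under the stopping-time definition \eqref{eq:ch6_eprocess_def} of an e-process used in Chapter~\ref{chapter:adaptive}, without any product structure.

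Your $\delta'\downarrow\delta$ step is the right way to get the $\sup$ form in \eqref{app:eq:ville}. That event contains, and can be strictly larger than, the test's rejection event $\{\exists n: W_n\ge 1/\delta\}$.

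One cosmetic point: the identity you want for the hand proof of optional stopping is $W_\tau = W_0+\sum_{n=1}^{N}(W_n-W_{n-1})\,\mathbb{I}\{\tau\ge n\}$. Each summand then has nonpositive expectation, by the one-step supermartingale inequality, because $\{\tau\ge n\}$ is determined by $X_1,\ldots,X_{n-1}$.
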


By Theorem \ref{app:thm:ville}, the analyst is free to peek, to plan optional
stopping, and to revise the stopping rule mid-study, without inflating the
Type-I error.

As an illustration, Fig.~\ref{app:fig:ville} shows several sample paths
under the null $\mathcal{H}_0$, together with one under the alternative
$\mathcal{H}_1$. The alternative path crosses the threshold and triggers
rejection, while the $\mathcal{H}_0$ paths remain under the threshold.

\begin{figure}[h]
\centering
\begin{tikzpicture}[font=\small]
  \begin{axis}[
      width=10cm, height=5.0cm,
      xlabel={\small observation $n$},
      ylabel={\small e-process $W_n$},
      xmin=0, xmax=15, ymin=0, ymax=32,
      ytick={1,10,20},
      yticklabels={$1$, $10$, $1/\delta=20$},
      xtick={0,2,...,14},
      grid=major,
      grid style={line width=0.15pt, draw=gray!20},
      clip=false,
    ]
    \addplot[fill=myred!10, draw=none] coordinates
      {(0,20)(15,20)(15,32)(0,32)} \closedcycle;
    \addplot[myblue!80, thick] coordinates {
      (0,1)(1,1.60)(2,0.64)(3,1.02)(4,1.64)(5,2.62)
      (6,1.05)(7,1.68)(8,0.67)(9,1.07)(10,1.72)
      (11,2.75)(12,1.10)(13,1.76)(14,0.70)(15,1.12)};
    \addplot[myblue!40, thick] coordinates {
      (0,1)(1,0.40)(2,0.64)(3,1.02)(4,0.41)(5,0.66)
      (6,1.05)(7,0.42)(8,0.67)(9,0.27)(10,0.43)
      (11,0.69)(12,0.28)(13,0.44)(14,0.70)(15,0.28)};
    \addplot[myblue!60, thick] coordinates {
      (0,1)(1,1.60)(2,2.56)(3,4.10)(4,1.64)(5,2.62)
      (6,4.19)(7,1.68)(8,2.68)(9,4.29)(10,6.86)
      (11,2.75)(12,4.39)(13,7.02)(14,2.81)(15,4.49)};
    \addplot[myblue!70, thick] coordinates {
      (0,1)(1,1.60)(2,2.56)(3,4.10)(4,6.55)(5,10.49)
      (6,4.20)(7,6.71)(8,2.68)(9,4.29)(10,2.75)
      (11,4.39)(12,7.02)(13,2.81)(14,4.49)(15,1.80)};
    \node[myblue, font=\scriptsize, anchor=south] at (axis cs:13,9.5)
      {$\mathcal{H}_0$ paths};
    \addplot[mygreen, very thick] coordinates {
      (0,1)(1,1.60)(2,2.56)(3,4.10)(4,6.55)(5,10.49)
      (6,4.20)(7,6.71)(8,10.74)(9,17.17)(10,27.47)};
    \addplot[only marks, mark=star, mark size=5pt, myred]
      coordinates {(9.275,20)};
    \node[myred, font=\scriptsize, anchor=south west]
      at (axis cs:9.35,20) {reject};
    \node[mygreen, font=\scriptsize, anchor=south east]
      at (axis cs:8,13) {$\mathcal{H}_1$ path};
    \addplot[myred, dashed, very thick]
      coordinates {(0,20)(15,20)};
  \end{axis}
\end{tikzpicture}
\caption{Several sample paths of an e-process under $\mathcal{H}_0$
(blue, shaded by trajectory) and one under $\mathcal{H}_1$ (green). Ville's
inequality \eqref{app:eq:ville} guarantees that under the null $\mathcal{H}_0$ the supremum $\sup_{n\geq 0} W_n$
exceeds the threshold $1/\delta$ with probability at most $\delta$, so the
test ``stop and reject at the first crossing'' is valid for any
data-dependent stopping rule.}
\label{app:fig:ville}
\end{figure}

\paragraph{Example 3 (continued): Testing a bound on the mean:}
For the null $\mathcal{H}_0: \mu(\mathbb{P}) \ge \mu_{\max}$ with i.i.d.\
data $X_i \in [0,1]$, the quantity
\begin{equation}
  E_t \;=\; 1 + \lambda_t(\mu_{\max} - X_t),
  \qquad
  \lambda_t \in \Bigl[0,\,\tfrac{1}{1-\mu_{\max}}\Bigr],
\end{equation}
is an e-value for every choice of the sequence $\lambda_t$ such that
$\lambda_t$ depends only on the past observations $X_1, \ldots, X_{t-1}$. The
constraint on the parameter $\lambda_t$ keeps each factor non-negative, and one can directly
check the requirement on the mean under $\mathcal{H}_0$. This e-process has
a useful interpretation in terms of betting \citep{waudby2024estimating}.
\hfill $\square$

\chapter{P-value and E-value Constructions}
\label{app:ch2_pvalue_constructions}

This appendix collects technical details on the construction of p-values and e-values, complementing the treatment in Chapter~\ref{chapter:mht}. Sec.~\ref{sec:alternative_constructions} presents a unified viewpoint on constructing valid test statistics via moment generating function bounds, covering bounded, sub-Gaussian, and sub-exponential losses. Sec.~\ref{sec:app_hoeffding_proof} proves superuniformity of the Hoeffding-based p-value (\ref{eq:ch2_hoeffding_pvalue}). Sec.~\ref{sec:app_refined_pvalues} reviews sharper finite-sample constructions, including exact binomial p-values for 0--1 losses, variance-sensitive Bernstein bounds, and the unifying Cram\'{e}r--Chernoff perspective.

\section{Alternative Construction of P-values and E-values}
\label{sec:alternative_constructions}

This section provides an alternative and unifying viewpoint on the construction of p-values and e-values.
Rather than working directly with the empirical risk $\widehat R_n(\lambda_k)$, we express evidence against the null hypothesis $\mathcal{H}_k$ in terms of the cumulative sum of \emph{reliability margins}. Here, $\Delta_{i,k} = \alpha - L_{\lambda_k}(X_i,Y_i)$ denotes the reliability margin of sample $i$ under candidate $\lambda_k$, as introduced in Sec.~\ref{sec:ch2_pval}. This perspective makes it transparent how standard concentration arguments lead to valid test statistics, and it naturally extends beyond bounded losses to more general tail assumptions.

\noindent The \textit{cumulant generating function} (CGF) of a random variable $Z$ is defined as the logarithm of its \textit{moment generating function} (MGF):
\begin{equation}
M_Z(\eta) = \mathbb{E}\!\left[\exp(\eta Z)\right],
\end{equation}
for values of $\eta$ for which the expectation exists. The CGF is thus $\log M_Z(\eta)$.

Following standard derivations of concentration bounds \citep{boucheron2013concentration}, assume that there exists a function $\psi(\eta)$ such that, for all $\eta>0$ the inequality
\begin{equation}
\label{eq:ch2_mgf_bound}
\log \mathbb{E}\!\left[\exp\!\big(\eta \Delta_{i,k}\big)\mid \mathcal{H}_k\right]
\le
\psi(\eta)
\end{equation}
holds. That is, the function $\psi(\eta)$ upper-bounds the CGF of the margin variable $\Delta_{i,k}$ under the null $\mathcal{H}_k$.

Such bounds arise in many settings of practical interest:

\textbf{Bounded losses:}
If the loss $L_{\lambda_k}(X,Y)$ is bounded within the interval $[0,1]$, then the margin variable satisfies the condition $\Delta_{i,k}\in[-1,1]$.
By taking a log of both sides of (\ref{eq:eval_from_bounds}), we obtain
\begin{equation}
\label{eq:ch2_hoeffding_mgf}
\log \mathbb{E}\!\left[\exp\!\big(\eta (\Delta_{i,k} - \mathbb{E}[\Delta_{i,k}])\big)\right]
\le
\frac{\eta^2}{8}.
\end{equation}
This yields a quadratic bound $\psi(\eta)=\eta^2/8 + \eta \mathbb{E}[\Delta_{i,k}]$.

\textbf{Sub-Gaussian losses:}
If $\Delta_{i,k}$ is conditionally sub-Gaussian with variance proxy $\sigma^2$, then the cumulant generating function of $\Delta_{i,k}$ is bounded by
\begin{equation}
\psi(\eta) = \eta \mathbb{E}[\Delta_{i,k}] + \frac{\sigma^2 \eta^2}{2}.
\end{equation}
Such bounds arise for bounded random variables, Gaussian variables, and many light-tailed losses such as the squared loss with additive Gaussian noise, the logistic loss under bounded features, and the negative log-likelihood for sub-Gaussian observations
\citep{vershynin2018high,boucheron2013concentration}.

\textbf{Sub-exponential losses:}
If $\Delta_{i,k}$ is sub-exponential with parameters $(\nu,b)$,
we have
\begin{equation}
\psi(\eta) = \eta \mathbb{E}[\Delta_{i,k}] + \frac{\nu^2 \eta^2}{2}
\qquad \text{for } |\eta| < \frac{1}{b}.
\end{equation}
This setting covers heavier-tailed losses while still permitting exponential concentration
\citep{vershynin2018high,boucheron2013concentration}.

\begin{lemma}
Given the bound (\ref{eq:ch2_mgf_bound}) and a fixed $\eta > 0$, the statistic
\begin{equation}
\label{eq:ch2_exp_evalue}
e_k
=
\exp\!\left(
\eta \sum_{i=1}^n \Delta_{i,k}
-
n \psi(\eta)
\right),
\end{equation}
is a valid e-value for the null hypothesis $\mathcal{H}_k$.
\end{lemma}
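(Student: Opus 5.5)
The approach is to verify directly that $e_k$ in \eqref{eq:ch2_exp_evalue} is nonnegative and has mean at most one under $\mathcal{H}_k$. Nonnegativity is immediate, since $e_k$ is the exponential of a real number. The work is entirely in bounding the expectation, and the plan is to factorize it using independence of the calibration samples and then apply the CGF bound \eqref{eq:ch2_mgf_bound} factor by factor.

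First, rewrite the statistic as a product,
\begin{equation}
e_k = \prod_{i=1}^n \exp\!\big(\eta \Delta_{i,k} - \psi(\eta)\big).
\end{equation}
The calibration pairs $(X_i,Y_i)$ are i.i.d. and the hyperparameter $\lambda_k$ is fixed, so the margins $\Delta_{1,k},\ldots,\Delta_{n,k}$ are i.i.d. Under $\mathcal{H}_k$ the expectation of the product therefore factorizes:
\begin{equation}
\mathbb{E}[e_k \mid \mathcal{H}_k] = \prod_{i=1}^n \exp(-\psi(\eta))\,\mathbb{E}\!\left[\exp(\eta\Delta_{i,k}) \mid \mathcal{H}_k\right].
\end{equation}

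Next, exponentiate \eqref{eq:ch2_mgf_bound} to get $\mathbb{E}[\exp(\eta\Delta_{i,k})\mid\mathcal{H}_k] \le \exp(\psi(\eta))$ for each $i$. Each factor is then at most one, so $\mathbb{E}[e_k\mid\mathcal{H}_k]\le 1$, which is the defining property \eqref{eq:ch2_evalue_validity}.

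The only delicate point I expect is how the hypothesis \eqref{eq:ch2_mgf_bound} is read. It must be a bound valid for every null distribution, so that the conclusion holds for the whole composite null $\mathcal{H}_k$ and not just for one distribution in it. In the bounded-loss example, for instance, $\psi(\eta)$ contains $\eta\,\mathbb{E}[\Delta_{i,k}]$, which is at most $0$ under $\mathcal{H}_k$. I will therefore state explicitly that $\psi$ is taken as a uniform upper bound over the null. With that reading, the choice $\psi(\eta)=\eta^2/8$ recovers \eqref{eq:hoeffding_evalue}, which gives a consistency check.

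Independence across $i$ is essential for the factorization step. If the margins were only sequentially dependent, I would instead condition on the past and apply the tower property one index at a time, in the same way as in the proof of Lemma~\ref{lem:ch6_eprocess_valid}.
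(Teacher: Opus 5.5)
Your proof is correct and follows the paper's argument: you factorize $\mathbb{E}[e_k\mid\mathcal{H}_k]$ into a product over the $n$ independent calibration samples, then bound each factor $\mathbb{E}[\exp(\eta\Delta_{i,k}-\psi(\eta))\mid\mathcal{H}_k]$ by one using the CGF bound. Your extra remarks are sound clarifications that the paper leaves implicit: nonnegativity, reading $\psi$ as a distribution-free bound over the whole null so that $e_k$ is an actual statistic, and the Hoeffding special case $\psi(\eta)=\eta^2/8$.
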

\begin{proof}
Using independence of the calibration samples, we can write
\begin{equation}
\mathbb{E}[e_k \mid \mathcal{H}_k]
=
\prod_{i=1}^n
\mathbb{E}
\left[
\exp\!\big(\eta \Delta_{i,k} - \psi(\eta)\big)
\mid \mathcal{H}_k
\right].
\end{equation}
By (\ref{eq:ch2_mgf_bound}) we have
\begin{equation}
\mathbb{E}
\left[
\exp\!\big(\eta \Delta_{i,k} - \psi(\eta)\big)
\mid \mathcal{H}_k
\right]
\le 1.
\end{equation}
Hence,
\begin{equation}
\mathbb{E}[e_k \mid \mathcal{H}_k]
\le 1,
\end{equation}
so $e_k$ satisfies the defining property of an e-value (\ref{eq:ch2_evalue_validity}).
\end{proof}

\section{Proof of Validity for the Hoeffding P-value}
\label{sec:app_hoeffding_proof}

\begin{proposition}[Superuniformity of (\ref{eq:ch2_hoeffding_pvalue})]
\label{prop:app_hoeffding_valid}
Under the null hypothesis $\mathcal{H}_k: R(\lambda_k)>\alpha$, the p-value $p_k$ in (\ref{eq:ch2_hoeffding_pvalue}) is superuniform, i.e., it satisfies $\mathbb{P}(p_k \le u\mid \mathcal{H}_k) \le u$ for every $u\in[0,1]$.
\end{proposition}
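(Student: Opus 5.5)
The plan is to reduce superuniformity to Hoeffding's inequality (Lemma~\ref{lem:hoeffding}), handling the endpoints of the interval $u\in[0,1]$ separately, since the map from $p_k$ back to the empirical risk involves $\log u$.

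First, the trivial cases. For $u=1$ the bound $\mathbb{P}(p_k\le 1\mid\mathcal{H}_k)\le 1$ is immediate. For $u=0$, since $p_k=\exp(-2n(\cdot)_+^2)>0$ always, the event $\{p_k\le 0\}$ is empty. So it suffices to fix $u\in(0,1)$ and set $t=\sqrt{\log(1/u)/(2n)}>0$.

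Next, I would rewrite the event. Because $x\mapsto \exp(-2nx^2)$ is strictly decreasing on $[0,\infty)$, we have
\begin{equation*}
\{p_k\le u\}=\{(\alpha-\widehat{R}_n(\lambda_k))_+\ge t\}.
\end{equation*}
Since $t>0$, the positive part can be dropped, giving $\{\widehat{R}_n(\lambda_k)\le \alpha-t\}$. Under $\mathcal{H}_k$ we have $\alpha<R(\lambda_k)$, so this event is contained in $\{\widehat{R}_n(\lambda_k)\le R(\lambda_k)-t\}$. This monotonicity step, using the null as an inequality rather than an equality, is the only place where the composite nature of $\mathcal{H}_k$ enters.

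Finally, apply Lemma~\ref{lem:hoeffding}. The losses $L_{\lambda_k}(X_i,Y_i)$ are i.i.d.\ in $[0,1]$ because the calibration pairs are i.i.d., so the lemma gives
\begin{equation*}
\mathbb{P}(\widehat{R}_n(\lambda_k)\le R(\lambda_k)-t)\le \exp(-2nt^2)=u.
\end{equation*}
Chaining the inclusions yields $\mathbb{P}(p_k\le u\mid\mathcal{H}_k)\le u$.

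The main care point is the equivalence of events. Dropping $(\cdot)_+$ is legitimate only because $t>0$, which is why $u=1$ (where $t=0$) is handled separately. Nothing else should be delicate.
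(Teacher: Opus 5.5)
Your proof is correct and follows the paper's argument: rewrite $\{p_k\le u\}$ as $\{\widehat{R}_n(\lambda_k)\le \alpha-t\}$ with $t=\sqrt{\log(1/u)/(2n)}$, enlarge it to a Hoeffding deviation event using $R(\lambda_k)>\alpha$, and apply Lemma~\ref{lem:hoeffding}. Your separate handling of $u=0$ and $u=1$ is a small improvement in care. The paper fixes $u\in(0,1]$, so at $u=1$ it implicitly uses Hoeffding with $t=0$ together with an event equivalence that does not hold there.
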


\begin{proof}
Fix $u\in(0,1]$ and let
\begin{equation}
s(u) \;=\; \sqrt{\frac{\log(1/u)}{2n}}.
\end{equation}
The event $\{p_k\le u\}$ is equivalent to $\widehat{R}_n(\lambda_k) \le \alpha - s(u)$. Under the null $\mathcal{H}_k$, i.e., $R(\lambda_k)>\alpha$, this implies
\begin{equation}
\widehat{R}_n(\lambda_k) - R(\lambda_k) \;<\; -s(u).
\end{equation}
Applying Hoeffding's inequality (\ref{eq:hoeffding_unit_interval}) with $t = s(u)$ yields
\begin{equation}
\mathbb{P}(p_k \le u\mid \mathcal{H}_k) \;\le\; \exp\!\big(-2n\,s(u)^2\big) \;=\; u.
\end{equation}
\end{proof}

\section{Refined Finite-Sample P-values for Bounded Losses}
\label{sec:app_refined_pvalues}

The Hoeffding p-value (\ref{eq:ch2_hoeffding_pvalue}) is assumption-free, but it depends only on the range of the loss and is therefore conservative whenever the loss exhibits additional structure or small variance. This section reviews sharper constructions for bounded losses. All of them yield valid p-values for the null $\mathcal{H}_k: R(\lambda_k) > \alpha$ via the inversion principle of Sec. \ref{sec:confidence_bounds}: a one-sided concentration bound on $\widehat{R}_n(\lambda_k) - R(\lambda_k)$ is evaluated at the realized deviation $\Delta_k = (\alpha - \widehat{R}_n(\lambda_k))_+$.

\subsection{Exact Binomial P-values for 0--1 Losses}
\label{sec:app_binomial}

When the loss is binary, i.e., $L_{\lambda_k}\in\{0,1\}$, as is the case for the $0$--$1$ misclassification loss in (\ref{eq:example_loss}), the test statistic $S = n\widehat{R}_n(\lambda_k) = \sum_{i=1}^n L_{\lambda_k}(X_i,Y_i)$ follows a binomial distribution with parameters $n$ and $R(\lambda_k)$. Under the null $R(\lambda_k)>\alpha$, the left-tail probability $\mathbb{P}(S \le s)$ is monotonically decreasing in $R(\lambda_k)$, so the least favourable null distribution is the boundary $R(\lambda_k) = \alpha$. The resulting exact one-sided p-value is
\begin{equation}
\label{eq:app_binom_p}
p_k^{\mathrm{bin}}
\;=\;
\sum_{j=0}^{S}\binom{n}{j}\alpha^j(1-\alpha)^{n-j},
\end{equation}
which is the classical Clopper--Pearson tail \citep{clopper1934use}. The p-value (\ref{eq:app_binom_p}) is exact rather than concentration-based, and is typically much sharper than the Hoeffding p-value (\ref{eq:ch2_hoeffding_pvalue}) for binary losses.

\subsection{Variance-Sensitive P-values}
\label{sec:app_bernstein}

When the loss variance is small, the Hoeffding bound is loose because it depends only on the range of the loss. Bernstein-type inequalities incorporate the variance and can yield substantially tighter bounds. Specifically, letting $\sigma^2 = \mathrm{Var}(L_{\lambda_k}(X,Y))$, Bernstein's inequality \citep{bernstein1924modification,bennett1962probability,boucheron2013concentration} states that, for any $t>0$,
\begin{equation}
\label{eq:app_bernstein}
\mathbb{P}\!\left(\widehat{R}_n(\lambda_k) - R(\lambda_k) \le -t\right)
\;\le\;
\exp\!\left(-\frac{n t^2}{2\sigma^2 + \tfrac{2}{3}t}\right).
\end{equation}
Inverting the bound (\ref{eq:app_bernstein}) at the realized deviation $\Delta_k = (\alpha - \widehat{R}_n(\lambda_k))_+$ yields the Bernstein p-value
\begin{equation}
\label{eq:app_bernstein_p}
p_k^{\mathrm{Bern}}
\;=\;
\exp\!\left(-\frac{n \Delta_k^2}{2\sigma^2 + \tfrac{2}{3}\Delta_k}\right).
\end{equation}
Validity follows by the same argument as in the proof of Proposition \ref{prop:app_hoeffding_valid}, with Hoeffding's inequality (\ref{eq:hoeffding_unit_interval}) replaced by (\ref{eq:app_bernstein}). Bennett's inequality \citep{bennett1962probability} gives a related exponential bound with a slightly sharper exponent for moderate deviations, and can be inverted analogously.

The variance $\sigma^2$ is generally unknown, and two strategies are common. The first replaces $\sigma^2$ by the worst-case bound $\sigma^2 \le 1/4$ valid for $[0,1]$-bounded losses, yielding a distribution-free refinement of Hoeffding. The second uses the empirical variance
\begin{equation}
\label{eq:app_emp_variance}
\widehat{V}_n(\lambda_k)
\;=\;
\frac{1}{n}\sum_{i=1}^n
\big(L_{\lambda_k}(X_i,Y_i) - \widehat{R}_n(\lambda_k)\big)^2
\end{equation}
in place of $\sigma^2$ in (\ref{eq:app_bernstein_p}), invoking an empirical Bernstein inequality \citep{maurer2009empirical,audibert2009exploration} to obtain a fully data-adaptive p-value. Empirical Bernstein p-values are particularly powerful when the deployed predictor is near-deterministic for most inputs, so that the per-sample loss variance is small.

\subsection{The Cram\'er--Chernoff Method}
\label{sec:app_chernoff}

The Cram\'er--Chernoff method \citep{chernoff1952measure} provides a unifying perspective on the constructions above. As detailed in Sec. \ref{sec:alternative_constructions}, all the bounds discussed here can be obtained by upper-bounding the cumulant generating function of the per-sample reliability margin under the null and inverting the resulting exponential bound. Different assumptions on the loss, such as boundedness, sub-Gaussianity, or sub-exponentiality, yield different bounding functions $\psi(\eta)$ and hence different p-value constructions, but the underlying methodology is the same.

\chapter{Additional Multiple Testing Procedures}
\label{app:additional_mht}

This appendix collects additional multiple testing procedures that complement the treatment in Chapter~\ref{chapter:mht}. Sec.~\ref{sec:app_fwer} reviews four FWER-controlling procedures beyond Bonferroni and fixed sequence testing: Holm step-down, Hochberg step-up, \v{S}id\'{a}k refinement, and Westfall--Young permutation calibration. Sec.~\ref{app:e-BH} proves the FDR control guarantee for the e-BH procedure (Theorem~\ref{thm:app_eBH}).

\section{Additional FWER-Controlling Procedures}
\label{sec:app_fwer}

This section reviews four FWER-controlling procedures that complement the Bonferroni rule (\ref{eq:ch2_bonferroni}) and fixed sequence testing introduced in Sec. \ref{sec:ch2_pval_mht}. In all cases, given a vector of p-values $\{p_k\}_{k=1}^{|\Lambda|}$, the procedure returns a set $\hat{\mathcal{K}}\subseteq\{1,\dots,|\Lambda|\}$ of rejected null hypotheses, and the corresponding certified set $\hat{\Lambda} = \{\lambda_k : k\in\hat{\mathcal{K}}\}$ enjoys the FWER guarantee of Theorem \ref{thm:ch2_certified_set}. Let $p_{(1)} \le \cdots \le p_{(|\Lambda|)}$ denote the sorted p-values with associated hypotheses $\mathcal{H}_{(1)},\dots,\mathcal{H}_{(|\Lambda|)}$.

\textbf{Holm step-down procedure:} Holm's procedure rejects the first $k^\star$ ordered hypotheses, where
\begin{equation}
\label{eq:ch2_holm_rule}
k^\star \;=\; \max\!\left\{k:\ p_{(i)} \le \frac{\delta}{|\Lambda|-i+1}\ \text{ for all } i=1,\dots,k\right\}.
\end{equation}
Holm controls FWER at level $\delta$ under arbitrary dependence among the p-values, and is uniformly more powerful than Bonferroni in the sense that the certified set produced by (\ref{eq:ch2_holm_rule}) always contains the one produced by (\ref{eq:ch2_bonferroni}).

\textbf{Hochberg step-up procedure:} Hochberg's procedure uses the same thresholds as Holm but in a step-up manner, rejecting the $k^\star$ ordered hypotheses with
\begin{equation}
\label{eq:ch2_hochberg_rule}
k^\star \;=\; \max\!\left\{k:\ p_{(k)} \le \frac{\delta}{|\Lambda|-k+1}\right\}.
\end{equation}
Hochberg is less conservative than Holm, but its FWER guarantee requires additional assumptions on the dependence structure, such as independence or positive regression dependence among the p-values.

\textbf{\v{S}id\'ak refinement:} If the p-values are independent under the joint null, the Bonferroni threshold $\delta/|\Lambda|$ in (\ref{eq:ch2_bonferroni}) can be replaced by the \v{S}id\'ak threshold
\begin{equation}
\label{eq:ch2_sidak}
p_k \le 1-(1-\delta)^{1/|\Lambda|},
\end{equation}
which is slightly larger and hence enlarges the certified set. Analogously, the Holm thresholds in (\ref{eq:ch2_holm_rule}) can be refined to $1-(1-\delta)^{1/(|\Lambda|-i+1)}$, yielding the Holm--\v{S}id\'ak procedure. The improvement over Bonferroni and Holm is modest for small $\delta$, and should only be used when independence is defensible.

\textbf{Westfall--Young permutation calibration:} In hyperparameter certification, the p-values $\{p_k\}_{k=1}^{|\Lambda|}$ are typically highly dependent because they are all computed on the same calibration dataset (\ref{eq:ch2_emp_risk}). When the data admit a valid permutation scheme under the global null, e.g., label permutations in classification, the Westfall--Young procedure estimates the null distribution of the minimum p-value $\min_k p_k$ and uses it to calibrate a threshold $c_\delta$ satisfying
\begin{equation}
\label{eq:ch2_minp}
\mathbb{P}\!\left(\min_k p_k \le c_\delta\right) \;\le\; \delta
\end{equation}
under the global null. The rejection rule $p_k \le c_\delta$ then controls FWER at level $\delta$ and can yield substantial power gains over Bonferroni and Holm when the dependence is strong, at the cost of the additional computation required to sample from the permutation distribution.

\section{Validity of the e-BH Procedure}
\label{app:e-BH}

This section proves the FDR-control property of the e-BH procedure introduced in Sec. \ref{sec:FDR}. The result holds under arbitrary dependence among the e-values and requires only their marginal validity.

\begin{theorem}[FDR control of e-BH \citep{wang2022fdr}]
\label{thm:app_eBH}
Let $e_1,\dots,e_{|\Lambda|}$ be e-values for the null hypotheses $\mathcal{H}_1,\dots,\mathcal{H}_{|\Lambda|}$ satisfying the marginal validity condition
\begin{equation}
\label{eq:app_evalue_validity}
\mathbb{E}[e_k \mid \mathcal{H}_k] \;\le\; 1, \qquad k=1,\dots,|\Lambda|.
\end{equation}
Let $\hat{\Lambda}\subseteq\Lambda$ denote the rejected set returned by the e-BH rule (\ref{eq:ch2_eBH_kstar}). Then,
\begin{equation}
\label{eq:app_eBH_fdr}
\mathrm{FDR}
\;=\;
\mathbb{E}\!\left[
\frac{|\hat{\Lambda}\cap\Lambda_0|}{\max\{1,|\hat{\Lambda}|\}}
\right]
\;\le\;
\frac{|\Lambda_0|}{|\Lambda|}\,\delta
\;\le\;
\delta,
\end{equation}
where $\Lambda_0$ is the set of unreliable hyperparameters.
\end{theorem}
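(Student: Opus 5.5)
The plan is to follow the standard Wang--Ramdas argument, which rests on a single deterministic inequality. Write $K=|\Lambda|$ and let $k^\star=|\hat{\Lambda}|$ be the number of rejections defined in (\ref{eq:ch2_eBH_kstar}). The first step is to show that every rejected index $k$ satisfies
\begin{equation}
e_k \;\ge\; \frac{K}{k^\star\delta}.
\end{equation}
This holds because the rejected hypotheses are exactly those with the $k^\star$ largest e-values. Each of them therefore satisfies $e_k\ge e_{(k^\star)}\ge K/(k^\star\delta)$.

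The second step is a pointwise bound on the false discovery proportion. Writing $\hat{\mathcal{K}}$ for the rejected index set, we have for every realization
\begin{equation}
\frac{|\hat{\Lambda}\cap\Lambda_0|}{\max\{1,k^\star\}}
=\sum_{k\in\mathcal{K}_0}\frac{\mathbb{I}\{k\in\hat{\mathcal{K}}\}}{\max\{1,k^\star\}}
\le \sum_{k\in\mathcal{K}_0}\frac{\delta\, e_k}{K}.
\end{equation}
To verify this termwise, consider each true null $k$. If $k$ is not rejected, its term is zero, and the bound holds because $e_k\ge 0$. If $k$ is rejected, then $k^\star\ge1$, and the inequality from the first step gives $1/k^\star\le \delta e_k/K$.

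The third step takes expectations and uses linearity. Only marginal expectations of the individual $e_k$ appear, so no assumption about their dependence is needed. Applying (\ref{eq:app_evalue_validity}) gives $\mathrm{FDR}\le \delta|\mathcal{K}_0|/K=\delta|\Lambda_0|/|\Lambda|\le\delta$.

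The only step that needs care is the second. The self-consistency property must be stated correctly, and the case $k^\star=0$ must be handled through the $\max\{1,\cdot\}$ in the denominator. Note also that the data-dependent denominator $k^\star$ is absorbed pointwise, before any expectation is taken. This is why the e-BH argument needs none of the PRDS or independence conditions that BH requires. I expect the remaining steps to be routine.
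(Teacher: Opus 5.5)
Your proposal is correct and follows essentially the same argument as the paper. Both apply the self-consistency bound $\mathbb{I}\{k\in\hat{\Lambda}\}/\max\{1,|\hat{\Lambda}|\}\le \delta e_k/|\Lambda|$ pointwise, then use linearity of expectation and marginal validity over the true nulls; your explicit treatment of the unrejected and $k^\star=0$ cases just spells out what the paper leaves implicit.
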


\begin{proof}
Let $R = |\hat{\Lambda}|$ denote the number of rejections. By the e-BH rule (\ref{eq:ch2_eBH_kstar}), every rejected index $k$ satisfies $e_k \ge |\Lambda|/(R\,\delta)$, which can be rewritten as
\begin{equation}
\label{eq:app_eBH_key}
\frac{\mathbb{I}\{k \in \hat{\Lambda}\}}{\max\{1,R\}} \;\le\; \frac{\delta\, e_k}{|\Lambda|}.
\end{equation}
The false discovery rate can therefore be bounded as
\begin{align}
\mathrm{FDR}
&\;=\;
\mathbb{E}\!\left[\frac{|\hat{\Lambda}\cap\Lambda_0|}{\max\{1,R\}}\right] \\
&\;=\;
\sum_{k\in\Lambda_0}\mathbb{E}\!\left[\frac{\mathbb{I}\{k \in \hat{\Lambda}\}}{\max\{1,R\}}\right] \\
&\;\le\;
\frac{\delta}{|\Lambda|}\sum_{k\in\Lambda_0}\mathbb{E}[e_k \mid \mathcal{H}_k]
\;\le\;
\frac{|\Lambda_0|}{|\Lambda|}\,\delta
\;\le\;
\delta,
\end{align}
where the first inequality uses (\ref{eq:app_eBH_key}) and the second invokes the marginal validity (\ref{eq:app_evalue_validity}) at each $k\in\Lambda_0$.
\end{proof}

Crucially, the proof does not require independence among the e-values. FDR control follows from the marginal validity condition (\ref{eq:app_evalue_validity}) and from the rejection rule (\ref{eq:ch2_eBH_kstar}).


\begingroup
\sloppy
\printbibliography[title=References,heading=bibintoc]
\endgroup

\end{document}